\definecolor{customblue}{RGB}{0 0 205}
\definecolor{darkorchid}{RGB}{238 118 0}
\newcommand{\roundIndex}{k}
\newcommand{\state}{s}
\newcommand{\action}{a}
\newcommand{\occupancy}{\rho}
\newcommand{\threshold}{\epsilon}
\newcommand{\transition}{P_\mathcal{\MakeUppercase{t}}}
\newcommand{\reward}{r}
\newcommand{\cost}{c}
\newcommand{\costFunction}{\cost}
\newcommand{\mdp}{\mathcal{M}}
\newcommand{\expert}{E}
\newcommand{\expect}{\mathbb{E}}
\newcommand{\greedy}{\textcolor{teal}{BEAR}}
\newcommand{\orange}[1]{{{#1}}}
\DeclareMathOperator*{\argmax}{arg\,max}
\DeclareMathOperator*{\argmin}{arg\,min}
\theoremstyle{plain}
\newtheorem{theorem}{Theorem}[section]
\newtheorem{lemma}[theorem]{Lemma}
\theoremstyle{definition}
\newtheorem{definition}[theorem]{Definition}
\newtheorem{assumption}[theorem]{Assumption}
\theoremstyle{remark}
\icmltitlerunning{Provably Efficient Exploration in Inverse Constrained Reinforcement Learning}
\begin{document}
\twocolumn[


\icmltitle{Provably Efficient Exploration in Inverse Constrained Reinforcement Learning}
\icmlsetsymbol{equal}{*}

\begin{icmlauthorlist}
\icmlauthor{Bo Yue}{yyy}
\icmlauthor{Jian Li}{comp}
\icmlauthor{Guiliang Liu}{yyy}
\end{icmlauthorlist}

\icmlaffiliation{yyy}{School of Data Science, The Chinese University of Hong Kong, Shenzhen}
\icmlaffiliation{comp}{Stony Brook University, New York}

\icmlcorrespondingauthor{Guiliang Liu}{liuguiliang@cuhk.edu.cn}

\icmlkeywords{Machine Learning, ICML}

\vskip 0.3in
]



\printAffiliationsAndNotice{}  

\begin{abstract}
Optimizing objective functions subject to constraints is fundamental in many real-world applications. However, these constraints are often not readily defined and must be inferred from expert agent behaviors, a problem known as Inverse Constraint Inference. Inverse Constrained Reinforcement Learning (ICRL) is a common solver for recovering feasible constraints in complex environments, relying on training samples collected from interactive environments. However, the efficacy and efficiency of current sampling strategies remain unclear. We propose a strategic exploration framework for sampling with guaranteed efficiency to bridge this gap. By defining the feasible cost set for ICRL problems, we analyze how estimation errors in transition dynamics and the expert policy influence the feasibility of inferred constraints. Based on this analysis, we introduce two exploratory algorithms to achieve efficient constraint inference via 1) dynamically reducing the bounded aggregate error of cost estimations or 2) strategically constraining the exploration policy around plausibly optimal ones. Both algorithms are theoretically grounded with tractable sample complexity, and their performance is validated empirically across various environments.
\end{abstract}

\section{Introduction}
Constrained Reinforcement Learning (CRL) tackles sequential decision-making problems with safety constraints and has achieved considerable success in various safety-critical applications~\citep{Gu2022SRLSurvey}. 
However, in many real-world environments, such as robot control~\citep{Garcia2020SafeRLrobot,Thomas2021SafeRL} and autonomous driving~\citep{Krasowski2020SafeRLAutonomous},
specifying the exact constraint that can consistently guarantee safe control is challenging, which is further exacerbated when the ground-truth constraint is time-varying and context-dependent. 


Instead of utilizing a pre-defined constraint, an alternative approach, Inverse Constrained Reinforcement Learning (ICRL)~\citep{Malik2021ICRL,liu2024survey}, seeks to learn constraint signals from the demonstrations of expert agents and imitate their behaviors by adhering to the inferred constraints. ICRL effectively incorporates expert experience into the online CRL paradigm and thus better explains how expert agents optimize cumulative rewards under their empirical constraints.
Under this framework, existing ICRL algorithms often assume the presence of a known dynamic model~\citep{Scobee2020MKCL,McPherson2021MKStochasticConstraint}, or a generative transition model that responds to queries for any state-action pair~\citep{papadimitriou2023bayesian,liu2023benchmarking}. However, this setting has a considerable gap with real-world scenarios where such global transition models are often unavailable or even time-varying. In such cases, agents have to physically navigate to new states to learn about transition dynamics through exploration.

To mitigate the gap, some recent studies~\citep{Malik2021ICRL,baert2023causalicrl} explicitly maximized the policy entropy throughout the learning process, yielding soft-optimal policy representations that favor less-selected actions.  
However, this strategy often induces overly conservative constraints that forbid any behavior not present in the expert demonstrations.
Moreover, such an uncertainty-driven exploration ignores the potential estimation errors in dynamic models and expert policies. To date, a theoretical framework is still lacking to demonstrate how well such exploration approaches facilitate the accurate estimation of constraints.

This paper introduces a strategic exploration framework for sampling to solve ICRL problems with guaranteed efficiency. Recognizing the inherent challenge of pinpointing the exact constraint that expert agents adhere to in their demonstrations \citep{ng2000algorithms}, the objective of our framework is to recover a {\it set of feasible constraints}, rather than to specify a unique constraint with pre-defined heuristics or restrictions. This approach has the advantage of analyzing the intrinsic sample complexity of ICRL problems only, without being obfuscated by other factors \citep{lazzatidoes}.
By representing constraints with reward advantages, we bound estimation errors for feasible cost functions to the discrepancy between estimated and ground-truth ones regarding dynamics and the expert policy.
Based on this quantifiable measure of estimation errors, a tractable upper bound for sample complexity can be derived.

Under our framework, we design two strategic exploration algorithms for solving ICRL problems: 1) a Bounded Error Aggregate Reduction (\greedy) algorithm, which guides the exploration policy to minimize the upper bound of cost estimation errors, and 2) a Policy-Constrained Strategic Exploration (\textcolor{purple}{PCSE}) algorithm, which reduces the estimation error by selecting an exploration policy from a set of candidate policies. This collection of policies is rigorously established to encompass the optimal policy, thereby promising to accelerate the training process significantly.
We provide a rigorous sample complexity analysis for both algorithms, furnishing deeper insights into their training efficiency.

To empirically study how well our method captures the accurate constraint, we conduct evaluations under different environments. The experimental results show that \textcolor{purple}{PCSE} significantly outperforms other exploration strategies and applies to continuous environments.

\section{Related Work}
This section reviews the previous works that are most closely related to ours. Appendix \ref{sec:additional-related-works} provides further discussions.

\textbf{Exploration in Inverse Reinforcement Learning (IRL).} Compared with the exploration strategies in RL for forward control~\citep{Amin2021RLExplorationSurvey,Ladosz2022Exploration}, the exploration algorithms in IRL have relatively limited studies. 
\citet{Balakrishnan2020EfficientExploration} utilized Bayesian optimization to identify multiple IRL solutions by efficiently exploring the reward function space.
To learn a transferable reward function, \citet{metelli2021provably} introduced an active sampling methodology that targets the most informative regions with a generative model to facilitate effective approximations of the transition model and the expert policy. A subsequent research~\citep{lindner2022active} expanded this concept to finite-horizon MDPs with non-stationary policies, crafting innovative strategies to accelerate the exploration process.
To better quantify the precision of recovered feasible rewards, ~\citet{Metelli2023Towards} recently provided a lower bound on the sample complexity for estimating the feasible reward set in the finite-horizon setting with a generative model. 
However, these methods study only reward functions under a regular MDP without considering the safety of control or the constraints in the environment. 

\textbf{Inverse Constrained Reinforcement Learning (ICRL).}  
Inverse Constraint Learning (ICL) extended the IRL paradigm to account for safety issues. This line of research encompasses several notable works.
\citet{hugessen2024simplifying} simplify inverse constraint inference to a variant of IRL by jointly identifying the cost function and Lagrange parameters, which are assumed to form a convex cone. 
\citet{kim2024learning} generalized the IRL framework to infer tight safety constraints from multi-task expert demonstrations, offering both performance and constraint satisfaction guarantees. 
Building on this work, \citet{qadri2025your} revealed that inverse constraint inference recovers a dynamic-conditioned and failure-inevitable constraint set, rather than the original ground-truth constraint set.
Another line of research in classical ICRL algorithms updated the cost functions by maximizing the likelihood of generating the expert dataset under the maximum (causal) entropy framework~\citep{Scobee2020MKCL}. This method has been scaled to both discrete~\citep{McPherson2021MKStochasticConstraint} and continuous state-action spaces~\citep{Malik2021ICRL,baert2023causalicrl,liu2023benchmarking,qiao2023multimodal,xu2024robust,xu2024uncertaintyaware,zhao2025toward}.
To improve training efficiency, recent studies combined ICRL with bi-level optimization techniques~\citep{Liu2022DICRL,Gaurav2023SoftConstraint}. However, neither have these ICRL methods investigated exploration strategies based on estimation errors nor conducted theoretical studies on the sample efficiency of their algorithms.

\section{Preliminaries}
\textbf{Notation.}
Let $\mathcal{X}$ and $\mathcal{Y}$ be two sets. $\mathcal{Y}^\mathcal{X}$ represents the set of functions $f: \mathcal{X}\to\mathcal{Y}$. Let $\Delta^{\mathcal{X}}$ denote the set of probability measures over $\mathcal{X}$. Let $\Delta_{\mathcal{Y}}^{\mathcal{X}}$ denote the set of functions: $\mathcal{Y}\rightarrow\Delta^{\mathcal{X}}$. 
We define the vector infinity norm as $||a||_\infty={\rm max}_{i}|a_i|$ 
and the matrix infinity norm as $||A||_\infty=\max_{i}\sum_j|A_{ij}|$. We define
${\rm min}^+_{x\in\mathcal{X}} f(x)$ to return the minimum positive value of $f$ over $\mathcal{X}$. 
The complete notation is reported in Appendix \ref{sec:notation}.

\textbf{Constrained Markov Decision Process (CMDP).} We model the environment as a stationary CMDP $\mdp\cup\costFunction\!:=\!(\mathcal{\MakeUppercase{\state}},\mathcal{\MakeUppercase{\action}},\transition, \reward,\costFunction,\threshold,\mu_0,\gamma)$, where $\mathcal{\MakeUppercase{\state}}$ and $\mathcal{\MakeUppercase{\action}}$ are the finite state and action spaces, with each cardinality denoted as $S\!=\!|\mathcal{S}|$ and $A\!=\!|\mathcal{A}|$; $\transition(\state^{\prime}|\state,\action)\!\in\!\Delta_{\mathcal{\MakeUppercase{\state}}\times\mathcal{\MakeUppercase{\action}}}^{\mathcal{\MakeUppercase{\state}}}$
defines the transition distribution; $r\in[0,\MakeUppercase{r}_{\rm{max}}]^{\mathcal{S}\times\mathcal{A}}$ and $c\in[0,\MakeUppercase{c}_{\rm{max}}]^{\mathcal{S}\times\mathcal{A}}$ denote the reward and cost functions; $\threshold$ defines the threshold (budget) of the constraint; $\mu_0\!\in\!\Delta^{\mathcal{\MakeUppercase{\state}}}$ denotes the initial state distribution; $\gamma\!\in\![0,1)$ is the discount factor.  
$\mdp$ denotes the CMDP without knowing the cost (i.e., CMDP$\backslash c$).  
The agent's behavior is modeled by a policy $\pi\!\in\!\Delta^\mathcal{A}_\mathcal{S}$. $\Pi^*_{\mdp\cup\costFunction}$ denotes the set of all optimal policies for CMDP $\mdp\cup\costFunction$. The expert policy $\pi^E$ is optimal in the sense that $\pi^E$ maximizes the rewards while adhering to constraints, i.e., $\pi^E \in \Pi^*_{\mdp\cup\costFunction}$.
\orange{Let $f\!\in\!\mathbb{R}^S$ and $g\in\mathbb{R}^{\mathcal{S}\times\mathcal{A}}$, we slightly abuse $\transition$ and $\pi$ as operators: $(\transition f)(s,a)\!=\!\sum_{s^{\prime}\in\mathcal{S}}\transition(s^{\prime}|s,a)f(s^{\prime})$ and $( \pi g) ( s) \!=\! \sum_{a\in \mathcal{A} }\pi ( a| s) g( s, a) .$} We define the occupancy measure as $ \occupancy^{\pi}_{\mdp}(s,a)=(1-\gamma)\sum_{t=0}^{\infty}\gamma^t\mathbb{P}_{\mu_0}^\pi(s_t=s,a_t=a)$ where $\mathbb{P}_{\mu_0}^\pi$ denotes the probability at $(s,a)$ at timestep $t$ under the policy $\pi$ and the initial distribution $\mu_0$.
We focus on a discrete finite state-action space within an infinite planning horizon in this work.

\textbf{Constrained Reinforcement Learning (CRL).} Within a CMDP, CRL learns a policy $\pi$ that maximizes the discounted cumulative rewards subject to a known constraint:
\begin{align}
    &\arg\max_\pi~ \expect_{\mu_0,\pi,p_{\mathcal{T}}}\Big[\sum_{t=0}^\infty \gamma^t \reward(\state_t,\action_t)\Big]\allowdisplaybreaks\label{eq:crl-objective},\\
    &\text{ s.t. } ~\expect_{\mu_0,\pi,p_{\mathcal{T}}}\Big[\sum_{t=0}^\infty \gamma^t \costFunction(\state_t,\action_t)\Big] \le \threshold,\ \label{eq:crl-constraint}
\end{align}
where $\threshold=0$ indicates a hard constraint and $\threshold>0$ represents a soft constraint since $c$ is non-negative.

\textbf{Value and advantage functions.} We distinguish two cases where the first superscript $r$ or $c$ specifies the actual rewards or costs evaluated and the subscript $\mdp$ or $\mdp\cup c$ specifies the environment. We denote 
the reward action-value function as 
$Q^{r,\pi}_{\mdp}(s,a) = \expect_{\pi,\transition}\left[\sum_{t=0}^{\infty} \gamma^t r(s_t, a_t)|s_0=s,a_0=a\right],$
and the reward advantage function as
$A^{r,\pi}_{\mdp}(s,a) = Q^{r,\pi}_{\mdp}(s,a) - V^{r,\pi}_{\mdp}(s),$
where the reward state-value function
$V^{r,\pi}_{\mdp}(s) = \expect_\pi[Q^{r,\pi}_{\mdp}(s,a)]$. 
Likewise, we denote the cost action-value function as $Q^{c, \pi}_{\mdp\cup c}(s,a) = \expect_{\pi,\transition}\left[\sum_{t=0}^{\infty} \gamma^t c(s_t, a_t)|s_0=s,a_0=a\right]$
and the cost state-value function as $V^{c,\pi}_{\mdp\cup c}(s) = \expect_\pi[Q^{c,\pi}_{\mdp\cup c}(s,a)]$. 

\section{Learning Feasible Constraints}
This section formally defines the feasible cost set and investigates how estimation errors of the inferred cost functions can be bounded by imperfections in the estimates of both transition dynamics and the expert policy.

\begin{figure}[ht]
    \vspace{0.1in}
    \includegraphics[width=1.0\columnwidth]{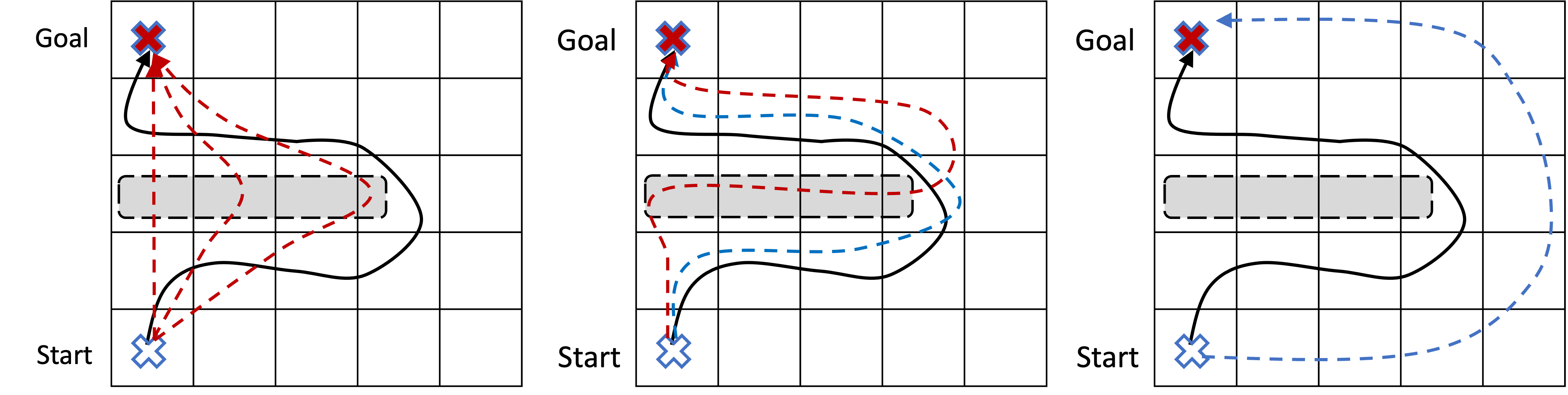}
    \caption{Trajectories of the expert policy (black) and exploratory policies (red and blue) in a Gridworld. The constraint (gray) is not observable. On the left, exploratory policies reach the goal in shorter paths and thus have larger rewards. 
    In the middle, the rewards of exploratory policies are equal to the rewards of the expert policy. Their trajectories can either overlap with the constraint (red) or simply mismatch the expert's (blue). 
    On the right, exploratory policies result in longer paths that gain fewer rewards.
    }\label{fig:example-trajectories}
\end{figure}
\subsection{Feasible Costs in CMDP}
Since the expert policy maximizes rewards within certain constraints, two key insights emerge 1) if a policy achieves higher rewards than the expert policy (shorter path in Figure~\ref{fig:example-trajectories}, left), the underlying constraints \textit{must be violated}, and we can detect unsafe state-action pairs by examining these infeasible trajectories; 
2) if a policy achieves the same or lower rewards than the expert policy (equal or longer path in Figure~\ref{fig:example-trajectories}, middle \& right), this suggests an absence of notable constraint-violating actions, \orange{implying that the underlying constraints \textit{may or may not be violated.}} 
To effectively prevent overfitting and mitigate the combinatorial explosion of the constraint space,
ICRL focuses on identifying the \textit{minimal} set of constraints necessary to explain expert behaviors~\citep{Scobee2020MKCL}. In this sense, only the first insight is utilized to expand the cost set.

\begin{lemma}\label{lemma:budget}
    Suppose the expert policy $\pi^E$ of a CMDP $\mdp\cup c$ is known, and the current state is $s$. Let $\mathfrak{A}^E(s)$ denote the set containing all expert actions at state $s$, i.e., $\mathfrak{A}^E(s)\!=\!\{a\in\!\mathcal{A}~|~\pi^E(a|s)\!>\!0\}$. Then, at least one of the following two conditions must be satisfied: 1) 
    the cost function satisfies 
    $\expect_{\mu_0,{\pi^E},P_{\mathcal{T}}}\Big[\sum_{t=0}^\infty \gamma^t c(s_t,a_t)\Big]\!=\epsilon$; 
    2) $\forall a^{\prime}\!\in\mathcal{A}\backslash\mathfrak{A}^E(s)$, 
$A^{r,\pi^E}_{\mdp}(s,{a^{\prime}})\leq0$.
\end{lemma}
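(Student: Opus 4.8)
The plan is to prove the implication ``if the first condition fails then the second must hold''; since the lemma asserts a disjunction, this suffices. Because $\pi^E$ is feasible for $\mdp\cup\costFunction$, its discounted expert cost satisfies $\expect_{\mu_0,\pi^E,\transition}\big[\sum_{t}\gamma^t c(s_t,a_t)\big]\le\epsilon$. Failure of condition~1 (equality) then forces the budget to be \emph{strictly} slack. In the hard-constraint case of Assumption~\ref{assumption:basic}(i) we have $\epsilon=0$ and $c\ge0$, so the expert cost is automatically $0=\epsilon$ and condition~1 holds trivially; the substantive work is therefore confined to the soft constraint of Assumption~\ref{assumption:basic}(ii), where the expert is deterministic and the budget may genuinely fail to bind.

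The key step is to argue that strict slack forces $\pi^E$ to be optimal for the \emph{unconstrained} reward MDP $\mdp$. I would establish this through the convexity of the admissible occupancy polytope. Suppose, for contradiction, some policy $\pi$ attains $\langle\occupancy^{\pi}_{\mdp},r\rangle>\langle\occupancy^{\pi^E}_{\mdp},r\rangle$. Along the segment $\occupancy_\lambda=(1-\lambda)\occupancy^{\pi^E}_{\mdp}+\lambda\occupancy^{\pi}_{\mdp}$, which is again a valid occupancy measure, the cost $\langle\occupancy_\lambda,c\rangle$ varies continuously in $\lambda$ and equals the strictly-slack expert cost at $\lambda=0$; hence for all sufficiently small $\lambda>0$ it still respects the budget while strictly increasing the reward, contradicting the optimality of $\pi^E$ for $\mdp\cup\costFunction$. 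Equivalently, one may invoke strong duality for CMDPs: the optimal multiplier $\lambda^\star\ge0$ obeys complementary slackness, so strict slack gives $\lambda^\star=0$ and $\pi^E$ maximizes the pure reward objective.

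Once $\pi^E$ is reward-optimal for $\mdp$, the Bellman optimality conditions yield $V^{\reward,\pi^E}_{\mdp}(s)=\max_{a}Q^{\reward,\pi^E}_{\mdp}(s,a)$, whence $A^{\reward,\pi^E}_{\mdp}(s,a)=Q^{\reward,\pi^E}_{\mdp}(s,a)-V^{\reward,\pi^E}_{\mdp}(s)\le0$ for all $(s,a)$, and in particular for every $a''\notin\mathfrak{A}^E(s')$ at the state $s'$ in question. This is precisely condition~2. The determinism assumed in the soft case makes $\mathfrak{A}^E(s')$ a singleton and keeps the greedy/advantage structure unambiguous.

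The main obstacle is the middle step, i.e.\ rigorously converting ``budget slack'' into ``unconstrained optimality'', and it splits into two delicate points. First, the perturbation must stay inside the admissible occupancy polytope and exploit that the constrained problem is a linear program in $\occupancy$, so that a feasible reward-improving direction genuinely contradicts optimality rather than merely suggesting it. Second, extracting the \emph{per-state} advantage inequality at the specific $s'$ (as opposed to optimality only with respect to $\mu_0$) requires that $s'$ be reachable, i.e.\ that expert optimality in Definition~\ref{def:ICRL-problem} be read per state, or equivalently that $\mu_0$ have full support over the states in $\mathcal{S}$ where $\pi^E$ is defined. I would state this reachability hypothesis explicitly and restrict the conclusion to such states.
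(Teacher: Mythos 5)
Your proposal is correct, but it takes a genuinely different route from the paper's proof. The paper argues by contradiction with a \emph{local} construction: assuming both conditions fail, it builds a perturbed policy that differs from $\pi^E$ only at $s'$, mixing in the advantageous action $a''$ with weight $\theta$, and then solves a Bellman-equation identity to exhibit a $\theta\in(0,1]$ that spends some of the leftover budget while strictly increasing reward, contradicting the expert's optimality. You instead prove the contrapositive \emph{globally}: strict budget slack plus convexity of the occupancy polytope (or, equivalently, complementary slackness with $\lambda^\star=0$) shows that any reward-improving policy could be mixed into $\pi^E$ while remaining feasible, so the expert must already be optimal for the unconstrained MDP $\mdp$; Bellman optimality then gives $A^{r,\pi^E}_{\mdp}(s,a)\leq 0$ for all actions at once, not just a single $a''$. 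Your route is cleaner and yields a stronger intermediate fact (slack implies unconstrained optimality of $\pi^E$), and it dispenses with the paper's somewhat delicate algebra for $\theta$; the paper's route is more elementary and self-contained, requiring no appeal to occupancy-measure convexity or LP duality. Both arguments share the same caveat, which you flag explicitly and the paper handles only informally (its claim that the visitation coefficient $\beta>0$): the per-state conclusion at $s'$ is only meaningful for states reachable under the expert from $\mu_0$, since $\mu_0$-optimality pins down the value function, and hence the advantage inequality, only on such states.
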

Intuitively, if there is some unused constraint budget and a better action, there must exist a chance for policy improvement, which violates the optimality of the expert policy.
The above lemma shows that cumulative costs of the expert policy must {\it reach the threshold}, i.e., use all the budget, if there exists a non-expert action yielding greater rewards than the expert policy (the first condition must be satisfied if the second condition is not satisfied). 
Thus, enforcing that any higher-reward action incurs greater costs than the expert policy suffices to establish a constraint-violation condition.

\orange{Let $\mathcal{Q}_c=\{(s,a) \vert Q^{c,\pi^{E}}_{\mathcal{M}\cup c}(s,a)-V^{c,\pi^{E}}_{\mathcal{M}\cup c}(s) > 0\}$ denote the set of state-action pairs with higher costs than the expert, given a cost function $c$. 
In scenarios with hard constraints, it simplifies to: $\mathcal{Q}_c=\{(s,a)|c(s,a) > 0\}$. }
We formally formulate the ICRL problem~\citep{Malik2021ICRL} as follows.


\begin{definition}\label{def:ICRL-problem}
An ICRL problem is a pair \orange{$\mathfrak{P}=(\mdp,\pi^{\expert})$}.
A cost function $\costFunction\in[0,\MakeUppercase{\cost}_{\rm{max}}]^{\mathcal{\MakeUppercase{\state}}\times\mathcal{\MakeUppercase{\action}}}$ is feasible for $\mathfrak{P}$ if $\pi^{\expert}$ is an optimal policy for the CMDP $\mdp\cup\costFunction$, i.e., $\pi^{\expert}\in\Pi_{\mdp\cup\costFunction}^{*}$.
\orange{Let $\mathcal{F}_{\mathfrak{P}}=\{c|\pi^{\expert}\in\Pi_{\mdp\cup\costFunction}^{*}\}$ denote a general set of feasible cost functions.} 
ICRL problem seeks to recover the minimal set of feasible cost functions for $\mathfrak{P}$, named feasible cost set that satisfies
\orange{$\mathcal{C}_{\mathfrak{P}}=\big\{c^*|c^*=\!\argmin_{c\in\mathcal{F}_{\mathfrak{P}}}|\mathcal{Q}_c|\big\}$.}
\end{definition}
We have defined the Q-function for non-expert actions. However, for stochastic expert actions under soft constraints, we only know that $\mathbb{E}_{a^\prime\sim\pi^E}[Q^{c,\pi^E}_{\mdp\cup c}(s,a^\prime)]=V^{c,\pi^E}_{\mathcal{M}\cup c}(s)\geq0$. To determine the exact value of the Q-function for expert actions, the following assumption is required.

\begin{assumption}\label{assumption:basic}
{ \it
    (i) The expert policy $\pi^E$ is optimal w.r.t rewards among all safe policies;\\
    (ii) The expert policy $\pi^E$ is deterministic for soft constraints. 
    }
\end{assumption}
Based on these findings, we establish the feasible cost set.
\begin{lemma}\label{Lemma:set-implicit}
{\it (Feasible Cost Set Implicit).} Under Assumption~\ref{assumption:basic}, 
$\costFunction$ is a feasible cost function for an ICRL problem $\mathfrak{P}$, i.e., $\costFunction\in\mathcal{\MakeUppercase{\cost}}_{\mathfrak{P}}$ if and only if  $\,\forall{(\state,\action)\in\mathcal{\MakeUppercase{\state}}\times\mathcal{\MakeUppercase{\action}}}$:
\begin{enumerate}[itemsep=0pt,topsep=0pt,parsep=0pt,label=(\roman*)]
    \item
    If 
    $\pi^{\expert}(\action|\state)>0$, i.e., $(\state,\action)$ follows the expert policy:
    \vspace{-1mm}\begin{align}
        Q^{\cost,\pi^{\expert}}_{\mdp\cup\costFunction}(\state,\action)-V^{\cost,\pi^{\expert}}_{\mdp\cup\costFunction}(\state) = 0.
    \end{align}\vspace{-2mm}
    \vspace{-1mm}\item If $\pi^{\expert}(\action|\state)=0$ and $A^{\reward,\pi^{\expert}}_{\mdp}(\state,\action)>0$, i.e., $(\state,\action)$ violates the constraint:
    \vspace{-1mm}\begin{align}
        Q^{\cost,\pi^{\expert}}_{\mdp\cup\costFunction}(\state,\action)-V^{\cost,\pi^{\expert}}_{\mdp\cup\costFunction}(\state) > 0. 
    \end{align}
    \item If $\pi^{\expert}(\action|\state)=0$ and $A^{\reward,\pi^{\expert}}_{\mdp}(\state,\action)\leq 0$, i.e., $(\state,\action)$ is in the non-critical region:
    \begin{align}
        Q^{\cost,\pi^{\expert}}_{\mdp\cup\costFunction}(\state,\action)-V^{\cost,\pi^{\expert}}_{\mdp\cup\costFunction}(\state) ~{\leq}~ 0.
    \end{align}
\end{enumerate}
\end{lemma}

\begin{lemma}\label{Lemma:FeasibleSetExplicit}
{\it (Feasible Cost Set Explicit).} 
Under Assumption~\ref{assumption:basic}, 
$\costFunction$ is a feasible cost function for an ICRL problem $\mathfrak{P}$, if and only if \orange{there exists $\zeta\in\mathbb{R}_{> 0}^{\mathcal{\MakeUppercase{\state}}\times\mathcal{\MakeUppercase{\action}}}$} and $V^{\costFunction}\in\mathbb{R}_{\geq 0}^{\mathcal{\MakeUppercase{\state}}}$:
\begin{align}
        \costFunction = A^{\reward,\pi^{\expert}}_{\mdp}\zeta+(E-\gamma \transition)V^{\cost},\label{Lemma:set-explicit}
\end{align}
\orange{where $E:\mathbb{R}^{\mathcal{\MakeUppercase{\state}}}\to\mathbb{R}^{\mathcal{\MakeUppercase{\state}}\times\mathcal{\MakeUppercase{\action}}}$ is the expansion operator that satisfies $(Ef) ( s, a) = f( s)$.} Furthermore, $\|V^{\cost}(s)\|_\infty\leq{C_{\rm max}}/(1-\gamma)$ and
$\|\zeta\|_\infty= C_{\rm max}/\max^+_{(s,a)}|A^{\reward,\pi^{\expert}}_{\mdp}|$.
\end{lemma}
Intuitively, the first term in~(\ref{Lemma:set-explicit}) penalizes constraint-violating actions that deviate from the expert's policy yet achieve higher rewards (i.e., $A^{\reward,\pi^{\expert}}_{\mdp}(s,a)>0$). This penalty ensures these actions violate the constraint in~(\ref{eq:crl-constraint}).
The second term $V^{\cost}\in\mathbb{R}^{\mathcal{\MakeUppercase{\state}}}$ can be interpreted as a cost-shaping operator that plays the role of translating the Q-function values by a fixed quantity. By utilizing the Bellman equation, we obtain that $V^{\cost}(s)=0$ for hard constraints and $V^{\cost}(\state)=V^{c,\pi^E}_{\mdp\cup c}(s)$ for soft constraints. Proofs of the above and the following theoretical results can be found in Appendix \ref{Proofs}.

\subsection{Estimating Transition Dynamics and Expert Policy}\label{subsec:trans-model}
Recall that our primary objective is to minimize the estimation errors of feasible cost functions. To obtain this error, we first introduce how we estimate the transition dynamics and the expert policy.
We consider a model-based setting where the agent strategically explores the environment to learn the transition dynamics and the expert policy. 
We record the returns from querying a state-action pair $(s,a)$ by observing a next state $s^\prime \sim P(\cdot|s,a)$, and the preferences of expert agents $a^{\expert} \sim \pi^{\expert}(\cdot|s)$ in each visited state. 
At iteration $k$, we denote by $n_k(s,a,s^\prime)$ the number of times we observe the transition tuple $(s,a,s^\prime)$.
We further denote $n_k(s,a)=\sum_{s^\prime\in\mathcal{\MakeUppercase{s}}}n_k(s,a,s^\prime)$. 
At iteration $k$, we denote by $n_k^E(s,a)$ the number of times we observe action $a$ as an expert decision at state $s$. We further denote $n_k^E(s)=\sum_{a\in\mathcal{\MakeUppercase{a}}}n_k^E(s,a)$.
We define four cumulative counts from iteration $1$ to $k$ as $N_k(s,a,s^\prime)=\sum_{j=1}^kn_j(s,a,s^\prime)$ and $N_k(s,a)=\sum_{j=1}^kn_j(s,a)$, $N_k^E(s,a)=\sum_{j=1}^kn_j^E(s,a)$ and $N_k^E(s)=\sum_{j=1}^kn_j^E(s)$. 
Eventually, the transition model and the expert policy for a state-action pair at iteration $k$ are estimated as:
\begin{align}
    \!\!\widehat{\transition}_\roundIndex(s^\prime|s,a)&=\frac{N_k(s,a,s^\prime)}{N_k^+(s,a)}, \!\!\quad\widehat\pi^\expert_k(a|s)=\frac{N_k^E(s,a)}{{N_k^E}^+(s)},\label{estimated_transition_expert_policy}
\end{align}
where $x^+=\max\{1,x\}$.

\subsection{Error Propagation}
Building on the above estimations and the definition of cost functions, we obtain a set of {\it estimated} feasible cost functions. Next, we investigate the estimation error for the feasible cost function and analyze its underlying sources.
\begin{lemma}\label{The:EP}
{\it (Error Propagation).} Let $\mathfrak{P}=(\mdp,\pi^{\expert})$ and $\widehat{\mathfrak{P}}=(\widehat{\mdp},\widehat{\pi}^{\expert})$ be two ICRL problems \orange{where $\widehat{\mdp}=(\mdp\backslash\transition)\cup\widehat{\transition}$}. For any $\costFunction\in\mathcal{\MakeUppercase{\cost}}_{\mathfrak{P}}$ satisfying $c=A^{\reward,\pi^{\expert}}_{\mdp}\zeta+(E-\gamma \transition)V^{\cost}$ and
$c\in[0, C_{\max}]^{\mathcal{S}\times\mathcal{A}}$,
there exists $\widehat{\costFunction}\in\mathcal{{\MakeUppercase{\cost}}}_{\widehat{\mathfrak{P}}}$ and \orange{$\widehat{c}\in[0, C_{\max}]^{\mathcal{S}\times\mathcal{A}}$}, $\forall{(\state,\action)\in\mathcal{\MakeUppercase{\state}}\times\mathcal{\MakeUppercase{\action}}}$:
\begin{align}
    \left|\costFunction-\widehat{\costFunction}\right|(s,a)&\leq \frac{2(\chi(s,a)+\chi)}{1+(\chi(s,a)+\chi)/\MakeUppercase{\cost}_{\rm{max}}},
\end{align}
where $\chi=\max_{(s,a)\in\mathcal{S}\times\mathcal{A}}\chi(s,a)$ and
\begin{align}
    \chi(s,a)\!=\!\gamma\left| (\transition\!-\!\widehat{\transition}){V}^{\cost}\right|(s,a)\!+\!\left|A^{\reward,\pi^{\expert}}_{\mdp}\!-\!A^{\reward,\widehat{\pi}^{\expert}}_{\widehat{\mdp}}\right|\zeta(s,a).\nonumber\allowdisplaybreaks
\end{align}
\end{lemma}
$\chi(s,a)$ is the distance at $(s,a)$ between the ground-truth cost function and a pseudo-estimated cost function with the same $\zeta$ and $V^c$, but does not necessarily fall into $[0, C_{\max}]$.
The first part of $\chi(s,a)$ reflects the estimation error of the transition model, while the second part depends on the estimation error of the advantage function, which can be further decomposed as follows:
\begin{lemma}\label{Lemma:AdvantageDif}
    Let $\mathfrak{P}=(\mdp,\pi^{\expert})$ and $\widehat{\mathfrak{P}}=(\widehat{\mdp},\widehat{\pi}^{\expert})$ be two ICRL problems.
    Then, we have 
    \begin{align}
    &\left|A^{\reward,\pi}_{\mdp} - A^{\reward,\widehat{\pi}}_{\widehat{\mdp}}\right|\leq\\
    &\frac{2\gamma}{1-\gamma}\left|(\widehat{\transition}-\transition)V^{\reward,\widehat{\pi}^E}_{\widehat\mdp}\right|+\frac{\gamma(1+\gamma)}{1-\gamma}\Big|(\pi-\widehat\pi){\transition}V^{\reward,\pi^E}_{\mdp}\Big|.\nonumber
\end{align}
\end{lemma}
To relate these error sources to the sample size, we derive confidence intervals for the transition model and expert policy using the Hoeffding inequality (see Lemma~\ref{Lemma:goodeventlemma}). We show that the true transition model and expert policy lie within these intervals with high probability. 
Based on these results, we derive an upper bound on the estimation errors of feasible cost functions and prove that this upper bound
is guaranteed with high probability as follows:

\begin{lemma}\label{Cor:MaxCost}
Let $\delta\in (0,1)$, with probability at least $1-\delta$, for any pair of cost functions $\costFunction\in\mathcal{\MakeUppercase{\cost}}_{\mathfrak{P}}$ and $\widehat{\costFunction}_k\in\mathcal{{\MakeUppercase{\cost}}}_{\widehat{\mathfrak{P}}_k}$ at iteration $k$,
we have 
\begin{align}
    &|\costFunction(s,a)-\widehat{\costFunction}_k(s,a)|\leq\mathcal{C}_k(s,a)=\\
    &\orange{\min\left\{\!
\frac{2\sigma\left(\sqrt{\frac{\ell_\roundIndex(\state,\action)}{2N_\roundIndex^{+}(\state,\action)}}+\max\limits_{(s,a)}\sqrt{\frac{\ell_\roundIndex(\state,\action)}{2N_\roundIndex^{+}(\state,\action)}}\right)}{1\!+\!\frac{\sigma}{\MakeUppercase{\cost}_{\rm{max}}}\left(\sqrt{\frac{\ell_\roundIndex(\state,\action)}{2N_\roundIndex^{+}(\state,\action)}}+\max\limits_{(s,a)}\sqrt{\frac{\ell_\roundIndex(\state,\action)}{2N_\roundIndex^{+}(\state,\action)}}\right)},\MakeUppercase{\cost}_{\rm{max}}\!\right\}}.\allowdisplaybreaks\nonumber
\end{align}
where $\sigma=\frac{\gamma C_{\rm max} \big(R_{\rm max}(3+\gamma)/\max^+\big|A^{\reward,\pi^{\expert}}_{\mdp}\big|+(1-\gamma)\big)}{(1-\gamma)^2}$ and $\ell_\roundIndex(\state,\action)={\rm log} \left(\frac{36SA(N_\roundIndex^{+}(\state,\action))^2}{\delta}\right)$.
\end{lemma}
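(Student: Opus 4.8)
The plan is to condition on the high-probability ``good event'' of Lemma~\ref{Lemma:goodeventlemma}, under which the empirical transition model $\widehat\transition_k$ and expert policy $\widehat\pi^\expert_k$ from~(\ref{estimated_transition_expert_policy}) concentrate around the true $\transition$ and $\pi^\expert$, and then to propagate these concentration bounds through the \emph{deterministic} inequalities of Lemmas~\ref{The:EP} and~\ref{Lemma:AdvantageDif} to obtain a per-$(s,a)$ bound on $|c-\widehat c_k|$. Once the good event is fixed, the remaining work is mostly bookkeeping that collects the resulting coefficients into the single constant $\sigma$, plus a trivial clipping step that produces the minimum with $C_{\rm max}$.

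Concretely, I would first invoke Lemma~\ref{The:EP}: for the given $c\in\mathcal{\MakeUppercase{\cost}}_{\mathfrak{P}}$ there exists $\widehat c_k\in\mathcal{\MakeUppercase{\cost}}_{\widehat{\mathfrak{P}}_k}$ with
\begin{align}
\big|c-\widehat c_k\big|\leq \gamma\big|(\transition-\widehat\transition_k)V^{\cost}\big|+\big|A^{\reward,\pi^{\expert}}_{\mdp}-A^{\reward,\widehat\pi^\expert_k}_{\widehat\mdp_k}\big|\,\zeta,\nonumber
\end{align}
and then substitute Lemma~\ref{Lemma:AdvantageDif} for the advantage discrepancy, producing three elementary error terms: $\gamma|(\transition-\widehat\transition_k)V^{\cost}|$, then $\tfrac{2\gamma}{1-\gamma}|(\widehat\transition_k-\transition)V^{\reward,\widehat\pi^\expert_k}_{\widehat\mdp_k}|\,\zeta$, and $\tfrac{\gamma(1+\gamma)}{1-\gamma}|(\pi^\expert-\widehat\pi^\expert_k)\transition V^{\reward,\pi^\expert}_{\mdp}|\,\zeta$. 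Each term is a transition- or policy-estimation error applied to a bounded value function, so I would bound the pieces separately: the value functions satisfy $\|V^{\cost}\|_\infty\leq C_{\rm max}/(1-\gamma)$ (Lemma~\ref{The:EP}) and $\|V^{\reward,\cdot}_{\cdot}\|_\infty\leq R_{\rm max}/(1-\gamma)$ (since $r\in[0,R_{\rm max}]$), while $\|\zeta\|_\infty\leq C_{\rm max}/\min^+_{(s,a)}|A^{\reward,\pi^{\expert}}_{\mdp}|$ (Lemma~\ref{The:EP}); the inner products $|(\transition-\widehat\transition_k)V|(s,a)$ and $|(\pi^\expert-\widehat\pi^\expert_k)\transition V|(s)$ are then controlled by the Hoeffding confidence radius $\sqrt{\ell_k(s,a)/(2N_k^+(s,a))}$ supplied by Lemma~\ref{Lemma:goodeventlemma}, using $N_k^+(s)\geq N_k^+(s,a)$ to express the policy term with the state--action counter.

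Collecting coefficients then yields $\tfrac{\gamma C_{\rm max}}{1-\gamma}$ from the first term and $\big(\tfrac{2\gamma}{1-\gamma}+\tfrac{\gamma(1+\gamma)}{1-\gamma}\big)\tfrac{R_{\rm max}}{1-\gamma}\cdot\tfrac{C_{\rm max}}{\min^+|A^{\reward,\pi^{\expert}}_{\mdp}|}=\tfrac{\gamma(3+\gamma)R_{\rm max}}{(1-\gamma)^2}\cdot\tfrac{C_{\rm max}}{\min^+|A^{\reward,\pi^{\expert}}_{\mdp}|}$ from the two advantage terms; factoring out $\tfrac{\gamma C_{\rm max}}{(1-\gamma)^2}$ reproduces exactly $\sigma$, giving $|c-\widehat c_k|\leq\sigma\sqrt{\ell_k(s,a)/(2N_k^+(s,a))}$, and the $\min\{\cdot,C_{\rm max}\}$ follows from the trivial bound $|c(s,a)-\widehat c_k(s,a)|\leq C_{\rm max}$ valid because both costs lie in $[0,C_{\rm max}]$. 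I expect the main obstacle to be the probabilistic step rather than the algebra: the bound must hold simultaneously for all $(s,a)$ and all iterations $k$, forcing an anytime union bound over state--action pairs and over every possible value of the counter $N_k^+(s,a)$ (the $\sum_n n^{-2}$ device is what creates the $(N_k^+)^2$ inside the logarithm and the constant $36$ absorbing the several concentration events). A secondary subtlety is that $V^{\reward,\widehat\pi^\expert_k}_{\widehat\mdp_k}$ is itself sample-dependent, so the concentration must be phrased as a bound on $\transition-\widehat\transition_k$ that holds uniformly over all bounded value functions rather than for one fixed $V$; packaging the estimate inside Lemma~\ref{Lemma:goodeventlemma} is precisely what resolves this measurability issue.
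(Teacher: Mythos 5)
Your proposal is correct and follows essentially the same route as the paper's proof: Lemma~\ref{The:EP} plus Lemma~\ref{Lemma:AdvantageDif} for the deterministic decomposition, the good event of Lemma~\ref{Lemma:goodeventlemma} for the Hoeffding-type concentration of each term, the bounds $\|V^{\cost}\|_\infty\leq C_{\rm max}/(1-\gamma)$ and $\|\zeta\|_\infty\leq C_{\rm max}/\min^+_{(s,a)}|A^{\reward,\pi^{\expert}}_{\mdp}|$ to collect the coefficients into $\sigma$, and the trivial clipping $|c-\widehat c_k|\leq C_{\rm max}$ for the minimum. Your coefficient bookkeeping $\frac{\gamma C_{\rm max}}{1-\gamma}+\frac{\gamma(3+\gamma)R_{\rm max}C_{\rm max}}{(1-\gamma)^2\min^+|A^{\reward,\pi^{\expert}}_{\mdp}|}=\sigma$ matches the paper's chain of inequalities exactly.
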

Intuitively, as the sample size grows, $\mathcal{C}_k(s,a)$ decreases,
progressively reducing the estimation errors of feasible cost functions towards zero. However, this error does not necessarily need to be infinitesimal for the estimated feasible cost functions to sufficiently explain the expert's behavior. 
Next, we define a Probably Approximately Correct (PAC) optimality criterion for the estimated cost. 
The estimated feasible set $\mathcal{{\MakeUppercase{\cost}}}_{\widehat{\mathfrak{P}}}$ is considered 'close' to the exact feasible set $\mathcal{\MakeUppercase{\cost}}_{\mathfrak{P}}$, if for every cost function $\costFunction\in\mathcal{\MakeUppercase{\cost}}_{\mathfrak{P}}$, there exists one estimated cost function $\widehat{\costFunction}\in\mathcal{{\MakeUppercase{\cost}}}_{\widehat{\mathfrak{P}}}$ such that their Q-functions diverge within $\varepsilon$, and vice versa.
\begin{definition}\label{Def:QP} 
    (Optimality Criterion). Let $\mathcal{\MakeUppercase{\cost}}_{\mathfrak{P}}$ be the exact feasible set and $\mathcal{{\MakeUppercase{\cost}}}_{\widehat{\mathfrak{P}}}$ be the feasible set recovered after observing $n\geq0$ samples collected in the source $\mdp$ and $\pi^\expert$. We say that an algorithm for ICRL is $(\varepsilon,\delta,n)$-correct if with probability at least $1-\delta$, it holds that:
    \begin{align}
        &\inf\limits_{\widehat{\costFunction}\in\mathcal{{\MakeUppercase{\cost}}}_{\widehat{\mathfrak{P}}}}\sup\limits_{{\pi}^\ast\in\Pi^\ast_{\mdp\cup{\cost}}}\left|Q^{\cost,\pi^\ast}_{\mdp\cup\costFunction}(\state_0,\action)-Q^{\widehat{\cost},{\pi}^\ast}_{\mdp\cup\widehat{\costFunction}}(\state_0,\action)\right|\leq\varepsilon, \forall \costFunction\in\mathcal{\MakeUppercase{\cost}}_{\mathfrak{P}},\allowdisplaybreaks\nonumber\\
        &\inf\limits_{\costFunction\in\mathcal{\MakeUppercase{\cost}}_{\mathfrak{P}}}\sup\limits_{{\widehat\pi}^\ast\in\Pi^\ast_{\widehat\mdp\cup{\widehat\cost}}}\left|Q^{\cost,\widehat\pi^\ast}_{{\mdp}\cup\costFunction}(\state_0,\action)-Q^{\widehat{\cost},\widehat{\pi}^\ast}_{{\mdp}\cup\widehat{\costFunction}}(\state_0,\action)\right|
        \leq\varepsilon, \forall\widehat{\costFunction}\in\mathcal{{\MakeUppercase{\cost}}}_{\widehat{\mathfrak{P}}},\nonumber
    \end{align}
where $\pi^\ast$ is an optimal policy in $\mdp\cup\cost$ and $\widehat{\pi}^\ast$ is an optimal policy in $\widehat\mdp\cup\widehat{\cost}$.
\end{definition}
The criterion ensures estimation errors of feasible cost functions do not compromise the optimality of the expert policy. The first condition ensures completeness, requiring the recovered feasible cost set to track every true cost function.
The second condition guarantees that there exists a true cost function close to every recovered cost function. This prevents the recovery of an excessively large feasible set that would overly prioritize completeness.

\section{Efficient Exploration for ICRL}
In this section, we introduce two algorithms for efficient exploration, addressing the challenge of collecting high-quality samples through interactions with the environment, thereby increasing the accuracy of our cost set estimations.
Unlike most existing ICRL works~\citep{papadimitriou2023bayesian,liu2022uncertainty,yue2025understanding} that rely on a generative model for sample collection, our exploration strategy must determine {\it which} states need more frequent visits and {\it how} to traverse to them from the initial state.
We propose a Bounded Error Aggregate Reduction algorithm (\greedy{}, Section~\ref{subsec:greedy-exploration})
and a Policy-Constrained Strategic Exploration algorithm (\textcolor{purple}{PCSE}, Section~\ref{subsec:policy-constrained-exploration}) for solving ICRL problems in Algorithm \ref{AEG}.
\begin{algorithm}[htbp]
\caption{\greedy{} and \textcolor{purple}{PCSE} for ICRL in an unknown environment}\label{AEG}
\begin{algorithmic}
\STATE {\bfseries Input:} significance $\delta\!\in\!(0,1)$, target accuracy $\varepsilon$, maximum number of samples per iteration $n_{\rm max}$;
\STATE Initialize $k\gets 0$, $\varepsilon_0=\frac{1}{1-\gamma}$;
\WHILE{$\varepsilon_k>\varepsilon$}
\STATE \textcolor{teal}{Solve RL problem defined by $\mdp^{\mathcal{C}_k}$ to obtain the exploration policy $\pi_k$;}
\STATE \textcolor{purple}{Solve optimization problem in~(\ref{optimizationproblem}) to obtain the exploration policy $\pi_k$;}
\STATE Explore with $\pi_k$ for $n_e$ episodes;
\STATE For each episode, collect $n_{\rm max}$ samples from $\mathcal{S}\times\mathcal{A}$;
\STATE \textcolor{teal}{Update accuracy $\varepsilon_{k+1}=$}
\STATE \qquad\textcolor{teal}{$\max_{(s,a)\in \mathcal{\MakeUppercase{\state}}\times\mathcal{\MakeUppercase{\action}}}\mathcal{C}_{k+1}(s,a)/(1-\gamma)$;}
\STATE \textcolor{purple}{Update accuracy $\varepsilon_{k+1}=$}
\STATE \qquad\textcolor{purple}{$\|\mu_0^T(I_{\mathcal{\MakeUppercase{\state}}\times\mathcal{\MakeUppercase{\action}}}-\gamma\transition\pi)^{-1}\mathcal{C}_k\|_\infty$;}
\STATE Update $\widehat{\pi}^\expert_{k+1}$ and $\widehat{\transition}_{\roundIndex+1}$ in (\ref{estimated_transition_expert_policy});
\STATE $k\gets k+1$.
\ENDWHILE
\end{algorithmic}
\end{algorithm}
\subsection{Exploration via Reducing Bounded Errors}\label{subsec:greedy-exploration}
To fulfill the optimality criterion in Definition \ref{Def:QP}, we begin by relating it to the cost estimation error.
\begin{lemma}\label{Lemma:furthererror}
Let
$e_k(s,\!a;\!\pi^\ast\!)\!=\!|Q^{\cost,\pi^\ast}_{\mdp\cup\costFunction}(\state,\!\action)\!-\!Q^{{\cost},\pi^\ast}_{\mdp\cup{\widehat\costFunction}_k}(\state,\!\action)|$
define the optimality error of state-action pair $(s,a)$ at iteration $k$ for $\pi^{*}\in\Pi^{*}_{\mdp\cup\costFunction}$. We upper bound it as follows:
\begin{align}    \!\!\|e_k(s,a;\pi^\ast)\|_\infty\!\leq\!\left\|\mu_0^T(I_{\mathcal{\MakeUppercase{\state}}\times\mathcal{\MakeUppercase{\action}}}\!-\!\gamma\transition\pi)^{-1}\mathcal{C}_k\right\|_\infty.
\end{align}
\end{lemma}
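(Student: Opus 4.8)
The plan is to obtain the bound by chaining the two estimation-error results already in hand: the element-wise control of the cost-function error from Lemma~\ref{Cor:MaxCost}, and the propagation of a cost-function error into a $Q$-function error from Lemma~\ref{QFunction}. The whole argument rests on the observation that the relevant resolvent operator is entry-wise non-negative, so the pointwise cost bound $\mathcal{C}_k$ can be substituted inside it while preserving the inequality.

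First I would apply the first (element-wise) inequality of Lemma~\ref{QFunction} with the specific choice $\pi=\pi^\ast\in\Pi^\ast_{\mdp\cup\costFunction}$, giving, for every $(s,a)$,
\begin{align}
e_k(s,a;\pi^\ast)=\left|Q^{\cost,\pi^\ast}_{\mdp\cup\costFunction}(s,a)-Q^{\cost,\pi^\ast}_{\mdp\cup\widehat{\costFunction}_k}(s,a)\right|\leq\left|(I_{\mathcal{S}\times\mathcal{A}}-\gamma\transition\pi^\ast)^{-1}\,|\costFunction-\widehat{\costFunction}_k|\right|(s,a).\nonumber
\end{align}
The structural fact I would exploit next is that $(I_{\mathcal{S}\times\mathcal{A}}-\gamma\transition\pi^\ast)^{-1}=\sum_{t=0}^{\infty}(\gamma\transition\pi^\ast)^t$ converges (since $\transition\pi^\ast$ is a stochastic operator and $\gamma<1$) and has non-negative entries, so the map $v\mapsto(I_{\mathcal{S}\times\mathcal{A}}-\gamma\transition\pi^\ast)^{-1}v$ is monotone on non-negative vectors.

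I would then invoke Lemma~\ref{Cor:MaxCost}: on the good event of probability at least $1-\delta$, the pointwise bound $|\costFunction(s,a)-\widehat{\costFunction}_k(s,a)|\leq\mathcal{C}_k(s,a)$ holds simultaneously for all $(s,a)$. Using monotonicity of the resolvent, substituting $\mathcal{C}_k$ for $|\costFunction-\widehat{\costFunction}_k|$ does not reverse the inequality, so on the same event
\begin{align}
e_k(s,a;\pi^\ast)\leq\left[(I_{\mathcal{S}\times\mathcal{A}}-\gamma\transition\pi^\ast)^{-1}\mathcal{C}_k\right](s,a)\qquad\text{for all }(s,a).\nonumber
\end{align}
Finally I would pass to the stated norm by recognizing $\mu_0^T(I_{\mathcal{S}\times\mathcal{A}}-\gamma\transition\pi^\ast)^{-1}$ as the (unnormalized) discounted occupancy $\occupancy^{\pi^\ast}_{\mdp}/(1-\gamma)$ generated by starting from $\mu_0$ and rolling out $\pi^\ast$, so that the right-hand side is precisely the discounted cumulative cost-estimation error accumulated along the state-action pairs that $\pi^\ast$ actually visits.

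The step I expect to be the main obstacle is the last one: the careful bookkeeping between the state space and the state-action space when the initial distribution $\mu_0$ (a distribution over states) is composed with the state-action resolvent. Making this precise requires lifting $\mu_0$ through $\pi^\ast$ to the initial state-action distribution and verifying that $\mu_0^T(I_{\mathcal{S}\times\mathcal{A}}-\gamma\transition\pi^\ast)^{-1}\mathcal{C}_k$ genuinely coincides with the occupancy-weighted error defined via $\occupancy^{\pi^\ast}_{\mdp}$ in the preliminaries. Once this identification is pinned down, the remaining ingredients—non-negativity of the Neumann series, the monotone substitution, and the high-probability event from Lemma~\ref{Cor:MaxCost}—are routine.
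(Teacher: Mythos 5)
Your proposal is correct and takes essentially the same route as the paper's own proof: step one is exactly the paper's application of the element-wise inequality in Lemma~\ref{QFunction} with $\pi=\pi^\ast$ and $\widehat{\costFunction}=\widehat{\costFunction}_k$, and step two is exactly the paper's invocation of Lemma~\ref{Cor:MaxCost} to replace $|\costFunction-\widehat{\costFunction}_k|$ by $\mathcal{C}_k$. The details you add (non-negativity of the Neumann series justifying the monotone substitution, and the lifting of $\mu_0$ through $\pi^\ast$ to reconcile the state-space initial distribution with the state-action resolvent) only make explicit what the paper leaves implicit.
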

We show in the lemma below that the exploration algorithm converges (satisfies Definition \ref{Def:QP}) at iteration $k$ if either one of the following statements is satisfied:
\begin{lemma}\label{intermediate}
Let $\mathcal{\MakeUppercase{\cost}}_{\mathfrak{P}}$ be the ground-truth feasible set and $\mathcal{{\MakeUppercase{\cost}}}_{\widehat{\mathfrak{P}}_k}\!$ be the recovered feasible set after $k$ iterations.  The conditions of Definition \ref{Def:QP} are satisfied if either of the following statements are satisfied:
\begin{align}
&\!\!(1)\quad\frac{1}{1-\gamma}\max\limits_{(s,a)}\mathcal{C}_k(s,a)\leq\varepsilon;\label{convergencecondition1}\allowdisplaybreaks\\
&\!\!(2)\quad\max\limits_{\pi\in\Pi^\dagger}\max\limits_{\mu_0\in \Delta^{\mathcal{\MakeUppercase{\state}}}}\left|\mu_0^T(I_{\mathcal{\MakeUppercase{\state}}\times\mathcal{\MakeUppercase{\action}}}-\gamma\transition\pi)^{-1}\mathcal{C}_k\right|\leq\varepsilon,\label{convergencecondition}\allowdisplaybreaks\\ &\qquad\;\;\Pi^\dagger=\left(\bigcap_{\cost\in\mathcal{\MakeUppercase{\cost}}_{\mathfrak{P}}}\Pi^\ast_{\mdp\cup\cost}\right)\cup\left(\bigcap_{\widehat\cost\in\mathcal{{\MakeUppercase{\cost}}}_{\widehat{\mathfrak{P}}_k}}\Pi^\ast_{{\widehat\mdp}_k\cup{\widehat\cost}_k}\right)\nonumber
\end{align}
\end{lemma}
Based on (\ref{convergencecondition1}), we introduce \greedy{} in Algorithm~\ref{AEG} (highlighted in teal), which derives the $k$-th iteration exploration policy $\pi_k$ by solving the RL problem 
where the reward function $r=\mathcal{C}_k$. In practice, any RL solver can be applied to determine the exploration policy. Next, we analyze the sample complexity of \greedy{}.

\textbf{Sample Complexity.}  
Due to stochasticity in the environmental dynamics, we employ pseudo-counts to calculate the number of visitations to state-action pairs during traversal induced by the exploration policy.
Let $\eta_k^h(s,a|s_0),h\in[n_{\rm{max}}]$ be the probability of state-action pair $(s,a)$  reached in the $h$-th step following a policy $\pi_k\in\Pi_{\mdp^{\mathcal{C}_k}}$ starting in state $s_0$. We can compute it recursively
\begin{gather*}
    \eta^0_k(s,a|s_0):=\pi_k(a|s)\mathds{1}_{\{s=s_0\}},\allowdisplaybreaks\nonumber\\
\eta^{h+1}_k(s,a|s_0):=\sum\limits_{a',s'}\pi_k(a|s)\transition(s|s^\prime,a^\prime)\eta^h_k(s^\prime,a^\prime|s_0),\allowdisplaybreaks\nonumber
\end{gather*}
\begin{definition}({Pseudo-counts}).\label{def:pseudocount}
    We introduce the pseudo-counts of visiting a specific state-action pair $(s,a)$  after $k$ iterations
    as:
\begin{align}
    \Bar{N}_k(\state,\action)&=\mu_0
    \sum_{h=1}^{n_{\rm{max}}}
    \sum_{i=1}^k\eta^h_i(s,a|s_0).\nonumber
\end{align}
\end{definition}
Similar to~(\ref{estimated_transition_expert_policy}), we define $\Bar N^+_k(\state,\action)=\max\{0,\Bar{N}_k(\state,\action)\}$.
The following lemma provides an upper bound on the actual error in terms of the error induced by pseudo-counts.
\begin{lemma}\label{Lemma:pseudocount}
With probability at least $1-{\delta}/{2}$, $\forall s, a, h, k \in\ \mathcal{S}\times\mathcal{A}\times[n_{\rm{max}}]\times\mathbb{N}^+$, we have:
\begin{align}
    \min\left\{
\sigma\sqrt{\frac{\ell_\roundIndex(\state,\action)}{2N_\roundIndex^{+}(\state,\action)}},\MakeUppercase{\cost}_{\rm{max}}\right\}\leq\check\sigma\sqrt{\frac{2\Bar\ell_k(\state,\action)}{\Bar N^+_k(\state,\action)}},
\end{align}
where $\Bar\ell_k(\state,\action)={\rm log}({36SA(\bar N_\roundIndex^{+}(\state,\action))^2}/{\delta})$ and $\check\sigma=\max\{\sigma,\sqrt{2}C_{\rm{max}}\}$.
\end{lemma}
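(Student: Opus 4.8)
The plan is to convert the high-probability per-iteration cost bound $\mathcal{C}_k$ from Lemma~\ref{Cor:MaxCost}, which is stated in terms of the \emph{realized} counts $N_k(s,a)$, into a bound phrased in the \emph{deterministic} pseudo-counts $\bar N_k(s,a)$ of Definition~\ref{def:pseudocount}. The entire content is a concentration statement relating the two. First I would set up a martingale. Let $\mathcal{F}_{i-1}$ denote the history generated up to the beginning of iteration $i$, so that the exploration policy $\pi_i$ is $\mathcal{F}_{i-1}$-measurable. Writing $X_i^h(s,a)=\mathds{1}\{\text{the agent occupies }(s,a)\text{ at step }h\text{ of iteration }i\}$, the trajectory dynamics give $\mathbb{E}[X_i^h(s,a)\mid\mathcal{F}_{i-1}]=\mu_0^{\top}\eta_i^h(s,a\mid s_0)$, so that $N_k(s,a)=\sum_{i\le k}\sum_{h\le n_{\rm max}}X_i^h(s,a)$ has conditional-mean process exactly $\bar N_k(s,a)$. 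Hence $M_k:=N_k(s,a)-\bar N_k(s,a)$ is a martingale with increments in $[0,1]$.

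Next I would apply an anytime Bernstein/Freedman inequality to $M_k$. Since the increments are Bernoulli, the predictable quadratic variation is bounded by the conditional mean $\bar N_k(s,a)$, yielding, with probability $1-\delta'$ simultaneously for all $k$, a bound of the form $|N_k(s,a)-\bar N_k(s,a)|\lesssim\sqrt{\bar N_k(s,a)\log(1/\delta')}+\log(1/\delta')$. Taking a union bound over the $SA$ pairs and folding the union over $k$ into the logarithm through the standard $\sum_k k^{-2}=\pi^2/6$ summation is exactly what produces the factor $36SA(\bar N_k^+)^2/\delta$ appearing in $\bar\ell_k$ (the two factors of $6$ squaring to $36$, with total failure probability $\delta/2$). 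Once $\bar N_k(s,a)$ exceeds a logarithmic threshold the additive $\log$ term is dominated and this collapses to two-sided multiplicative control $\tfrac12\bar N_k(s,a)\le N_k(s,a)\le 2\bar N_k(s,a)$, with the threshold arranged to equal the cutoff used below.

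With this in hand I would finish by a case split on the size of $\bar N_k^+(s,a)$. If $\bar N_k^+(s,a)\le 4\bar\ell_k(s,a)$, I bound the right-hand side directly: using $\check\sigma\ge\sqrt2\,C_{\rm max}$ gives $\check\sigma\sqrt{2\bar\ell_k/\bar N_k^+}\ge 2C_{\rm max}\sqrt{\bar\ell_k/\bar N_k^+}\ge C_{\rm max}$, and since the left-hand side never exceeds $C_{\rm max}$ the inequality holds for free. If instead $\bar N_k^+(s,a)> 4\bar\ell_k(s,a)$, the concentration lower bound $N_k^+\ge\tfrac12\bar N_k^+$ lets me replace $2N_k^+$ by $\bar N_k^+$ in the denominator, while the upper bound $N_k^+\le 2\bar N_k^+$ gives $\ell_k\le\bar\ell_k+\log4\le 2\bar\ell_k$ (using $36SA/\delta>4$ and $\bar N_k^+\ge1$). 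Combining these with $\check\sigma\ge\sigma$ turns $\sigma\sqrt{\ell_k/(2N_k^+)}$ into something no larger than $\check\sigma\sqrt{2\bar\ell_k/\bar N_k^+}$, which is the claim.

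The main obstacle is the concentration step, not the algebra that follows it. Because each exploration policy $\pi_i$ is chosen adaptively from the past, the visits are \emph{not} independent and a plain Chernoff/Hoeffding bound is invalid; the martingale difference structure together with a variance-aware (Freedman-type) inequality is what makes the argument go through, and the intra-iteration correlation among the $n_{\rm max}$ steps of a single trajectory is absorbed cleanly only because $\eta_i^h$ already encodes the transition dynamics. The delicate bookkeeping is making the bound anytime-valid uniformly over $k$ with precisely the $\log(36SA(\bar N_k^+)^2/\delta)$ form and confirming that the additive Freedman term is swallowed in the large-count regime so that the clean factor-of-two survives; matching the constants (the $36$ and the $\sqrt2$ inside $\check\sigma$) to a $\delta/2$ failure budget is routine but must be tracked.
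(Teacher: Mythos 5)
Your overall architecture matches the paper's: an anytime martingale concentration bound relating the realized counts $N_k(s,a)$ to the pseudo-counts $\bar N_k(s,a)$, followed by a case split on whether $\bar N_k^+$ is small (where the $C_{\rm max}$ cap together with $\check\sigma\geq\sqrt{2}C_{\rm max}$ closes the bound without any concentration) or large (where counts are substituted for pseudo-counts). The paper gets the concentration step from \citep[Lemma 9]{kaufmann2021adaptive}, which is precisely the adapted-Bernoulli martingale bound you describe, in the one-sided additive form $N_k(s,a)\geq\frac12\bar N_k(s,a)-\log(2SA/\delta)$ holding uniformly over $k$ and $(s,a)$ with probability $1-\delta/2$.

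The gap is in how you close the large-count case. You rely on two-sided factor-of-two control $\frac12\bar N_k^+\leq N_k^+\leq 2\bar N_k^+$ at the threshold $\bar N_k^+>4\bar\ell_k$, and your final algebra genuinely requires the lower constant to be $\frac12$: writing $N_k^+\geq\alpha\bar N_k^+$, your chain gives $\frac{\ell_k}{2N_k^+}\leq\frac{\ell_k}{2\alpha\bar N_k^+}$, so you need $\ell_k\leq 4\alpha\bar\ell_k$; since $\ell_k$ can exceed $\bar\ell_k$ (whenever $N_k^+>\bar N_k^+$), this forces $\alpha\geq\frac12$. But none of the inequalities you invoke delivers $\alpha=\frac12$ at that threshold: the additive bound above yields only $N_k\geq\frac12\bar N_k-\log(2SA/\delta)\geq\frac14\bar N_k$ under the case condition, and a Freedman/Bernstein bound $N_k\geq\bar N_k-\sqrt{2\bar N_k w}-w/3$ evaluated at $\bar N_k=4w$ gives roughly $N_k\geq 0.2\,\bar N_k$. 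Raising the case threshold to restore $\alpha=\frac12$ (Freedman needs roughly $\bar N_k\gtrsim 10\,\bar\ell_k$) breaks the small-count case, which needs $\bar N_k^+\leq 4\bar\ell_k$ exactly so that $\sqrt{2}C_{\rm max}\sqrt{2\bar\ell_k/\bar N_k^+}\geq C_{\rm max}$; with the lemma's fixed $\check\sigma=\max\{\sigma,\sqrt{2}C_{\rm max}\}$ you are left with an intermediate regime, $4\bar\ell_k<\bar N_k^+\lesssim 10\,\bar\ell_k$, covered by neither case.

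The paper closes this with one observation your proposal is missing, and it also makes the upper-tail bound unnecessary: the map $x\mapsto\log(36SAx^2/\delta)/x$ is non-increasing for $x\geq{\rm e}\sqrt{\delta/(36SA)}$. Hence from the one-sided conclusion $N_k^+\geq\frac14\bar N_k^+$ alone,
\begin{align*}
\frac{\ell_k(s,a)}{2N_k^+(s,a)}\;\leq\;\frac{\log\bigl(36SA(\bar N_k^+(s,a)/4)^2/\delta\bigr)}{\bar N_k^+(s,a)/2}\;\leq\;\frac{2\bar\ell_k(s,a)}{\bar N_k^+(s,a)},
\end{align*}
so the logarithm and the count are handled jointly and the possibility $N_k^+\gg\bar N_k^+$ costs nothing. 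Replacing your two-sided step with this monotonicity argument (and splitting on $\log(2SA/\delta)\lessgtr\frac14\bar N_k$, as the paper does) repairs the proof while keeping the stated constants.
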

Subsequently, we derive the sample complexity of \greedy{}:
\begin{theorem}\label{SC:greedy}
(Sample Complexity of \greedy). 
If Algorithm~\greedy{} terminates at iteration $K$ with the updated accuracy $\varepsilon_K$, then with probability at least $1-\delta$, it fulfills Definition \ref{Def:QP} with a number of samples upper bounded by
\begin{align}
    n\leq\widetilde{\mathcal{O}}\left(\frac{{\check\sigma}^2(2C_{\rm max}-\varepsilon_K(1-\gamma))^2}{(1-\gamma)^2\varepsilon_K^2C_{\rm \max}^2}\right).\nonumber
\end{align}
\end{theorem}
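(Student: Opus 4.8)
The plan is to separate the argument into a correctness part—showing that at the stopping time the recovered set meets the PAC criterion of Definition~\ref{Def:QP}—and a counting part that bounds the interactions consumed before the stopping rule fires. I would carry out the whole argument on the good event where the conclusions of Lemma~\ref{Cor:MaxCost} and Lemma~\ref{Lemma:pseudocount} hold simultaneously; allocating failure probability $\delta/2$ to each, this event has probability at least $1-\delta$, and the union bounds over $(s,a)$ and over iterations are already folded into the logarithmic terms $\ell_k$ and $\bar\ell_k$, which I will sweep into $\widetilde{\mathcal{O}}(\cdot)$.

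For correctness, I would use the fact that \greedy{} stops at iteration $K$ exactly when condition $(i)$ of~(\ref{convergencecondition}) holds, i.e. $\tfrac{1}{1-\gamma}\max_{(s,a)}\mathcal{C}_K(s,a)\leq\varepsilon_K$. Fixing any $c\in\mathcal{C}_{\mathfrak{P}}$ and taking the matching $\widehat{c}_K\in\mathcal{C}_{\widehat{\mathfrak{P}}_K}$ from Lemma~\ref{Cor:MaxCost} gives $\|c-\widehat{c}_K\|_\infty\leq\max_{(s,a)}\mathcal{C}_K(s,a)$, and then the second inequality of Lemma~\ref{QFunction} yields
\[
\big\|Q^{c,\pi^\ast}_{\mdp\cup c}-Q^{c,\pi^\ast}_{\mdp\cup\widehat{c}_K}\big\|_\infty\leq\tfrac{1}{1-\gamma}\|c-\widehat{c}_K\|_\infty\leq\tfrac{1}{1-\gamma}\max_{(s,a)}\mathcal{C}_K(s,a)\leq\varepsilon_K
\]
for every optimal $\pi^\ast$, which is precisely the completeness condition of Definition~\ref{Def:QP}; the accuracy condition follows symmetrically by choosing $c$ given $\widehat{c}_K$. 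Thus the output is $(\varepsilon_K,\delta,n)$-correct.

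For the sample complexity I would convert the stopping rule into a per-pair visitation budget. By Lemma~\ref{Lemma:pseudocount}, $\mathcal{C}_k(s,a)\leq\check\sigma\sqrt{2\bar\ell_k(s,a)/\bar N_k^+(s,a)}$, so to push $\mathcal{C}_k(s,a)$ below the stopping threshold $(1-\gamma)\varepsilon_K$ it suffices that the pseudo-count reach $\bar N_k^+(s,a)\geq 2\check\sigma^2\bar\ell_k(s,a)/\big((1-\gamma)^2\varepsilon_K^2\big)$. Since each $\eta^h_i(\cdot\mid s_0)$ is a probability distribution, the total pseudo-mass $\sum_{(s,a)}\bar N_k(s,a)$ accumulated equals the number of interaction steps up to iteration $k$; summing the per-pair budget over the $SA$ pairs and transferring the guarantee back to the realized counts $N_k(s,a)$ therefore gives
\[
n\leq\sum_{(s,a)}\widetilde{\mathcal{O}}\!\left(\frac{\check\sigma^2}{(1-\gamma)^2\varepsilon_K^2}\right)=\widetilde{\mathcal{O}}\!\left(\frac{\check\sigma^2 SA}{(1-\gamma)^2\varepsilon_K^2}\right).
\]

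The step I expect to be the main obstacle is this last transfer from pseudo-counts to the \emph{realized} sample count $n$. The pseudo-counts $\bar N_k$ are expected visitations under the exploration policies $\pi_k\in\Pi_{\mdp^{\mathcal{C}_k}}$, whereas $n$ tallies actual transitions, so closing the gap needs a martingale concentration argument ensuring $N_k(s,a)$ does not lag far behind $\bar N_k(s,a)$ uniformly in $(s,a)$—this is exactly what Lemma~\ref{Lemma:pseudocount} buys, together with the inflation of $\sigma$ to $\check\sigma=\max\{\sigma,\sqrt{2}C_{\max}\}$. I would also have to verify that the greedy objective $\mdp^{\mathcal{C}_k}=(\mdp\backslash r)\cup\mathcal{C}_k$ does not squander samples on already-resolved pairs, so that the stopping time is governed by the $SA$ per-pair budgets rather than by reachability overhead; here the shape of the exploration reward $\mathcal{C}_k$—sizeable only where the current estimate is still uncertain—is what keeps the count controlled, and it is where the bookkeeping must be done most carefully.
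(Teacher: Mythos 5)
Your proposal is correct and takes essentially the same route as the paper: the paper also gets correctness from the termination condition via Corollary~\ref{intermediate} (i.e., Lemma~\ref{Cor:MaxCost} combined with Lemma~\ref{QFunction}), and gets the count by inverting the pseudo-count bound of Lemma~\ref{Lemma:pseudocount} per state-action pair and summing over the $SA$ pairs, identifying $n$ with the total pseudo-mass $\sum_{(s,a)}\bar N_K^+(s,a)$. The only differences are cosmetic: the paper performs your informal log-inversion rigorously via the Lambert-$W$ bound (Lemma~\ref{B.8}), your worry about transferring from pseudo-counts to realized counts is exactly what Lemma~\ref{Lemma:pseudocount} already buys, and the overshoot/reachability issue you flag at the end is not treated any more carefully in the paper either — it is absorbed by stating the bound in terms of the achieved accuracy $\varepsilon_K$.
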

The above theorem has taken into account the sample complexity of the {\it RL phase}.
In fact, further improvements can be made to enhance the algorithm's performance.
\subsection{Exploration via Constraining Candidate Policies}\label{subsec:policy-constrained-exploration}
The above exploration strategy has limitations, as it aims to minimize uncertainty across {\it all policies}, whereas it could be more effective by focusing on reducing uncertainty only for {\it potentially optimal policies}. To address this, we propose \textcolor{purple}{PCSE} for ICRL in Algorithm \ref{AEG} (highlighted in purple). At each iteration, we intentionally restrict the search to policies that yield a value function close to the estimated optimal one regarding both rewards and costs. This allows us to focus solely on plausibly optimal policies, and we formulate the optimization problem as follows:
\begin{align}
\varepsilon_{k+1}&=\sup_{\substack{\mu_0\in\Delta^{{\mathcal{\MakeUppercase{\state}}}}\\\pi\in\Pi_{k}}}\mu_0^{T}(I_{\mathcal{\MakeUppercase{\state}}\times\mathcal{\MakeUppercase{\action}}}-\gamma\transition\pi)\mathcal{C}_{k+1},
\label{optimizationproblem}\allowdisplaybreaks\\
\hspace{-2mm}\quad\text{s.t.}\hspace{-2mm}
\quad\Pi_k&=\Pi_k^c\cap\Pi_k^r,\nonumber\allowdisplaybreaks\\
\Pi_k^c&=\bigg\{\pi:\!\!\!\sup_{\mu_0\in\Delta^{\mathcal{\MakeUppercase{\state}}}}\!\mu_0^{T}\left(V^{\cost,\pi}_{\widehat\mdp_k\cup\widehat{c_k}}\!-\!V^{\cost,\ast}_{\widehat\mdp_k\cup\widehat{c_k}}\right)\leq 4\varepsilon_k\!+\!\epsilon \bigg\},\allowdisplaybreaks\nonumber\\
\Pi_k^r&=\left\{\pi:\!\!\!\inf_{\mu_0\in\Delta^{\mathcal{\MakeUppercase{\state}}}}\mu_0^{T}\Big(V^{r,\pi}_{\widehat\mdp_k}\!-\!V^{r,\widehat{\pi}^*_k}_{\widehat\mdp_k}\Big)\geq \mathfrak{R}_k\right\},\nonumber\allowdisplaybreaks
\end{align}
where $\mathfrak{R}_k = \frac{2\gamma\MakeUppercase{\reward}_{\rm max}}{(1-\gamma)^2}\|\transition-\widehat\transition_k\|_\infty+\frac{\gamma\MakeUppercase{\reward}_{\rm max}}{(1-\gamma)^2}\|(\pi^*-{\widehat\pi}^*_k)\|_\infty$.

The rationale in $\Pi_k$ can be attributed to two aspects: 1) $\Pi_k^\cost$ constrains exploration policies to visit states within an additional cost budget, thereby ensuring {\it resilience} to estimation error when searching for optimal policies;
2) $\Pi^\reward_k$ states that exploration policies should focus on states with potentially higher cumulative rewards, where possible constraints lie. As the estimation error decreases, the gap (i.e., $\mathfrak{R}_k$) also diminishes, eventually converging to zero, which ensures the {\it optimality} of constrained policies.
We have shown in Appendix~\ref{policy-constrained-proofs} that optimal policies in basic environments are captured by $\Pi_k$.

To solve the optimization problem~(\ref{optimizationproblem}), we express its Lagrangian objective as $L(\occupancy^{\pi}_{\mdp},\lambda)=$
\begin{align}
    &-\langle \occupancy^{\pi}_{\mdp},\mathcal{C}_{k+1}\rangle+\lambda_2\Big((1-\gamma)(V^{r,\widehat{\pi}^*_k}_{\widehat\mdp_k}+\mathfrak{R}_k)-\langle \occupancy^{\pi}_{\mdp},\reward\rangle\Big)\nonumber\\
    &+\lambda_1\left(-(1-\gamma)(V^{\cost,\ast}_{\widehat\mdp_k\cup\widehat{c_k}}+4\varepsilon_k+2\epsilon)+\langle \occupancy^{\pi}_{\mdp},\widehat{c_k}\rangle\right)\nonumber,
\end{align}
where $\lambda=[\lambda_1, \lambda_2]^T$ records two Lagrangian multipliers. 
The dual problem of (\ref{optimizationproblem}) can be defined as
\begin{align}
    \min_{\occupancy^{\pi}_{\mdp}}\max_{\lambda\geq 0}L(\occupancy^{\pi}_{\mdp},\lambda)\label{obj:Lagrangian}.
\end{align}
To solve this dual problem, we assume that Slater's condition is fulfilled, and we follow the two-timescale stochastic approximation \citep{borkar1997actor,konda2000actor}. The following two gradient steps alternated until convergence.
\begin{align}
    \occupancy^{\pi}_{\mdp,k+1}&=\occupancy^{\pi}_{\mdp,k}-a_k(L^\prime_\occupancy(\occupancy^{\pi}_{\mdp,k},\lambda_k)+W_k),\allowdisplaybreaks\nonumber\\
    \lambda_{k+1}&=\lambda_k+b_k(L^\prime_\lambda(\occupancy^{\pi}_{\mdp,k},\lambda_k)+U_k),\allowdisplaybreaks\nonumber
\end{align}
where coefficients $a_k\ll b_k$, satisfying $\sum_k a_k=\sum b_k=\infty$, $\sum a_k^2<\infty$ and $\sum b_k^2<\infty$;  $W_k$ and $U_k$ are two zero-mean noise sequences. Under this condition, the convergence is guaranteed in the limit \citep{borkar2009stochastic}.
At each iteration $k$, the exploration policy is calculated as: $\pi_k(\action|\state)={\occupancy^{\pi}_{\mdp,k}(\state,\action)}/{\sum_\action \occupancy^{\pi}_{\mdp,k}(\state,\action)}$.

\textbf{Sample Complexity.}
The convergence condition for \textcolor{purple}{PCSE} for ICRL, given by (\ref{convergencecondition}), determines its sample complexity.
To present this result, we additionally define the cost advantage function as $A^{\cost,\ast}_{\widehat\mdp
\cup\tilde{\cost}}(\state,\action)=Q^{\cost,\ast}_{\widehat\mdp
\cup\tilde{\cost}}(\state,\action)-V^{\ast,\cost}_{\widehat\mdp
\cup\tilde{\cost}}(\state)$, in which $\tilde{\cost}\in\argmin_{\cost\in\mathcal{{\MakeUppercase{\cost}}}_{{\mathfrak{P}}}}\max_{(\state,\action)\in \mathcal{S}\times\mathcal{A}}|\cost(\state,\action)-\widehat\cost_K(\state,\action)|$ is the cost function in the true feasible cost set $\mathcal{{\MakeUppercase{\cost}}}_{{\mathfrak{P}}}$ closest to the estimated cost function $\widehat\cost_K(\state,\action)$ at iteration $K$.

\begin{theorem}\label{as}(Sample Complexity of \textcolor{purple}{PCSE}).
If Algorithm \textcolor{purple}{PCSE} terminates at iteration $K$ with accuracy $\varepsilon_K$ and the accuracy of previous iteration is $\varepsilon_{K-1}$, then with probability at least $1-\delta$, it fulfills Definition \ref{Def:QP} with the number of samples upper bounded by
\begin{align*}
     n\leq\widetilde{\mathcal{O}}\Bigg(\min\Bigg\{\widetilde{\mathcal{O}}&\left(\frac{{\check\sigma}^2(2C_{\rm max}-\varepsilon_K(1-\gamma))^2}{(1-\gamma)^2\varepsilon_K^2C_{\rm \max}^2}\right),\nonumber\\
    &\qquad\frac{\sigma^2(6\varepsilon_{K-1}+\epsilon)^2SA}{\min_{(\state,\action)}\left(A^{\cost,\ast}_{\widehat\mdp
\cup\tilde{\cost}}(\state,\action)\right)^2\!\varepsilon_K^2}
    \!\Bigg\}\!\Bigg).
\end{align*}
\end{theorem}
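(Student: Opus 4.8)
The plan is to establish the two assertions of Theorem~\ref{as} in order: first that PCSE meets the PAC criterion of Definition~\ref{Def:QP} at termination, and then that the total sample count is bounded by the stated minimum of two expressions. Correctness hinges on the design of the candidate set $\Pi_k=\Pi_k^c\cap\Pi_k^r$. I would first invoke the appendix result that $\Pi_k$ provably contains both the true optimal policy $\pi^\ast\in\Pi^\ast_{\mdp\cup\cost}$ and the estimated one $\widehat\pi^\ast\in\Pi^\ast_{\widehat\mdp\cup\widehat{\cost}}$. Given this containment, the termination test $\varepsilon_K\leq\varepsilon$ — which equals $\sup_{\mu_0,\pi\in\Pi_{K-1}}\|\mu_0^T(I-\gamma\transition\pi)^{-1}\mathcal{C}_K\|_\infty$ — upper bounds the quantity $\|e_K(\cdot;\pi^\ast)\|_\infty$ of Lemma~\ref{Lemma:furthererror} for every optimal policy, because the relevant $\pi$ lies in $\Pi_{K-1}$. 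Since $|\cost-\widehat{\cost}_K|\leq\mathcal{C}_K$ holds with high probability by Lemma~\ref{Cor:MaxCost}, both the completeness and accuracy inequalities of Definition~\ref{Def:QP} follow directly, the second one by repeating the argument with $\widehat\pi^\ast$ in place of $\pi^\ast$.

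For the first branch of the minimum, I would argue that PCSE never runs longer than \greedy{}. Because $\|\mu_0^T(I-\gamma\transition\pi)^{-1}\mathcal{C}_k\|_\infty\leq\frac{1}{1-\gamma}\max_{(s,a)}\mathcal{C}_k(s,a)$ for every $\pi$, the termination condition~(ii) of~(\ref{convergencecondition}) is implied by condition~(i); hence the accuracy PCSE attains at any sample budget is no worse than \greedy{}'s, and Theorem~\ref{SC:greedy} transfers verbatim to give $\widetilde{\mathcal{O}}(\check\sigma^2 SA/((1-\gamma)^2\varepsilon_K^2))$.

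The second branch is the genuinely new contribution and exploits the candidate constraint $\Pi_k^c$. The plan is to show that membership $\pi\in\Pi_{K-1}^c$ caps the occupancy any candidate can place on the constraint-relevant (positive cost-advantage) pairs. Concretely, the performance-difference identity $(1-\gamma)(V^{\cost,\pi}_{\widehat\mdp\cup\tilde{\cost}}-V^{\cost,\ast}_{\widehat\mdp\cup\tilde{\cost}})=\langle\occupancy^{\pi}_{\mdp},A^{\cost,\ast}_{\widehat\mdp\cup\tilde{\cost}}\rangle$ converts the value gap permitted by $\Pi_{K-1}^c$ into a bound on $\sum_{(s,a)}\occupancy^{\pi}_{\mdp}(s,a)A^{\cost,\ast}_{\widehat\mdp\cup\tilde{\cost}}(s,a)$; dividing by the smallest positive advantage $\min^+_{(s,a)}A^{\cost,\ast}_{\widehat\mdp\cup\tilde{\cost}}(s,a)$ then ceilings this occupancy at order $(6\varepsilon_{K-1}+\epsilon)/\min^+_{(s,a)}A^{\cost,\ast}_{\widehat\mdp\cup\tilde{\cost}}$. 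Feeding the ceiling into the exploration target $\frac{1}{1-\gamma}\sup_{\pi\in\Pi_{K-1}}\langle\occupancy^{\pi}_{\mdp},\mathcal{C}_K\rangle\leq\varepsilon_K$ and inverting the width $\mathcal{C}_K\approx\sigma\sqrt{\ell/N^+}$ through the pseudo-count transfer (Definition~\ref{def:pseudocount}, Lemma~\ref{Lemma:pseudocount}) yields, after summing over the $SA$ pairs, the refined budget $\widetilde{\mathcal{O}}(\sigma^2(6\varepsilon_{K-1}+\epsilon)^2 SA/((\min^+_{(s,a)}A^{\cost,\ast}_{\widehat\mdp\cup\tilde{\cost}})^2\varepsilon_K^2))$.

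The step I expect to be the main obstacle is the transfer between estimated and true quantities inside the second branch. The constraint defining $\Pi_k^c$ is stated through $V^{\cost,\pi}_{\widehat\mdp_k\cup\widehat{\cost}_k}$, whereas the target bound is expressed via $A^{\cost,\ast}_{\widehat\mdp\cup\tilde{\cost}}$ with $\tilde{\cost}$ the true feasible cost closest to $\widehat{\cost}_K$. Bridging these requires propagating the cost confidence width $\mathcal{C}_k$ through the value and advantage functions using Lemma~\ref{QFunction} and Lemma~\ref{Lemma:AdvantageDif}, and absorbing the discrepancy between $\widehat\mdp_k$ and the terminal estimate via the good-event transition bounds; it is exactly this accumulation that inflates the admissible slack $4\varepsilon_{K-1}+\epsilon$ to $6\varepsilon_{K-1}+\epsilon$. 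One must also verify that the pseudo-count argument of Lemma~\ref{Lemma:pseudocount} stays valid when $\pi_k$ is the constrained maximizer of~(\ref{optimizationproblem}) rather than the unconstrained \greedy{} solver, and take the minimum of the two budgets only after intersecting the two high-probability events, whose union-bound cost is hidden inside $\widetilde{\mathcal{O}}$.
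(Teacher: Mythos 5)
Your proposal is correct and follows essentially the same route as the paper's proof: the first branch is obtained exactly as you say, by noting PCSE optimizes the tighter bound of Corollary~\ref{intermediate}(2) so Theorem~\ref{SC:greedy} carries over as a worst case, and the second branch matches the paper's argument, which imposes the per-pair width requirement $\mathcal{C}_K(s,a)\lesssim \min_{a'}|A^{\cost,\ast}_{\widehat\mdp\cup\tilde{\cost}}(s,a')|\,\varepsilon_K/(6\varepsilon_{K-1}+\epsilon)$, verifies the PAC condition through the performance-difference identity together with Lemma~\ref{Lemma:Vdifinactive} (which is precisely where your anticipated $4\varepsilon_{K-1}+\epsilon\to 6\varepsilon_{K-1}+\epsilon$ inflation occurs), and inverts via the pseudo-count transfer and Lemma~\ref{B.8} before summing over the $SA$ pairs. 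Your occupancy-capping phrasing is just the same algebra read in the opposite direction, so no substantive difference remains.
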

The first term matches the sample complexity of the \greedy{} strategy since 
the convergence of \greedy{} is stricter than \textcolor{purple}{PCSE}, i.e., (\ref{convergencecondition}) is always satisfied if (\ref{convergencecondition1}) is satisfied. As a result, the sample complexity of \greedy{} constitutes a lower bound for that of \textcolor{purple}{PCSE}.
The second term depends on the ratio  ${(6\varepsilon_{K-1}+\epsilon)}/{\varepsilon_K}$ and the minimum cost advantage function $\min_{(\state,\action)}A^{\cost,\ast}_{\widehat\mdp
\cup\tilde{\cost}}$. On one side, the ratio depends on both $n_{\rm max}$ and $n_e$. If the two values are high, the ratio is high because the accuracy reduces faster from iteration $K-1$ to $K$ with more collected samples. Otherwise, the ratio is small because the accuracy remain similar between two iterations. A smaller $\epsilon$, namely a tighter constraint, benefits the sample efficiency. On the other side, the cost advantage function $\min_{(\state,\action)}A^{\cost,\ast}_{\widehat\mdp
\cup\tilde{\cost}}$ shows that the larger the suboptimality gap, the easier to infer the constraint.

\section{Empirical Evaluation}
\begin{figure}[htbp]
    \vspace{0.1in}
    \hspace{-1mm}
	\begin{minipage}[t]{0.24\linewidth}
		\centering
        \includegraphics[page=1,width=2.7cm]{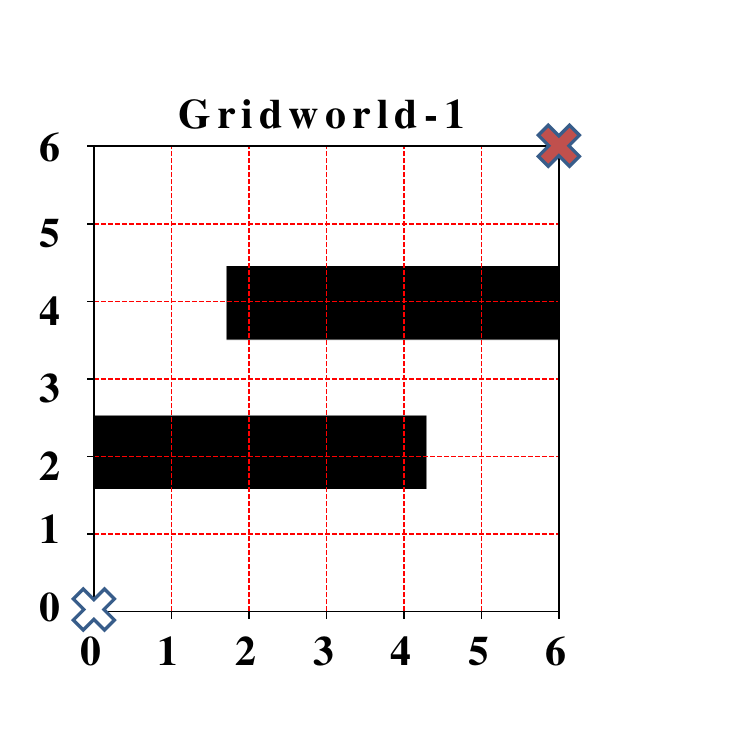}
	\end{minipage}
    \hspace{-0.6mm}
	\begin{minipage}[t]{0.24\linewidth}
		\centering
        \includegraphics[page=2,width=2.7cm]{figures/constraint_visualization_train_ICRL_discrete_WGW-v0-setting1_Ground-Truth_ST.pdf}
	\end{minipage}
    \hspace{-0.6mm}
	\begin{minipage}[t]{0.24\linewidth}
		\centering
        \includegraphics[page=3,width=2.7cm]{figures/constraint_visualization_train_ICRL_discrete_WGW-v0-setting1_Ground-Truth_ST.pdf}
	\end{minipage}
    \hspace{-1mm}
    \begin{minipage}[t]{0.24\linewidth}
		\centering
        \includegraphics[page=4,width=2.7cm]{figures/constraint_visualization_train_ICRL_discrete_WGW-v0-setting1_Ground-Truth_ST.pdf}
	\end{minipage}
 \caption{Four different Gridworld environments with white, red, and black markers indicating the starting, target, and constraint locations, respectively.}\label{fig:discretesetting}
\end{figure}

We empirically compare our algorithms against other methods in both discrete and continuous environments, where the agent navigates from a starting location to a target location (receiving a positive reward) while satisfying the constraints. Our implementation of code for discrete environments is adapted from \cite{liu2023benchmarking}, and for continuous environments, it is adapted from \cite{gymnasium_robotics2023github}.

{\bf Experiment Settings.}  The evaluation metrics include: 1) {\it discounted cumulative rewards}, which measure the optimality of the learned policy;
2) {\it discounted cumulative costs}, which assess the safety of the learned policy; and
3) {\it Weighted Generalized Intersection over Union (WGIoU)} (refer to Appendix~\ref{metric:WGIOU} for details), which robustly evaluates the similarity between inferred cost functions and ground-truth cost functions.

{\bf Comparison Methods.}
We compare \greedy{} and \textcolor{purple}{PCSE} with four other exploration strategies: random exploration, $\epsilon$-greedy exploration, maximum-entropy exploration, and upper confidence bound exploration. 
\begin{figure*}[ht]
	\centering
	\begin{minipage}[t]{0.24\linewidth}
		\centering
		\includegraphics[width=4.2cm]{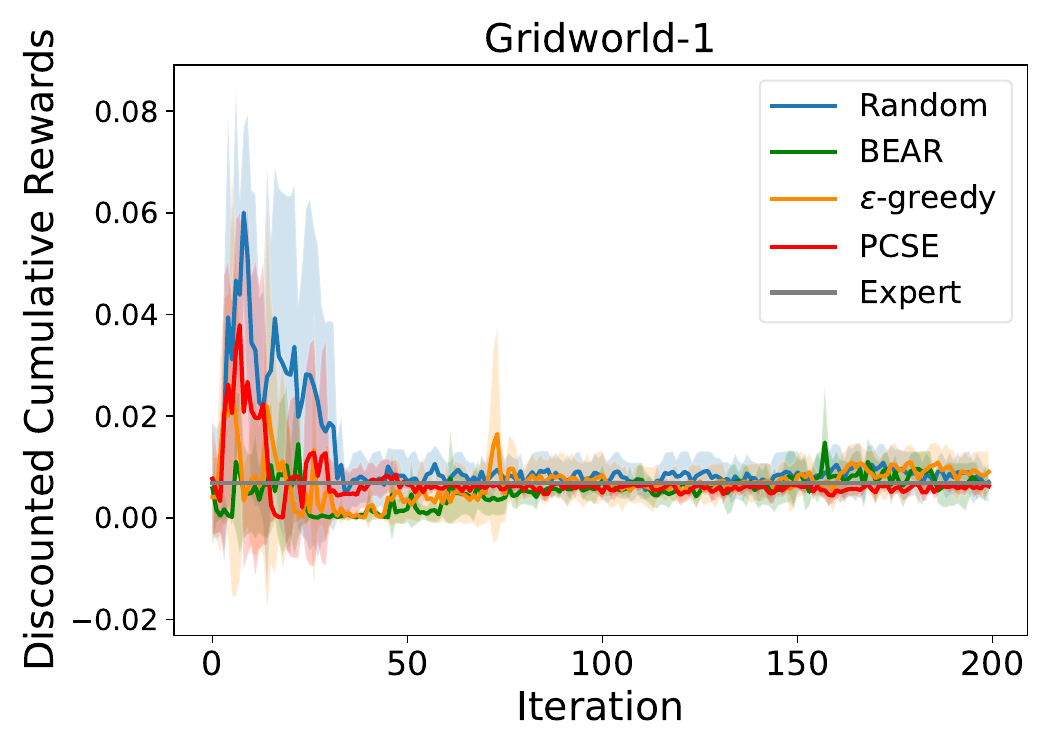} \\
        \includegraphics[width=4.2cm]{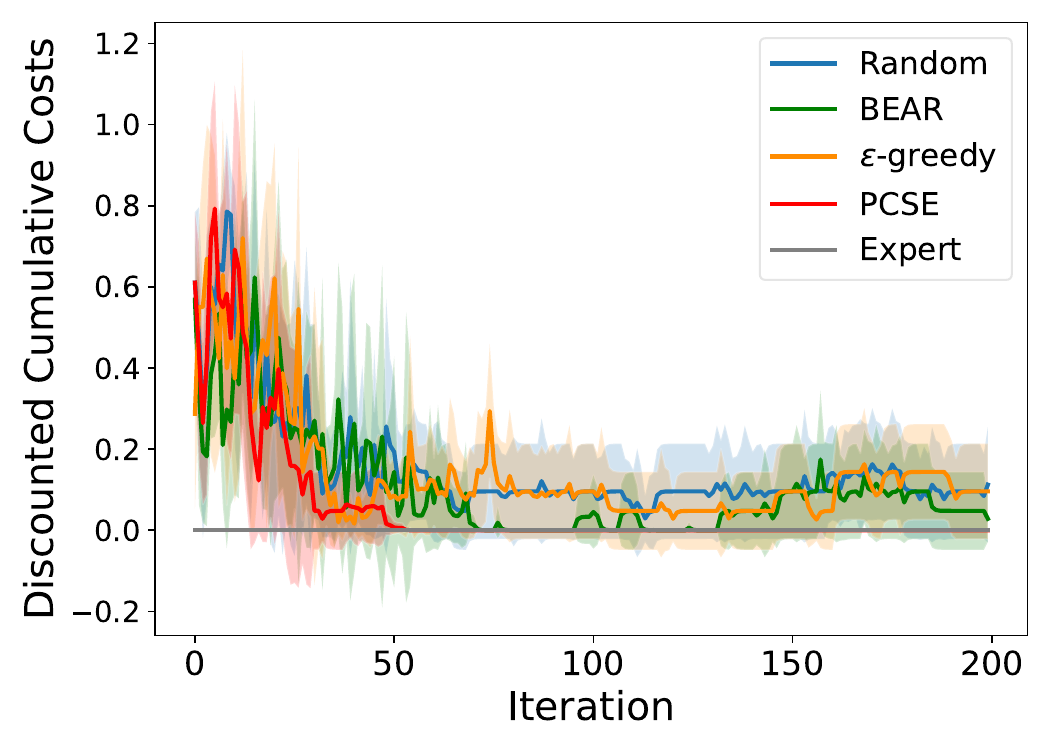} \\
        \includegraphics[width=4.2cm]{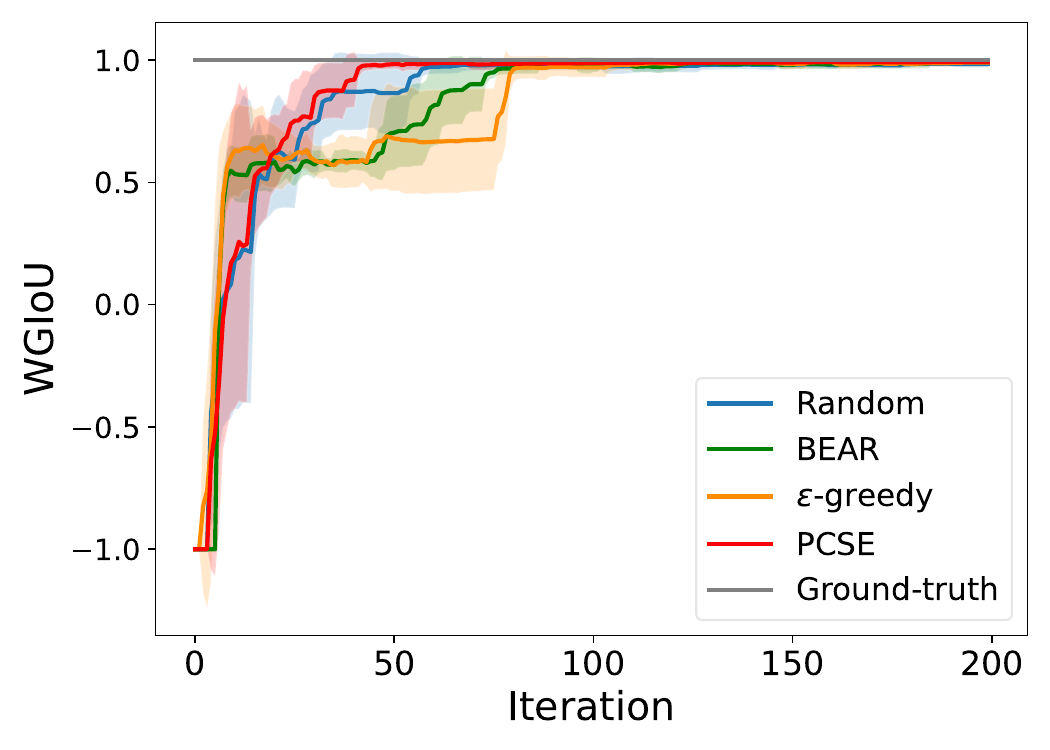}
	\end{minipage}
	\begin{minipage}[t]{0.24\linewidth}
		\centering
		\includegraphics[width=4.2cm]{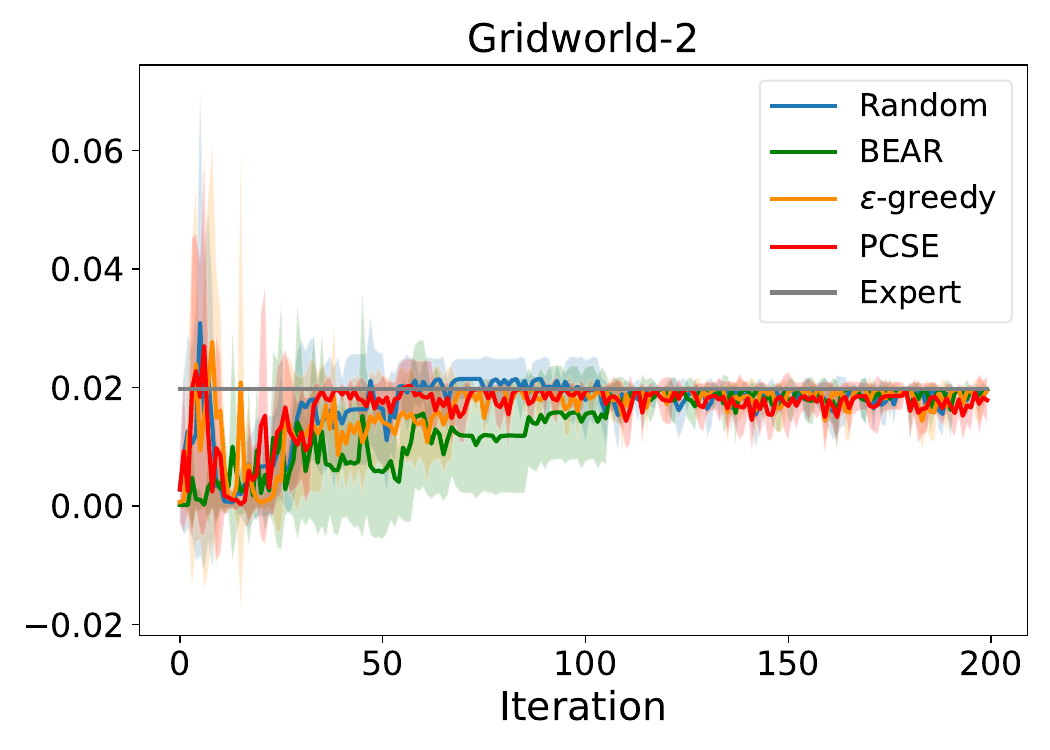} \\
        \includegraphics[width=4.2cm]{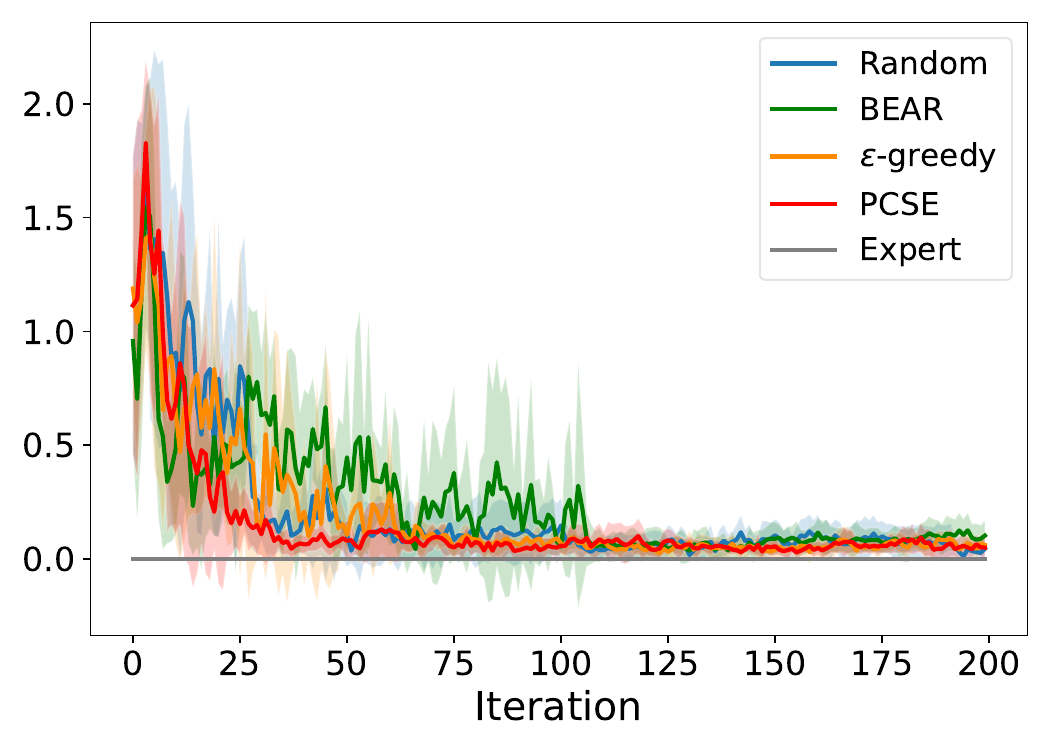} \\
        \includegraphics[width=4.2cm]{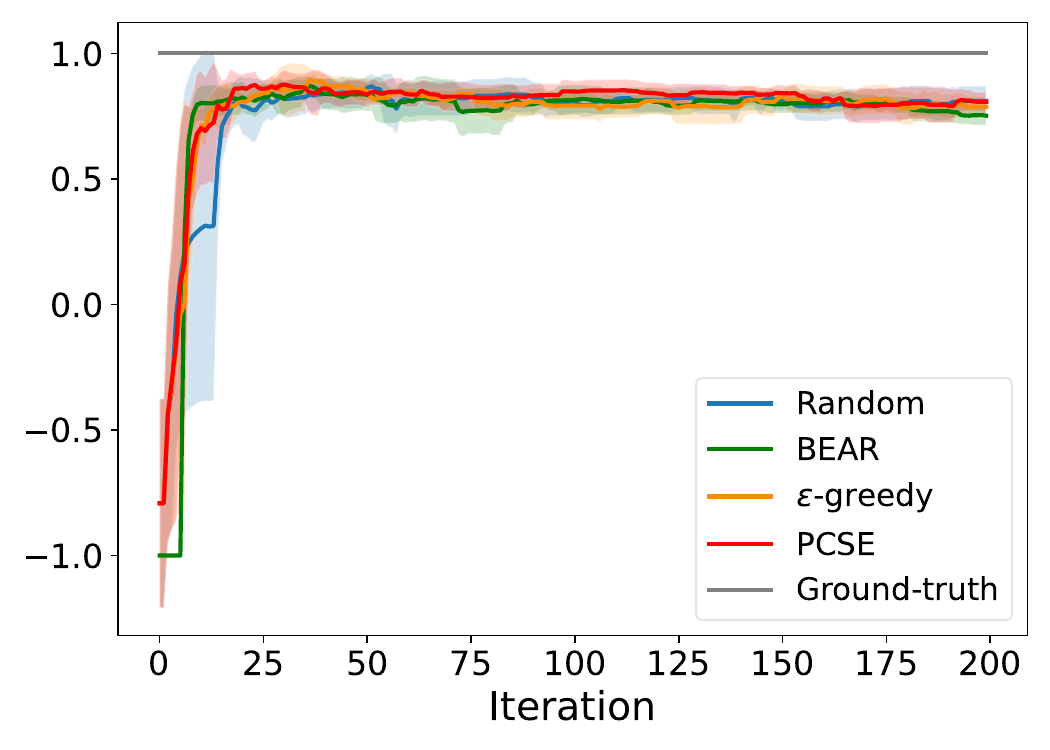}
	\end{minipage}
	\begin{minipage}[t]{0.24\linewidth}
		\centering
		\includegraphics[width=4.2cm]{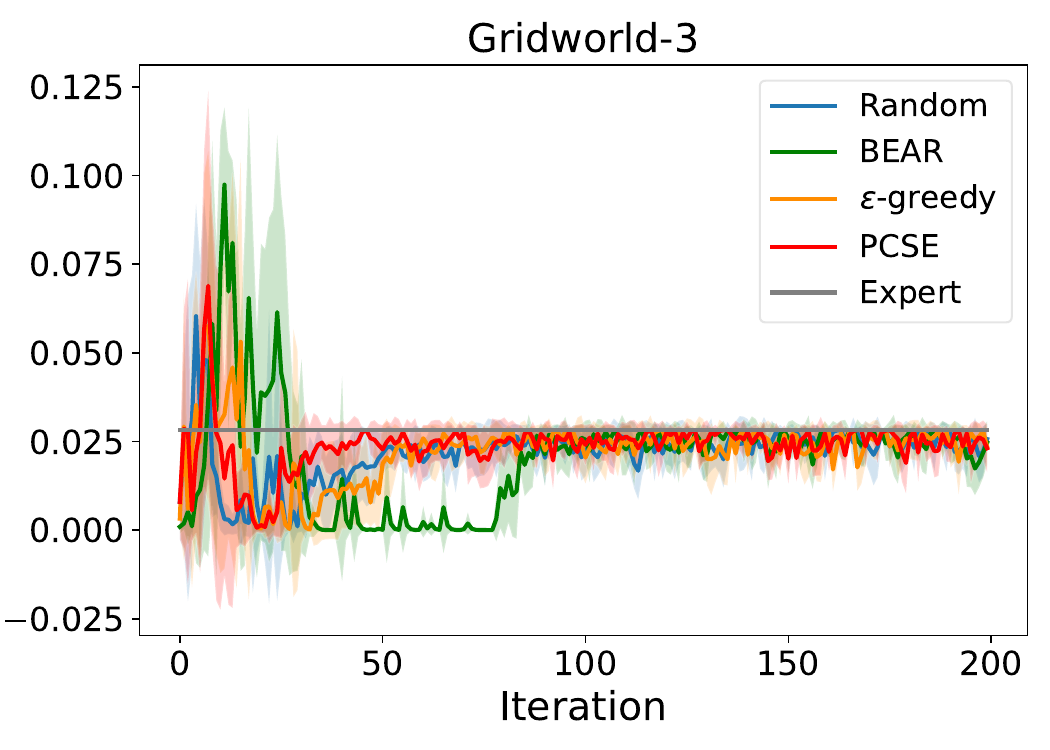} \\
        \includegraphics[width=4.2cm]{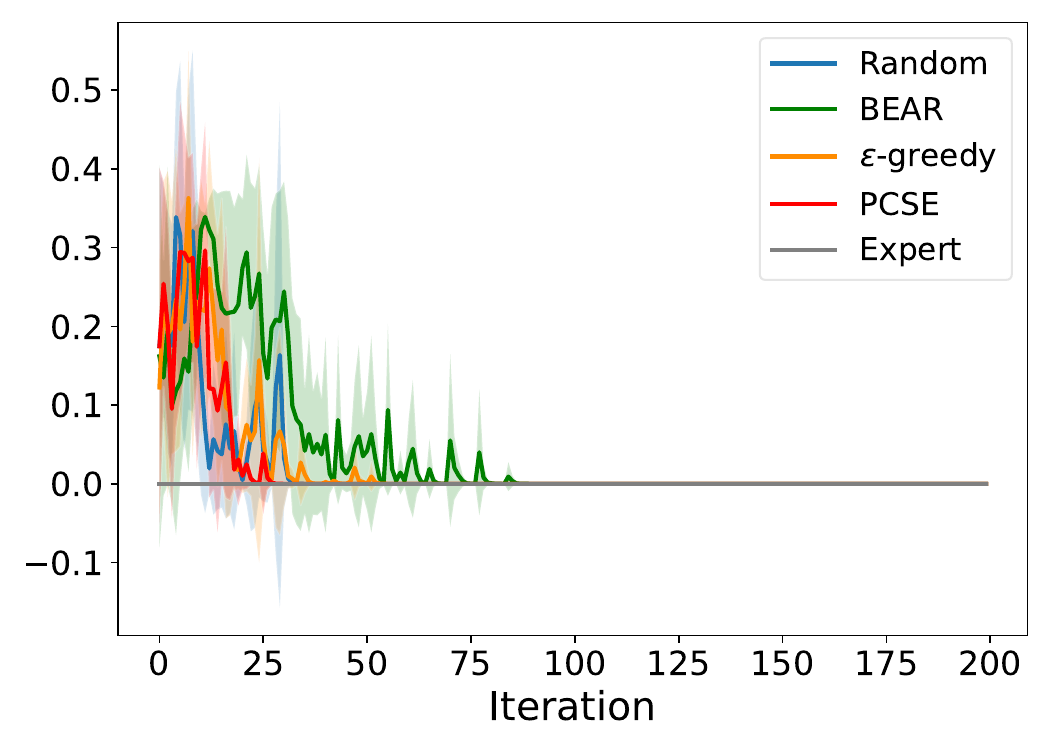} \\
        \includegraphics[width=4.2cm]{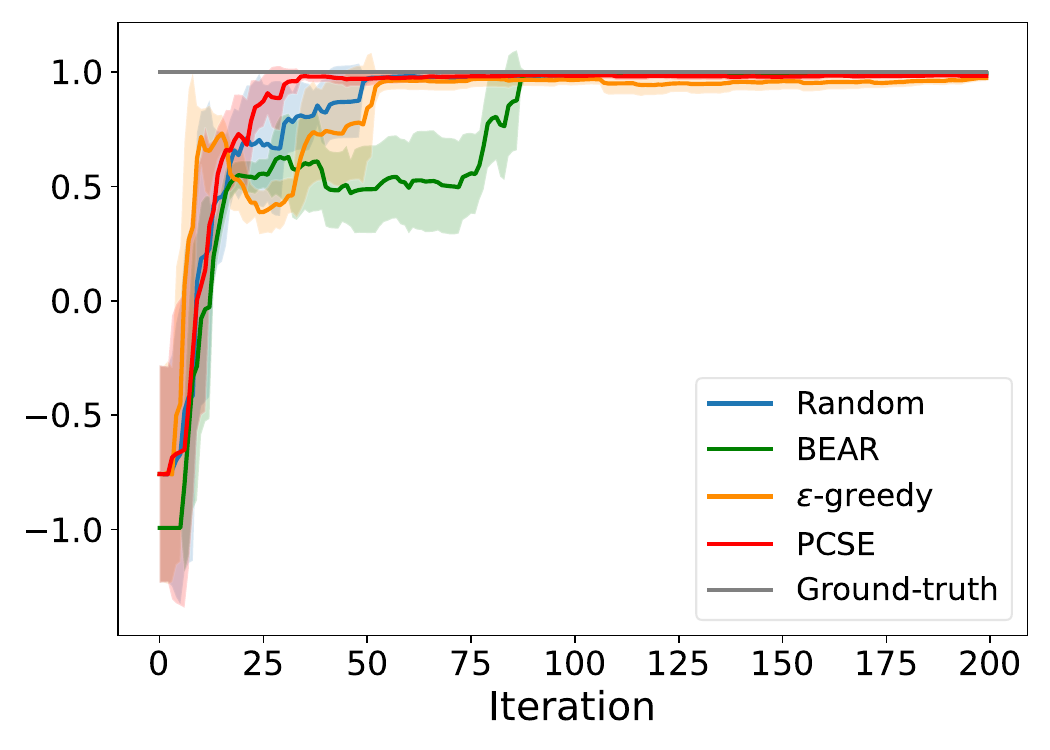}
	\end{minipage}
    \begin{minipage}[t]{0.24\linewidth}
		\centering
		\includegraphics[width=4.2cm]{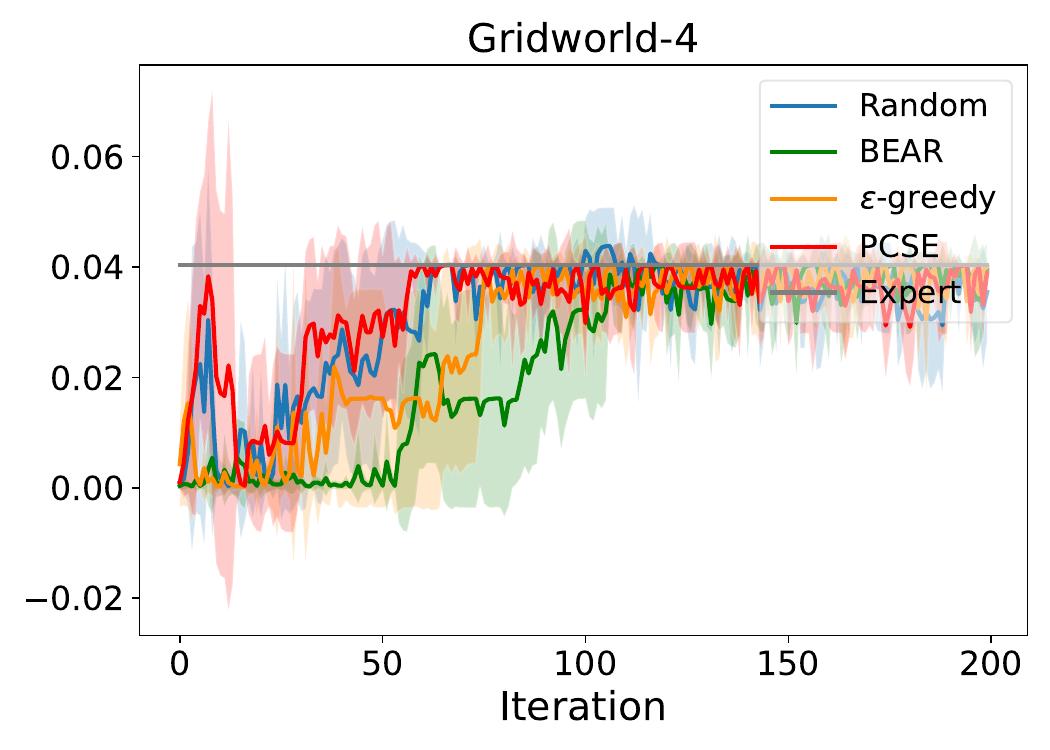} \\
        \includegraphics[width=4.2cm]{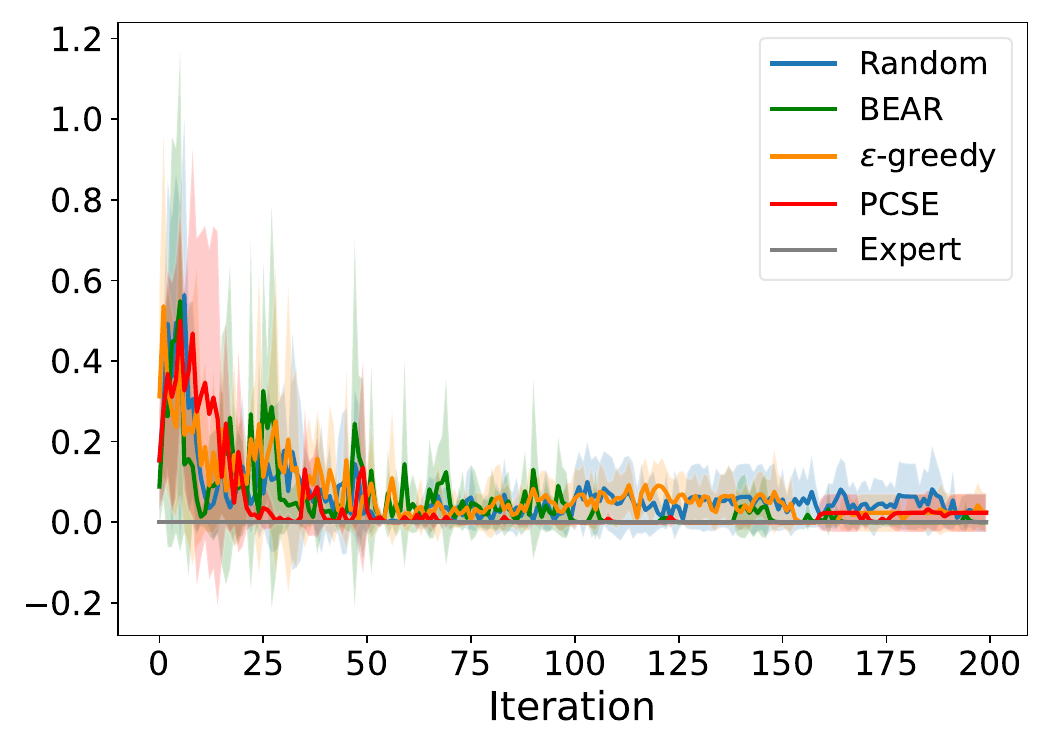} \\
        \includegraphics[width=4.2cm]{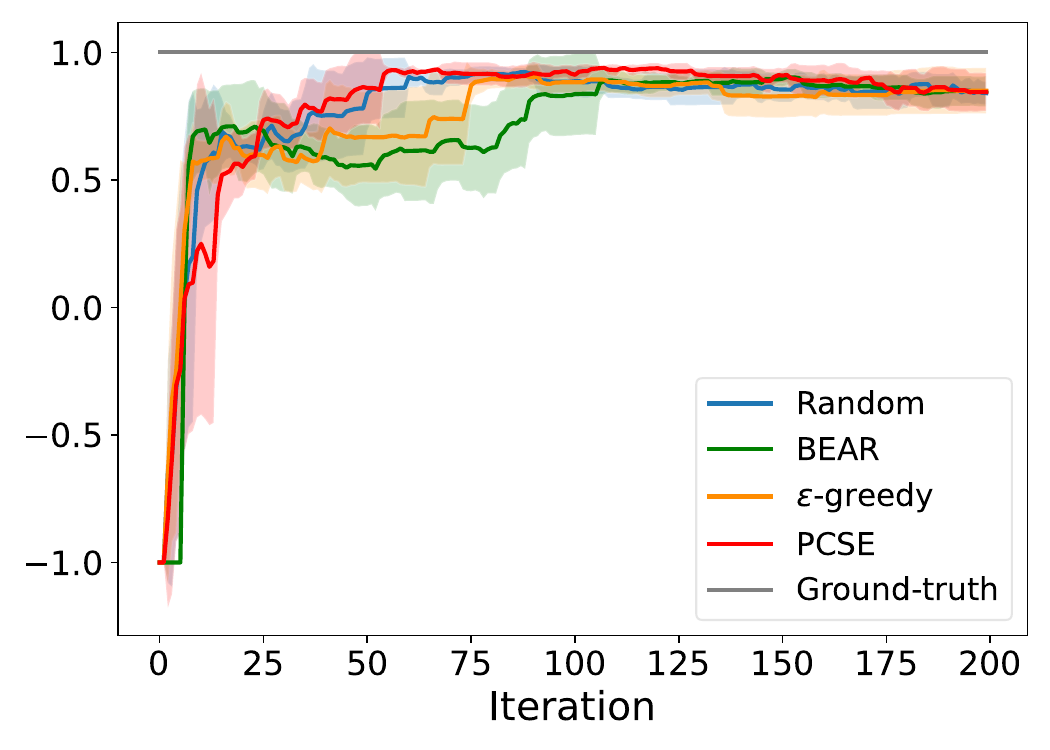}
	\end{minipage}
\caption{Training curves of discounted cumulative rewards (top), discounted cumulative costs (middle), and WGIoU (bottom) for four exploration strategies in four Gridworld environments, respectively.}\label{trainingcurves}
\end{figure*}
\begin{figure*}[ht]
    \hspace{-2mm}
    \begin{minipage}[ht]{0.26\linewidth}
	\centering
        \includegraphics[width=1.1\textwidth]{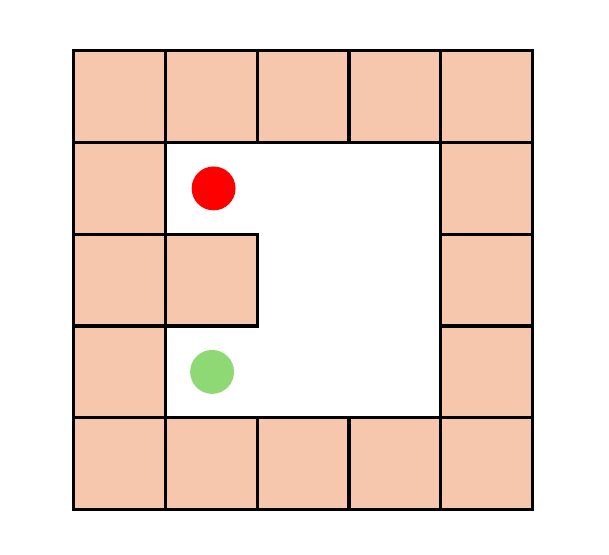}
	\end{minipage}%
    \hspace{-1mm}
    \begin{minipage}[ht]{0.24\linewidth}
	\centering
        \includegraphics[width=1.1\textwidth]{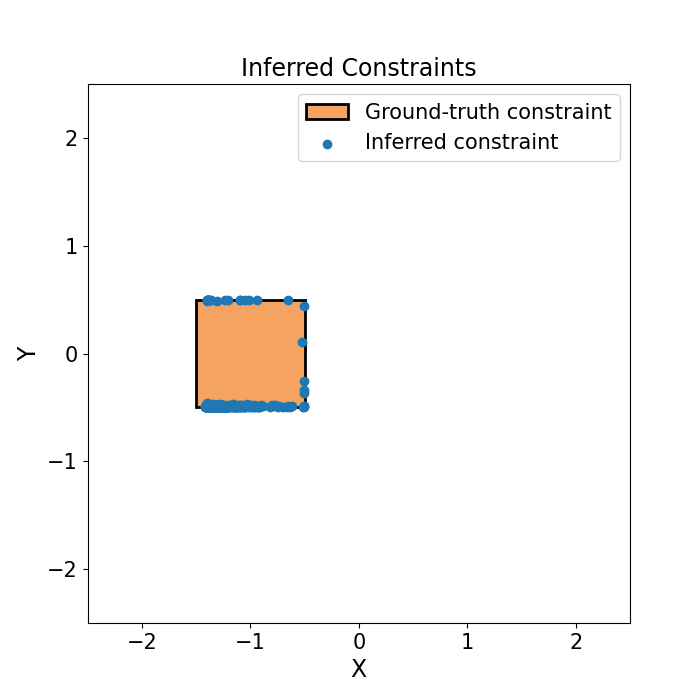}
	\end{minipage}%
   \hspace{0.5mm}
    \begin{minipage}[ht]{0.24\linewidth}
	\centering
        \includegraphics[width=1\textwidth]{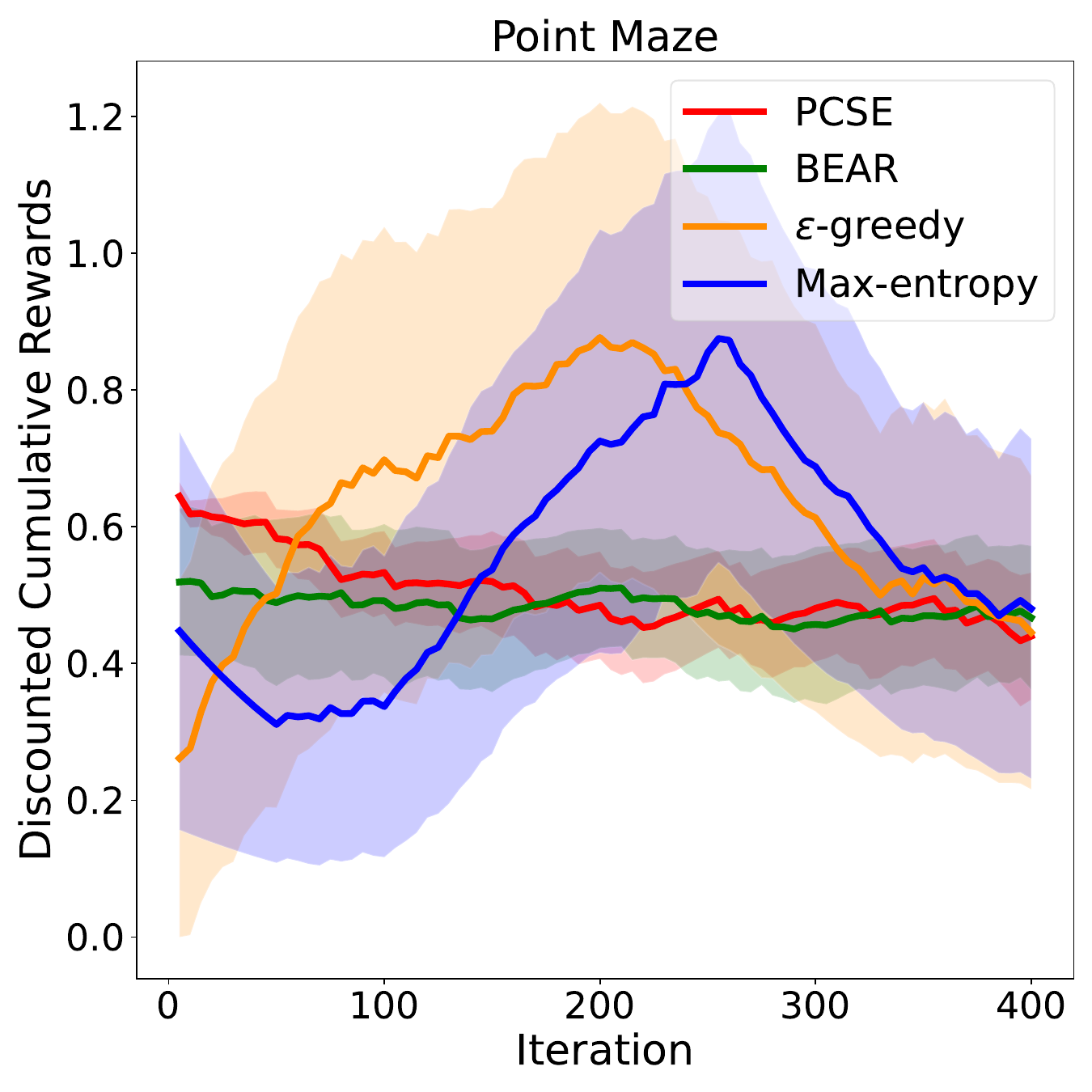}
	\end{minipage}%
   \hspace{0.5mm}
    \begin{minipage}[ht]{0.24\linewidth}
	\centering
        \includegraphics[width=1\textwidth]{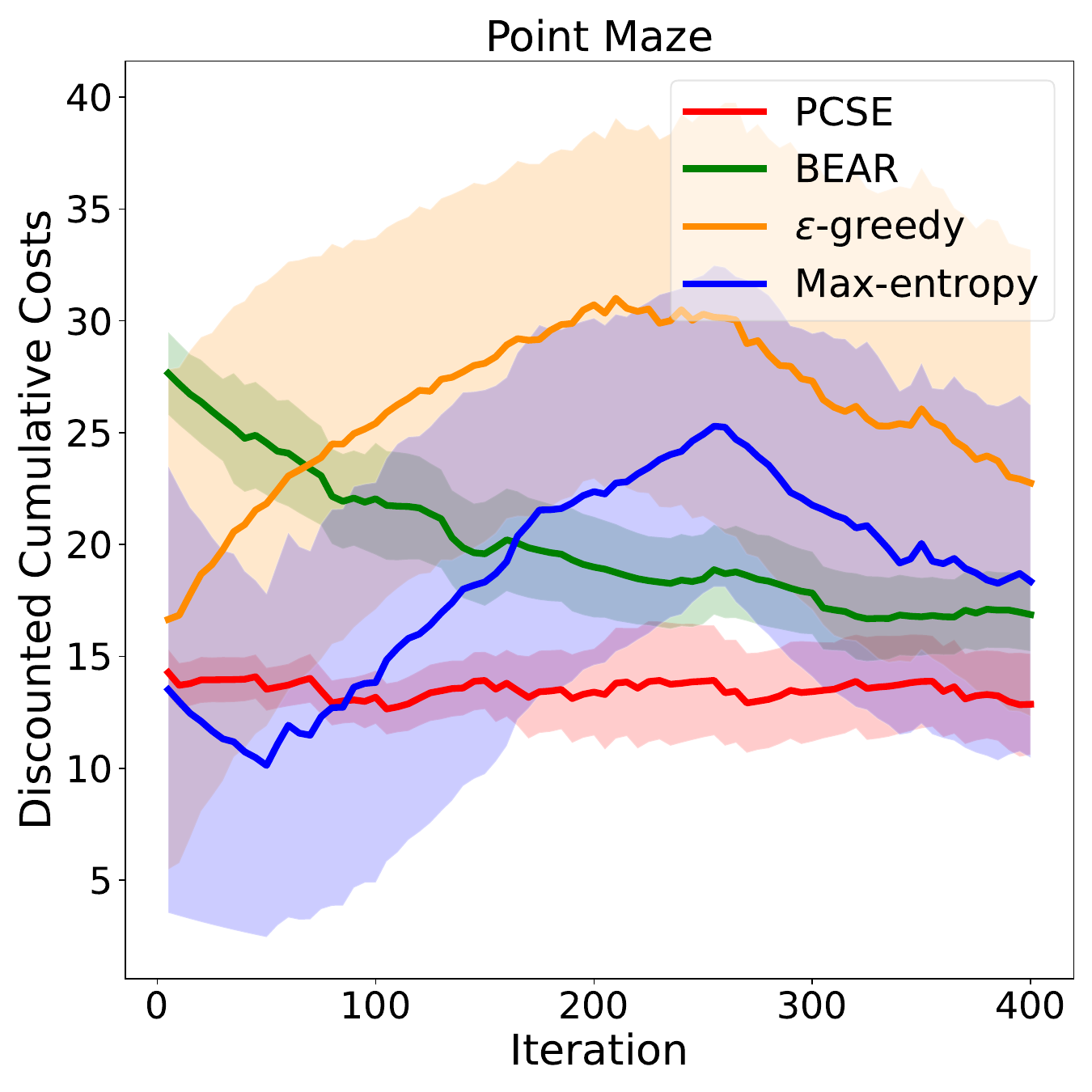}
	\end{minipage}
 \caption{Point Maze environment (leftmost), inferred constraints (middle left), discounted cumulative rewards (middle right) and discounted cumulative costs (rightmost).}\label{fig:continuousresults}
\end{figure*}
\subsection{Evaluation under Discrete Environments}
Figure~\ref{fig:discretesetting} illustrates four discrete environments, each characterized by distinct constraints. 
The expert policy is trained under ground-truth constraints, while two ICRL algorithms and four baselines operate without knowledge of these constraints.
These environments are stochastic, with the environment executing a randomly sampled action with probability $p=0.05$.
Figure~\ref{trainingcurves} and \ref{trainingcurves-2} (in Appendix) demonstrate the training process of three metrics for six exploration strategies in four Gridworld environments, along with the performance of expert policy (represented by the grey line). 
It can be shown that the performance of the optimal policy in ${\mathcal{M}}\cup{\widehat{c}}$ gradually converges to the performance of the optimal policy in $\mathcal{M}\cup{c}$.
Also, we find that \textcolor{purple}{PCSE} (represented by the red curve) converges the fastest among the six exploration strategies. In Gridworld-2 and Gridworld-4, WGIoU converges to a degree of similarity less than $1$ (ground-truth). This is because some of the ground-truth constraints lie in non-critical regions, and ICRL infers the \textit{minimal} set of constraints required to explain expert behavior. 
We demonstrate constraint learning processes of six exploration strategies for four Gridworlds in Appendix Figure \ref{learning_c1}, \ref{learning_c2}, \ref{learning_c3}, and \ref{learning_c4}, respectively. These learned constraints are recovered as visiting these states leads to higher cumulative rewards, while other unrecovered constraints do not impact the optimality of expert behaviors. 

\subsection{Evaluation under Continuous Environments}
Figure~\ref{fig:continuousresults} (leftmost) shows the Point Maze environment, where the green agent has a continuous state space. The agent's goal is to reach the red ball inside the maze with pink walls. The environment is stochastic due to the noises imposed on the observed states. Figure~\ref{fig:continuousresults} (middle left) demonstrates the inferred constraints (represented by blue dots) obtained through \textcolor{purple}{PCSE}, with the center of the entire maze at $(0,0)$. Figure~\ref{fig:continuousresults} (middle right and rightmost) reports the discounted cumulative rewards and costs during training.

For all the experiments discussed above, please check Appendix \ref{ExperimentalDetails} and \ref{Moreexperimentalresults} for more experimental details.

\section{Conclusions}
\textbf{Summary.} This paper introduces a strategically efficient exploration framework for ICRL problems. 
We conduct theoretical analysis to investigate the influence of estimation errors in expert policy and environmental dynamics on the estimation of feasible constraints. Building upon this, we propose two exploration algorithms, namely \textcolor{teal}{BEAR} algorithm and \textcolor{purple}{PCSE}. \textcolor{teal}{BEAR} explores the environment to minimize the aggregated bounded error of cost estimations. Moreover, \textcolor{purple}{PCSE} algorithm goes a step further by constraining the exploration policies to plausibly optimal ones, thus enhancing the overall sample efficiency. We provide tractable sample complexity analyses for both algorithms. To validate the effectiveness of our method, we perform empirical evaluations in various environments. 

\textbf{Limitations and Future Work.} 
As is also a pressing challenge in the field, our method faces the limitation of struggling to scale to problems with large or continuous state spaces. This is due to the fact that our sample complexity is directly dependent on the size of the state space, and real-world problems often involve large or continuous spaces.
Several future research directions warrant attention. First, extending the analysis to finite-horizon settings and deriving lower bounds for sample complexities would provide a more comprehensive understanding of the performance limits. Second, investigating the transferability of feasible constraints across different environments would be valuable in determining the generalizability of our approach. 
Additionally, relaxing the assumption that the expert policy is safe and reward optimal, either to a safe expert policy (not necessarily optimal) or offline expert demonstrations, would be an interesting direction for future work. We believe it is also valuable to investigate how the sub-optimality of expert agents influences inverse constraint inference and transferability.
Finally, it is also intriguing if the hypothesis space of admissible constraint functions can be restricted in the first place, leading to more practical, scalable, and efficient constraint inference.

\section{Discussions} 
{\bf Set-recovery Framework.} In contrast to prior ICRL methods, our set-recovery framework defers the commitment to specific constraints until a later stage, thereby exposing the fundamental complexity of inverse constraint inference. Building on this perspective, we characterize two distinct modes of constraint selection for the downstream optimization procedure on a novel task.
For hard constraints, all feasible constraints in the set are equivalent for a novel task, as the cost function value does not matter ($c(s,a)=1$ and $c(s,a)=2$ both prohibit $(s,a)$). Thus, any feasible constraint can be selected for transfer.
For soft constraints, feasible constraints in the set differ for a novel task. The value of each cost function matters due to task differences in dynamics and rewards. Therefore, the generalizable learned constraints should come from the intersection of feasible sets from old and novel tasks. The selection criterion should depend on differences in dynamics and rewards. 

{\bf Affine Space Perspective.} Linear algebraic analyses provide a more rigorous and inherent framework for this set-recovery approach, making them valuable for investigating ICRL theories. For instance, by defining a subspace $\mathcal{U} = \mathrm{im}(E-\gamma{P}_{\mathcal{T}})$, the cost functions within a feasible cost set become equivalent on the quotient space $\mathbb{R}^{\mathcal{S}\times\mathcal{A}}/\mathcal{U}$. Furthermore, we can measure the distance between the recovered and expert costs within this quotient space. 

{\bf Scaling to Practical Environments.}
Sample complexity analysis has primarily focused on discrete state-action spaces \citep{agarwal2019reinforcement}. Existing algorithms for learning feasible sets \citep{Metelli2023Towards, zhao2023inverse, lazzati2024offline} struggle to scale effectively to problems with large or continuous state spaces. This limitation arises because their sample complexity depends directly on the size of the state space, and real-world problems frequently involve large or continuous spaces.
Scaling feasible set learning to practical problems with large state spaces remains a pressing challenge in the field \citep{lazzatidoes}. One critical issue is the exploration challenge where algorithms need to specify where and how to explore at each iteration. Another key difficulty is the estimation of the ground-truth expert policy, which is hard to obtain in an online setting. A potential solution involves extracting the expert policy from offline datasets of expert demonstrations. However, these datasets often contain a mix of optimal and sub-optimal demonstrations, leading to sub-optimal expert policies. Addressing this issue could involve i) treating the dataset as noisy and applying robust learning algorithms designed to handle noisy demonstrations or ii) combining offline demonstrations with online fine-tuning, where feasible, to refine the learned policy.
Finally, the scalability of learning in continuous spaces is frequently hindered by the curse of dimensionality. Dimensionality reduction techniques can mitigate this challenge by simplifying state and action representations while retaining the features essential for effective policy learning. 

\section*{Acknowledgments} This work is supported in part by Shenzhen Fundamental Research Program (General Program) under grant JCYJ20230807114202005, Guangdong-Shenzhen Joint Research Fund under grant 2023A1515110617, Guangdong Basic and Applied Basic Research Foundation under grant 2024A1515012103, and Guangdong Provincial Key Laboratory of Mathematical Foundations for Artificial Intelligence (2023B1212010001).

\section*{Impact Statement}

This paper presents work whose goal is to advance the field
of Machine Learning. We do not see any direct negative consequences of our work.

\bibliography{example_paper}
\bibliographystyle{icml2025}

\clearpage
\onecolumn
\appendix

\section*{\huge{Appendix}}\label{Appendix:all}
\vspace{1cm} 
\textbf{\LARGE{Table of Contents}} 

\vspace{-4mm}\rule{\linewidth}{0.1pt} 

\vspace{-0.4cm}
\startcontents[appendix]
\printcontents[appendix]{}{0}{}

\vspace{0.2cm} 
\vspace{-4mm}\rule{\linewidth}{0.1pt}

\clearpage
\section{Notation and Symbols}\label{sec:notation}

In Table \ref{Tab:notation}, we report the explicit definition of notation and symbols applied in our paper.
\begin{table}[!ht]
\centering
\caption{Overview of notation and symbols}
\begin{tabular}{ccc}  
  \toprule  
  Symbol & Name & Signature \\  
  \midrule  
  $\mdp$   & CMDP without knowing the cost (CMDP$\backslash$$\cost$)   & $(\mathcal{\MakeUppercase{\state}},\mathcal{\MakeUppercase{\action}},\transition, r,\threshold,\mu_0,\gamma)$   \\  
  $\mdp\cup\costFunction$   & CMDP   & $(\mathcal{\MakeUppercase{\state}},\mathcal{\MakeUppercase{\action}},\transition, r,\costFunction,\threshold,\mu_0,\gamma)$   \\
  $\mathcal{S}$   & State space   &  $\slash$  \\
  $\mathcal{A}$   & Action space   &  $\slash$  \\
  $\transition$   & Transition dynamics   & $\Delta_{\mathcal{\MakeUppercase{\state}}\times\mathcal{\MakeUppercase{\action}}}^{\mathcal{\MakeUppercase{\state}}}$   \\
  $\state_0$   & Initial state   & $\mathcal{S}$   \\
  $\pi$   & Policy   & $\Delta^\mathcal{\MakeUppercase{\action}}_{\mathcal{\MakeUppercase{\state}}}$   \\
  $\pi^\expert$   & Expert policy   & $\Delta^\mathcal{\MakeUppercase{\action}}_{\mathcal{\MakeUppercase{\state}}}$   \\
  $\reward$   & Reward function   & $[0,\MakeUppercase{\reward}_{\rm{max}}]^{\mathcal{\MakeUppercase{\state}}\times\mathcal{\MakeUppercase{\action}}}$
   \\
  $\cost$   & Cost function   & $[0,\MakeUppercase{\cost}_{\rm{max}}]^{\mathcal{\MakeUppercase{\state}}\times\mathcal{\MakeUppercase{\action}}}$   \\
  $\epsilon$  & Threshold of constraint or budget   & $\mathbb{R}^{\mathcal{\MakeUppercase{\state}}}$   \\
  $V^{\reward,\pi}_{\mdp}$   & Reward state-value function for $r$ of $\pi$ in $\mdp$  & $\mathbb{R}^{\mathcal{\MakeUppercase{\state}}}$   \\
  $Q^{\reward,\pi}_{\mdp}$   & Reward action-value function for $r$ of $\pi$ in $\mdp$   & $\mathbb{R}^{\mathcal{\MakeUppercase{\state}}\times\mathcal{\MakeUppercase{\action}}}$   \\
  $A^{\reward,\pi}_{\mdp}$   & Reward advantage function for $r$ of  $\pi$ in $\mdp$   & $\mathbb{R}^{\mathcal{\MakeUppercase{\state}}\times\mathcal{\MakeUppercase{\action}}}$   \\
  $V^{\cost,\pi}_{\mdp\cup\cost}$   & Cost state-value function for $c$ of $\pi$ in $\mdp\cup\cost$  & $\mathbb{R}^{\mathcal{\MakeUppercase{\state}}}$   \\
  $Q^{\cost,\pi}_{\mdp\cup\cost}$   & Cost action-value function for $c$  of $\pi$ in $\mdp\cup\cost$   & $\mathbb{R}^{\mathcal{\MakeUppercase{\state}}\times\mathcal{\MakeUppercase{\action}}}$   \\
  $\mathcal{\MakeUppercase{\cost}}_{\mathfrak{P}}$   & Ground-truth feasible cost set   &  $\slash$  \\
  $\mathcal{{\MakeUppercase{\cost}}}_{\widehat{\mathfrak{P}}}$   & Recovered feasible cost set   &  $\slash$  \\
  $\eta_k^h(s,a|s_0)$ & State action pair visitation frequencies & $\Delta^{\mathcal{\MakeUppercase{\state}}\times\mathcal{\MakeUppercase{\action}}}$\\
  $\occupancy^{\pi}_{\mdp}$ & Occupancy measure of $\pi$ in $\mdp$ & $\Delta^{\mathcal{\MakeUppercase{\state}}\times\mathcal{\MakeUppercase{\action}}}$\\

  $\varepsilon$   & Target accuracy   & $\mathbb{R}^+$   \\
  $\delta$   & Significancy   & $(0,1)$   \\
  $n_e$   & Number of exploration episodes   & $\mathbb{N}^+$   \\
  $E$   & Expansion operator   & $\mathbb{R}^{\mathcal{\MakeUppercase{\state}}}\to\mathbb{R}^{\mathcal{\MakeUppercase{\state}}\times\mathcal{\MakeUppercase{\action}}}$   \\
  $I_{\mathcal{\MakeUppercase{\state}}\times\mathcal{\MakeUppercase{\action}}}$ & Identity matrix on ${\mathcal{\MakeUppercase{\state}}\times\mathcal{\MakeUppercase{\action}}}$ & $\slash$\\
  $I_{\mathcal{\MakeUppercase{\state}}}$ & Identity matrix on ${\mathcal{\MakeUppercase{\state}}}$ & $\slash$\\
  $[a]$ & Set that contains integers from $0$ to $a$ & $\{0,1,\ldots,a\},a\in\mathbb{N}$\\
  \bottomrule  
\end{tabular}\label{Tab:notation}
\end{table}


\section{Extra Related Works}\label{sec:additional-related-works}

\textbf{Sample Efficiency}. Sample-efficient algorithms have been explored across various RL directions, yielding significant advancements. 
To find the minimal structural assumptions that empower sample-efficient learning, \cite{jin2021bellman} introduced the Bellman Eluder (BE) dimension and proposed a sample-efficient algorithm for problems with low BE dimension.
\cite{liu2024maximize} introduced a sample-efficient RL framework called Maximize to Explore, which reduces the computational cost and enhances compatibility.
In the field of imitation learning, \cite{liu2022learning} addressed both online and offline settings, proposing optimistic and pessimistic generative adversarial policy imitation algorithms with tractable regret bounds. 
In the realm of model-free RL, \cite{jin2018q} developed a Q-learning algorithm with upper confidence bound exploration, achieving a regret bound of $\sqrt{T}$ in episodic MDPs. \cite{wachi2018safe} modeled state safety values using a Gaussian Process and proposed a more efficient approach to balance the trade-off between exploring the safety function, exploring the reward function, and exploiting knowledge to maximize rewards.
In the context of (CRL), \cite{miryoosefi2022simple} bridged reward-free RL and CRL, providing sharp sample complexity results for CRL in tabular MDPs. 
Focusing on episodic finite-horizon Constrained MDPs (CMDPs), \cite{kalagarla2021sample} established a probably approximately correct guarantee on the number of episodes required to find a near-optimal policy, with a linear dependence on the state and action spaces and a quadratic dependence on the time horizon. 
From a meta-learning perspective, \cite{liu2023meta} framed the problem of learning an expert's reward function and constraints from a few demonstrations as a bi-level optimization, introducing a provably efficient algorithm to learn a meta-prior over reward functions and constraints. 
\orange{
In terms of sample efficiency in IRL,
\citep{lazzati2024offline} redefines offline IRL by introducing the feasible reward set to address limited data coverage, proposing approaches to ensure inclusion monotonicity through pessimism. \citep{lazzati2024learning} extends IRL to Utility Learning, introducing a framework for capturing agents' risk attitudes via utility functions. \citep{lazzatidoes} tackles scalability in online IRL by introducing reward compatibility and a state-space-independent algorithm for Linear MDPs, bridging IRL and reward-free exploration. 
}
\orange{For misspecification in IRL,
\citep{skalse2023misspecification} provides a framework and tools to evaluate the robustness of standard IRL models (e.g., optimality, Boltzmann rationality) to misspecification, ensuring reliable inferences from real-world data. \citep{skalse2024quantifying} quantifies IRL’s sensitivity to behavioral model inaccuracies, showing that even small misspecifications can result in significant errors in inferred reward functions.
}

\orange{
\textbf{Constraint Inference.}
Constraint learning in reinforcement learning has advanced significantly to address safety requirements and extend application scenarios. \citet{Chou2018Constraints} introduced a method to infer shared constraints across tasks using safe and unsafe trajectories, leveraging hit-and-run sampling and integer programming with theoretical guarantees. \citet{kim2022efficient} proposed a sample-efficient RL method with CVaR constraints that addresses distributional shift via surrogate functions and trust-region constraints, achieving high returns and safety in complex tasks. To ensure stable convergence, \citet{moskovitz2023reload} developed ReLOAD, which guarantees last-iterate convergence and overcomes limitations of gradient-based methods in CRL.
For scenarios with unknown rewards and dynamics, \citet{lindner2024learning} introduced a CMDP method that constructs a convex safe set from safe demonstrations, enabling task transferability and outperforming IRL-based approaches.  \citet{kim2024learning} extended IRL framework to infer tighter safety constraints from diverse expert demonstrations, addressing the ill-posed nature of constraint learning and enhancing multi-task generalization. These prior works either require multiple demonstrations across diverse environments or rely on additional settings to ensure the uniqueness of the recovered constraints. In contrast, our approach infers a feasible cost set encompassing all cost functions consistent with the provided demonstrations, eliminating reliance on additional information to address the inherent ill-posedness of inverse problems.
}

\textbf{Bayesian IRL.}
In Bayesian IRL, posterior sampling over reward functions serves as a foundational mechanism to guide policy inference \cite{ramachandran2007bayesian}. In contrast, \textcolor{purple}{PCSE} shifts the focus from reward estimation to constraint-based reasoning by leveraging a structured policy set to guide exploration. 
Additionally, \citet{Balakrishnan2020EfficientExploration} employed Bayesian optimization to facilitate active exploration of reward functions in IRL. \textcolor{purple}{PCSE} extends this idea to the ICRL setting by actively exploring constraint structures rather than rewards, and crucially, it removes reliance on generative models. This design directly addresses scalability limitations highlighted by \cite{Chan2021Scalable}, where traditional Bayesian IRL methods relying on Markov Chain Monte Carlo (MCMC) sampling exhibited poor scalability in high-dimensional state spaces.

\section{Proofs of Theoretical Results in the Main Paper}\label{Proofs}
In this section, we provide detailed proofs of theoretical results in the main paper.

\subsection{Proof of Lemma \ref{lemma:budget}}
\begin{proof}
    If neither case happens, i.e., $\expect_{\mu_0,{\pi^E},P_{\mathcal{T}}}\Big[\sum_{t=0}^\infty \gamma^t c(s_t,a_t)\Big]<\epsilon$ and $\exists~ a^{\prime}\in\mathcal{A}$ that satisfies both $a^{\prime}\notin \mathfrak{A}^E(s)$ and 
$A^{r,\pi^E}_{\mdp}(s,{a^{\prime}})>0$, we can always construct a new policy, which only differs from the expert policy $\pi^E$ in state $s$, as $\pi^\prime(a|s)=\left\{
\begin{aligned}
    \theta\;\;\,, \;&a=a^{\prime}\\
    1-\theta, \;&a\sim\pi^E
\end{aligned} \right.$. There must $\exists ~\theta\in(0,1]$ that uses some (or all) of the leftover budget $\tau=\epsilon-\expect_{\mu_0,{\pi^E},P_{\mathcal{T}}}\Big[\sum_{t=0}^\infty \gamma^t c(s_t,a_t)\Big]$ while having a larger cumulative reward, which makes $\pi^E$ not an optimal policy. This makes a contradiction.

The existence of such $\theta$ can be proved as follows. 
By recursively using the Bellman Equation, we can obtain 
\begin{align}
    \expect_{\mu_0}\big[V^{c,\pi^E}_{\mathcal{M}\cup c}(s_0)\big]=\alpha(\transition,\pi^E,\gamma,c)+\beta(\transition,\pi^E,\gamma,c) \cdot \expect_{\pi^E}\big[Q^{c,\pi^E}_{\mdp\cup c}(s,a^E)\big].\label{Eq.leftover-budget}
\end{align}
where coefficients $\alpha\geq 0$, $\beta>0$. $\beta$ can not equal to $0$, since state $s$ has to be visited with at least some probability. Otherwise, we do not need to explain $\pi^E(s)$.
Note that if $Q^{c,\pi^E}_{\mdp\cup c}(s,a^{\prime})\leq \expect_{\pi^E}\Big[Q^{c,\pi^E}_{\mdp\cup c}(s,a^E)\Big]$, $\pi^\prime$ is a strictly better policy than the expert policy for any $\theta\in(0,1]$ (larger rewards with equal or less costs). This clearly makes a contradiction. Hence, we focus on $Q^{c,\pi^E}_{\mdp\cup c}(s,a^{\prime})>\expect_{\pi^E}\Big[Q^{c,\pi^E}_{\mdp\cup c}(s,a^E)\Big]$. In this case, we can always obtain a $\theta>0$, by letting 
\begin{align}
    \expect_{\mu_0}\big[V^{c,\pi^E}_{\mathcal{M}\cup c}(s_0)\big]+\tau^\prime=\alpha(\transition,\pi^E,\gamma,c)+\beta(\transition,\pi^E,\gamma,c)\cdot\Big[(1-\theta)\expect_{\pi^E}\big[Q^{c,\pi^E}_{\mdp\cup c}(s,a^E)\big]+\theta Q^{c,\pi^E}_{\mdp\cup c}(s,a^{\prime})\Big],\label{Eq.use-all-budget}
\end{align}
where $\tau^\prime\in[0,\tau)$ denotes the leftover budget after applying $\pi^\prime$.
By subtracting Eq.~(\ref{Eq.leftover-budget}) from Eq.~(\ref{Eq.use-all-budget}), we have $\forall~ Q^{c,\pi^E}_{\mdp\cup c}(s,a^{\prime})>\expect_{\pi^E}\Big[Q^{c,\pi^E}_{\mdp\cup c}(s,a^E)\Big] $,
\begin{align}
    \theta = \frac{\tau^\prime}{\beta(\transition,\pi^E,\gamma,c)\Big[Q^{c,\pi^E}_{\mdp\cup c}(s,a^{\prime})-\expect_{\pi^E}\big[Q^{c,\pi^E}_{\mdp\cup c}(s^\prime,a^E)\big]\Big]}.
\end{align}
With this analysis, if $A^{r,\pi^E}_{\mdp}(s,{a^{\prime}})>0$ , which indicates 2) of Lemma~\ref{lemma:budget} is not satisfied so 1) must be satisfied, $Q^{c,\pi^{\expert}}_{\mdp\cup c}(s,a^\prime) > \expect_{\pi^E}\Big[Q^{c,\pi^{\expert}}_{\mdp\cup c}(s,a^E)\Big]=V^{c,\pi^{\expert}}_{\mdp\cup c}(s)$ suffices to let $\expect_{\mu_0,{\pi^{\prime\prime}},P_{\mathcal{T}}}\big[\sum_{t=0}^\infty \gamma^t c(s_t,a_t)\big]>\epsilon$ with $\pi^{\prime\prime}$ only differs from $\pi^E$ at state $s$ where $\pi^{\prime\prime}(s)=a^\prime$, which is a constraint-violating condition.
\end{proof}

\subsection{Proof of Lemma \ref{Lemma:set-implicit}}
\begin{proof}
In this proof, we distinguish two cases according to Assumption~\ref{assumption:basic}.\\
In the first case, the constraint ~(\ref{eq:crl-constraint}) is hard, i.e., $\epsilon=0$.
\begin{enumerate}[itemsep=0pt,topsep=0pt,parsep=0pt,label=(\roman*)]
    \item By definition of expert policy $\pi^E$, we have $V^{c,\pi^{\expert}}_{\mdp\cup c}(s)=0$. On one hand, if $c$ is feasible, $V^{c,\pi^{\expert}}_{\mdp\cup c}=\expect_{\pi^E}[Q^{c,\pi^{\expert}}_{\mdp\cup c}]=0$. Also, since $c\in[0,C_{\rm max}]^{\mathcal{S}\times\mathcal{A}}$, $Q^{c,\pi^{\expert}}_{\mdp\cup c}\geq 0$. As a result, $Q^{c,\pi^{\expert}}_{\mdp\cup c}=0=V^{c,\pi^{\expert}}_{\mdp\cup c}$. On the other hand, any $c\in[0,C_{\rm max}]^{\mathcal{S}\times\mathcal{A}}$ that satisfies $Q^{c,\pi^{\expert}}_{\mdp\cup c}=V^{c,\pi^{\expert}}_{\mdp\cup c}=0$ makes $\pi^E$ an optimal policy under this condition.
    \item By definition of expert policy $\pi^E$, we have $V^{c,\pi^{\expert}}_{\mdp\cup c}(s)=0$. On one hand, since $A^{r,\pi^{\expert}}_{\mdp}(s,a)>0$, if $c$ is feasible, $Q^{c,\pi^{\expert}}_{\mdp\cup c}(s,a)>0$, otherwise $\pi^E$ is not optimal. On the other hand, any cost function $c\in[0,C_{\rm max}]^{\mathcal{S}\times\mathcal{A}}$ that satisfies $Q^{c,\pi^{\expert}}_{\mdp\cup c}(s,a)>0=V^{c,\pi^{\expert}}_{\mdp\cup c}(s)$ ensures action $a$ violates the constraint, and makes $\pi^E$ an optimal policy under this condition.
    \item By definition of expert policy $\pi^E$, we have $V^{c,\pi^{\expert}}_{\mdp\cup c}(s)=0$. On one hand, since $A^{r,\pi^{\expert}}_{\mdp}(s,a)\leq0$, any $c\in[0,C_{\rm max}]^{\mathcal{S}\times\mathcal{A}}$ ensures the expert's optimality. However, in terms of the minimal set $\mathcal{C}_{\mathfrak{P}}$ in Definition \ref{def:ICRL-problem}, $c(s,a)=0$
    and $Q^{c,\pi^{\expert}}_{\mdp\cup c}(s,a)=0=V^{c,\pi^{\expert}}_{\mdp\cup c}(s)$. On the other hand, any $c(s,a)\in[0,C_{\rm max}]^{\mathcal{S}\times\mathcal{A}}$ that satisfies $Q^{c,\pi^{\expert}}_{\mdp\cup c}(s,a)=0=V^{c,\pi^{\expert}}_{\mdp\cup c}(s)$ ensures $\pi^E$ an optimal policy under this condition.
\end{enumerate}
In the second case, the constraint in~(\ref{eq:crl-constraint}) is soft, i.e., $\epsilon>0$, and the expert policy is deterministic.
\begin{enumerate}[itemsep=0pt,topsep=0pt,parsep=0pt,label=(\roman*)]
    \item Since the expert policy $\pi^E$ is deterministic, we have $Q^{c,\pi^{\expert}}_{\mdp\cup c}(s,a)=V^{c,\pi^{\expert}}_{\mdp\cup c}(s)$. On one hand, if $c$ is feasible, $Q^{c,\pi^{\expert}}_{\mdp\cup c}(s,a)=V^{c,\pi^{\expert}}_{\mdp\cup c}(s)$.  On the other hand, any $c\in[0,C_{\rm max}]^{\mathcal{S}\times\mathcal{A}}$ that satisfies $Q^{c,\pi^{\expert}}_{\mdp\cup c}(s,a)=V^{c,\pi^{\expert}}_{\mdp\cup c}(s)$ makes $\pi^E$ an optimal policy under this condition.
    \item In this case, since $A^{r,\pi^{\expert}}_{\mdp}(s,a)>0$, situation 2) of Lemma~\ref{lemma:budget} is not satisfied. As a result, 1) of Lemma~\ref{lemma:budget} must be satisfied. On one hand, if $c$ is feasible, $Q^{c,\pi^{\expert}}_{\mdp\cup c}(s,a) > Q^{c,\pi^{\expert}}_{\mdp\cup c}(s,a^E)=V^{c,\pi^{\expert}}_{\mdp\cup c}(s)$ suffices to let $\expect_{\mu_0,{\pi^E},P_{\mathcal{T}}}\Big[\sum_{t=0}^\infty \gamma^t c(s_t,a_t)\Big]>\epsilon$. On the other hand, any cost function $c\in[0,C_{\rm max}]^{\mathcal{S}\times\mathcal{A}}$ that satisfies $Q^{c,\pi^{\expert}}_{\mdp\cup c}(s,a)>V^{c,\pi^{\expert}}_{\mdp\cup c}(s)$ ensures action $a$ violates the constraint, and makes $\pi^E$ an optimal policy under this condition.
    \item  On one hand, since $A^{r,\pi^{\expert}}_{\mdp}(s,a)\leq0$, any relationship between $Q^{c,\pi^{\expert}}_{\mdp\cup c}(s,a)$ and $ V^{c,\pi^{\expert}}_{\mdp\cup c}(s)$ ensures the expert's optimality.   
    However, in terms of the minimal set $\mathcal{C}_{\mathfrak{P}}$ in Definition \ref{def:ICRL-problem},
    $Q^{c,\pi^{\expert}}_{\mdp\cup c}(s,a)\leq V^{c,\pi^{\expert}}_{\mdp\cup c}(s)$. On the other hand, any $c\in[0,C_{\rm max}]^{\mathcal{S}\times\mathcal{A}}$ that satisfies $Q^{c,\pi^{\expert}}_{\mdp\cup c}(s,a)\leq V^{c,\pi^{\expert}}_{\mdp\cup c}(s)$ ensures $\pi^E$ an optimal policy under this condition.
\end{enumerate}
\end{proof}

\subsection{Proof of Lemma \ref{Lemma:FeasibleSetExplicit}}
\begin{lemma}\label{QFunctionofCost}
Let $\mathfrak{P}=(\mdp,\pi^{\expert})$ be an ICRL problem.
A Q-function satisfies the condition of Lemma~\ref{Lemma:set-implicit} if and only if there exist $\zeta\in\mathbb{R}_{> 0}^{\mathcal{\MakeUppercase{\state}}\times\mathcal{\MakeUppercase{\action}}}$ and $V^{\costFunction}\in\mathbb{R}_{\geq 0}^{\mathcal{\MakeUppercase{\state}}}$ such that:
\begin{align}
    Q^{\cost}_{\mdp\cup\costFunction}=A^{\reward,\pi^{\expert}}_{\mdp}\zeta+EV^{\cost},\label{eq:Qcost}
\end{align}
where the expansion operator $E$ satisfies $(E f)(s, a) = f(s)$.
\end{lemma}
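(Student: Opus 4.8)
The three requirements of Lemma~\ref{Lemma:set-implicit} are purely statements about the sign of the \emph{cost advantage} $A^{\cost,\pi^{\expert}}_{\mdp\cup\cost}(s,a):=Q^{\cost,\pi^{\expert}}_{\mdp\cup\cost}(s,a)-V^{\cost,\pi^{\expert}}_{\mdp\cup\cost}(s)$, where $V^{\cost,\pi^{\expert}}_{\mdp\cup\cost}(s)=\sum_{a}\pi^{\expert}(a|s)Q^{\cost,\pi^{\expert}}_{\mdp\cup\cost}(s,a)$. My plan is therefore to identify the free state vector with the expert cost value, $V^{\cost}:=V^{\cost,\pi^{\expert}}_{\mdp\cup\cost}$, so that the claimed identity~(\ref{eq:Qcost}) is equivalent, after subtracting $EV^{\cost}$, to the pointwise factorization $A^{\cost,\pi^{\expert}}_{\mdp\cup\cost}=A^{\reward,\pi^{\expert}}_{\mdp}\,\zeta$ with $\zeta\ge 0$. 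Both directions of the ``if and only if'' then reduce to matching the sign of $A^{\cost,\pi^{\expert}}_{\mdp\cup\cost}$ against that of $A^{\reward,\pi^{\expert}}_{\mdp}$ case by case.

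The engine of the proof is a structural fact I would isolate first: under Assumption~\ref{assumption:basic}, every expert-support action is reward-advantage-neutral, i.e. $\pi^{\expert}(a|s)>0\Rightarrow A^{\reward,\pi^{\expert}}_{\mdp}(s,a)=0$. In case (ii) this is immediate, since a deterministic expert has a single action at $s$, for which $A^{\reward,\pi^{\expert}}_{\mdp}=Q^{\reward,\pi^{\expert}}_{\mdp}-V^{\reward,\pi^{\expert}}_{\mdp}=0$ by definition. In the hard-constraint case (i), because $\cost\ge 0$ and $\epsilon=0$, feasibility forces $\pi^{\expert}$ to avoid every positive-cost pair, so $\pi^{\expert}$ is reward-optimal among policies supported on the safe action set; I would then invoke the standard fact that a (possibly stochastic) optimal policy mixes only over actions attaining $V^{\reward,\pi^{\expert}}_{\mdp}(s)$, together with the observation that for safe actions the policy-evaluation $Q$ computed in $\mdp$ agrees with the one in the safe-restricted MDP, giving $A^{\reward,\pi^{\expert}}_{\mdp}=0$ on the support.

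For the forward direction I would take $Q^{\cost}$ satisfying Lemma~\ref{Lemma:set-implicit}, set $V^{\cost}=V^{\cost,\pi^{\expert}}_{\mdp\cup\cost}$, and define $\zeta$ pointwise: $\zeta(s,a)=0$ whenever $\pi^{\expert}(a|s)>0$ or $A^{\reward,\pi^{\expert}}_{\mdp}(s,a)=0$, and $\zeta(s,a)=A^{\cost,\pi^{\expert}}_{\mdp\cup\cost}(s,a)/A^{\reward,\pi^{\expert}}_{\mdp}(s,a)$ otherwise. Conditions (2) and (3) make numerator and denominator share a sign (both positive on constraint-violating pairs, both non-positive on non-critical pairs with $A^{\reward,\pi^{\expert}}_{\mdp}(s,a)<0$), so $\zeta\ge 0$, while condition (1) makes the factorization exact on the support. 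For the converse I would start from~(\ref{eq:Qcost}), compute $V^{\cost,\pi^{\expert}}_{\mdp\cup\cost}(s)=V^{\cost}(s)+\sum_a\pi^{\expert}(a|s)A^{\reward,\pi^{\expert}}_{\mdp}(s,a)\zeta(s,a)$, and use the structural fact to annihilate the sum, yielding $A^{\cost,\pi^{\expert}}_{\mdp\cup\cost}=A^{\reward,\pi^{\expert}}_{\mdp}\zeta$; reading off signs from $\zeta\ge 0$ and the sign of $A^{\reward,\pi^{\expert}}_{\mdp}$ recovers the three conditions.

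The main obstacle is the structural fact in the stochastic hard-constraint case, since it requires arguing that reward-advantage neutrality survives the passage from the safe-restricted MDP (where optimality lives) to the full MDP $\mdp$ in which $A^{\reward,\pi^{\expert}}_{\mdp}$ is defined; the equality of the two policy-evaluation $Q$-functions on safe actions is exactly what makes this go through. Two boundary points also need care: condition~(2) demands a \emph{strict} sign, so in the converse direction $\zeta$ must be taken strictly positive on the constraint-violating set (the forward construction already delivers this), and the degenerate non-critical pairs with $A^{\reward,\pi^{\expert}}_{\mdp}(s,a)=0$ force $A^{\cost,\pi^{\expert}}_{\mdp\cup\cost}(s,a)=0$ in the factored form, which is consistent with the minimal-set convention used to define $\mathcal{C}_{\mathfrak{P}}$.
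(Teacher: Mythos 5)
Your proposal is correct and follows essentially the same route as the paper's proof: both identify $V^{\cost}$ with $V^{\cost,\pi^{\expert}}_{\mdp\cup\cost}$, construct $\zeta$ pointwise as the ratio of the cost advantage to the reward advantage $A^{\reward,\pi^{\expert}}_{\mdp}$, and verify the three sign conditions of Lemma~\ref{Lemma:set-implicit} case by case in both directions. The only substantive difference is one of rigor: you explicitly state and prove the structural fact that expert-support actions have zero reward advantage, which the paper's proof uses implicitly (e.g., when asserting $Q^{\cost}_{\mdp\cup\cost}(s,a)=V^{\cost}(s)$ on the expert's support), and you flag the strict-inequality requirement on constraint-violating pairs and the degenerate case $A^{\reward,\pi^{\expert}}_{\mdp}(s,a)=0$ that the paper's argument also glosses over.
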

Here, the term $\zeta$ ensures 1) (when $A^{\reward,\pi^{\expert}}_{\mdp}\!\!>0$) the constraint condition in~(\ref{eq:crl-constraint}) is violated at $(\state,\action)$ pairs that achieve larger rewards than the expert policy, and 2) (when $A^{\reward,\pi^{\expert}}_{\mdp}\!\!\leq0$) only necessary cost functions are captured by feasible cost set $\mathcal{\MakeUppercase{\cost}}_{\mathfrak{P}}$.

\begin{proof}

    We prove both the 'if' and 'only if' sides.
    
    To demonstrate the "if" side, we can easily see that all the Q-functions of the form $Q^{\cost}_{\mdp\cup\costFunction}(\state,\action)=A^{\reward,\pi^{\expert}}_{\mdp}(\state,\action)\zeta(\state,\action)+EV^{\cost}(\state)$ satisfies the conditions of Lemma~\ref{Lemma:set-implicit} in the following:

    1) Let $\state\in\mathcal{\MakeUppercase{\state}}$ and $\action\in\mathcal{\MakeUppercase{\action}}$ such that $\pi^{\expert}(\action|\state)>0$, then we have $Q^{\cost}_{\mdp\cup\costFunction}(\state,\action)=V^{\cost}(s)=V^{\cost}_{\mdp\cup\costFunction}(\state)$. This is the condition (i) in Lemma~\ref{Lemma:set-implicit}. Note that $V^{\cost}(s)=V^{\cost}_{\mdp\cup\costFunction}(\state)$ holds true for the following two cases since each state $s\in\mathcal{S}$ has an expert policy such that $\pi^{\expert}(\action|\state)>0$.

    2) Let $\state\in\mathcal{\MakeUppercase{\state}}$ and $\action\in\mathcal{\MakeUppercase{\action}}$ such that $\pi^{\expert}(\action|\state)=0$ and $Q^{\reward,\pi^{\expert}}_{\mdp}(\state,\action)> V^{\reward,\pi^{\expert}}_{\mdp}(\state)$, then we have  $Q^{\cost}_{\mdp\cup\costFunction}(s,a)=A^{\reward,\pi^{\expert}}_{\mdp}(s,a)\zeta(\state,\action)+V^{\cost}(\state)=A^{\reward,\pi^{\expert}}_{\mdp}(\state,\action)\zeta(\state,\action)+V^{\cost}_{\mdp\cup\costFunction}(\state)> V^{\cost}_{\mdp\cup\costFunction}(\state)$. This is the case (ii) in Lemma~\ref{Lemma:set-implicit}.

    3) Let $\state\in\mathcal{\MakeUppercase{\state}}$ and $\action\in\mathcal{\MakeUppercase{\action}}$ such that $\pi^{\expert}(\action|\state)=0$ and $Q^{\reward,\pi^{\expert}}_{\mdp}(\state,\action)\leq V^{\reward,\pi^{\expert}}_{\mdp}(\state)$, then we have $Q^{\cost}_{\mdp\cup\costFunction}(s,a)=A^{\reward,\pi^{\expert}}_{\mdp}(s,a)\zeta+V^{\cost}(\state)=A^{\reward,\pi^{\expert}}_{\mdp}(\state,\action)\zeta(\state,\action)+V^{\cost}_{\mdp\cup\costFunction}(\state)\leq V^{\cost}_{\mdp\cup\costFunction}(\state)$. This is the case (iii) in Lemma~\ref{Lemma:set-implicit}.

    To demonstrate the "only if" side, suppose that $Q^{\cost}_{\mdp\cup\costFunction}$ satisfies conditions of Lemma~\ref{Lemma:set-implicit}, we take $V^{\cost}(\state)=V^{\cost}_{\mdp\cup\costFunction}(\state)$ since we are proving the existence of $V^{\costFunction}\in\mathbb{R}_{\geq 0}^{\mathcal{\MakeUppercase{\state}}}$.
    
    1) In the critical region and follows the expert policy, where $Q^{\reward,\pi^{\expert}}_{\mdp}(\state,\action)= V^{\reward,\pi^{\expert}}_{\mdp}(\state)$, $0\zeta(\state,\action)=Q^{\cost}_{\mdp\cup\costFunction}-EV^{\cost}_{\mdp\cup\costFunction}= 0$. Hence, there definitely exists $\zeta(\state,\action)>0$.
    
    2) In the constraint-violating region with more rewards, where $Q^{\reward,\pi^{\expert}}_{\mdp}(\state,\action)> V^{\reward,\pi^{\expert}}_{\mdp}(\state)$, $A^{\reward,\pi^{\expert}}_{\mdp}(\state,\action)\zeta(\state,\action)=Q^{\cost}_{\mdp\cup\costFunction}-EV^{\cost}_{\mdp\cup\costFunction}> 0$. Hence, there definitely exists $\zeta(\state,\action)>0$.  
    
    3) In the non-critical region with fewer rewards, where $ Q^{\reward,\pi^{\expert}}_{\mdp}(\state,\action)\leq V^{\reward,\pi^{\expert}}_{\mdp}(\state)$, 
    $A^{\reward,\pi^{\expert}}_{\mdp}(\state,\action)\zeta(\state,\action)=Q^{\cost}_{\mdp\cup\costFunction}-EV^{\cost}_{\mdp\cup\costFunction}\leq 0$. Hence, there definitely exists $\zeta(\state,\action)>0$.    
\end{proof}

\textbf{Final proof of Lemma \ref{Lemma:FeasibleSetExplicit}}
\begin{proof}
    Recall that $Q^{\cost}_{\mdp\cup\costFunction}=(I_{\mathcal{\MakeUppercase{\state}}\times\mathcal{\MakeUppercase{\action}}}-\gamma \transition \pi^{\expert})^{-1}\costFunction$ and based on Lemma \ref{QFunctionofCost}, we can show that:
    \begin{align}
        \costFunction &= \Big(I_{\mathcal{\MakeUppercase{\state}}\times\mathcal{\MakeUppercase{\action}}}-\gamma \transition \pi^{\expert}\Big)\Big(A^{\reward,\pi^{\expert}}_{\mdp}\zeta+EV^{\cost}\Big)\allowdisplaybreaks\nonumber\\
        &=A^{\reward,\pi^{\expert}}_{\mdp}\zeta+EV^{\cost} - \gamma \transition \pi^{\expert}A^{\reward,\pi^{\expert}}_{\mdp}\zeta -\gamma \transition \pi^{\expert}EV^{\cost}\nonumber
    \end{align}
    Since $\pi^{\expert}A^{\reward,\pi^{\expert}}_{\mdp}=\boldsymbol{0}_{\mathcal{\MakeUppercase{\state}}}$ and $ \pi^{\expert}E=I_{\mathcal{\MakeUppercase{\state}}}$,
    \begin{align}
        \costFunction &= A^{\reward,\pi^{\expert}}_{\mdp}\zeta+(E-\gamma \transition) V^{\cost}\nonumber
    \end{align}
    We now bound the infinity norm of $\zeta$ and ${V}^{\cost}$. 
    First, from Eq. (\ref{eq:Qcost}), we know that $EV^{\cost}(s)=Q^{\cost}_{\mdp\cup\costFunction}(s,a^E)$. Hence, intuitively $\|V^{\cost}(s)\|_\infty\leq\frac{C_{\rm max}}{1-\gamma}$.
    Second, from Eq. (\ref{Lemma:set-explicit}), $\costFunction(s,a) = A^{\reward,\pi^{\expert}}_{\mdp}(s,a)\zeta(s,a)+(E-\gamma \transition)V^{\cost}(s)$. 1) When $A^{\reward,\pi^{\expert}}_{\mdp}(s,a)>0$, since for every$A^{\reward,\pi^{\expert}}_{\mdp}(s,a)>0$, $\zeta(s,a)$ should satisfy the existence of a cost function in $[0,C_{\rm max}]^{\mathcal{S}\times\mathcal{A}}$,
    $\zeta(s,a)=(\costFunction(s,a)-(E-\gamma \transition)V^{\cost}(s))/A^{\reward,\pi^{\expert}}_{\mdp}(s,a)\leq C_{\rm max}/\max^+_{(s,a)}A^{\reward,\pi^{\expert}}_{\mdp}(s,a)$. 2) When $A^{\reward,\pi^{\expert}}_{\mdp}(s,a)< 0$, $\zeta(s,a)=(-\costFunction(s,a)+(E-\gamma \transition)V^{\cost}(s))/(-A^{\reward,\pi^{\expert}}_{\mdp}(s,a))$. Since $(E-\gamma \transition)V^{\cost}(s)=c(s,a^E)\leq C_{\rm max}$, $\zeta\le C_{\rm max}/\big(-\max^+_{(s,a)}A^{\reward,\pi^{\expert}}_{\mdp}(s,a)\big)$. 3) When $A^{\reward,\pi^{\expert}}_{\mdp}= 0$, we define $\zeta(s,a)=0$. To combine all the three conditions, $\|\zeta\|_\infty= C_{\rm max}/\max^+_{(s,a)}|A^{\reward,\pi^{\expert}}_{\mdp}|$.
\end{proof}

\subsection{Proof of Lemma \ref{The:EP}}
\begin{proof}
    From Lemma~\ref{Lemma:FeasibleSetExplicit}, $\forall (s,a)\in\mathcal{S}\times\mathcal{A}$, we can express the cost functions belonging to $\mathcal{\MakeUppercase{\cost}}_{\mathfrak{P}}$ as:
    \begin{align}
    \costFunction(s,a) = A^{\reward,\pi^{\expert}}_{\mdp}\zeta(s,a)+(E-\gamma \transition)V^{\cost}(s,a).\allowdisplaybreaks\allowdisplaybreaks\nonumber
    \end{align}    
    Regarding $\widehat{\pi}^E$ and $\widehat\transition$, we can express the estimated cost function belonging to $\mathcal{\MakeUppercase{\cost}}_{\widehat{\mathfrak{P}}}$ as:
    \begin{align}
    \widehat\costFunction(s,a) &= A^{\reward,\widehat\pi^{\expert}}_{\widehat\mdp}\widehat\zeta(s,a)+(E-\gamma \widehat\transition)\widehat V^{\cost}(s,a),\allowdisplaybreaks\nonumber
    \end{align}
    What we need to do first is to provide a specific choice of $\widehat{\zeta}$ and $\widehat{V}$ under which $\widehat{c}\in[0,C_{\max}]^{\mathcal{S}\times\mathcal{A}}$. We construct
    \begin{align}
    \widetilde\costFunction(s,a) &= A^{\reward,\widehat\pi^{\expert}}_{\widehat\mdp}\zeta(s,a)+(E-\gamma \widehat\transition) V^{\cost}(s,a).\allowdisplaybreaks\nonumber
    \end{align}
    We now define the absolute difference between $\widetilde{c}(s,a)$ and $c(s,a)$ as  
    $$\chi(s,a)\!=|\widetilde{c}(s,a)-c(s,a)|=\!\gamma\left| (\transition-\widehat{\transition}){V}^{\cost}\right|(s,a)\!+\!\left|A^{\reward,\pi^{\expert}}_{\mdp}-A^{\reward,\widehat{\pi}^{\expert}}_{\widehat{\mdp}}\right|\zeta(s,a),$$
    $$\chi\!=\!\max_{(s,a)\in\mathcal{S}\times\mathcal{A}}\chi(s,a).$$
    $\forall (s,a)\in\mathcal{S}\times\mathcal{A}$, since $c(s,a)\in[0,C_{\max}]$ and $\widetilde{c}(s,a) - c(s,a)\in[-\chi,\chi]$,we have:
    \begin{align}
        \widetilde{c}(s,a) = c(s,a) + (\widetilde{c}(s,a) - c(s,a))\in[-\chi,C_{\max}+\chi]
    \end{align}
    Therefore, there is always a state-action pair $(s^\prime,a^\prime)$ such that
    \begin{align}
        \min_{(s,a)\in\mathcal{S}\times\mathcal{A}}\widetilde\costFunction(s,a) &= \widetilde\costFunction(s^\prime,a^\prime)=A^{\reward,\widehat\pi^{\expert}}_{\widehat\mdp}\zeta(s^\prime,a^\prime)+(E-\gamma \widehat\transition) V^{\cost}(s^\prime,a^\prime)\geq-\chi.\allowdisplaybreaks\nonumber
    \end{align}
    To obtain $\widehat{c}(s,a)\in[0,C_{\max}]$, we distinguish two cases: 1) $\widetilde\costFunction(s^\prime,a^\prime)<0$ and 2) $\widetilde\costFunction(s^\prime,a^\prime)\geq 0$.
    
    \textbf{Case one:} $\widetilde\costFunction(s^\prime,a^\prime)<0$:
    
    By subtracting $\widetilde\costFunction(s^\prime,a^\prime)$ from all $\widetilde\costFunction(s,a)$, we have
    \begin{align}
        \bar{c}(s,a)
        &=\widetilde\costFunction(s,a)-\widetilde\costFunction(s^\prime,a^\prime) \allowdisplaybreaks\nonumber\\
        &= A^{\reward,\widehat\pi^{\expert}}_{\widehat\mdp}\zeta(s,a)+(E-\gamma \widehat\transition) V^{\cost}(s,a)-A^{\reward,\widehat\pi^{\expert}}_{\widehat\mdp}\zeta(s^\prime,a^\prime)-(E-\gamma \widehat\transition) V^{\cost}(s^\prime,a^\prime)\allowdisplaybreaks\nonumber\\
        &= A^{\reward,\widehat\pi^{\expert}}_{\widehat\mdp}\left[\zeta(s,a)-\frac{A^{\reward,\widehat\pi^{\expert}}_{\widehat\mdp}(s^\prime,a^\prime)}{A^{\reward,\widehat\pi^{\expert}}_{\widehat\mdp}(s,a)}\zeta(s^\prime.a^\prime)\right]+(E-\gamma \widehat\transition) \big[V^{\cost}(s,a)-V^c(s^\prime,a^\prime)\big]\allowdisplaybreaks\nonumber\\
        &\geq 0 \allowdisplaybreaks\nonumber
    \end{align}
    Also, note that $\forall (s,a)\in\mathcal{S}\times\mathcal{A}$, we have:
    \begin{align} ~\widetilde\costFunction(s^\prime,a^\prime) &< 0, \bar{c}(s,a)
        =\widetilde\costFunction(s,a)-\widetilde\costFunction(s^\prime,a^\prime)\leq|\widetilde\costFunction(s,a)|+|\widetilde\costFunction(s^\prime,a^\prime)|\leq \MakeUppercase{\cost}_{\rm{max}}+\chi(s,a)+\chi\allowdisplaybreaks\nonumber
    \end{align}
    Hence, $\forall(s,a)\in\mathcal{S}\times\mathcal{A},\bar{c}(s,a)\in[0,\MakeUppercase{\cost}_{\rm{max}}+\chi(s,a)+\chi]$.
    Because we are looking for the existence of $\widehat{\costFunction}\in\mathcal{{\MakeUppercase{\cost}}}_{\widehat{\mathfrak{P}}}$ satisfying $\widehat{c}\in[0,C_{\rm{max}}]^{\mathcal{S}\times\mathcal{A}}$, we can now provide a specific choice of $\widehat{\zeta}$ and $\widehat{V}$ under which $\widehat{c}\in[0,C_{\max}]^{\mathcal{S}\times\mathcal{A}}$:
    
    \begin{align}
        \widehat{\zeta}(s,a)=\frac{\zeta(s,a)-\frac{A^{\reward,\widehat\pi^{\expert}}_{\widehat\mdp}(s^\prime,a^\prime)}{A^{\reward,\widehat\pi^{\expert}}_{\widehat\mdp}(s,a)}\zeta(s^\prime,a^\prime)}{1 + (\chi(s,a)+\chi)/\MakeUppercase{\cost}_{\rm{max}}}, \widehat{V}^c(s,a)=\frac{V^{\cost}(s,a)-V^c(s^\prime,a^\prime)}{1 + (\chi(s,a)+\chi)/\MakeUppercase{\cost}_{\rm{max}}},\widehat{c}(s,a)=\frac{\bar{c}(s,a)}{1 + (\chi(s,a)+\chi)/\MakeUppercase{\cost}_{\rm{max}}}    \end{align}
    We then quantify the estimation error between $\widehat{c}(s,a)$ and $c(s,a)$.
    \begin{align}
        |c(s,a)-\widehat{c}(s,a)| 
        & = \left|c(s,a)-\frac{\bar{c}(s,a)}{1 + (\chi(s,a)+\chi)/\MakeUppercase{\cost}_{\rm{max}}}\right| \allowdisplaybreaks\nonumber\\
        & = \frac{1}{1 + (\chi(s,a)+\chi)/\MakeUppercase{\cost}_{\rm{max}}}\big[|c(s,a)-\bar{c}(s,a)|+((\chi(s,a)+\chi)/\MakeUppercase{\cost}_{\rm{max}})|c(s,a)|\big]\allowdisplaybreaks\nonumber\\
        & = \frac{1}{1 + (\chi(s,a)+\chi)/\MakeUppercase{\cost}_{\rm{max}}}\big[|c(s,a)-\bar{c}(s,a)|+((\chi(s,a)+\chi)/\MakeUppercase{\cost}_{\rm{max}})|c(s,a)|\big]\allowdisplaybreaks\nonumber\\
        & \leq \frac{1}{1 + (\chi(s,a)+\chi)/\MakeUppercase{\cost}_{\rm{max}}}\big[|c(s,a)-\widetilde{c}(s,a)|+|\widetilde{c}(s,a)-\bar{c}(s,a)|+((\chi(s,a)+\chi)/\MakeUppercase{\cost}_{\rm{max}})|c(s,a)|\big]\allowdisplaybreaks\nonumber\\
        & \leq \frac{\chi(s,a)+\chi+((\chi(s,a)+\chi)/\MakeUppercase{\cost}_{\rm{max}})C_{\max}}{1 + (\chi(s,a)+\chi)/\MakeUppercase{\cost}_{\rm{max}}}\allowdisplaybreaks\nonumber\\
        & \leq \frac{2\chi(s,a)+2\chi}{1 + (\chi(s,a)+\chi)/\MakeUppercase{\cost}_{\rm{max}}}\allowdisplaybreaks\nonumber
    \end{align}
    
    \textbf{Case Two:} $\widetilde{c}(s^\prime,a^\prime)\geq0$:
    
    Note that $\forall (s,a)\in\mathcal{S}\times\mathcal{A}$, we have:
    \begin{align} ~\widetilde\costFunction(s^\prime,a^\prime) &\geq 0, \bar{c}(s,a)
        =\widetilde\costFunction(s,a)-\widetilde\costFunction(s^\prime,a^\prime)\leq\widetilde\costFunction(s,a)\leq \MakeUppercase{\cost}_{\rm{max}}+\chi(s,a)
    \end{align} 
    Hence, $\forall(s,a)\in\mathcal{S}\times\mathcal{A},\bar{c}(s,a)\in[0,\MakeUppercase{\cost}_{\rm{max}}+\chi(s,a)]$.
    Because we are looking for the existence of $\widehat{\costFunction}\in\mathcal{{\MakeUppercase{\cost}}}_{\widehat{\mathfrak{P}}}$ satisfying $\widehat{c}\in[0,C_{\rm{max}}]^{\mathcal{S}\times\mathcal{A}}$, we can now provide a specific choice of $\widehat{\zeta}$ and $\widehat{V}$ under which $\widehat{c}\in[0,C_{\max}]^{\mathcal{S}\times\mathcal{A}}$:
    \begin{align}
        \widehat{\zeta}(s,a)=\frac{\zeta(s,a)}{1 + \chi(s,a)/\MakeUppercase{\cost}_{\rm{max}}}, \widehat{V}^c(s,a)=\frac{V^{\cost}(s,a)}{1 + \chi(s,a)/\MakeUppercase{\cost}_{\rm{max}}},\widehat{c}(s,a)=\frac{\widetilde{c}(s,a)}{1 + \chi(s,a)/\MakeUppercase{\cost}_{\rm{max}}}.
    \end{align}
    We then quantify the estimation error between $\widehat{c}(s,a)$ and $c(s,a)$.
    \begin{align}
        |c(s,a)-\widehat{c}(s,a)| 
        & = \left|c(s,a)-\frac{\widetilde{c}(s,a)}{1 + \chi(s,a)/\MakeUppercase{\cost}_{\rm{max}}}\right| \allowdisplaybreaks\nonumber\\
        & = \frac{1}{1 + \chi(s,a)/\MakeUppercase{\cost}_{\rm{max}}}\big[|c(s,a)-\widetilde{c}(s,a)|+(\chi(s,a)/\MakeUppercase{\cost}_{\rm{max}})|c(s,a)|\big]\allowdisplaybreaks\nonumber\\
        & \leq \frac{\chi(s,a)+(\chi(s,a)/\MakeUppercase{\cost}_{\rm{max}})C_{\max}}{1 + \chi(s,a)/\MakeUppercase{\cost}_{\rm{max}}}\allowdisplaybreaks\nonumber\\
        & \leq \frac{2\chi(s,a)}{1 + \chi(s,a)/\MakeUppercase{\cost}_{\rm{max}}}\allowdisplaybreaks\nonumber\allowdisplaybreaks\nonumber\\
        & \leq \frac{2\chi(s,a)+2\chi}{1 + (\chi(s,a)+\chi)/\MakeUppercase{\cost}_{\rm{max}}}
    \end{align}
    Combine the upper bound of the estimation error for cost functions in both cases, 
    we finally derive:
        \begin{align}
        |c(s,a)-\widehat{c}(s,a)|  \leq \frac{2\chi(s,a)+2\chi}{1 + (\chi(s,a)+\chi)/\MakeUppercase{\cost}_{\rm{max}}}=\frac{2(\chi(s,a)+\chi)}{1 + (\chi(s,a)+\chi)/\MakeUppercase{\cost}_{\rm{max}}}\allowdisplaybreaks\nonumber
    \end{align}
\end{proof}

\subsection{Proof of Lemma \ref{Lemma:AdvantageDif}}
\begin{lemma}\label{Lemma:simulation-action-value}
    (Simulation Lemma for action-value function.)
    Let $\mdp=(\mathcal{\MakeUppercase{\state}},\mathcal{\MakeUppercase{\action}},\transition, r,\mu_0,\gamma)$ and $\widehat{\mdp}=(\mathcal{\MakeUppercase{\state}},\mathcal{\MakeUppercase{\action}},\widehat{\transition}, r,\mu_0,\gamma)$ be two MDPs. Let $\widehat\pi\in\Delta_{\mathcal{\MakeUppercase{\state}}}^{\mathcal{\MakeUppercase{\action}}}$ be a policy. The following equality holds element-wise:
    \begin{align}
        Q^{\reward,\widehat\pi}_{\mdp} - {Q}^{\reward,\widehat\pi}_{\widehat{\mdp}} = \gamma(I_{\mathcal{\MakeUppercase{\state}}\times\mathcal{\MakeUppercase{\action}}}-\gamma {\transition}\widehat\pi)^{-1}(\transition-\widehat{\transition})V^{\reward,\widehat\pi}_{\widehat\mdp}
    \end{align}
\end{lemma}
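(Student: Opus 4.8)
The plan is to derive the identity directly from the two Bellman fixed-point equations, treating $\transition$ and $\widehat\pi$ as the linear operators used throughout the paper (so that $\transition$ maps $\mathbb{R}^{\mathcal{\MakeUppercase{\state}}}\to\mathbb{R}^{\mathcal{\MakeUppercase{\state}}\times\mathcal{\MakeUppercase{\action}}}$ via $(\transition f)(s,a)=\sum_{s'}\transition(s'|s,a)f(s')$, and $\widehat\pi$ maps $\mathbb{R}^{\mathcal{\MakeUppercase{\state}}\times\mathcal{\MakeUppercase{\action}}}\to\mathbb{R}^{\mathcal{\MakeUppercase{\state}}}$ via $(\widehat\pi g)(s)=\sum_a\widehat\pi(a|s)g(s,a)$, exactly as in the proof of Lemma~\ref{Lemma:FeasibleSetExplicit}). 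First I would record that each action-value function is the unique solution of its Bellman equation,
\[
Q^{\reward,\widehat\pi}_{\mdp}=\reward+\gamma\transition\widehat\pi Q^{\reward,\widehat\pi}_{\mdp},\qquad Q^{\reward,\widehat\pi}_{\widehat\mdp}=\reward+\gamma\widehat{\transition}\widehat\pi Q^{\reward,\widehat\pi}_{\widehat\mdp},
\]
which holds because both MDPs share the same reward $\reward$ and the same evaluated policy $\widehat\pi$, differing only in the transition operator.

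Next I would subtract the two equations so that the reward terms cancel, and then insert the cross term $\gamma\transition\widehat\pi Q^{\reward,\widehat\pi}_{\widehat\mdp}$ by adding and subtracting it. This regroups the difference as
\[
Q^{\reward,\widehat\pi}_{\mdp}-Q^{\reward,\widehat\pi}_{\widehat\mdp}=\gamma\transition\widehat\pi\big(Q^{\reward,\widehat\pi}_{\mdp}-Q^{\reward,\widehat\pi}_{\widehat\mdp}\big)+\gamma(\transition-\widehat{\transition})\widehat\pi Q^{\reward,\widehat\pi}_{\widehat\mdp}.
\]
The key observation here is that $\widehat\pi Q^{\reward,\widehat\pi}_{\widehat\mdp}=V^{\reward,\widehat\pi}_{\widehat\mdp}$ by the very definition of the state-value function, which lets me replace the trailing factor in the last term by $V^{\reward,\widehat\pi}_{\widehat\mdp}$.

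Writing $\Delta:=Q^{\reward,\widehat\pi}_{\mdp}-Q^{\reward,\widehat\pi}_{\widehat\mdp}$, the previous display becomes $(I_{\mathcal{\MakeUppercase{\state}}\times\mathcal{\MakeUppercase{\action}}}-\gamma\transition\widehat\pi)\Delta=\gamma(\transition-\widehat{\transition})V^{\reward,\widehat\pi}_{\widehat\mdp}$, and left-multiplying by the resolvent yields the claimed identity, provided $I_{\mathcal{\MakeUppercase{\state}}\times\mathcal{\MakeUppercase{\action}}}-\gamma\transition\widehat\pi$ is invertible. I expect the only genuine point to verify is this invertibility: since $\transition\widehat\pi$ is a composition of row-stochastic operators it is itself row-stochastic on $\mathbb{R}^{\mathcal{\MakeUppercase{\state}}\times\mathcal{\MakeUppercase{\action}}}$, so its matrix $\infty$-norm equals $1$; hence $\gamma\transition\widehat\pi$ is a strict contraction for $\gamma\in[0,1)$, the Neumann series $\sum_{t\ge0}(\gamma\transition\widehat\pi)^t$ converges to $(I_{\mathcal{\MakeUppercase{\state}}\times\mathcal{\MakeUppercase{\action}}}-\gamma\transition\widehat\pi)^{-1}$, and this same contraction property retroactively justifies the existence and uniqueness of the Bellman fixed points used at the outset. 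The main thing to be careful about is keeping the operator multiplication order consistent with the paper's convention so that $\transition\widehat\pi$ (and not $\widehat\pi\transition$) appears inside the resolvent, since otherwise the dimensions of the intermediate quantities would not match.
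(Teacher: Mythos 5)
Your proof is correct and follows essentially the same route as the paper's: both rest on the matrix-form Bellman equation, the identity $\widehat\pi Q^{\reward,\widehat\pi}_{\widehat\mdp}=V^{\reward,\widehat\pi}_{\widehat\mdp}$, and the resolvent $(I_{\mathcal{\MakeUppercase{\state}}\times\mathcal{\MakeUppercase{\action}}}-\gamma\transition\widehat\pi)^{-1}$; the paper merely substitutes the closed form $Q^{\reward,\widehat\pi}_{\mdp}=(I_{\mathcal{\MakeUppercase{\state}}\times\mathcal{\MakeUppercase{\action}}}-\gamma\transition\widehat\pi)^{-1}\reward$ at the outset and simplifies, whereas you subtract the two fixed-point equations first and invert at the end. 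Your explicit Neumann-series justification of invertibility is a small bonus the paper leaves implicit.
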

    \begin{proof}
    The proof can be shown as follows:
        \begin{align*}
        Q^{\reward,\widehat\pi}_{\mdp} - {Q}^{\reward,\widehat\pi}_{\widehat{\mdp}}
        &=(I_{\mathcal{\MakeUppercase{\state}}\times\mathcal{\MakeUppercase{\action}}}-\gamma {\transition}\widehat\pi)^{-1}\reward - (I_{\mathcal{\MakeUppercase{\state}}\times\mathcal{\MakeUppercase{\action}}}-\gamma {\transition}\widehat\pi)^{-1}(I_{\mathcal{\MakeUppercase{\state}}\times\mathcal{\MakeUppercase{\action}}}-\gamma {\transition}\widehat\pi)Q^{\reward,\widehat\pi}_{\widehat\mdp}\allowdisplaybreaks\nonumber\\
        &=(I_{\mathcal{\MakeUppercase{\state}}\times\mathcal{\MakeUppercase{\action}}}-\gamma {\transition}\widehat\pi)^{-1}(I_{\mathcal{\MakeUppercase{\state}}\times\mathcal{\MakeUppercase{\action}}}-\gamma \widehat{\transition}\widehat\pi)Q^{\reward,\widehat\pi}_{\widehat\mdp} - (I_{\mathcal{\MakeUppercase{\state}}\times\mathcal{\MakeUppercase{\action}}}-\gamma {\transition}\widehat\pi)^{-1}(I_{\mathcal{\MakeUppercase{\state}}\times\mathcal{\MakeUppercase{\action}}}-\gamma {\transition}\widehat\pi)Q^{\reward,\widehat\pi}_{\widehat\mdp}\allowdisplaybreaks\nonumber\\
        &=\gamma(I_{\mathcal{\MakeUppercase{\state}}\times\mathcal{\MakeUppercase{\action}}}-\gamma {\transition}\widehat\pi)^{-1}(\transition-\widehat{\transition})\widehat\pi Q^{\reward,\widehat\pi}_{\widehat\mdp}\allowdisplaybreaks\nonumber\\
        &=\gamma(I_{\mathcal{\MakeUppercase{\state}}\times\mathcal{\MakeUppercase{\action}}}-\gamma {\transition}\widehat\pi)^{-1}(\transition-\widehat{\transition})V^{\reward,\widehat\pi}_{\widehat\mdp}
    \end{align*}
    \end{proof}
\begin{lemma}\label{Lemma:simulation-state-value}
    (Simulation Lemma for state-value function.) Let $\mdp=(\mathcal{\MakeUppercase{\state}},\mathcal{\MakeUppercase{\action}},\transition, r,\mu_0,\gamma)$ and $\widehat{\mdp}=(\mathcal{\MakeUppercase{\state}},\mathcal{\MakeUppercase{\action}},\widehat{\transition}, r,\mu_0,\gamma)$ be two MDPs. Let $\widehat\pi\in\Delta_{\mathcal{\MakeUppercase{\state}}}^{\mathcal{\MakeUppercase{\action}}}$ be a policy. The following equality holds element-wise:
    \begin{align}
        V^{\reward,\widehat\pi}_{\mdp} - {V}^{\reward,\widehat\pi}_{\widehat{\mdp}}
        &=\gamma(I_{\mathcal{\MakeUppercase{\state}}}-\gamma \widehat\pi{\transition})^{-1}\widehat\pi(\widehat\transition-{\transition})V^{\reward,\widehat\pi}_{\widehat\mdp}
    \end{align}
\end{lemma}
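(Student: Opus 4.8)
The plan is to follow the template of the action-value simulation lemma (Lemma~\ref{Lemma:simulation-action-value}) verbatim, but for the state-indexed operators. First I would record that, for the fixed policy $\widehat\pi$ evaluated under each kernel, the Bellman consistency equations read $V^{\reward,\widehat\pi}_{\mdp}=\widehat\pi\reward+\gamma\widehat\pi\transition V^{\reward,\widehat\pi}_{\mdp}$ and $V^{\reward,\widehat\pi}_{\widehat{\mdp}}=\widehat\pi\reward+\gamma\widehat\pi\widehat{\transition}V^{\reward,\widehat\pi}_{\widehat{\mdp}}$, where I use the paper's convention $V=\widehat\pi Q$ together with $Q=\reward+\gamma\transition V$. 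Since $\gamma<1$ and $\widehat\pi\transition$ is a row-stochastic operator on $\mathbb{R}^{\mathcal{\MakeUppercase{\state}}}$, the matrix $I_{\mathcal{\MakeUppercase{\state}}}-\gamma\widehat\pi\transition$ is invertible by a Neumann-series argument, so I can write the closed forms $V^{\reward,\widehat\pi}_{\mdp}=(I_{\mathcal{\MakeUppercase{\state}}}-\gamma\widehat\pi\transition)^{-1}\widehat\pi\reward$ and $V^{\reward,\widehat\pi}_{\widehat{\mdp}}=(I_{\mathcal{\MakeUppercase{\state}}}-\gamma\widehat\pi\widehat{\transition})^{-1}\widehat\pi\reward$, exactly mirroring the $\mathcal{\MakeUppercase{\state}}\times\mathcal{\MakeUppercase{\action}}$ resolvents used in the companion lemma.

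The core step is to factor out the resolvent of the true kernel and substitute the estimated-model Bellman identity $\widehat\pi\reward=(I_{\mathcal{\MakeUppercase{\state}}}-\gamma\widehat\pi\widehat{\transition})V^{\reward,\widehat\pi}_{\widehat{\mdp}}$ into the closed form of $V^{\reward,\widehat\pi}_{\mdp}$:
\begin{align}
V^{\reward,\widehat\pi}_{\mdp}-V^{\reward,\widehat\pi}_{\widehat{\mdp}}
&=(I_{\mathcal{\MakeUppercase{\state}}}-\gamma\widehat\pi\transition)^{-1}\widehat\pi\reward-(I_{\mathcal{\MakeUppercase{\state}}}-\gamma\widehat\pi\transition)^{-1}(I_{\mathcal{\MakeUppercase{\state}}}-\gamma\widehat\pi\transition)V^{\reward,\widehat\pi}_{\widehat{\mdp}}\nonumber\\
&=(I_{\mathcal{\MakeUppercase{\state}}}-\gamma\widehat\pi\transition)^{-1}\Big[(I_{\mathcal{\MakeUppercase{\state}}}-\gamma\widehat\pi\widehat{\transition})-(I_{\mathcal{\MakeUppercase{\state}}}-\gamma\widehat\pi\transition)\Big]V^{\reward,\widehat\pi}_{\widehat{\mdp}}.\nonumber
\end{align}
The bracket telescopes to a single transition-difference term of the form $\gamma\widehat\pi(\widehat{\transition}-\transition)$ (up to the sign discussed below), which upon cancelling $\gamma$ against the resolvent leaves precisely $\gamma(I_{\mathcal{\MakeUppercase{\state}}}-\gamma\widehat\pi\transition)^{-1}\widehat\pi(\widehat{\transition}-\transition)V^{\reward,\widehat\pi}_{\widehat{\mdp}}$, matching the claimed identity. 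A slicker alternative I would keep in reserve is to avoid re-deriving anything: apply $\widehat\pi$ to both sides of Lemma~\ref{Lemma:simulation-action-value} (using $V=\widehat\pi Q$) and then invoke the push-through identity $\widehat\pi(I_{\mathcal{\MakeUppercase{\state}}\times\mathcal{\MakeUppercase{\action}}}-\gamma\transition\widehat\pi)^{-1}=(I_{\mathcal{\MakeUppercase{\state}}}-\gamma\widehat\pi\transition)^{-1}\widehat\pi$, which is verified in one line from $(I_{\mathcal{\MakeUppercase{\state}}}-\gamma\widehat\pi\transition)\widehat\pi=\widehat\pi(I_{\mathcal{\MakeUppercase{\state}}\times\mathcal{\MakeUppercase{\action}}}-\gamma\transition\widehat\pi)$, to slide $\widehat\pi$ through the resolvent and reproduce the same expression.

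The one genuinely delicate point is the \emph{orientation} of the transition difference, which is exactly where the bookkeeping must be tracked carefully: the telescoping bracket $(I_{\mathcal{\MakeUppercase{\state}}}-\gamma\widehat\pi\widehat{\transition})-(I_{\mathcal{\MakeUppercase{\state}}}-\gamma\widehat\pi\transition)$ collapses to $\gamma\widehat\pi(\transition-\widehat{\transition})$, so reading off the \emph{displayed} orientation $\widehat\pi(\widehat{\transition}-\transition)$ requires reconciling an overall sign against the paired left-hand difference $V^{\reward,\widehat\pi}_{\mdp}-V^{\reward,\widehat\pi}_{\widehat{\mdp}}$; the two orientations are equal only up to this global sign, and verifying which one is intended is the crux of the argument. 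This is also the only place where the choice of which resolvent to factor out and which Bellman equation to substitute matters, and it is reassuring that the lemma is invoked downstream (in Lemma~\ref{Lemma:AdvantageDif}) solely inside an absolute value, so the orientation is immaterial for its application. Beyond this sign check no estimates are needed: the statement is an exact operator identity, so once the algebra and the invertibility of $I_{\mathcal{\MakeUppercase{\state}}}-\gamma\widehat\pi\transition$ are in place, the proof is complete.
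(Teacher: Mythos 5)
Your proof is correct and is essentially the paper's own argument: write $V^{\reward,\widehat\pi}_{\mdp}=(I_{\mathcal{S}}-\gamma\widehat\pi\transition)^{-1}\widehat\pi\reward$, substitute the estimated-model Bellman identity $\widehat\pi\reward=(I_{\mathcal{S}}-\gamma\widehat\pi\widehat{\transition})V^{\reward,\widehat\pi}_{\widehat{\mdp}}$, and telescope the bracket through the resolvent (your reserve route of applying $\widehat\pi$ to Lemma~\ref{Lemma:simulation-action-value} with the push-through identity is also valid). Your sign diagnosis is exactly right as well: the algebra---and indeed the final line of the paper's own proof---yields $\gamma(I_{\mathcal{S}}-\gamma\widehat\pi\transition)^{-1}\widehat\pi(\transition-\widehat{\transition})V^{\reward,\widehat\pi}_{\widehat{\mdp}}$, so the orientation $(\widehat{\transition}-\transition)$ displayed in the lemma statement is a typo, harmless downstream because Lemma~\ref{Lemma:AdvantageDif} consumes this identity only inside absolute values.
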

    \begin{proof}
    The proof can be shown as follows:
    \begin{align*}
        V^{\reward,\widehat\pi}_{\mdp} - {V}^{\reward,\widehat\pi}_{\widehat{\mdp}}
        &=(I_{\mathcal{\MakeUppercase{\state}}}-\gamma \widehat\pi{\transition})^{-1}\reward - (I_{\mathcal{\MakeUppercase{\state}}}-\gamma \widehat\pi{\transition})^{-1}(I_{\mathcal{\MakeUppercase{\state}}}-\gamma \widehat\pi{\transition})V^{\reward,\widehat\pi}_{\widehat\mdp} \allowdisplaybreaks\nonumber\\
        &=(I_{\mathcal{\MakeUppercase{\state}}}-\gamma \widehat\pi{\transition})^{-1}(I_{\mathcal{\MakeUppercase{\state}}}-\gamma \widehat\pi\widehat\transition)V^{\reward,\widehat\pi}_{\widehat\mdp} - (I_{\mathcal{\MakeUppercase{\state}}}-\gamma \widehat\pi{\transition})^{-1}(I_{\mathcal{\MakeUppercase{\state}}}-\gamma \widehat\pi{\transition})V^{\reward,\widehat\pi}_{\widehat\mdp} \allowdisplaybreaks\nonumber\\
        &=\gamma(I_{\mathcal{\MakeUppercase{\state}}}-\gamma \widehat\pi{\transition})^{-1}\widehat\pi(\transition-\widehat{\transition}) V^{\reward,\widehat\pi}_{\widehat\mdp}\allowdisplaybreaks\nonumber\\
        &=\gamma(I_{\mathcal{\MakeUppercase{\state}}}-\gamma \widehat\pi{\transition})^{-1}\widehat\pi(\transition-\widehat{\transition})V^{\reward,\widehat\pi}_{\widehat\mdp}
    \end{align*}
\end{proof}
\begin{lemma}
    \label{Lemma:policy-mismatch-state-value}
    (Policy Mismatch Lemma.) Let $\mdp=(\mathcal{\MakeUppercase{\state}},\mathcal{\MakeUppercase{\action}},\transition, r,\mu_0,\gamma)$ be an MDP. Let $\pi,\widehat{\pi}\in\Delta_{\mathcal{\MakeUppercase{\state}}}^{\mathcal{\MakeUppercase{\action}}}$ be two policies. The following equality holds element-wise:
    \begin{align*}
        V^{\reward,\pi}_{\mdp} - {V}^{\reward,\widehat\pi}_{\mdp}
        &=\gamma(I_{\mathcal{\MakeUppercase{\state}}}-\gamma \widehat\pi\transition)^{-1}(\pi-\widehat\pi)\transition V^{\reward,\pi}_{\mdp}\nonumber
    \end{align*}
\end{lemma}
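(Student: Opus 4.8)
The plan is to mirror the derivations of the two preceding simulation lemmas, replacing the transition mismatch by a policy mismatch. I would start from the operator-form Bellman fixed-point equations $V^{\reward,\pi}_{\mdp} = \pi\reward + \gamma\pi\transition V^{\reward,\pi}_{\mdp}$ and $V^{\reward,\widehat\pi}_{\mdp} = \widehat\pi\reward + \gamma\widehat\pi\transition V^{\reward,\widehat\pi}_{\mdp}$, where $\pi$ acts as the averaging operator $\mathbb{R}^{\mathcal{\MakeUppercase{\state}}\times\mathcal{\MakeUppercase{\action}}}\to\mathbb{R}^{\mathcal{\MakeUppercase{\state}}}$ and $\transition$ as the one-step expectation operator $\mathbb{R}^{\mathcal{\MakeUppercase{\state}}}\to\mathbb{R}^{\mathcal{\MakeUppercase{\state}}\times\mathcal{\MakeUppercase{\action}}}$. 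Writing $\Phi := V^{\reward,\pi}_{\mdp} - V^{\reward,\widehat\pi}_{\mdp}$, the goal is to exhibit $\Phi$ as the resolvent $(I_{\mathcal{\MakeUppercase{\state}}}-\gamma\widehat\pi\transition)^{-1}$ applied to a policy-mismatch source term, exactly as the simulation lemmas produce $(I_{\mathcal{\MakeUppercase{\state}}}-\gamma\widehat\pi\transition)^{-1}$ applied to a transition-mismatch term.

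The key steps, in order, are: (i) subtract the two Bellman equations to obtain $\Phi = (\pi-\widehat\pi)\reward + \gamma\pi\transition V^{\reward,\pi}_{\mdp} - \gamma\widehat\pi\transition V^{\reward,\widehat\pi}_{\mdp}$; (ii) add and subtract the cross term $\gamma\widehat\pi\transition V^{\reward,\pi}_{\mdp}$ so the right-hand side splits as $(\pi-\widehat\pi)\reward + \gamma(\pi-\widehat\pi)\transition V^{\reward,\pi}_{\mdp} + \gamma\widehat\pi\transition\Phi$; (iii) collect the $\Phi$ terms to get $(I_{\mathcal{\MakeUppercase{\state}}}-\gamma\widehat\pi\transition)\Phi = (\pi-\widehat\pi)\reward + \gamma(\pi-\widehat\pi)\transition V^{\reward,\pi}_{\mdp}$; and (iv) invert $(I_{\mathcal{\MakeUppercase{\state}}}-\gamma\widehat\pi\transition)$, whose inverse exists and equals the nonnegative Neumann series $\sum_{k\geq 0}\gamma^k(\widehat\pi\transition)^k$ since $\widehat\pi\transition$ is row-stochastic and $\gamma<1$. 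This yields $\Phi = (I_{\mathcal{\MakeUppercase{\state}}}-\gamma\widehat\pi\transition)^{-1}(\pi-\widehat\pi)\big(\reward+\gamma\transition V^{\reward,\pi}_{\mdp}\big) = (I_{\mathcal{\MakeUppercase{\state}}}-\gamma\widehat\pi\transition)^{-1}(\pi-\widehat\pi)Q^{\reward,\pi}_{\mdp}$, i.e. the operator form of the performance-difference identity.

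The \emph{main obstacle} is the last reconciliation with the claimed right-hand side. The natural derivation produces the factor $(\pi-\widehat\pi)Q^{\reward,\pi}_{\mdp}=(\pi-\widehat\pi)\big(\reward+\gamma\transition V^{\reward,\pi}_{\mdp}\big)$, whereas the stated identity carries only $\gamma(\pi-\widehat\pi)\transition V^{\reward,\pi}_{\mdp}$. These agree precisely when the reward-mismatch term $(\pi-\widehat\pi)\reward$ vanishes, which fails for general $\pi,\widehat\pi,\reward$: already a one-state, two-action example with $\pi\neq\widehat\pi$ can make $(\pi-\widehat\pi)\reward\neq 0$ while $(\pi-\widehat\pi)\transition V^{\reward,\pi}_{\mdp}=0$, so the two sides differ. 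I would therefore scrutinize this step most carefully, checking whether an implicit convention (for instance $\reward$ already absorbed into the one-step look-ahead) forces $(\pi-\widehat\pi)\reward$ to drop out; absent such a convention, I expect the correct statement to be the $Q^{\reward,\pi}_{\mdp}$ form above rather than $\gamma(\pi-\widehat\pi)\transition V^{\reward,\pi}_{\mdp}$. Verifying or repairing this discrepancy, not the routine resolvent algebra, is the crux of the proof.
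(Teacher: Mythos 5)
Your algebra is correct, and it is essentially the same resolvent manipulation the paper itself uses; the entire difference between your derivation and the paper's lies in the point you flag, and your skepticism is warranted. The paper's proof writes
\begin{align*}
V^{\reward,\pi}_{\mdp}-V^{\reward,\widehat\pi}_{\mdp}
=(I_{\mathcal{S}}-\gamma\widehat\pi\transition)^{-1}(I_{\mathcal{S}}-\gamma\widehat\pi\transition)V^{\reward,\pi}_{\mdp}
-(I_{\mathcal{S}}-\gamma\widehat\pi\transition)^{-1}\reward,
\end{align*}
and then substitutes $\reward=(I_{\mathcal{S}}-\gamma\pi\transition)V^{\reward,\pi}_{\mdp}$ before collecting terms. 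This presupposes that one and the same state-indexed vector $\reward$ is the Bellman source for \emph{both} policies, i.e., $V^{\reward,\widehat\pi}_{\mdp}=(I_{\mathcal{S}}-\gamma\widehat\pi\transition)^{-1}\reward$ and $V^{\reward,\pi}_{\mdp}=(I_{\mathcal{S}}-\gamma\pi\transition)^{-1}\reward$ simultaneously, whereas the correct closed forms with action-dependent rewards are $(I_{\mathcal{S}}-\gamma\widehat\pi\transition)^{-1}\widehat\pi\reward$ and $(I_{\mathcal{S}}-\gamma\pi\transition)^{-1}\pi\reward$. So the paper's proof is valid exactly under the implicit assumption $\pi\reward=\widehat\pi\reward$ (for instance, rewards depending on the state alone) — precisely the convention whose absence you identified. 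Note that the same shorthand appears in the simulation Lemma~\ref{Lemma:simulation-state-value}, but there it is harmless because only a single policy occurs (one may read $\reward$ as $\widehat\pi\reward$ throughout); it becomes substantive only here, where two policies occur.

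Your diagnosis of the crux is therefore not merely something to scrutinize but a genuine defect in the statement: the paper declares no such convention (its CMDP defines $\reward(\state,\action)\in[0,R_{\rm max}]$ on state-action pairs, and the notation table lists $\reward\in\mathbb{R}^{\mathcal{S}\times\mathcal{A}}$), and your one-state, two-action example is a valid counterexample — with two self-looping actions of reward $1$ and $0$ and deterministic $\pi\neq\widehat\pi$ choosing different actions, the left side is $1/(1-\gamma)$ while $(\pi-\widehat\pi)\transition V^{\reward,\pi}_{\mdp}=0$. Your identity $V^{\reward,\pi}_{\mdp}-V^{\reward,\widehat\pi}_{\mdp}=(I_{\mathcal{S}}-\gamma\widehat\pi\transition)^{-1}(\pi-\widehat\pi)Q^{\reward,\pi}_{\mdp}$ is the correct general statement, and the lemma as written recovers from it exactly when $(\pi-\widehat\pi)\reward=0$. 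The discrepancy also propagates: Lemma~\ref{Lemma:policy-mismatch-state-value} is invoked in Part II, step (iv), of the proof of Lemma~\ref{Lemma:AdvantageDif} and again in Lemma~\ref{Lemma:b15}, both with the paper's action-dependent reward, so those bounds silently drop a $(\pi-\widehat\pi)\reward$ contribution (which would in turn require its own concentration inequality in Lemma~\ref{Lemma:goodeventlemma}) unless rewards are restricted to be state-only.
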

\begin{proof}
    The proof can be shown as follows:
     \begin{align*}
        V^{\reward,\pi}_{\mdp} - {V}^{\reward,\widehat\pi}_{\mdp}
        &=(I_{\mathcal{\MakeUppercase{\state}}}-\gamma \widehat\pi\transition)^{-1}(I_{\mathcal{\MakeUppercase{\state}}}-\gamma \widehat\pi\transition)V^{\reward,\pi}_{\mdp} - (I_{\mathcal{\MakeUppercase{\state}}}-\gamma \widehat\pi\transition)^{-1}\reward\allowdisplaybreaks\allowdisplaybreaks\nonumber\\
        &=(I_{\mathcal{\MakeUppercase{\state}}}-\gamma \widehat\pi\transition)^{-1}(I_{\mathcal{\MakeUppercase{\state}}}-\gamma \widehat\pi\transition)V^{\reward,\pi}_{\mdp} - (I_{\mathcal{\MakeUppercase{\state}}}-\gamma \widehat\pi\transition)^{-1}(I_{\mathcal{\MakeUppercase{\state}}}-\gamma \pi\transition)V^{\reward,\pi}_{\mdp}\allowdisplaybreaks\allowdisplaybreaks\nonumber\\
        &=\gamma(I_{\mathcal{\MakeUppercase{\state}}}-\gamma \widehat\pi\transition)^{-1}(\pi-\widehat\pi)\transition V^{\reward,\pi}_{\mdp}\allowdisplaybreaks\nonumber
    \end{align*}
\end{proof}

\textbf{Final proof of Lemma \ref{Lemma:AdvantageDif}}
\begin{proof}
By utilizing the triangular inequality of norms, we can obtain:
\begin{align}
    \left|A^{\reward,\pi^E}_{\mdp} - A^{\reward,\widehat{\pi}^E}_{\widehat{\mdp}}\right|&\leq \left|A^{\reward,\widehat{\pi}^E}_{{\mdp}} - A^{\reward,\widehat{\pi}^E}_{\widehat{\mdp}}\right|+\left|A^{\reward,\pi^E}_{\mdp} - A^{\reward,\widehat{\pi}^E}_{{\mdp}}\right|\allowdisplaybreaks\nonumber\\
    & \overset{I,II}{\leq} \frac{2\gamma}{1-\gamma}\left|(\widehat{\transition}-\transition)V^{\reward,\widehat{\pi}^E}_{\widehat\mdp}\right| + \frac{\gamma(1+\gamma)}{1-\gamma}\left|(\pi^E-\widehat{\pi}^E){\transition}V^{\reward,\pi^E}_{\mdp}\right|,\label{eqn:advantage-proof-1}
\end{align}
where the second inequality is derived by the following two parts.\\
\paragraph{Part I.} Let's consider the first part.
    \begin{align}
    \left|A^{\reward,\widehat{\pi}^E}_{\mdp} - A^{\reward,{\widehat{\pi}^E}}_{\widehat{\mdp}}\right|
    &\stackrel{(i)}{=}\left|\Big(Q^{\reward,\widehat{\pi}^E}_{\mdp} - {Q}^{\reward,\widehat{\pi}^E}_{\widehat{\mdp}}\Big)-E\Big(V^{\reward,\widehat{\pi}^E}_{\mdp}-{V}^{\reward,\widehat{\pi}^E}_{\widehat{\mdp}}\Big)\right|\allowdisplaybreaks\nonumber\\
    &\stackrel{(ii)}{\leq}\left|\Big(Q^{\reward,\widehat{\pi}^E}_{\mdp} - {Q}^{\reward,\widehat{\pi}^E}_{\widehat{\mdp}}\Big)|+|E\Big(V^{\reward,\widehat{\pi}^E}_{\mdp}-{V}^{\reward,\widehat{\pi}^E}_{\widehat{\mdp}}\Big)\right|\allowdisplaybreaks\nonumber\\
    & \stackrel{(iii)}{=} \gamma\left|(I_{\mathcal{\MakeUppercase{\state}}\times\mathcal{\MakeUppercase{\action}}}-\gamma {\transition}\widehat{\pi}^E)^{-1}(\widehat\transition-{\transition})V^{\reward,\widehat{\pi}^E}_{\widehat\mdp}\right| + \gamma\left|(I_{\mathcal{\MakeUppercase{\state}}}-\gamma\widehat{\pi}^E{\transition})^{-1}\widehat{\pi}^E(\widehat\transition-{\transition})V^{\reward,\widehat{\pi}^E}_{\widehat\mdp}\right|\allowdisplaybreaks\nonumber\\
    &\stackrel{(iv)}{=} \gamma\left\|(I_{\mathcal{\MakeUppercase{\state}}\times\mathcal{\MakeUppercase{\action}}}-\gamma {\transition}\widehat{\pi}^E)^{-1}\right\|_\infty\left|(\widehat\transition-{\transition})V^{\reward,\widehat{\pi}^E}_{\widehat\mdp}\right| + \gamma\left\|(I_{\mathcal{\MakeUppercase{\state}}}-\gamma\widehat{\pi}^E{\transition})^{-1}\right\|_\infty\|\widehat{\pi}^E\|_\infty\left|(\widehat\transition-{\transition})V^{\reward,\widehat{\pi}^E}_{\widehat\mdp}\right|\allowdisplaybreaks\nonumber\\
    &\stackrel{(v)}{\leq} \frac{2\gamma}{1-\gamma}\left|(\widehat{\transition}-\transition)V^{\reward,\widehat{\pi}^E}_{\widehat\mdp}\right|\nonumber
\end{align}
\begin{itemize}
    \item (i) exploits the definition of the advantage function.
    \item (ii) applies the triangular inequality.
    \item (iii) applies the simulation Lemma for action-value function in Lemma \ref{Lemma:simulation-action-value} (a variant of \citep[Lemma 2.2]{agarwal2019reinforcement})
    and the simulation Lemma for state-value function in Lemma \ref{Lemma:simulation-state-value}.
    \item (iv) exploits 
    Holder's inequality and the theorem of matrix infinity norm inequalities that $\|AB\|_\infty\leq\|A\|_\infty\|B\|_\infty$.
    \item (v) exploits the fact that $\|(I_{\mathcal{\MakeUppercase{\state}}\times\mathcal{\MakeUppercase{\action}}}-\gamma {\transition}\widehat{\pi}^E)^{-1}\|_\infty\leq\frac{1}{1-\gamma},
    \|(I_{\mathcal{\MakeUppercase{\state}}}-\gamma\widehat{\pi}^E{\transition})^{-1}\|_\infty\leq\frac{1}{1-\gamma}$, and $\|\pi^E\|_\infty \leq 1$.
\end{itemize}
\paragraph{Part II.} Let's consider the second part:
\begin{align}
   \left|A^{\reward,\pi^E}_{{\mdp}} - A^{\reward,\widehat{\pi}^E}_{{\mdp}}\right|
   & =  \left|\Big(Q^{\reward,\pi^E}_{{\mdp}} - {Q}^{\reward,\widehat{\pi}^E}_{{\mdp}}\Big)-E\Big(V^{\reward,\pi^E}_{{\mdp}}-{V}^{\reward,\widehat{\pi}^E}_{{\mdp}}\Big)\right|\allowdisplaybreaks\allowdisplaybreaks\nonumber\\
   & \stackrel{(i)}{=} \left|\gamma\Big(\transition V^{\reward,\pi^E}_{{\mdp}} - \transition{V}^{\reward,\widehat{\pi}^E}_{{\mdp}}\Big)-E\Big(V^{\reward,\pi^E}_{{\mdp}}-{V}^{\reward,\widehat{\pi}^E}_{{\mdp}}\Big)\right|\allowdisplaybreaks\allowdisplaybreaks\nonumber\\
   & \stackrel{(ii)}{=} \gamma\left|\transition\Big( V^{\reward,\pi^E}_{{\mdp}} - {V}^{\reward,\widehat{\pi}^E}_{{\mdp}}\Big)\right|+\left|E\Big(V^{\reward,\pi^E}_{{\mdp}}-{V}^{\reward,\widehat{\pi}^E}_{{\mdp}}\Big)\right|\allowdisplaybreaks\allowdisplaybreaks\nonumber\\
   & \stackrel{(iii)}{\leq} (1+\gamma)\left|E\Big( V^{\reward,\pi^E}_{{\mdp}} - {V}^{\reward,\widehat{\pi}^E}_{{\mdp}}\Big)\right|\allowdisplaybreaks\allowdisplaybreaks\nonumber\\
   &\stackrel{(iv)}{\leq} \gamma(1+\gamma)\left|(I_{\mathcal{\MakeUppercase{\state}}}-\gamma\widehat{\pi}^E{\transition})^{-1}(\pi^E-\widehat{\pi}^E){\transition}V^{\reward,\pi^E}_{\mdp}\right|\allowdisplaybreaks\allowdisplaybreaks\nonumber\\
   &\leq \gamma(1+\gamma)\left\|(I_{\mathcal{\MakeUppercase{\state}}}-\gamma\widehat{\pi}^E{\transition})^{-1}\right\|_\infty\left|(\pi^E-\widehat{\pi}^E){\transition}V^{\reward,\pi^E}_{\mdp}\right|\allowdisplaybreaks\nonumber\\
   &\stackrel{(v)}{\leq} \frac{\gamma(1+\gamma)}{1-\gamma}\left|(\pi^E-\widehat{\pi}^E){\transition}V^{\reward,\pi^E}_{\mdp}\right|\allowdisplaybreaks\nonumber
\end{align}
\begin{itemize}
    \item (i) applies the Bellman equation $Q=\reward+\gamma\transition V$.
    \item (ii) applies the triangular inequality.
    \item (iii) holds since $\|\transition\|_\infty\leq 1$.
    \item (iv) applies the policy mismatch Lemma for state-value function in Lemma~\ref{Lemma:policy-mismatch-state-value}.
    \item (v) exploits the fact that$\|(I_{\mathcal{\MakeUppercase{\state}}}-\gamma\widehat{\pi}^E{\transition})^{-1}\|_\infty\leq\frac{1}{1-\gamma}$
\end{itemize}
\end{proof}

\subsection{Proof of Lemma \ref{Cor:MaxCost}}
\begin{lemma}\label{Lemma:goodeventlemma}
    (Good Event). Let $\delta\in (0,1)$, define the good event $\mathcal{E}_k$ as the event at iteration $k$ such that the following inequalities hold simultaneously for all $(\state,\action)\in\mathcal{S}\times\mathcal{A}$ and $\roundIndex\geq1$:
    \begin{align}
        \left|\big({\widehat\transition}_k-{\transition}\big)V^{\reward,{\widehat\pi}^E_k}_{{\widehat\mdp}_k}\right|(\state,\action)&\leq \frac{R_{\rm max}}{1-\gamma}\sqrt{\frac{\ell_\roundIndex(\state,\action)}{2N_\roundIndex^{+}(\state,\action)}},\allowdisplaybreaks\allowdisplaybreaks\nonumber\\
        \left|\big(\transition-{\widehat\transition}_k\big)V^{\reward,\pi^E}_{\mdp}\right|(\state,\action)&\leq \frac{R_{\rm max}}{1-\gamma}\sqrt{\frac{\ell_\roundIndex(\state,\action)}{2N_\roundIndex^{+}(\state,\action)}},\allowdisplaybreaks\allowdisplaybreaks\nonumber\\
        \left|\big(\pi-{\widehat\pi}^E_k\big){\transition}V^{\reward,\pi^E}_{\mdp}\right|(\state,\action)&\leq\frac{R_{\rm max}}{1-\gamma}\sqrt{\frac{\ell_\roundIndex(\state,\action)}{2N_\roundIndex^{+}(\state,\action)}},\allowdisplaybreaks\allowdisplaybreaks\nonumber\\
        \left|\big({\widehat\pi}^E_k-\pi^E\big){\widehat\transition}_k V^{\reward,{\widehat\pi}^E_k}_{{\widehat\mdp}_k}\right|(\state, \action)&\leq\frac{R_{\rm max}}{1-\gamma}\sqrt{\frac{\ell_\roundIndex(\state,\action)}{2N_\roundIndex^{+}(\state,\action)}},\allowdisplaybreaks\allowdisplaybreaks\nonumber\\
        \left|(\transition-\widehat{\transition}_\roundIndex)V^\cost\right|(\state,\action)&\leq\frac{C_{\rm max}}{1-\gamma}\sqrt{\frac{\ell_\roundIndex(\state,\action)}{2N_\roundIndex^{+}(\state,\action)}},\allowdisplaybreaks\allowdisplaybreaks\nonumber\\
        \left|(\transition-\widehat{\transition}_\roundIndex)\widehat{V}^\cost_\roundIndex\right|(\state,\action)&\leq\frac{C_{\rm max}}{1-\gamma}\sqrt{\frac{\ell_\roundIndex(\state,\action)}{2N_\roundIndex^{+}(\state,\action)}},\allowdisplaybreaks\nonumber
    \end{align}
    where $V^{\reward,\widehat\pi^\expert}_{{\widehat\mdp}_k}$, $V^{\reward,\pi^\expert}_{\mdp}$, $V^\cost$ and $\widehat{V}^\cost_\roundIndex$ are defined in Lemma~\ref{The:EP} and Lemma~\ref{Lemma:AdvantageDif}. $\ell_\roundIndex(\state,\action)={\rm log}({36SA(N_\roundIndex^{+}(\state,\action))^2}/{\delta})$. Then, ${\rm Pr}(\mathcal{E}_k)\geq 1-\delta$.
\end{lemma}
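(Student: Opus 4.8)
The plan is to recognize that all six inequalities share a single template: each left-hand side is the deviation of an \emph{empirical mean of a bounded function} from its true expectation, so each is a direct target for Hoeffding's inequality, after which a union bound over the six events, the $SA$ state-action pairs, and the admissible sample counts closes the argument.

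First I would make the template explicit. By the definitions in~(\ref{estimated_transition_expert_policy}), $\widehat{\transition}_k(\cdot|s,a)$ is the empirical next-state distribution formed from the $N_k^+(s,a)$ observed transitions out of $(s,a)$, so for any vector $V\in\mathbb{R}^{\mathcal{\MakeUppercase{\state}}}$ one has $(\widehat{\transition}_k V)(s,a)=\frac{1}{N_k^+(s,a)}\sum_i V(s_i')$ with $s_i'\sim \transition(\cdot|s,a)$; hence $\big((\widehat{\transition}_k-\transition)V\big)(s,a)$ is exactly (empirical mean)$-$(true mean) of the bounded variable $V(s')$. The same reading applies to the policy terms through $\widehat\pi^E_k(\cdot|s)$, the empirical action distribution over the $N_k^+(s)$ observed expert actions at $s$. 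The ranges come from the $\ell_\infty$ bounds already available: $\|V^{r,\pi}_{\mdp}\|_\infty\le R_{\max}/(1-\gamma)$ for the four reward-value inequalities and $\|V^{\cost}\|_\infty\le C_{\max}/(1-\gamma)$ (from Lemma~\ref{The:EP}) for the two cost-value inequalities. Applying Hoeffding at a \emph{fixed} $(s,a)$ and a \emph{fixed} count $n=N_k^+(s,a)$ with deviation $t=\frac{R_{\max}}{1-\gamma}\sqrt{\ell_k(s,a)/(2n)}$ (resp.\ with $C_{\max}$) yields a per-instance failure probability of $2e^{-\ell_k}=\delta/(18SAn^2)$, which is precisely why $\ell_k$ carries the factor $36SA(N_k^+)^2$.

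Next I would assemble the union bound. Since the statement must hold simultaneously for all $k\ge 1$, I would union not over $k$ but over the count value $n\in\{1,2,\dots\}$ taken by $N_k^+(s,a)$ (an anytime-confidence device): the $n^2$ inside $\ell_k$ makes the per-$n$ failure probabilities summable, $\sum_{n\ge1}1/n^2=\pi^2/6<2$. Summing $\delta/(18SAn^2)$ over $n$, then over the $SA$ pairs, then over the six events gives total failure probability at most $6\cdot\frac{1}{18}\cdot\frac{\pi^2}{6}\,\delta=\frac{\pi^2}{18}\delta<\delta$, so $\Pr(\mathcal{E}_k)\ge 1-\delta$; the constant $36$ is calibrated exactly so this sum clears $\delta$. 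For the two policy inequalities the natural count is $N_k^+(s)\ge N_k^+(s,a)$, and since their left-hand sides are $a$-independent the stated bound in terms of $N_k^+(s,a)$ is only looser, hence implied.

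The step I expect to be the main obstacle is the measurability/independence issue in the three inequalities whose test vector is itself data-dependent, namely those using $V^{r,\widehat\pi^E_k}_{\widehat{\mdp}_k}$ and $\widehat V^{\cost}_k$: here $V$ is correlated with the very samples $s_i'$ defining $\widehat{\transition}_k(\cdot|s,a)$, so $\frac1n\sum_i V(s_i')$ is not a clean empirical mean of i.i.d.\ summands and plain Hoeffding does not directly apply. I would resolve this either by conditioning on the $\sigma$-algebra generated by the data collected before the batch producing these $n$ samples (restoring independence of the summands from $V$), or by a covering argument over the bounded ball $\{V:\|V\|_\infty\le R_{\max}/(1-\gamma)\}$ together with a union bound over the net; care is needed so that the discretization cost does not inflate $\ell_k$ beyond the stated logarithmic factor. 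The remaining inequalities, built on the fixed value functions $V^{r,\pi^E}_{\mdp}$ and $V^{\cost}$, are immediate instances of the template and need no such care.
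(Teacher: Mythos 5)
Your proposal is correct and follows essentially the same route as the paper's proof: Hoeffding's inequality at a fixed visitation count, then a union bound over count values (using the $n^2$ inside $\ell_k$ to make the failure probabilities summable via $\sum_n 1/n^2=\pi^2/6$), over the $SA$ state-action pairs, and over the six inequalities, with the constant $36$ calibrated exactly as you describe so the total clears $\delta$. The obstacle you flag for the data-dependent test vectors $V^{\reward,\widehat\pi^E_k}_{\widehat{\mdp}_k}$ and $\widehat{V}^{\cost}_k$ is genuine, but the paper does not resolve it either: its proof explicitly treats only the fixed-vector inequality $\left|(\transition-\widehat{\transition}_k)V^{\cost}\right|$ and dispatches the remaining five with ``similarly,'' so your conditioning/covering remedy is, if anything, more careful than the published argument (the same applies to your observation that the two policy inequalities should really be counted with $N_k^+(s)$ rather than $N_k^+(s,a)$, which the paper also leaves implicit).
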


\begin{proof}
We show that each statement does not hold with probability less than $\delta/6$. Let us denote $\beta^3_{N_\roundIndex^{+}(\state,\action)}(s,a)=\frac{C_{\rm max}}{1-\gamma}\sqrt{\frac{\ell_\roundIndex(\state,\action)}{2N_\roundIndex^{+}(\state,\action)}}$ and $\beta^3_{m}(s,a)=\frac{C_{\rm max}}{1-\gamma}\sqrt{\frac{\ell_\roundIndex(\state,\action)}{2m}}$. Consider the second to last inequality. The probability that it does not hold is:

\begin{align}
    &{\rm Pr}\left[\exists k\geq 1,\exists(s,a)\in\mathcal{\MakeUppercase{\state}}\times\mathcal{\MakeUppercase{\action}}:\left|(\transition-\widehat{\transition}_\roundIndex)V^\cost\right|(\state,\action)>\beta^3_{N_\roundIndex^{+}(\state,\action)}(s,a)\right]\allowdisplaybreaks\nonumber\\
    \overset{(a)}{\leq} &\sum_{(s,a)}{\rm Pr}\left[\exists k\geq 1:\left|(\transition-\widehat{\transition}_\roundIndex)V^\cost\right|(\state,\action)>\beta^3_{N_\roundIndex^{+}(\state,\action)}(s,a)\right]\allowdisplaybreaks\nonumber\\
    \overset{(b)}{=}
    &\sum_{(s,a)}{\rm Pr}\left[\exists m\geq 0:\left|(\transition-\widehat{\transition}_\roundIndex)V^\cost\right|(\state,\action)>\beta^3_{m}(s,a)\right]\allowdisplaybreaks\nonumber\\
    \overset{(c)}{\leq}
    &\sum_m\sum_{(s,a)}{\rm Pr}\left[\left|(\transition-\widehat{\transition}_\roundIndex)V^\cost\right|(\state,\action)>\beta^3_{m}(s,a)\right]\allowdisplaybreaks\nonumber\\
    \overset{(d)}{\leq}
    &\sum_m\sum_{(s,a)}2\exp{\left(\frac{-2(\beta^3_{m}(s,a))^2m^2}{m\left(\frac{C_{\rm max}}{1-\gamma}\right)^2}\right)}\allowdisplaybreaks\nonumber\\
    =&\sum_m\sum_{(s,a)}2\exp{\left(-\ell_\roundIndex(\state,\action)\right)}\allowdisplaybreaks\nonumber\\
    =&\sum_m\sum_{(s,a)}\frac{2\delta}{36SA(m^{+})^2}\allowdisplaybreaks\nonumber\\
    =&\frac{\delta}{18}(1+\frac{\pi^2}{6})\leq\frac{\delta}{6}
\end{align}
\begin{itemize}
    \item (a) and (c) use union-bound inequalities over $(s,a)$ and $m$.

    \item (b) assumes that we visit a state-action pair $(s,a)$ for $m$ times and only focus on these $m$ times that the transition model is updated.

    \item (d) applies the Hoeffding's inequality and $\|{V}^{\cost}\|_{\infty}\leq C_{\rm max}/(1-\gamma)$ in Lemma \ref{The:EP}. The
    factor $m^2$ in the numerator results from dividing by $1/m$ to average over samples, and the factor $m$ in the
    denominator results from the sum over $m$ in the denominator of Hoeffding’s bound.
\end{itemize}
Similarly, we have $\beta^{1,2}_{N_\roundIndex^{+}(\state,\action)}(s,a)=\frac{R_{\rm max}}{1-\gamma}\sqrt{\frac{\ell_\roundIndex(\state,\action)}{2N_\roundIndex^{+}(\state,\action)}}$ and $\beta^{1,2}_{m}(s,a)=\frac{R_{\rm max}}{1-\gamma}\sqrt{\frac{\ell_\roundIndex(\state,\action)}{2m}}$ for Lemma's first and second, third and fourth inequalities, respectively. Lemma's last inequality employs $\beta^3_{N_\roundIndex^{+}(\state,\action)}(s,a)$ and $\beta^3_{m}(s,a)$ again.
A union bound over the six probabilities  results in ${\rm Pr}(\Bar{\mathcal{E}_k})\leq(\delta/6+\delta/6+\delta/6+\delta/6+\delta/6+\delta/6)=\delta$. Thus, ${\rm Pr}(\mathcal{E}_k)=1-{\rm Pr}(\Bar{\mathcal{E}_k})\geq 1-\delta$.
\end{proof}

\textbf{Final proof of Lemma \ref{Cor:MaxCost}}
\begin{proof}
\begin{align}
    \chi(s,a)
    &\overset{(a)}{\leq} \gamma\left| (\transition-\widehat{\transition}){V}^{\cost}\right|+\left|A^{\reward,\pi^{\expert}}_{\mdp}-A^{\reward,\widehat{\pi}^{\expert}}_{\widehat{\mdp}}\right|\zeta\allowdisplaybreaks\allowdisplaybreaks\nonumber\\
    &\overset{(b)}{\leq} \frac{\gamma \left(R_{\rm max}(3+\gamma)\zeta(s,a)+C_{\rm \max}(1-\gamma)\right)}{(1-\gamma)^2}\sqrt{\frac{\ell_\roundIndex(\state,\action)}{2N_\roundIndex^{+}(\state,\action)}}\allowdisplaybreaks\allowdisplaybreaks\nonumber\\
    &\leq \frac{\gamma \left(R_{\rm max}(3+\gamma)\|\zeta\|_\infty+C_{\rm \max}(1-\gamma)\right)}{(1-\gamma)^2}\sqrt{\frac{\ell_\roundIndex(\state,\action)}{2N_\roundIndex^{+}(\state,\action)}}\allowdisplaybreaks\allowdisplaybreaks\nonumber\\
    &\overset{(c)}{=} \frac{\gamma C_{\rm max}\left(R_{\rm max}(3+\gamma)/\max^+_{(s,a)}|A^{\reward,\pi^{\expert}}_{\mdp}|+(1-\gamma)\right)}{(1-\gamma)^2}\sqrt{\frac{\ell_\roundIndex(\state,\action)}{2N_\roundIndex^{+}(\state,\action)}}\allowdisplaybreaks\label{Eq:costabsolute}\\
    &=\sigma\sqrt{\frac{\ell_\roundIndex(\state,\action)}{2N_\roundIndex^{+}(\state,\action)}}\allowdisplaybreaks
\end{align}
where, for concision, we denote $\sigma=\frac{\gamma C_{\rm max}\left(R_{\rm max}(3+\gamma)/\max^+_{(s,a)}|A^{\reward,\pi^{\expert}}_{\mdp}|+(1-\gamma)\right)}{(1-\gamma)^2}$.
\begin{itemize}
    \item (a) uses Lemma \ref{The:EP} and the triangular inequality.

    \item (b) uses Lemma \ref{Lemma:AdvantageDif} and Lemma \ref{Lemma:goodeventlemma}.

    \item (c) uses Lemma results of $\|\zeta\|_\infty$ in Lemma \ref{Lemma:FeasibleSetExplicit}
\end{itemize}
\orange{
From Lemma \ref{The:EP}, since $\frac{2(\chi(s,a)+\chi)}{1+(\chi(s,a)+\chi)/\MakeUppercase{\cost}_{\rm{max}}}$ increases monotonically with $\chi(s,a)+\chi$, we have
\begin{align}
    |\costFunction(s,a)-\widehat{\costFunction}_k(s,a)|\leq \frac{2(\chi(s,a)+\chi)}{1+(\chi(s,a)+\chi)/\MakeUppercase{\cost}_{\rm{max}}}= \frac{2\sigma\left(\sqrt{\frac{\ell_\roundIndex(\state,\action)}{2N_\roundIndex^{+}(\state,\action)}}+\max\limits_{(s,a)\in\mathcal{S}\times\mathcal{A}}\sqrt{\frac{\ell_\roundIndex(\state,\action)}{2N_\roundIndex^{+}(\state,\action)}}\right)}{1+\sigma/\MakeUppercase{\cost}_{\rm{max}}\left(\sqrt{\frac{\ell_\roundIndex(\state,\action)}{2N_\roundIndex^{+}(\state,\action)}}+\max\limits_{(s,a)\in\mathcal{S}\times\mathcal{A}}\sqrt{\frac{\ell_\roundIndex(\state,\action)}{2N_\roundIndex^{+}(\state,\action)}}\right)}.    
\end{align}
}

Also, note that 
\begin{align}
    |\costFunction(s,a)-\widehat{\costFunction}_k(s,a)|\leq\max\{\costFunction(s,a),\widehat{\costFunction}_k(s,a)\}\leq C_{\rm max}
\end{align}
Thus, the following formula holds true,
\begin{align}
    |\costFunction(s,a)-\widehat{\costFunction}_k(s,a)|\leq \mathcal{C}_k(s,a),\forall~(s,a)\in\mathcal{S}\times\mathcal{A},\label{eq:absolute3}\\
    \orange{
    \mathcal{C}_k(s,a)=\min\left\{
\frac{2\sigma\left(\sqrt{\frac{\ell_\roundIndex(\state,\action)}{2N_\roundIndex^{+}(\state,\action)}}+\max\limits_{(s,a)\in\mathcal{S}\times\mathcal{A}}\sqrt{\frac{\ell_\roundIndex(\state,\action)}{2N_\roundIndex^{+}(\state,\action)}}\right)}{1+\sigma/\MakeUppercase{\cost}_{\rm{max}}\left(\sqrt{\frac{\ell_\roundIndex(\state,\action)}{2N_\roundIndex^{+}(\state,\action)}}+\max\limits_{(s,a)\in\mathcal{S}\times\mathcal{A}}\sqrt{\frac{\ell_\roundIndex(\state,\action)}{2N_\roundIndex^{+}(\state,\action)}}\right)},\MakeUppercase{\cost}_{\rm{max}}\right\}},
\end{align}
\end{proof}

\subsection{Proof of Lemma \ref{Lemma:furthererror}}
\begin{proof}
\begin{align}    \!\!\|e_k(s,a;\pi^\ast)\|_\infty\overset{(a)}{\leq} \left\|(I_{\mathcal{\MakeUppercase{\state}}\times\mathcal{\MakeUppercase{\action}}}-\gamma\transition\pi^\ast)^{-1}|\costFunction-\widehat\costFunction_k|\right\|_\infty\overset{(b)}{\leq}\left\|\mu_0^T(I_{\mathcal{\MakeUppercase{\state}}\times\mathcal{\MakeUppercase{\action}}}\!-\!\gamma\transition\pi)^{-1}\mathcal{C}_k\right\|_\infty.
\end{align}
\begin{itemize}
    \item (a) follows Lemma \ref{QFunction} (treat $\pi=\pi^\ast$ and $\widehat{c}=\widehat{c}_k$).

    \item (b) follows Lemma \ref{Cor:MaxCost}.
\end{itemize}
\end{proof}

\subsection{Proof of Lemma \ref{intermediate}}

\begin{lemma}\label{QFunction} 
    For every given policy $\pi$, the first inequality below holds element-wise.
    For every optimal policies $\pi^{*}\in\Pi^{*}_{\mdp\cup\costFunction}$ and $\widehat{\pi}^{*}\in\Pi^{*}_{\widehat\mdp\cup\widehat{\costFunction}}$ of CMDPs $\mdp\cup\costFunction$ and $\widehat\mdp\cup\widehat{\costFunction}$ respectively,
    the second inequality below holds.
    \begin{align}
        &\left|Q^{\cost,\pi}_{\mdp\cup\costFunction}-Q^{\widehat{\cost},\pi}_{\mdp\cup{\widehat\costFunction}}\right|
        \leq \left|(I_{\mathcal{\MakeUppercase{\state}}\times\mathcal{\MakeUppercase{\action}}}-\gamma\transition\pi)^{-1}|\costFunction-\widehat\costFunction|\right|,\allowdisplaybreaks\nonumber\\
        &\max\limits_{\pi\in\{\widehat\pi^\ast,\pi^\ast\}}\left\|Q^{\cost,\pi}_{\mdp\cup\costFunction}-Q^{\widehat{\cost},\pi}_{\mdp\cup{\widehat\costFunction}}\right\|_\infty
        \leq \frac{1}{1-\gamma}\|\costFunction-\widehat\costFunction\|_\infty.\nonumber
    \end{align}
\end{lemma}
\begin{proof}

We can show that:
\begin{align}
    \left|Q^{\cost,\pi}_{\mdp\cup\costFunction}-Q^{\widehat\cost,\pi}_{\mdp\cup\widehat\costFunction}\right|
    &\overset{(a)}{=} \left|(I_{\mathcal{\MakeUppercase{\state}}\times\mathcal{\MakeUppercase{\action}}}-\gamma\transition{\pi})^{-1}\costFunction-(I_{\mathcal{\MakeUppercase{\state}}\times\mathcal{\MakeUppercase{\action}}}-\gamma\transition\pi)^{-1}\widehat{\costFunction}\right|\allowdisplaybreaks\allowdisplaybreaks\nonumber\\
    &=\left|(I_{\mathcal{\MakeUppercase{\state}}\times\mathcal{\MakeUppercase{\action}}}-\gamma\transition{\pi})^{-1}|\costFunction-\widehat{\costFunction}|\right|\allowdisplaybreaks\label{eq.errorpropagate}
\end{align}

\begin{itemize}
    \item (a) results from the matrix representation of Bellman equation, i.e., $Q^{\cost,\pi}_{\mdp\cup\costFunction}=(I_{\mathcal{\MakeUppercase{\state}}\times\mathcal{\MakeUppercase{\action}}}-\gamma\transition\pi)^{-1}\costFunction$.
\end{itemize}

By definition of infinity norm, we have
\begin{align}
    |Q^{\cost,\pi}_{\mdp\cup\costFunction}-Q^{\widehat{\cost},\widehat\pi}_{\mdp\cup{\widehat\costFunction}}|\leq\|Q^{\cost,\pi}_{\mdp\cup\costFunction}-Q^{\widehat{\cost},\pi}_{\mdp\cup{\widehat\costFunction}}\|_\infty.
\end{align}

Further, we derive the error upper bound of the action-value function by that of cost.
\begin{align}
    \|Q^{\cost,\pi}_{\mdp\cup\costFunction}-Q^{\widehat{\cost},\pi}_{\mdp\cup{\widehat\costFunction}}\|_\infty
    &\overset{(b)}{=}\left\|(I_{\mathcal{\MakeUppercase{\state}}\times\mathcal{\MakeUppercase{\action}}}-\gamma\transition{\pi})^{-1}|\costFunction-\widehat{\costFunction}|\right\|_\infty\allowdisplaybreaks\nonumber\\
    &\overset{(c)}{=}\left\|(I_{\mathcal{\MakeUppercase{\state}}\times\mathcal{\MakeUppercase{\action}}}-\gamma\transition{\pi})^{-1}\right\|_\infty\left\|\costFunction-\widehat{\costFunction}\right\|_\infty\allowdisplaybreaks\nonumber\\
    &\overset{(d)}{\leq}\frac{1}{1-\gamma}\left\|\costFunction-\widehat{\costFunction}\right\|_\infty\nonumber
\end{align}
\begin{itemize}
    \item (b) uses Eq. (\ref{eq.errorpropagate})
    \item (c) exploits the theorem of matrix infinity norm inequalities that $\|AB\|_\infty\leq\|A\|_\infty\|B\|_\infty$
    \item (d) results from $\|(I_{\mathcal{\MakeUppercase{\state}}\times\mathcal{\MakeUppercase{\action}}}-\gamma\pi\transition)^{-1}\|_{\infty}\leq \frac{1}{1-\gamma}$.
\end{itemize}
\end{proof}

\textbf{Final proof of Lemma \ref{intermediate}}
\begin{proof}
For statement $(1)$,
\begin{align*}
    \inf\limits_{\widehat{\costFunction}_k\in\mathcal{{\MakeUppercase{\cost}}}_{{{\widehat{\mathfrak{P}}}}_k}}\sup\limits_{{\pi}^\ast\in\Pi^\ast_{\mdp\cup{\cost}}}\left|Q^{\cost,\pi^\ast}_{\mdp\cup\costFunction}(\state_0,\action)-Q^{{\widehat\costFunction}_k,{\pi}^\ast}_{\mdp\cup{\widehat\costFunction}_k}(\state_0,\action)\right|
    &\leq\inf\limits_{\widehat{\costFunction}_k\in\mathcal{{\MakeUppercase{\cost}}}_{{\widehat{\mathfrak{P}}}_k}}\sup\limits_{{\pi}^\ast\in\Pi^\ast_{\mdp\cup{\cost}}}\|Q^{\cost,\pi^\ast}_{\mdp\cup\costFunction}(\state,\action)-Q^{{\widehat\costFunction}_k,{\pi}^\ast}_{\mdp\cup{\widehat\costFunction}_k}(\state,\action)\|_\infty\allowdisplaybreaks\\
    &\overset{(a)}{\leq}\inf\limits_{\widehat{\costFunction}_k\in\mathcal{{\MakeUppercase{\cost}}}_{{\widehat{\mathfrak{P}}}_k}}\frac{1}{1-\gamma}\|\costFunction(\state,\action)-\widehat{\costFunction}_k(\state,\action)\|_\infty\nonumber\allowdisplaybreaks\\
    &\overset{(b)}{=}\frac{1}{1-\gamma}\max\limits_{(s,a)\in\mathcal{S}\times\mathcal{A}}\mathcal{C}_k(s,a)\leq\varepsilon,\nonumber\allowdisplaybreaks\\
    \inf\limits_{\costFunction\in\mathcal{\MakeUppercase{\cost}}_{\mathfrak{P}}}\sup\limits_{{\widehat\pi}^\ast_k\in\Pi^\ast_{{\widehat\mdp}_k\cup{\widehat\cost}_k}}\left|Q^{\cost,{\widehat\pi}^\ast_k}_{\mdp\cup\costFunction}(\state_0,\action)-Q^{{\widehat\costFunction}_k,\widehat{\pi}^\ast_k}_{\mdp\cup{\widehat\costFunction}_k}(\state_0,\action)\right|
    &\leq\inf\limits_{\costFunction\in\mathcal{\MakeUppercase{\cost}}_{\mathfrak{P}}}\sup\limits_{{\widehat\pi}^\ast_k\in\Pi^\ast_{{\widehat\mdp}_k\cup{\widehat\cost}_k}}\|Q^{\cost,{\widehat\pi}^\ast_k}_{\mdp\cup c}(\state,\action)-Q^{{\widehat\costFunction}_k,\widehat{\pi}^\ast_k}_{\mdp\cup{\widehat\costFunction}_k}(\state,\action)\|_\infty\allowdisplaybreaks\\
    &\overset{(c)}{\leq}\inf\limits_{{\costFunction}\in\mathcal{{\MakeUppercase{\cost}}}_{{{\mathfrak{P}}}}}\frac{1}{1-\gamma}\|\costFunction(\state,\action)-\widehat{\costFunction}_k(\state,\action)\|_\infty\nonumber\allowdisplaybreaks\\
    &\overset{(d)}{=}\frac{1}{1-\gamma}\max\limits_{(s,a)\in\mathcal{S}\times\mathcal{A}}\mathcal{C}_k(s,a)\leq\varepsilon,\allowdisplaybreaks\nonumber
\end{align*}
where steps (a) and (c) use Lemma \ref{QFunction}, and steps (b) and (d) use Lemma \ref{Cor:MaxCost}.


For statement $(2)$,
\begin{align*}
    \inf\limits_{\widehat{\costFunction}_k\in\mathcal{{\MakeUppercase{\cost}}}_{{{\widehat{\mathfrak{P}}}}_k}}\sup\limits_{{\pi}^\ast\in\Pi^\ast_{\mdp\cup{\cost}}}\left|Q^{\cost,\pi^\ast}_{\mdp\cup\costFunction}(\state_0,\action)-Q^{{\widehat\costFunction}_k,{\pi}^\ast}_{\mdp\cup{\widehat\costFunction}_k}(\state_0,\action)\right|
    &\overset{(e)}{\leq}\inf\limits_{\widehat{\costFunction}_k\in\mathcal{{\MakeUppercase{\cost}}}_{{\widehat{\mathfrak{P}}}_k}}\max\limits_{\pi\in\Pi^\dagger}\left|(I_{\mathcal{\MakeUppercase{\state}}\times\mathcal{\MakeUppercase{\action}}}-\gamma\transition\pi)^{-1}|\costFunction-{\widehat\costFunction}_k|\right|\allowdisplaybreaks\\
    &\overset{(f)}{\leq}\max\limits_{\pi\in\Pi^\dagger}\left|(I_{\mathcal{\MakeUppercase{\state}}\times\mathcal{\MakeUppercase{\action}}}-\gamma\transition\pi)^{-1}\mathcal{C}_k\right|\allowdisplaybreaks\\
    &\leq\max\limits_{\pi\in\Pi^\dagger}\max\limits_{\mu_0\in \Delta^{\mathcal{\MakeUppercase{\state}}}}\left|\mu_0^T(I_{\mathcal{\MakeUppercase{\state}}\times\mathcal{\MakeUppercase{\action}}}-\gamma\transition\pi)^{-1}\mathcal{C}_k\right|\leq\varepsilon,\allowdisplaybreaks\\
    \inf\limits_{\costFunction\in\mathcal{\MakeUppercase{\cost}}_{\mathfrak{P}}}\sup\limits_{{\widehat\pi}^\ast_k\in\Pi^\ast_{{\widehat\mdp}_k\cup{\widehat\cost}_k}}\left|Q^{\cost,{\widehat\pi}^\ast_k}_{\mdp\cup\costFunction}(\state_0,\action)-Q^{{\widehat\costFunction}_k,\widehat{\pi}^\ast_k}_{\mdp\cup{\widehat\costFunction}_k}(\state_0,\action)\right|
    &\overset{(g)}{\leq}\inf\limits_{{\costFunction}\in\mathcal{{\MakeUppercase{\cost}}}_{\mathfrak{P}}}\max\limits_{\pi\in\Pi^\dagger}\left|(I_{\mathcal{\MakeUppercase{\state}}\times\mathcal{\MakeUppercase{\action}}}-\gamma\transition\pi)^{-1}|\costFunction-{\widehat\costFunction}_k|\right|\allowdisplaybreaks\\
    &\overset{(h)}{\leq}\max\limits_{\pi\in\Pi^\dagger}\left|(I_{\mathcal{\MakeUppercase{\state}}\times\mathcal{\MakeUppercase{\action}}}-\gamma\transition\pi)^{-1}\mathcal{C}_k\right|\allowdisplaybreaks\\
    &\leq\max\limits_{\pi\in\Pi^\dagger}\max\limits_{\mu_0\in \Delta^{\mathcal{\MakeUppercase{\state}}}}\left|\mu_0^T(I_{\mathcal{\MakeUppercase{\state}}\times\mathcal{\MakeUppercase{\action}}}-\gamma\transition\pi)^{-1}\mathcal{C}_k\right|\leq\varepsilon,\allowdisplaybreaks
\end{align*}
where steps (e) and (g) use Eq.~(\ref{eq.errorpropagate}) and the fact that each cost function in the feasible cost set must ensure the feasibility of expert policy, steps (f) and (h) use Lemma \ref{Cor:MaxCost}.
\end{proof}

\subsection{Uniform Sampling Strategy for ICRL with a Generative Model}\label{AppenxidxB.8.}

In this part, we additionally consider the problem setting where the agent does not employ any exploration strategy to acquire desired information but utilizes a uniform sampling strategy to query a generative model. 
The problem setting is based on the following assumption, which is stronger than the assumption in the main paper.
\begin{assumption} The agent have access to the {\it generative model} of $\mathcal{M}$;\\
\end{assumption}
More specifically, the agent can always query a generative model 
about a state-action pair $(s,a)$ to receive a next state $s^\prime \sim P(\cdot|s,a)$ and about a state $s$ to receive an expert action $a^{\expert} \sim \pi^{\expert}(\cdot|s)$. 
We first present Alg.~\ref{US} for uniform sampling strategy with the generative model and study the sample complexity of this algorithm in Theorem~\ref{SC}.

\begin{algorithm}[!ht] 
\caption{Uniform Sampling Strategy for ICRL}\label{US}
\begin{algorithmic}
\STATE {\bfseries Input:} significance $\delta\in(0,1)$, target accuracy $\varepsilon$, maximum number of samples per iteration $n_{\rm max}$
\STATE Initialize $k\gets 0$, $\varepsilon_0=\frac{1}{1-\gamma}$
\WHILE{$\varepsilon_k>\varepsilon$}
\STATE Collect $\lceil \frac{n_{\rm max}}{SA}\rceil$ samples from each $(s,a)\in \mathcal{\MakeUppercase{\state}}\times\mathcal{\MakeUppercase{\action}}$
\STATE Update accuracy $\varepsilon_{k+1}=\frac{1}{1-\gamma}\max\limits_{(s,a)\in \mathcal{\MakeUppercase{\state}}\times\mathcal{\MakeUppercase{\action}}}\mathcal{C}_{k+1}(s,a)$
\STATE Update $\widehat{\pi}^\expert_{k+1}(a|s)$ and $\widehat{\transition}_{\roundIndex+1}(s^\prime|s,a)$ in (\ref{estimated_transition_expert_policy}) 
\STATE $k\gets k+1$ 
\ENDWHILE
\end{algorithmic}
\end{algorithm}

\begin{lemma}\label{B.8}
\citep[Lemma B.8]{metelli2021provably}.
Let $a,b\geq0$ such that $2a\sqrt{b}>{\rm e}$. Then, the inequality $x\geq a\log(bx^2)$ is satisfied for all $x\geq-2aW_{-1}\left(-\frac1{2a\sqrt{b}}\right)$, where $W_{-1}$ is the secondary component of the Lambert ${\rm W}$ function. Moreover, $-2aW_{-1}\left(-\frac1{2a\sqrt b}\right)\leq 4a\log(2a\sqrt b)$.
\end{lemma}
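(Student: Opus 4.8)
The plan is to treat $x \geq a\log(bx^2)$ as a transcendental inequality, locate the boundary of its solution set exactly through the Lambert $W$ function, and then bound that boundary by an explicit logarithmic expression. First I would rewrite the equality case $x = a\log(bx^2) = a\log b + 2a\log x$ by exponentiating to $e^{x/a} = bx^2$ and taking positive square roots (legitimate since $x>0$), giving $e^{x/(2a)} = \sqrt{b}\,x$. Substituting $t = -x/(2a)$, so that $x>0$ forces $t<0$, this becomes $e^{-t} = 2a\sqrt{b}\,(-t)$, which rearranges after multiplication by $e^{t}$ to $t e^{t} = -\tfrac{1}{2a\sqrt{b}}$. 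The hypothesis $2a\sqrt{b} > e$ guarantees $-\tfrac{1}{2a\sqrt{b}} \in (-1/e,0)$, placing the argument in the domain of the Lambert $W$ function; the root on the branch with $t \le -1$ is $t = W_{-1}\!\left(-\tfrac{1}{2a\sqrt{b}}\right)$, i.e.\ the threshold $x^\ast = -2a\,W_{-1}\!\left(-\tfrac{1}{2a\sqrt{b}}\right)$.

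Next I would set $f(x) = x - a\log(bx^2)$ on $x>0$ and analyze its shape via $f'(x) = 1 - 2a/x$: thus $f$ decreases on $(0,2a)$, increases on $(2a,\infty)$, and has a unique minimum at $x = 2a$. Because $W_{-1}(\cdot) \le -1$ on its domain, the threshold obeys $x^\ast = -2a\,W_{-1}(\cdots) \ge 2a$, so $x^\ast$ lies on the increasing branch of $f$. Since $x^\ast$ is a root, $f(x^\ast)=0$, and $f$ is nondecreasing on $[2a,\infty) \ni x^\ast$, we get $f(x) \ge f(x^\ast) = 0$, i.e.\ $x \ge a\log(bx^2)$, for every $x \ge x^\ast$. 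This establishes the first claim.

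For the quantitative bound I would write $c = 2a\sqrt{b} > e$ and $y = -W_{-1}(-1/c) \ge 1$. The defining identity $W_{-1}(-1/c)\,e^{W_{-1}(-1/c)} = -1/c$ turns into $y e^{-y} = 1/c$, hence $c = e^{y}/y$ and $\log c = y - \log y$. The target inequality $-2a\,W_{-1}(-1/c) \le 4a\log c$ then reduces, after dividing by $2a>0$, to $y \le 2\log c = 2y - 2\log y$, that is, to $y \ge 2\log y$. This elementary inequality holds for all $y>0$, since $g(y) = y - 2\log y$ has $g'(y) = 1 - 2/y$ with minimum value $g(2) = 2 - 2\log 2 > 0$; this completes the second claim.

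I expect the main obstacle to be the careful bookkeeping around the Lambert $W$ branches and signs: one must verify that $x>0$ selects the $W_{-1}$ branch rather than $W_{0}$, that the hypothesis $2a\sqrt{b}>e$ is precisely what places the argument inside $(-1/e,0)$ and forces $y>1$, and that the square-root step does not introduce a spurious sign. Everything downstream, namely the monotonicity argument for $f$ and the elementary bound $y \ge 2\log y$, is routine once the branch identification is pinned down.
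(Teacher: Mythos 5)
Your proof is correct in all its steps: the branch identification (the hypothesis $2a\sqrt{b}>\mathrm{e}$ places $-1/(2a\sqrt{b})$ in $(-1/\mathrm{e},0)$, and the root with $x^\ast=-2aW_{-1}(\cdot)\ge 2a$ is exactly the $W_{-1}$ one), the monotonicity argument for $f(x)=x-a\log(bx^2)$ on $[2a,\infty)$ where $f(x^\ast)=0$, and the reduction of the quantitative bound to $y\ge 2\log y$, which holds because $y-2\log y$ is minimized at $y=2$ with value $2-2\log 2>0$. Note that the paper itself contains no proof of this statement: it is imported verbatim as Lemma B.8 of \citet{metelli2021provably}, so there is no internal argument to compare against, and your self-contained derivation follows the same standard route (Lambert-$W$ root identification plus the bound $-W_{-1}(-1/c)\le 2\log c$ for $c>\mathrm{e}$) that the cited reference relies on.
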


\begin{theorem}\label{SC}
(Sample Complexity of Uniform Sampling Strategy). If Algorithm \ref{US} stops at iteration $K$ with accuracy $\varepsilon_K$, then with probability at least $1-\delta$, it fulfills Definition \ref{Def:QP} with a number of samples upper bounded by,
\begin{align}
n \leq \widetilde{\mathcal{O}}\Bigg(\frac{{\sigma}^2SA}{(1-\gamma)^2\varepsilon_K^2}\Bigg),
\end{align}
where $\sigma=\frac{\gamma C_{\rm max}\left(R_{\rm max}(3+\gamma)/\max^+_{(s,a)}|A^{\reward,\pi^{\expert}}_{\mdp}|+(1-\gamma)\right)}{(1-\gamma)^2}$ and $\widetilde{\mathcal{O}}$ notation suppresses logarithmic terms.
\end{theorem}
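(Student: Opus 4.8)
The plan is to split the argument into a correctness part and a sample-counting part, tied together through the good event of Lemma~\ref{Cor:MaxCost}. First I would condition on the event from Lemma~\ref{Cor:MaxCost} (equivalently the good event of Lemma~\ref{Lemma:goodeventlemma}), which holds with probability at least $1-\delta$ and on which $|c(s,a)-\widehat{c}_k(s,a)|\leq\mathcal{C}_k(s,a)$ for every $(s,a)$ and every iteration $k$. Since Algorithm~\ref{US} updates $\varepsilon_{k+1}=\frac{1}{1-\gamma}\max_{(s,a)}\mathcal{C}_{k+1}(s,a)$ and halts the first time $\varepsilon_K\leq\varepsilon$, the stopping condition is exactly statement $(1)$ of Corollary~\ref{intermediate}. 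Invoking that corollary immediately yields that the recovered feasible set $\mathcal{C}_{\widehat{\mathfrak{P}}_K}$ satisfies the two inequalities of Definition~\ref{Def:QP}, establishing $(\varepsilon,\delta,n)$-correctness on this event.

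It then remains to bound the total sample count $n$ at termination. Under uniform sampling each of the $SA$ pairs receives $\lceil n_{\rm max}/(SA)\rceil$ samples per iteration, so after $K$ iterations the cumulative counts $N_K(s,a)$ are identical across all $(s,a)$; write this common value as $N_K$, so that $n=SA\cdot N_K$ and $\max_{(s,a)}\mathcal{C}_K(s,a)=\mathcal{C}_K$. Because $\varepsilon_K\leq\varepsilon$ is small, the minimum defining $\mathcal{C}_K$ is attained by the Hoeffding term rather than the cap $C_{\rm max}$ (I would verify that $(1-\gamma)\varepsilon_K<C_{\rm max}$ rules out the cap), giving the clean identity $\varepsilon_K=\frac{\sigma}{1-\gamma}\sqrt{\ell_K/(2N_K)}$ with $\ell_K=\log(36SA N_K^2/\delta)$. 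Squaring and rearranging turns this into the implicit equation
\begin{align}
    N_K=\frac{\sigma^2}{2(1-\gamma)^2\varepsilon_K^2}\log\!\left(\frac{36SA\,N_K^2}{\delta}\right),\nonumber
\end{align}
which has the form $N_K=a\log(bN_K^2)$ with $a=\frac{\sigma^2}{2(1-\gamma)^2\varepsilon_K^2}$ and $b=\frac{36SA}{\delta}$.

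The key step, and the one I expect to be the main obstacle, is inverting this transcendental relation: $\ell_K$ depends logarithmically on the very quantity $N_K$ we are solving for, so no closed form is available. This is precisely what Lemma~\ref{B.8} (the Lambert-$W$ bound) is built for: applying it with the above $a,b$ shows that the relevant root satisfies $N_K\leq 4a\log(2a\sqrt{b})=\widetilde{\mathcal{O}}(a)$, where the logarithmic factors are absorbed into $\widetilde{\mathcal{O}}$. Multiplying through by $SA$ then gives
\begin{align}
    n=SA\cdot N_K\leq\widetilde{\mathcal{O}}\!\left(\frac{\sigma^2 SA}{(1-\gamma)^2\varepsilon_K^2}\right),\nonumber
\end{align}
which is the claimed bound. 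Two minor points I would still check carefully are that $N_K$ corresponds to the larger root of the equation (justified because $\ell_K/N_K$ is monotonically decreasing in $N_K$ for all $b$ of interest, so $\varepsilon_K$ decreases as samples accumulate and the operative regime is the large-$N_K$ branch where Lemma~\ref{B.8} applies) and that the per-iteration rounding $\lceil n_{\rm max}/(SA)\rceil$ only inflates the count by a constant factor absorbed into $\widetilde{\mathcal{O}}$.
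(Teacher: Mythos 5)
Your proposal is correct and follows essentially the same route as the paper's own proof: both establish PAC-correctness by identifying the stopping rule with statement (1) of Corollary~\ref{intermediate} under the good event, then equate the Hoeffding-width term $\frac{\sigma}{1-\gamma}\sqrt{\ell_K/(2N_K)}$ with $\varepsilon_K$ and invert the resulting implicit equation via the Lambert-$W$ bound of Lemma~\ref{B.8} before summing over the $SA$ pairs. Your additional care about the $C_{\rm max}$ cap and the exploitation of uniform counts ($n = SA\cdot N_K$) are harmless refinements of the same argument, and the only element of the paper's proof you omit is a closing remark that the planning (RL) phase's sample cost is dominated, which does not affect the bound.
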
 

\begin{proof}
We start from Lemma \ref{intermediate}. We further bound:
\begin{align}
    \frac{1}{1-\gamma}\max\limits_{(s,a)\in\mathcal{\MakeUppercase{\state}}\times\mathcal{\MakeUppercase{\action}}}\mathcal{C}_k(s,a)&=\frac{1}{1-\gamma}\max\limits_{(s,a)\in\mathcal{\MakeUppercase{\state}}\times\mathcal{\MakeUppercase{\action}}}\min\left\{\!
\frac{2\sigma\left(\sqrt{\frac{\ell_\roundIndex(\state,\action)}{2N_\roundIndex^{+}(\state,\action)}}+\max\limits_{(s,a)}\sqrt{\frac{\ell_\roundIndex(\state,\action)}{2N_\roundIndex^{+}(\state,\action)}}\right)}{1\!+\!\frac{\sigma}{\MakeUppercase{\cost}_{\rm{max}}}\left(\sqrt{\frac{\ell_\roundIndex(\state,\action)}{2N_\roundIndex^{+}(\state,\action)}}+\max\limits_{(s,a)}\sqrt{\frac{\ell_\roundIndex(\state,\action)}{2N_\roundIndex^{+}(\state,\action)}}\right)},\MakeUppercase{\cost}_{\rm{max}}\!\right\}\allowdisplaybreaks\nonumber\\
&=\frac{1}{1-\gamma}\min\left\{\!
\frac{4\sigma\max\limits_{(s,a)}\sqrt{\frac{\ell_\roundIndex(\state,\action)}{2N_\roundIndex^{+}(\state,\action)}}}{1\!+\!\frac{2\sigma}{\MakeUppercase{\cost}_{\rm{max}}}\max\limits_{(s,a)}\sqrt{\frac{\ell_\roundIndex(\state,\action)}{2N_\roundIndex^{+}(\state,\action)}}},\MakeUppercase{\cost}_{\rm{max}}\!\right\}\allowdisplaybreaks\nonumber\\
&\overset{(a)}{=}\frac{1}{1-\gamma}\min\left\{\!
\frac{4\sigma\sqrt{\frac{\ell_\roundIndex(\state^\prime,\action^\prime)}{2N_\roundIndex^{+}(\state^\prime,\action^\prime)}}}{1\!+\!\frac{2\sigma}{\MakeUppercase{\cost}_{\rm{max}}}\sqrt{\frac{\ell_\roundIndex(\state^\prime,\action^\prime)}{2N_\roundIndex^{+}(\state^\prime,\action^\prime)}}},\MakeUppercase{\cost}_{\rm{max}}\!\right\}\allowdisplaybreaks
\end{align}
where step (a) supposes at state-action pair $(s^\prime,a^\prime)$, $\sigma\sqrt{\frac{\ell_\roundIndex(\state^\prime,\action^\prime)}{2N_\roundIndex^{+}(\state^\prime,\action^\prime)}}=\sigma\max\limits_{(s^\prime,a^\prime)}\sqrt{\frac{\ell_\roundIndex(\state^\prime,\action^\prime)}{2N_\roundIndex^{+}(\state^\prime,\action^\prime)}}$.

After $K$ iterations, based on uniform sampling strategy, we know that $N_K\geq1$ for any $(s,a)\in\mathcal{\MakeUppercase{\state}}\times\mathcal{\MakeUppercase{\action}}$. 
To terminate at iteration $K$, it suffices to enforce every $(s,a)\in{\mathcal{\MakeUppercase{\state}}\times\mathcal{\MakeUppercase{\action}}}$:
\begin{align}
    \frac{1}{1-\gamma}\frac{4\sigma\sqrt{\frac{\ell_\roundIndex(\state^\prime,\action^\prime)}{2N_\roundIndex^{+}(\state^\prime,\action^\prime)}}}{1\!+\!\frac{2\sigma}{\MakeUppercase{\cost}_{\rm{max}}}\sqrt{\frac{\ell_\roundIndex(\state^\prime,\action^\prime)}{2N_\roundIndex^{+}(\state^\prime,\action^\prime)}}}&=\varepsilon_K \allowdisplaybreaks\nonumber\\
    \sigma\sqrt{\frac{\ell_\roundIndex(\state^\prime,\action^\prime)}{2N_\roundIndex^{+}(\state^\prime,\action^\prime)}}&=\frac{C_{\rm max}\varepsilon_K(1-\gamma)}{4C_{\rm max}-2\varepsilon_K(1-\gamma)}\allowdisplaybreaks\nonumber
\end{align}
By $\sigma=\frac{\gamma C_{\rm max}\left(R_{\rm max}(3+\gamma)/\max^+_{(s,a)}|A^{\reward,\pi^{\expert}}_{\mdp}|+(1-\gamma)\right)}{(1-\gamma)^2}$, we have
\begin{align}
    \frac{\gamma C_{\rm max}\left(R_{\rm max}(3+\gamma)/\max^+_{(s,a)}|A^{\reward,\pi^{\expert}}_{\mdp}|+(1-\gamma)\right)}{(1-\gamma)^3}\sqrt{\frac{\ell_\roundIndex(\state^\prime,\action^\prime)}{2N_\roundIndex^{+}(\state^\prime,\action^\prime)}}=\frac{C_{\rm max}\varepsilon_K}{4C_{\rm max}-2\varepsilon_K(1-\gamma)} \allowdisplaybreaks\nonumber\\
    \Longrightarrow
    N_K(\state^\prime,\action^\prime)=\frac{2\gamma^2(2C_{\rm max}-\varepsilon_K(1-\gamma))^2 \left(R_{\rm max}(3+\gamma)/\max^+_{(s,a)}|A^{\reward,\pi^{\expert}}_{\mdp}|+(1-\gamma)\right)^2\ell_\roundIndex(\state^\prime,\action^\prime)}{(1-\gamma)^6\varepsilon_K^2}
    \nonumber
\end{align}   
From Lemma \ref{B.8}, 
we derive
\begin{align}
\hspace{-3em}
    &\qquad N_K(\state^\prime,\action^\prime)\allowdisplaybreaks\nonumber\\
    &\!=\!-\!\frac{4\gamma^2(2C_{\rm max}\!-\!\varepsilon_K(1\!-\!\gamma))^2 \!\left(R_{\rm max}(3+\gamma)/\max^+_{(s,a)}|A^{\reward,\pi^{\expert}}_{\mdp}|\!+\!(1\!-\!\gamma)\right)^2}{(1-\gamma)^6\varepsilon_K^2}\times\allowdisplaybreaks\nonumber\\
    &\qquad\qquad W_{-1}\!\left(\!-\frac{(1-\gamma)^6\varepsilon_K^2}{4\gamma^2(2C_{\rm max}\!-\!\varepsilon_K(1-\gamma))^2 \!\left(\!R_{\rm max}(3\!+\!\gamma)/\max^+_{(s,a)}|A^{\reward,\pi^{\expert}}_{\mdp}|\!+\!(1\!-\!\gamma)\right)^2}\sqrt{\!\frac{\delta}{36SA}}\right)\allowdisplaybreaks\nonumber\\
    &\leq\frac{8\gamma^2(2C_{\rm max}-\varepsilon_K(1-\gamma))^2 \left(R_{\rm max}(3+\gamma)/\max^+_{(s,a)}|A^{\reward,\pi^{\expert}}_{\mdp}|+(1-\gamma)\right)^2}{(1-\gamma)^6\varepsilon_K^2}\times\allowdisplaybreaks\nonumber\\
    &\qquad\qquad{\rm log}\left(\frac{4\gamma^2(2C_{\rm max}-\varepsilon_K(1-\gamma))^2 \left(R_{\rm max}(3+\gamma)/\max^+_{(s,a)}|A^{\reward,\pi^{\expert}}_{\mdp}|+(1-\gamma)\right)^2}{(1-\gamma)^6\varepsilon_K^2}\sqrt{\frac{36SA}{\delta}}\right)\allowdisplaybreaks\nonumber\\
    &=\widetilde{\mathcal{O}}\left(\frac{\gamma^2(2C_{\rm max}-\varepsilon_K(1-\gamma))^2 \left(R_{\rm max}(3+\gamma)/\max^+_{(s,a)}|A^{\reward,\pi^{\expert}}_{\mdp}|+(1-\gamma)\right)^2}{(1-\gamma)^6\varepsilon_K^2}\right)\allowdisplaybreaks\allowdisplaybreaks\nonumber\\
    &=\widetilde{\mathcal{O}}\left(\frac{\gamma^2 (2C_{\rm max}-\varepsilon_K(1-\gamma))^2 \left(R_{\rm max}(3+\gamma)/\max^+_{(s,a)}|A^{\reward,\pi^{\expert}}_{\mdp}|+(1-\gamma)\right)^2}{(1-\gamma)^6\varepsilon_K^2}\right)\allowdisplaybreaks\label{eq:N_k}
\end{align}

By summing $n=\sum_{(s,a)\in\mathcal{\MakeUppercase{\state}}\times\mathcal{\MakeUppercase{\action}}}N_K(\state,\action)$, we obtain the upper bound.
\begin{align}
n \leq \widetilde{\mathcal{O}}\Bigg(\frac{\gamma^2 (2C_{\rm max}-\varepsilon_K(1-\gamma))^2 \left(R_{\rm max}(3+\gamma)/\max^+_{(s,a)}|A^{\reward,\pi^{\expert}}_{\mdp}|+(1-\gamma)\right)^2\mathcal{S}\mathcal{A}}{(1-\gamma)^6\varepsilon_K^2}\Bigg)
\end{align}
Since $\sigma=\frac{\gamma C_{\rm max}\left(R_{\rm max}(3+\gamma)/\max^+_{(s,a)}|A^{\reward,\pi^{\expert}}_{\mdp}|+(1-\gamma)\right)}{(1-\gamma)^2}$, we have
\begin{align}
n \leq \widetilde{\mathcal{O}}\Bigg(\frac{{\sigma}^2(2C_{\rm max}-\varepsilon_K(1-\gamma))^2SA}{(1-\gamma)^2\varepsilon_K^2C_{\rm \max}^2}\Bigg).\label{Eq:theorem}
\end{align}
Regarding the sample complexity in the RL phase, since the reward function is known, according to Corollary 2.7 in Section 2.3.1 from the book 'Reinforcement Learning: Theory and Algorithms' \citep{agarwal2019reinforcement}, the sample complexity of obtaining a $\varepsilon$-optimal policy is $O(SA/(1-\gamma)^3\varepsilon^2)$, which is dominated by the sample complexity in Theorem 5.5. Note that $\sigma$ also contains $1/(1-\gamma)$.
As a result, Eq. (\ref{Eq:theorem}) still holds true after including the sample complexity of the RL phase.
\end{proof}

\subsection{Proof of Lemma \ref{Lemma:pseudocount}}
\begin{proof}
This result generalizes \citep[Lemma 7]{kaufmann2021adaptive} to our setting. We define event $\mathcal{G}^\mathrm{cnt}$ as:
\begin{align}
    \mathcal{G}^\mathrm{cnt}\quad=\quad\left\{\forall k\in\mathbb{N}^{\star},\forall(s,a)\in\mathcal{S}\times\mathcal{A}:N_k(s,a)\geq\frac12\bar{N}_k(s,a)-\log\left(\frac{2SA}\delta\right)\right\}.
\end{align}
We calculate the probability of the complement of event $\mathcal{G}^\mathrm{cnt}$.
\begin{align}
\hspace{-3em}
&\qquad\;\mathbb{P}\left(\left(\mathcal{G}^{\mathsf{cnt}}\right)^{c}\right) \allowdisplaybreaks\nonumber\\
&\overset{(a)}{\leq}\sum_{(s,a)\in\mathcal{S}\times\mathcal{A}}\mathbb{P}\left(\exists k\in\mathbb{N}:N_k(s,a)\leq\frac12\bar{N}_k(s,a)-\log\left(\frac{2SA}\delta\right)\right)  \allowdisplaybreaks\nonumber\\
&\overset{(b)}{\leq}\sum_{(s,a)\in\mathcal{S}\times\mathcal{A}}\mathbb{P}\left(\exists k\in\mathbb{N}:\sum_{h=1}^{n_{\rm{max}}}\sum_{i=1}^k\mathds{1}\left((s^h_i,a^h_i)=(s,a)\right)\leq\frac{1}{2}\sum_{s_0}\sum_{h=1}^{n_{\rm{max}}}\sum_{i=1}^k\mu_0(s_0)\eta^h_i(s,a|s_0)-\log\left(\frac{2SA}{\delta}\right)\right) \allowdisplaybreaks\nonumber\\
&\overset{(c)}{\leq}\sum_{(s,a)\in\mathcal{S}\times\mathcal{A}}\frac\delta{2SA}=\frac\delta2,
\end{align}
\begin{itemize}
    \item (a) results from a union bound over $(\state,\action)$.

    \item (b) results from Definition \ref{def:pseudocount}.

    \item (c) results from \citep[Lemma 9]{kaufmann2021adaptive}.
\end{itemize}

As a result, we have with probability at least $1-\delta/2$:
\begin{align}
    N_k(s,a)\geq\frac12\bar{N}_k(s,a)-\beta_{\rm cnt}(\delta),
\end{align}
where $\beta_{\rm cnt}(\delta)=\log\left({2SA}/\delta\right)$.

The following proof is adapted from Lemma B.18 in \cite{lindner2022active}.
Distinguish two cases. 
First, let $\beta_{\mathrm{cnt}}(\delta)\leq\frac14\bar{N}_k(s,a).$ Then $N_k(s,a)\geq\frac14\bar{N}_k(s,a)$, and
\begin{align}
&\min\left\{
\sigma\sqrt{\frac{\ell_\roundIndex(\state,\action)}{2N_\roundIndex^{+}(\state,\action)}},\MakeUppercase{\cost}_{\rm{max}}\right\}\leq\sigma\sqrt\frac{\ell_{k}(s,a)}{2N_{k}^+(s,a)}=\sigma\sqrt\frac{\log(36SA(N_{k}^+(s,a))^{2}/\delta)}{2N_{k}^+(s,a)}\allowdisplaybreaks\nonumber\\
&\leq\sigma\sqrt\frac{\log(36SA(\bar{N}^+_k(s,a)/4)^{2}/\delta)}{\bar{N}^+_k(s,a)/2}\leq\sigma\sqrt\frac{2\bar{\ell}_{k}(s,a)}{\bar{N}_{k}^+(s,a)},
\end{align}
where we use that $\log(36SAx^2/\delta)/x$ is non-increasing for $x>{\rm e}\sqrt\frac{\delta}{36SA}$, where $\rm e$ is Euler's number.

For the second case, let $\beta_{\mathrm{cnt}}(\delta)>\frac14\bar{N}_k(s,a).$ Then, 
\begin{align}
    \min\left\{
\sigma\sqrt{\frac{\ell_\roundIndex(\state,\action)}{2N_\roundIndex^{+}(\state,\action)}},\MakeUppercase{\cost}_{\rm{max}}\right\}\leq \MakeUppercase{\cost}_{\rm{max}}<\MakeUppercase{\cost}_{\rm{max}}\sqrt\frac{4\beta_{\mathrm{cnt}}(\delta)}{\bar N_{k}^+(s,a)}\leq \MakeUppercase{\cost}_{\rm{max}}\sqrt\frac{4\bar{\ell}_k(s,a)}{\bar{N}_k^+(s,a)},
\end{align}
where we use $\bar\ell_{k}(s,a)=\log\left(36SA(\bar N_{k}^+(s,a))^{2}/\delta\right)=\beta_{\mathrm{cnt}}(\delta)+\log\left(18(\bar N_{k}^+(s,a))^{2}\right)\geq\beta_{\mathrm{cnt}}(\delta)$. By combining the two cases, we obtain
\begin{align}
    \min\left\{
\sigma\sqrt{\frac{\ell_\roundIndex(\state,\action)}{2N_\roundIndex^{+}(\state,\action)}},\MakeUppercase{\cost}_{\rm{max}}\right\}\leq\max\{\sigma,\sqrt{2}C_{\rm{max}}\}\sqrt{\frac{2\Bar\ell_k(\state,\action)}{\Bar N^+_k(\state,\action)}}=\check\sigma\sqrt{\frac{2\Bar\ell_k(\state,\action)}{\Bar N^+_k(\state,\action)}},
\end{align}
where we denote $\check\sigma=\max\{\sigma,\sqrt{2}C_{\rm{max}}\}$.
\end{proof}

\subsection{Proof of Theorem \ref{SC:greedy}}
\begin{proof}
We assume \greedy{} exploration strategy terminates after $K$ iterations, then
\begin{align}
    \frac{1}{1-\gamma}\max\limits_{(s,a)}\mathcal{C}_K(s,a)
    &\overset{(a)}{=}\frac{1}{1-\gamma}\max\limits_{(s,a)\in\mathcal{\MakeUppercase{\state}}\times\mathcal{\MakeUppercase{\action}}}\min\left\{\!
\frac{2\sigma\left(\sqrt{\frac{\ell_\roundIndex(\state,\action)}{2N_\roundIndex^{+}(\state,\action)}}+\max\limits_{(s,a)}\sqrt{\frac{\ell_\roundIndex(\state,\action)}{2N_\roundIndex^{+}(\state,\action)}}\right)}{1\!+\!\frac{\sigma}{\MakeUppercase{\cost}_{\rm{max}}}\left(\sqrt{\frac{\ell_\roundIndex(\state,\action)}{2N_\roundIndex^{+}(\state,\action)}}+\max\limits_{(s,a)}\sqrt{\frac{\ell_\roundIndex(\state,\action)}{2N_\roundIndex^{+}(\state,\action)}}\right)},\MakeUppercase{\cost}_{\rm{max}}\!\right\}\allowdisplaybreaks\nonumber\\
&\overset{(b)}{\leq}\frac{1}{1-\gamma}\max\limits_{(s,a)\in\mathcal{\MakeUppercase{\state}}\times\mathcal{\MakeUppercase{\action}}}\min\left\{\!
\frac{2\check\sigma\left(\sqrt{\frac{2\Bar\ell_K(\state,\action)}{\Bar N^+_K(\state,\action)}}+\max\limits_{(s,a)}\sqrt{\frac{2\Bar\ell_K(\state,\action)}{\Bar N^+_K(\state,\action)}}\right)}{1\!+\!\frac{\check\sigma}{\MakeUppercase{\cost}_{\rm{max}}}\left(\sqrt{\frac{2\Bar\ell_K(\state,\action)}{\Bar N^+_K(\state,\action)}}+\max\limits_{(s,a)}\sqrt{\frac{2\Bar\ell_K(\state,\action)}{\Bar N^+_K(\state,\action)}}\right)},\MakeUppercase{\cost}_{\rm{max}}\!\right\}\allowdisplaybreaks\nonumber\\
&\overset{(c)}{=}\frac{1}{1-\gamma}\min\left\{\!
\frac{4\check\sigma\sqrt{\frac{2\Bar\ell_K(\state^\prime,\action^\prime)}{\Bar N^+_K(\state^\prime,\action^\prime)}}}{1\!+\!\frac{2\check\sigma}{\MakeUppercase{\cost}_{\rm{max}}}\sqrt{\frac{2\Bar\ell_K(\state^\prime,\action^\prime)}{\Bar N^+_K(\state^\prime,\action^\prime)}}},\MakeUppercase{\cost}_{\rm{max}}\!\right\}\allowdisplaybreaks\nonumber
\end{align}
where step (a) follows Lemma \ref{Cor:MaxCost}, step (b) results from Lemma \ref{Lemma:pseudocount} and step (c) assumes at state-action pair $(s^\prime,a^\prime)$ $\sqrt{\frac{2\Bar\ell_K(\state^\prime,\action^\prime)}{\Bar N^+_K(\state^\prime,\action^\prime)}}=\max\limits_{(s,a)}\sqrt{\frac{2\Bar\ell_K(\state,\action)}{\Bar N^+_K(\state,\action)}}$. Hence, we obtain,
\begin{align}
    \varepsilon_K=\frac{1}{1-\gamma}
\frac{4\check\sigma\sqrt{\frac{2\Bar\ell_K(\state^\prime,\action^\prime)}{\Bar N^+_K(\state^\prime,\action^\prime)}}}{1\!+\!\frac{2\check\sigma}{\MakeUppercase{\cost}_{\rm{max}}}\sqrt{\frac{2\Bar\ell_K(\state^\prime,\action^\prime)}{\Bar N^+_K(\state^\prime,\action^\prime)}}}
\end{align}
\begin{align}
    \frac{C_{\rm max}\varepsilon_K}{4C_{\rm max}-2\varepsilon_K(1-\gamma)}
    =\frac{\check\sigma}{1-\gamma}\sqrt\frac{2\bar{\ell}_K(s^\prime,a^\prime)}{\bar{N}_K^+(s^\prime,a^\prime)}
    &=\frac{\check\sigma}{1-\gamma}\sqrt{\frac{2{\log}({36SA(\bar         N_K^{+}(\state^\prime,\action^\prime))^2}/{\delta})}{\bar N_K^{+}(\state^\prime,\action^\prime)}}\allowdisplaybreaks\nonumber
\end{align}
Thus,
\begin{align}
    {\bar N_K^{+}(\state,\action)}&\geq\frac{8{\check\sigma}^2(2C_{\rm max}-\varepsilon_K(1-\gamma))^2{\log}({36SA(\bar N_K^{+}(\state,\action))^2}/{\delta})}{(1-\gamma)^2\varepsilon_K^2C_{\rm \max}^2}\nonumber
\end{align}

From Lemma \ref{B.8}, 
we have
\begin{align}
    {\bar N_K^{+}(\state,\action)}
    &=-\frac{16{\check\sigma}^2(2C_{\rm max}-\varepsilon_K(1-\gamma))^2}{(1-\gamma)^2\varepsilon_K^2C_{\rm \max}^2}W_{-1}\left(-\frac{(1-\gamma)^2\varepsilon_K^2C_{\rm \max}^2}{16{\check\sigma}^2(2C_{\rm max}-\varepsilon_K(1-\gamma))^2}\sqrt{\frac{\delta}{36SA}}\right)\allowdisplaybreaks
    \allowdisplaybreaks\nonumber\\
    &\leq \frac{32{\check\sigma}^2(2C_{\rm max}-\varepsilon_K(1-\gamma))^2}{(1-\gamma)^2\varepsilon_K^2C_{\rm \max}^2}{\rm log}\left(\frac{32{\check\sigma}^2(2C_{\rm max}-\varepsilon_K(1-\gamma))^2}{(1-\gamma)^2\varepsilon_K^2C_{\rm \max}^2}\sqrt{\frac{36SA}{\delta}}\right)
    \allowdisplaybreaks\nonumber\\
    &=\widetilde{\mathcal{O}}\left(\frac{{\check\sigma}^2(2C_{\rm max}-\varepsilon_K(1-\gamma))^2}{(1-\gamma)^2\varepsilon_K^2C_{\rm \max}^2}\right)\label{eq:N_k2}
\end{align}

By summing over $n=\sum_{(s,a)\in\mathcal{\MakeUppercase{\state}}\times\mathcal{\MakeUppercase{\action}}}\Bar N^+_K(\state,\action)$, 
we obtain the upper bound.
\begin{align}
    n\leq\widetilde{\mathcal{O}}\left(\frac{{\check\sigma}^2(2C_{\rm max}-\varepsilon_K(1-\gamma))^2}{(1-\gamma)^2\varepsilon_K^2C_{\rm \max}^2}\right),
\end{align}
where ${\check{\sigma}}=\max\{{\sigma},\sqrt{2}C_{\rm{max}}\}$.

Regarding the sample complexity in the RL phase, since the reward function is known, according to Corollary 2.7 in Section 2.3.1 from the book 'Reinforcement Learning: Theory and Algorithms' \citep{agarwal2019reinforcement}, the sample complexity of obtaining a $\varepsilon$-optimal policy is $O(SA/(1-\gamma)^3\varepsilon^2)$, which is dominated by the sample complexity in Theorem 5.5. Note that $\sigma$ also contains $1/(1-\gamma)$.
As a result, Eq. (\ref{Eq:theorem}) still holds true after including the sample complexity of the RL phase.
\end{proof}

\subsection{Theoretical Results on Policy-Constrained Strategic Exploration (\textcolor{purple}{PCSE})}\label{policy-constrained-proofs}
\begin{definition}\label{def:optimal-polcies_appendix}
We define the optimal policy w.r.t. cost, reward, and safety as follows:
    \begin{itemize}
    \item The cost minimization policy: $\pi^{\cost,*}=\argmin_{\pi\in\Pi}\expect[\sum_t\gamma^t\cost(\state_t,\action_t)]$.
    \item The reward maximization policy: $\pi^{\reward,*}=\argmax_{\pi\in\Pi}\expect[\sum_t\gamma^t\reward(\state_t,\action_t)]$.
    \item The optimal safe policy: $\pi^{*}=\argmax_{\pi\in\Pi_{safe}}\expect[\sum_t\gamma^t\reward(\state_t,\action_t)]$ where $\Pi_{safe}=\{\pi:\expect[\sum_t\gamma^t\cost(\state_t,\action_t)]\leq\threshold\}$
\end{itemize}
\end{definition}
Accordingly, we can have the following relations:
\begin{itemize}
    \item $\expect_{\mu_0}[V^{\cost,\pi^{\cost,*}}(\state_0)]\leq\expect_{\mu_0}[V^{\cost,\pi^*}(\state_0)]\leq\expect_{\mu_0}[V^{\cost,\pi^{\cost,*}}(\state_0)]+\threshold$ where the  equality normally holds that $V^{\cost,\pi^{\cost,*}}(\state_0)=0$.
    \item $\expect_{\mu_0}[V^{\reward,\pi^*}(\state_0)]\leq\expect_{\mu_0}[V^{\reward,\pi^{\reward,*}}(\state_0)].$
\end{itemize}


Let's define the following symbols:
\begin{itemize}
    \item $\varepsilon_0=\frac{1}{4(1-\gamma)}$.
    \item $\varepsilon_k^{\pi}=\sup_{\mu_0\in\Delta^{\mathcal{\MakeUppercase{\state}}\times\mathcal{\MakeUppercase{\action}}}}\mu_0^{T}(I_{\mathcal{\MakeUppercase{\state}}\times\mathcal{\MakeUppercase{\action}}}-\gamma\transition\pi)\mathcal{C}_{k}$
    \item $\varepsilon_k=\max_{\pi\in\Pi_{k-1}}\varepsilon_k^{\pi}$
\end{itemize}
We can construct a set of plausibly optimal policies as
\begin{align*}
    \quad\Pi_k&=\Pi_k^\cost\cap\Pi^\reward_k\\
    \Pi^\cost_k&=\left\{\pi\in\Delta_{\mathcal{\MakeUppercase{\state}}}^{\mathcal{\MakeUppercase{\action}}}:\sup_{\mu_0\in\Delta^{\mathcal{\MakeUppercase{\state}}}}\mu_0^{T}(V^{\cost,\pi}_{\widehat\mdp\cup\widehat{c_k}}-V^{\cost,\ast}_{\widehat\mdp\cup\widehat{c_k}})\leq 4\varepsilon_k+2\epsilon\right\}  \\
    \Pi^\reward_k&=\left\{\pi\in\Delta_{\mathcal{\MakeUppercase{\state}}}^{\mathcal{\MakeUppercase{\action}}}:\inf_{\mu_0\in\Delta^{\mathcal{\MakeUppercase{\state}}}}\mu_0^{T}\Big(V^{\reward,\pi}_{\widehat\mdp}-V^{\reward,\widehat{\pi}^*}_{\widehat\mdp}\Big)\geq \mathfrak{R}_k\right\},
\end{align*}
where $\mathfrak{R}_k = \frac{2\gamma\MakeUppercase{\reward}_{\rm max}}{(1-\gamma)^2}\|\transition-\widehat\transition\|_\infty+\frac{\gamma\MakeUppercase{\reward}_{\rm max}}{(1-\gamma)^2}\|(\pi^*-{\widehat\pi}^*)\|_\infty$.
\begin{lemma}\label{policypropogation}
($\pi^*$ propagation). Under the good event $\mathcal{E}_k$, if $\pi^*$, $\widehat{\pi}^*_{k}\in\Pi^\cost_{k-1}$ then $\pi^*\in\Pi^\cost_k$
\end{lemma}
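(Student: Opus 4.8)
The goal is to verify the defining inequality of $\Pi^\cost_k$ for $\pi^\ast$, namely that $\mu_0^{T}\big(V^{\cost,\pi^\ast}_{\widehat{\mdp}_k\cup\widehat{c}_k}-V^{\cost,\ast}_{\widehat{\mdp}_k\cup\widehat{c}_k}\big)\leq 4\varepsilon_k+2\epsilon$ for every $\mu_0\in\Delta^{\mathcal{\MakeUppercase{\state}}}$. Since $V^{\cost,\ast}_{\widehat{\mdp}_k\cup\widehat{c}_k}=V^{\cost,\widehat{\pi}^\ast_k}_{\widehat{\mdp}_k\cup\widehat{c}_k}$ is the cost value of the estimated optimal policy, I would fix an arbitrary $\mu_0$ and bridge the gap through the \emph{true} CMDP $\mdp\cup\cost$ by telescoping:
\begin{align}
V^{\cost,\pi^\ast}_{\widehat{\mdp}_k\cup\widehat{c}_k}-V^{\cost,\widehat{\pi}^\ast_k}_{\widehat{\mdp}_k\cup\widehat{c}_k}
&=\underbrace{\big(V^{\cost,\pi^\ast}_{\widehat{\mdp}_k\cup\widehat{c}_k}-V^{\cost,\pi^\ast}_{\mdp\cup\cost}\big)}_{(A)}
+\underbrace{\big(V^{\cost,\pi^\ast}_{\mdp\cup\cost}-V^{\cost,\widehat{\pi}^\ast_k}_{\mdp\cup\cost}\big)}_{(B)}
+\underbrace{\big(V^{\cost,\widehat{\pi}^\ast_k}_{\mdp\cup\cost}-V^{\cost,\widehat{\pi}^\ast_k}_{\widehat{\mdp}_k\cup\widehat{c}_k}\big)}_{(C)}.\nonumber
\end{align}
Term $(B)$ is controlled purely by feasibility in the true model: $\pi^\ast$ is the optimal \emph{safe} policy, so $\mu_0^{T}V^{\cost,\pi^\ast}_{\mdp\cup\cost}\leq\epsilon$, while $\cost\geq 0$ forces $\mu_0^{T}V^{\cost,\widehat{\pi}^\ast_k}_{\mdp\cup\cost}\geq 0$, hence $\mu_0^{T}(B)\leq\epsilon$.

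Terms $(A)$ and $(C)$ are the cost-value estimation errors for $\pi^\ast$ and $\widehat{\pi}^\ast_k$ transferred between the true and the $k$-th estimated CMDP. Under the good event $\mathcal{E}_k$ (Lemma~\ref{Lemma:goodeventlemma}), Lemma~\ref{Cor:MaxCost} gives $|\cost-\widehat{c}_k|\leq\mathcal{C}_k$ element-wise, and Lemma~\ref{QFunction} (combined with the simulation lemma~\ref{Lemma:simulation-state-value} to absorb the transition mismatch $\transition$ versus $\widehat{\transition}_k$) yields, for any policy $\pi$, a bound of the form $\mu_0^{T}\big|V^{\cost,\pi}_{\widehat{\mdp}_k\cup\widehat{c}_k}-V^{\cost,\pi}_{\mdp\cup\cost}\big|\leq\mu_0^{T}(I_{\mathcal{\MakeUppercase{\state}}\times\mathcal{\MakeUppercase{\action}}}-\gamma\transition\pi)^{-1}\mathcal{C}_k=\varepsilon_k^{\pi}$. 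The hypothesis $\pi^\ast,\widehat{\pi}^\ast_k\in\Pi^\cost_{k-1}$ places both policies in the set over which the cost radius $\varepsilon_k=\max_{\pi}\varepsilon_k^{\pi}$ is taken, so $\mu_0^{T}(A)$ and $\mu_0^{T}(C)$ are each at most $\varepsilon_k$; retaining the generous two-sided slack coming from the separate cost and transition contributions I would charge at most $2\varepsilon_k$ to each. Summing the three pieces gives $\mu_0^{T}\big(V^{\cost,\pi^\ast}_{\widehat{\mdp}_k\cup\widehat{c}_k}-V^{\cost,\widehat{\pi}^\ast_k}_{\widehat{\mdp}_k\cup\widehat{c}_k}\big)\leq 4\varepsilon_k+\epsilon\leq 4\varepsilon_k+2\epsilon$, and taking the supremum over $\mu_0$ establishes $\pi^\ast\in\Pi^\cost_k$.

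The main obstacle is precisely terms $(A)$ and $(C)$: the two value functions live in CMDPs that differ in \emph{both} cost and transition, whereas the clean propagation bound of Lemma~\ref{QFunction} assumes a common transition kernel. I therefore expect the delicate step to be splitting each transfer into a cost-only part (handled by Lemma~\ref{QFunction} and Lemma~\ref{Cor:MaxCost}) and a transition-only part (handled by the simulation lemma~\ref{Lemma:simulation-state-value}), and then checking that the aggregate is still dominated by $\varepsilon_k^{\pi}$ — which is exactly where one must exploit that $\mathcal{C}_k$, through Lemma~\ref{The:EP}, already subsumes the transition-estimation contribution, so no term escapes the $4\varepsilon_k$ budget. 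A secondary bookkeeping point is that membership in $\Pi^\cost_{k-1}$ (rather than the full $\Pi_{k-1}$) suffices to bound $\varepsilon_k^{\pi^\ast}$ and $\varepsilon_k^{\widehat{\pi}^\ast_k}$, which holds because the cost transfer only invokes the cost-side candidate set.
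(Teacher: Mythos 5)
There is a genuine gap in your handling of terms $(A)$ and $(C)$. Because your telescoping passes through the \emph{true} CMDP $\mdp\cup\cost$, each of these terms compares value functions across models that differ in \emph{both} the cost and the transition kernel. Splitting, say, $(A)=\big(V^{\cost,\pi^\ast}_{\widehat{\mdp}_k\cup\widehat{\cost}_k}-V^{\cost,\pi^\ast}_{\widehat{\mdp}_k\cup\cost}\big)+\big(V^{\cost,\pi^\ast}_{\widehat{\mdp}_k\cup\cost}-V^{\cost,\pi^\ast}_{\mdp\cup\cost}\big)$, the first (cost-only) piece is indeed controlled by $\varepsilon_k^{\pi^\ast}\leq\varepsilon_k$ via $|\cost-\widehat{\cost}_k|\leq\mathcal{C}_k$; but the second (transition-only) piece is, by the simulation lemma (Lemma~\ref{Lemma:simulation-state-value}), a term involving $(\transition-\widehat{\transition}_k)$ applied to a cost value function of $\pi^\ast$, and nothing in the paper's machinery controls $(\transition-\widehat{\transition}_k)V^{\cost,\pi}_{\mdp\cup\cost}$ for $\pi\in\{\pi^\ast,\widehat{\pi}^\ast_k\}$: the good event $\mathcal{E}_k$ (Lemma~\ref{Lemma:goodeventlemma}) only concentrates $(\transition-\widehat{\transition}_k)$ against the reward value functions of the expert policies and the cost-shaping terms $V^{\cost}$, $\widehat{V}^{\cost}_k$ of Lemma~\ref{The:EP}. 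Your appeal to ``$\mathcal{C}_k$ subsuming the transition contribution through Lemma~\ref{The:EP}'' conflates two different objects: that lemma absorbs transition error into the bound on the \emph{cost-function} difference $|\cost-\widehat{\cost}_k|$, which is what $\mathcal{C}_k$ measures; it says nothing about the difference of value functions of a \emph{fixed} cost evaluated under $\transition$ versus $\widehat{\transition}_k$. So the claimed $\varepsilon_k$ (or $2\varepsilon_k$) charge per term is not established, and the $4\varepsilon_k$ budget does not close.

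The paper's proof avoids this entirely by never leaving the estimated model: it writes $V^{\cost,\pi^\ast}_{\widehat{\mdp}\cup\widehat{\cost}_k}-V^{\cost,\ast}_{\widehat{\mdp}\cup\widehat{\cost}_k}=\big(V^{\cost,\pi^\ast}_{\widehat{\mdp}\cup\widehat{\cost}_k}-V^{\cost,\pi^\ast}_{\widehat{\mdp}\cup\cost}\big)+\big(V^{\cost,\pi^\ast}_{\widehat{\mdp}\cup\cost}-V^{\cost,\ast}_{\widehat{\mdp}\cup\widehat{\cost}_k}\big)$, so every comparison keeps the transition fixed at $\widehat{\transition}_k$ and only the cost changes --- exactly the discrepancy that $\mathcal{C}_k$ controls. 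The first difference is at most $\varepsilon_k^{\pi^\ast}\leq\varepsilon_k$ using $\pi^\ast\in\Pi^{\cost}_{k-1}$; the second is handled not by true-model feasibility (your term $(B)$) but by comparing against the cost-minimizing policy via $\expect_{\mu_0}[V^{\cost,\pi^{\cost,\ast}}]\leq\expect_{\mu_0}[V^{\cost,\pi^\ast}]\leq\expect_{\mu_0}[V^{\cost,\pi^{\cost,\ast}}]+\threshold$, restricting both minimizations to $\Pi^{\cost}_{k-1}$ (this is where the hypothesis $\widehat{\pi}^\ast_k\in\Pi^{\cost}_{k-1}$ is actually needed), and applying Lemma~\ref{Lemma:maxdiff}, which yields $\varepsilon_k+2\threshold$ and hence the total $2\varepsilon_k+2\threshold\leq 4\varepsilon_k+2\threshold$. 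To salvage your route you would have to enlarge the good event with concentration of $(\transition-\widehat{\transition}_k)$ against the cost value functions of candidate optimal policies, which is precisely the extra machinery the paper's within-$\widehat{\mdp}$ argument is designed to avoid.
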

\begin{proof}
Given a $\costFunction\in\mathcal{\MakeUppercase{\cost}}_{\mathfrak{P}}$, we can show:
\begin{align}
    \sup_{\mu_0\in\Delta^{\mathcal{\MakeUppercase{\state}}}}\mu_0^{T}\Big(V^{\cost,\pi^*}_{\widehat{\mdp}\cup\widehat{\cost}_k}-V^{\cost,*}_{\widehat{\mdp}\cup\widehat{\cost}_k}\Big)&=\sup_{\mu_0\in\Delta^{\mathcal{\MakeUppercase{\state}}}}\mu_0^{T}\Big(V^{\cost,\pi^*}_{\widehat{\mdp}\cup\widehat{\cost}_k}- V^{\cost,\pi^{*}}_{\widehat{\mdp}\cup\cost}+V^{\cost,\pi^{*}}_{\widehat{\mdp}\cup\cost}-V^{\cost,*}_{\widehat{\mdp}\cup\widehat{\cost}_k}\Big)\\
    &\stackrel{(i)}{\leq} \sup_{\mu_0\in\Delta^{\mathcal{\MakeUppercase{\state}}}}\mu_0^{T}\Big(\varepsilon_k+V^{\cost,\pi^{*}}_{\widehat{\mdp}\cup\cost}-V^{\cost,*}_{\widehat{\mdp}\cup\widehat{\cost}_k}\Big)\allowdisplaybreaks\nonumber\\
    &\stackrel{(ii)}{\leq} 2\varepsilon_k+2\epsilon,\nonumber
\end{align}
which demonstrates that $\pi^*\in\Pi^\cost_k$.
\begin{itemize}
    \item (i) holds since 
    \begin{align*}
        |V^{\cost,\pi^{*}}_{\widehat{\mdp}\cup\widehat{\cost}_k}-V^{\cost,\pi^{*}}_{\widehat{\mdp}\cup\cost}|&\leq (I_{\mathcal{\MakeUppercase{\state}}}-\gamma\pi^\ast\transition)^{-1}\pi^{*}|\widehat{\cost}_k-\cost|\\
        &\leq (I_{\mathcal{\MakeUppercase{\state}}}-\gamma\pi^\ast\transition)^{-1}\pi^{*}\mathcal{C}_k,
    \end{align*}
    where
    \begin{itemize}
        \item The first inequality follows \citep[Lemma B.2]{metelli2021provably} (treat $\widehat{\reward}_k=-\widehat{\cost}_k$ and $\reward=-\cost$).

        \item The second inequality is due to the good event definition in Lemma~\ref{Lemma:goodeventlemma}.
    \end{itemize}
    As a result:
    \begin{align}
    \sup_{\mu_0\in\Delta^{\mathcal{\MakeUppercase{\state}}}}\mu_0^{T}\Big(V^{\cost,\pi^{*}}_{\widehat{\mdp}\cup\widehat{\cost}_k}-V^{\cost,\pi^{*}}_{\widehat{\mdp}\cup\cost}\Big)= \varepsilon^{\pi^{*}}_{k}\leq\max_{\pi\in\Pi^\cost_{k-1}}\varepsilon^{\pi}_{k}=\varepsilon_k\label{eqn:diff-policy}
    \end{align}
    \item (ii) holds since 
    \begin{align*}
        V^{\cost,\pi^{*}}_{\widehat{\mdp}\cup\cost}-V^{\cost,*}_{\widehat{\mdp}\cup\widehat{\cost}_k}&=V^{\cost,\pi^{*}}_{\widehat{\mdp}\cup\cost}-V^{\cost,\widehat{\pi}_k^{\cost,*}}_{\widehat{\mdp}\cup\widehat{\cost}_k}\\
        &\leq V^{\cost,\pi^{\cost,*}}_{\widehat{\mdp}\cup\cost}-V^{\cost,\widehat{\pi}_k^{\cost,*}}_{\widehat{\mdp}\cup\widehat{\cost}_k}+\epsilon\\
        &=\min_{\pi}V^{\cost,\pi}_{\widehat{\mdp}\cup\cost}-\min_{\pi}V^{\cost,\pi}_{\widehat{\mdp}\cup\widehat{\cost}_k}+\epsilon\\
        &\leq\min_{\pi^\prime\in\Pi^\cost_{k-1}}V^{\cost,\pi^\prime}_{\widehat{\mdp}\cup\cost}-\min_{\pi^\prime\in\Pi^\cost_{k-1}}V^{\cost,\pi^\prime}_{\widehat{\mdp}\cup\widehat{\cost}_k}+2\threshold\\
        &\leq\max_{\pi\in\Pi^\cost_{k-1}}\left|V^{\cost,\pi}_{\widehat{\mdp}\cup\widehat{\cost}_k}-V^{\cost,\pi}_{\widehat{\mdp}\cup\cost}\right|+2\threshold,
    \end{align*}
    where 
    \begin{itemize}
        \item The first inequality utilizes $\expect_{\mu_0}[V^{\cost,\pi^{\cost,*}}(\state_0)]+\epsilon\geq\expect_{\mu_0}[V^{\cost,\pi^*}(\state_0)]$.
        \item The second inequality utilizes $\forall{\cost}, \expect_{\mu_0}[V^{\cost,\pi^{\cost,*}}(\state_0)]\leq\expect_{\mu_0}[V^{\cost,\pi^*}(\state_0)]\leq\expect_{\mu_0}[V^{\cost,\pi^{\cost,*}}(\state_0)]+\threshold$ for $\threshold>0$ and the assumption that $\pi^*$, $\widehat{\pi}^*_{k}\in\Pi^\cost_{k-1}$.
        \item The third inequality results from Lemma \ref{Lemma:maxdiff}.
    \end{itemize}
    By following the inequality (\ref{eqn:diff-policy}), we have:
    \begin{align*}
        \max_{\pi\in\Pi^\cost_{k-1}}\sup_{\mu_0\in\Delta^{\mathcal{\MakeUppercase{\state}}}}\mu_0^{T}\Big(V^{\cost,\pi}_{\widehat{\mdp}\cup\widehat{\cost}_k}-V^{\cost,\pi}_{\widehat{\mdp}\cup\cost}\Big)+2\threshold=\varepsilon_k+2\threshold
    \end{align*}
\end{itemize}
\end{proof}

\begin{lemma}\label{Lemma:maxdiff}
\begin{align*}
    \max_x f(x) - \max_x g(x)&\leq \max_x(f(x)-g(x))\\
    \min_x f(x) - \min_x g(x)&\leq \max_x(f(x)-g(x))
\end{align*}
\end{lemma}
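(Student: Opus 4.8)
The plan is to prove each of the two inequalities directly by choosing an appropriate extremizing argument, exploiting the elementary fact that the value at a single point always lower-bounds a maximum and upper-bounds a minimum taken over the whole domain.

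For the first inequality, I would let $x^\star \in \argmax_x f(x)$, so that $\max_x f(x) = f(x^\star)$. Since $\max_x g(x) \geq g(x^\star)$, this yields
\begin{align*}
\max_x f(x) - \max_x g(x) = f(x^\star) - \max_x g(x) \leq f(x^\star) - g(x^\star) \leq \max_x \big(f(x) - g(x)\big),
\end{align*}
where the final step holds because $f(x^\star)-g(x^\star)$ is one particular value of the function $f-g$ and is therefore dominated by its maximum over $x$.

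For the second inequality, the symmetric move is to let $x^\star \in \argmin_x g(x)$, so that $\min_x g(x) = g(x^\star)$. Using $\min_x f(x) \leq f(x^\star)$, I would write
\begin{align*}
\min_x f(x) - \min_x g(x) = \min_x f(x) - g(x^\star) \leq f(x^\star) - g(x^\star) \leq \max_x\big(f(x) - g(x)\big).
\end{align*}
There is no genuine obstacle in this argument; the only point requiring care is to extremize the correct function in each case, namely the maximizer of $f$ for the first bound and the minimizer of $g$ for the second, so that the chain of inequalities closes in the intended direction. Both statements hold whenever the relevant extrema are attained, which is automatic on the finite state-action domain used throughout the paper.
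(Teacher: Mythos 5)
Your proof is correct and follows essentially the same route as the paper's: for the first inequality you evaluate at the maximizer of $f$ and bound $\max_x g(x)$ from below by $g$ at that point, and for the second you evaluate at the minimizer of $g$ and bound $\min_x f(x)$ from above by $f$ at that point, exactly as the paper does (the paper merely names both extremizers in each case, though it only uses one of them). No gaps; the attainment caveat you note is indeed automatic on the finite domains used throughout the paper.
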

\begin{proof}
For the first inequality, suppose $x_1=\argmax f(x)$ and $x_2=\argmax g(x)$, then we have,
\begin{align*}
    \max_x f(x) - \max_x g(x)=f(x_1)-g(x_2)\leq f(x_1)-g(x_1)\leq \max_x(f(x)-g(x))
\end{align*}
For the second inequality, suppose $x_3=\argmin f(x)$ and $x_4=\argmin g(x)$, then we have,
\begin{align*}
    \min_x f(x) - \min_x g(x)=f(x_3)-g(x_4)\leq f(x_4)-g(x_4)\leq \max_x(f(x)-g(x))
\end{align*}
\end{proof}
\begin{lemma}\label{suboptimallemma}
    Under the good event $\mathcal{E}_k$, if $\widehat{\pi}^*_k,\xi\in\Pi^\cost_{k-1}$ and $\xi\notin \Pi^\cost_{k}$, then $\xi$ is suboptimal for some cost $\widehat{\cost}_{k^\prime}\in\mathcal{R}_{\widehat{\mathfrak{P}}_{k^\prime}}$ for all $k^{\prime}\geq k$.
\end{lemma}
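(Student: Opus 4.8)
The plan is to show that the elimination of $\xi$ at round $k$ is \emph{permanent}: once $\xi$ leaves $\Pi^\cost_k$, it is provably not an optimal policy of the recovered CMDP $\widehat\mdp_{k'}\cup\widehat\cost_{k'}$ at every later round, so it can never re-enter the candidate set. Concretely, I would reduce the claim to proving that the cost-value suboptimality gap of $\xi$ measured against the \emph{true} cost, $\Gamma(\xi):=\sup_{\mu_0}\mu_0^T\big(V^{\cost,\xi}_{\mdp\cup\cost}-V^{\cost,\ast}_{\mdp\cup\cost}\big)$, is bounded below by a strictly positive margin. Since $\Gamma(\xi)$ is a fixed quantity independent of the iteration while the estimation error shrinks with more samples, a positive margin at round $k$ will survive at all $k'\geq k$, which is exactly the persistence asserted by the lemma.

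First I would translate the hypothesis $\xi\notin\Pi^\cost_k$ into a lower bound on $\Gamma(\xi)$. By definition of $\Pi^\cost_k$ this hypothesis reads $\sup_{\mu_0}\mu_0^T\big(V^{\cost,\xi}_{\widehat\mdp_k\cup\widehat\cost_k}-V^{\cost,\ast}_{\widehat\mdp_k\cup\widehat\cost_k}\big)>4\varepsilon_k+2\epsilon$. I would then peel off the estimation error exactly as in the proof of Lemma~\ref{policypropogation}: using the hypotheses $\xi,\widehat\pi^\ast_k\in\Pi^\cost_{k-1}$ together with the good event (Lemma~\ref{Lemma:goodeventlemma}), the error-propagation bound of Lemma~\ref{QFunction}, and the calibration estimate $\|\cost-\widehat\cost_k\|_\infty\leq\max_{(s,a)}\mathcal{C}_k(s,a)$ of Lemma~\ref{Cor:MaxCost}, the difference between the cost value of any fixed policy of $\Pi_{k-1}$ under $\widehat\cost_k$ and under the true $\cost$ is at most $\varepsilon^{\pi}_k\leq\varepsilon_k$. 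Handling the change of the optimal reference policy between the estimated and the true CMDP with Lemma~\ref{Lemma:maxdiff} (so that the $\min_\pi$/$\max_\pi$ comparisons of optimal values are controlled), I would subtract the total accumulated error, at most $2\varepsilon_k$, to obtain $\Gamma(\xi)>4\varepsilon_k+2\epsilon-2\varepsilon_k=2\varepsilon_k+2\epsilon>0$. This certifies that $\xi$ is strictly suboptimal for the true feasible cost $\cost$.

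Next I would propagate this margin forward. The accuracy sequence $\varepsilon_{k'}$ is non-increasing in $k'$: the confidence radii $\mathcal{C}_{k'}$ are non-increasing because the cumulative counts $N_{k'}$ only grow, and the candidate sets over which the supremum defining $\varepsilon_{k'}$ is taken shrink, a fact this very lemma helps establish inductively. Hence the estimation error at round $k'\geq k$ is again at most $\varepsilon_{k'}\leq\varepsilon_k$, and running the same error-propagation estimate at round $k'$ in reverse gives $\sup_{\mu_0}\mu_0^T\big(V^{\cost,\xi}_{\widehat\mdp_{k'}\cup\widehat\cost_{k'}}-V^{\cost,\ast}_{\widehat\mdp_{k'}\cup\widehat\cost_{k'}}\big)\geq\Gamma(\xi)-2\varepsilon_{k'}\geq 2\varepsilon_k+2\epsilon-2\varepsilon_{k'}>0$. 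Thus the cost value of $\xi$ strictly exceeds the optimal cost value in $\widehat\mdp_{k'}\cup\widehat\cost_{k'}$; taking $\widehat\cost_{k'}$ to be the recovered cost witnessing this round's optimal value, an element of $\mathcal{R}_{\widehat{\mathfrak{P}}_{k'}}$, this shows $\xi\notin\Pi^\ast_{\widehat\mdp_{k'}\cup\widehat\cost_{k'}}$, i.e. $\xi$ is suboptimal for $\widehat\cost_{k'}$, for every $k'\geq k$.

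The main obstacle is the error bookkeeping in the middle step: both the transition model and the cost estimate drift between rounds, and the optimal reference policy $\widehat\pi^\ast_{k'}$ is itself re-estimated each round, so the comparison of optimal values across $\widehat\mdp_{k'}\cup\widehat\cost_{k'}$, $\widehat\mdp_{k'}\cup\cost$, and $\mdp\cup\cost$ must be done simultaneously under the good event and cannot be collapsed into a single triangle inequality; this is precisely where Lemma~\ref{Lemma:maxdiff} and the membership hypotheses $\xi,\widehat\pi^\ast_k\in\Pi^\cost_{k-1}$ are indispensable. A secondary subtlety is the hard-constraint case $\epsilon=0$, where the surviving margin reduces to $2(\varepsilon_k-\varepsilon_{k'})$; there I would lean on the strict decrease of $\varepsilon_{k'}$ once $\xi$ is eliminated to keep the gap strictly positive.
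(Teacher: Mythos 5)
Your strategy---certify that $\xi$ is suboptimal for the \emph{true} problem with a fixed quantitative margin $\Gamma(\xi)$, then argue this margin survives every later round---is a genuinely different route from the paper's, but as sketched it has gaps that the paper's proof is specifically structured to avoid. The paper never passes through the true CMDP at all: it lower-bounds the round-$k'$ gap by comparing $\xi$ against the \emph{fixed} round-$k$ optimal policy $\widehat{\pi}^{\cost,*}_k$ (via the trivial inequality $V^{\cost,*}_{\widehat\mdp\cup\widehat\cost_{k'}}\le V^{\cost,\widehat{\pi}^{\cost,*}_k}_{\widehat\mdp\cup\widehat\cost_{k'}}$), splits off the cross-round drift of the two fixed policies $\xi$ and $\widehat{\pi}^{\cost,*}_k$ between $\widehat\cost_k$ and $\widehat\cost_{k'}$, and bounds each drift by $2\varepsilon^{\pi}_k\le 2\varepsilon_k$ \`a la \citep[Lemma B.5]{metelli2021provably}---legitimate precisely because both policies lie in $\Pi^{\cost}_{k-1}$ by hypothesis and $\mathcal{C}_{k'}\le\mathcal{C}_k$ element-wise. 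The total budget $4\varepsilon_k$ exactly cancels the violated threshold $4\varepsilon_k+2\epsilon$, leaving a strict margin $>2\epsilon\ge 0$ with no reference to $\varepsilon_{k'}$, to the true kernel, or to any membership facts at rounds beyond $k$.

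Your route has three concrete problems. First, $\Gamma(\xi)$ is defined in $\mdp\cup\cost$ (true transition), but $\mathcal{C}_k$ and $\varepsilon^{\pi}_k$ control only the \emph{cost}-estimation error propagated under a fixed kernel; the difference $V^{\cost,\xi}_{\widehat\mdp_k\cup\widehat\cost_k}-V^{\cost,\xi}_{\mdp\cup\cost}$ additionally contains a $(\transition-\widehat\transition_k)$-induced term that is not covered by your $2\varepsilon_k$ budget, which is exactly why every value comparison in Lemmas \ref{policypropogation} and \ref{suboptimallemma} of the paper is carried out with only the cost varying. Second, your reverse propagation at round $k'$ must control the optimal-value term $V^{\cost,\ast}_{\widehat\mdp_{k'}\cup\widehat\cost_{k'}}$ against the true optimum; via Lemma \ref{Lemma:maxdiff} this requires the minimizing policies at round $k'$ to lie in a set where the error is bounded by $\varepsilon_{k'}$, i.e.\ facts like $\widehat{\pi}^{*}_{k'}\in\Pi_{k'-1}$---but those are established only by the subsequent induction lemma, whose proof \emph{uses} the present lemma, so your argument becomes circular. (Note also that $\varepsilon^{\xi}_{k'}\le\varepsilon_{k'}$ fails for the same reason: $\xi$ is expected \emph{not} to belong to $\Pi_{k'-1}$; you would have to retreat to $\varepsilon^{\xi}_{k'}\le\varepsilon^{\xi}_k\le\varepsilon_k$.) Third, the monotonicity $\varepsilon_{k'}\le\varepsilon_k$ on which your final inequality rests is neither proved in the paper nor implied by ``the candidate sets shrink''---nestedness of the $\Pi_k$ is never established, and $\varepsilon_k$ is a maximum over a data-dependent set; only the element-wise bound $\mathcal{C}_{k'}\le\mathcal{C}_k$ is available, and the paper's proof is arranged so that this alone suffices. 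Incidentally, your worry about the hard-constraint case is an artifact of this accounting: if you track the strict inequality inherited from $\xi\notin\Pi^{\cost}_k$, the margin stays strictly positive whenever $\varepsilon_{k'}\le\varepsilon_k$, so no ``strict decrease'' of $\varepsilon_{k'}$ is needed---but the clean fix is to drop the true-gap detour and compare rounds $k$ and $k'$ directly through the fixed reference policy, as the paper does.
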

\begin{proof}
    Let's consider the following decomposition:
    \begin{align*}
        V^{\cost,\xi}_{\widehat{\mdp}\cup\widehat{\cost}_{k^\prime}}-V^{\cost,*}_{\widehat{\mdp}\cup\widehat{\cost}_{k^\prime}}&\stackrel{(i)}{\geq} V^{\cost,\xi}_{\widehat{\mdp}\cup\widehat{\cost}_{k^\prime}}-V^{\cost,\widehat{\pi}^{\cost,*}_{k}}_{\widehat{\mdp}\cup\widehat{\cost}_{k^\prime}}\\
        & = V^{\cost,\xi}_{\widehat{\mdp}\cup\widehat{\cost}_{k^\prime}}-V^{\cost,\xi}_{\widehat{\mdp}\cup\widehat{\cost}_{k}}+V^{\cost,\xi}_{\widehat{\mdp}\cup\widehat{\cost}_{k}}-V^{\cost,\widehat{\pi}_k^{\cost,*}}_{\widehat{\mdp}\cup\widehat{\cost}_{k}}+V^{\cost,\widehat{\pi}_k^{\cost,*}}_{\widehat{\mdp}\cup\widehat{\cost}_{k}}-V^{\cost,\widehat{\pi}^{\cost,*}_{k}}_{\widehat{\mdp}\cup\widehat{\cost}_{k^\prime}}\\
        &\stackrel{(ii)}{\geq}-4\varepsilon_k+V^{\cost,\xi}_{\widehat{\mdp}\cup\widehat{\cost}_{k}}-V^{\cost,\widehat{\pi}_k^{\cost,*}}_{\widehat{\mdp}\cup\widehat{\cost}_{k}}\\
        &\stackrel{(iii)}{>} 2\epsilon
    \end{align*}
    which indicates that $\xi$ cannot be optimal for $k^\prime\geq k$.
    \begin{itemize}
        \item (i) holds since $V^{\cost,\widehat{\pi}^{\cost,*}_{k}}_{\widehat{\mdp}\cup\widehat{\cost}_{k^\prime}}\geq V^{\cost,\widehat{\pi}^{\cost,*}_{k^\prime}}_{\widehat{\mdp}\cup\widehat{\cost}_{k^\prime}}=V^{\cost,*}_{\widehat{\mdp}\cup\widehat{\cost}_{k^\prime}}$
        \item (ii) holds by following \citep[Lemma B.5]{metelli2021provably} (treat $\pi=\widehat{\pi}^{\cost,*}_{k}$ and $ \pi=\xi$ respectively, while $\widehat{\reward}_k=\widehat{\cost}_k$ and $\widehat{\reward}_{k^\prime}=\widehat{\cost}_{k^\prime}$)
        \begin{align*}
            &\sup_{\mu_0\in\Delta^{\mathcal{\MakeUppercase{\state}}}}\mu_0^{T}\Big(V^{\cost,\widehat{\pi}^{\cost,*}_{k}}_{\widehat{\mdp}\cup\widehat{\cost}_{k^\prime}}-V^{\cost,\widehat{\pi}_k^{\cost,*}}_{\widehat{\mdp}\cup\widehat{\cost}_{k}}\Big)\leq 2\varepsilon_k^{\widehat{\pi}^{\cost,*}_{k}}\leq 2\varepsilon_k\\
            &\sup_{\mu_0\in\Delta^{\mathcal{\MakeUppercase{\state}}}}\mu_0^{T}\Big(V^{\cost,\xi}_{\widehat{\mdp}\cup\widehat{\cost}_{k}}-V^{\cost,\xi}_{\widehat{\mdp}\cup\widehat{\cost}_{k^\prime}}\Big)\leq 2\varepsilon^{\xi}\leq2\varepsilon_k
        \end{align*}
        \item (iii) holds since according to the definition of $\Pi^\cost_k$ and considering our assumption that $\xi\notin \Pi^\cost_{k}$, we have:
        \begin{align*}
            \sup_{\mu_0\in\Delta^{\mathcal{\MakeUppercase{\state}}}}\mu_0^{T}\Big(V^{\cost,\xi}_{\widehat{\mdp}\cup\widehat{\cost}_{k}}-V^{\cost,\ast}_{\widehat{\mdp}\cup\widehat{\cost}_{k}}\Big)>4\varepsilon_k+2\threshold
        \end{align*}
    \end{itemize}
\end{proof}
\begin{lemma}
    If $\varepsilon_0=\frac{1}{4(1-\gamma)}$, then for every $k\geq 0$, it holds that $\pi^*$, $\widehat{\pi}^*_{k+1}\in\Pi^\cost_{k}$.
\end{lemma}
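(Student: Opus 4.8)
The plan is to prove, by induction on $k\ge 0$ and conditioning throughout on the good event $\mathcal{E}_k$ of Lemma~\ref{Lemma:goodeventlemma} (so that the cost-error bound $|\cost-\widehat{\cost}_k|\le\mathcal{C}_k$ of Lemma~\ref{Cor:MaxCost} and the conclusion of Lemma~\ref{policypropogation} are available at every iteration), the statement $S(k)$: ``$\pi^*,\widehat{\pi}^*_{k+1}\in\Pi_k^{\cost}$''. The two memberships are handled separately, and the one for $\widehat{\pi}^*_{k+1}$ is the difficult one.

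First I would settle the base case $S(0)$. With the prescribed $\varepsilon_0=\tfrac{1}{4(1-\gamma)}$ the admission threshold of $\Pi_0^{\cost}$ is $4\varepsilon_0+2\epsilon=\tfrac{1}{1-\gamma}+2\epsilon$, whereas for \emph{any} policy $\pi$ one has $0\le V^{\cost,\pi}_{\widehat\mdp_0\cup\widehat{c}_0}-V^{\cost,*}_{\widehat\mdp_0\cup\widehat{c}_0}\le\tfrac{C_{\max}}{1-\gamma}\le\tfrac{1}{1-\gamma}$ (costs lie in $[0,C_{\max}]$ with $C_{\max}\le 1$ under the standard normalization). Hence the defining constraint of $\Pi_0^{\cost}$ is vacuous, $\Pi_0^{\cost}=\Delta_{\mathcal{S}}^{\mathcal{A}}$, and in particular $\pi^*,\widehat{\pi}^*_1\in\Pi_0^{\cost}$. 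This is exactly the purpose of the constant $\tfrac14$: it forces $4\varepsilon_0$ to dominate the trivial value-gap bound.

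For the inductive step I would assume $S(k-1)$, i.e. $\pi^*,\widehat{\pi}^*_k\in\Pi_{k-1}^{\cost}$. The membership $\pi^*\in\Pi_k^{\cost}$ is then immediate, since this is precisely the hypothesis and conclusion of Lemma~\ref{policypropogation}. For $\widehat{\pi}^*_{k+1}\in\Pi_k^{\cost}$ I must bound $\sup_{\mu_0}\mu_0^{T}\big(V^{\cost,\widehat{\pi}^*_{k+1}}_{\widehat\mdp_k\cup\widehat{c}_k}-V^{\cost,*}_{\widehat\mdp_k\cup\widehat{c}_k}\big)$ by $4\varepsilon_k+2\epsilon$. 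I would mirror the decomposition of Lemma~\ref{policypropogation}, but route through the CMDP $\widehat\mdp_{k+1}\cup\widehat{c}_{k+1}$ in which $\widehat{\pi}^*_{k+1}$ is itself the optimal safe policy, writing
\begin{align*}
V^{\cost,\widehat{\pi}^*_{k+1}}_{\widehat\mdp_k\cup\widehat{c}_k}-V^{\cost,*}_{\widehat\mdp_k\cup\widehat{c}_k}
&=\big(V^{\cost,\widehat{\pi}^*_{k+1}}_{\widehat\mdp_k\cup\widehat{c}_k}-V^{\cost,\widehat{\pi}^*_{k+1}}_{\widehat\mdp_{k+1}\cup\widehat{c}_{k+1}}\big)
+\big(V^{\cost,\widehat{\pi}^*_{k+1}}_{\widehat\mdp_{k+1}\cup\widehat{c}_{k+1}}-V^{\cost,*}_{\widehat\mdp_{k+1}\cup\widehat{c}_{k+1}}\big)\\
&\quad+\big(V^{\cost,*}_{\widehat\mdp_{k+1}\cup\widehat{c}_{k+1}}-V^{\cost,*}_{\widehat\mdp_k\cup\widehat{c}_k}\big).
\end{align*}
The middle term is at most $\epsilon$, because $\widehat{\pi}^*_{k+1}$ is optimal safe in $\widehat\mdp_{k+1}\cup\widehat{c}_{k+1}$ and its cost value lies within $\epsilon$ of the cost minimizer, by the relations following Definition~\ref{def:optimal-polcies_appendix}. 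The outer two terms are cross-model and cross-cost discrepancies, which I would control by passing through the true CMDP $\mdp\cup\cost$ and invoking the simulation lemmas (Lemmas~\ref{Lemma:simulation-action-value} and~\ref{Lemma:simulation-state-value}), the good-event concentration at iterations $k$ and $k+1$, and Lemma~\ref{Lemma:maxdiff} for the $\min$/$\max$ over policies; the monotonicity $\mathcal{C}_{k+1}\le\mathcal{C}_k$ (from $N_{k+1}\ge N_k$) lets all errors be expressed on the $\mathcal{C}_k$ scale.

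The hard part is exactly this last step for $\widehat{\pi}^*_{k+1}$. Whereas the $\pi^*$ bound of Lemma~\ref{policypropogation} only needs the restricted quantity $\varepsilon_k=\max_{\pi\in\Pi_{k-1}}\varepsilon_k^{\pi}$ through the hypothesis $\pi^*\in\Pi_{k-1}^{\cost}$, here one must argue that the transfer error for $\widehat{\pi}^*_{k+1}$ is still governed by $\varepsilon_k$ rather than by the cruder $\tfrac{1}{1-\gamma}\max_{s,a}\mathcal{C}_k$; closing that gap is what consumes the extra room in the looser budget $4\varepsilon_k+2\epsilon$ (as opposed to the $2\varepsilon_k+2\epsilon$ attained for $\pi^*$), and is the step I would spend the most care on.
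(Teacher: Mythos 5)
Your base case and the $\pi^*$ half of the inductive step coincide with the paper's own proof: $\Pi_0^{\cost}$ is vacuously all of $\Delta_{\mathcal{S}}^{\mathcal{A}}$ because $4\varepsilon_0=\tfrac{1}{1-\gamma}$ dominates any possible value gap, and $\pi^*\in\Pi_k^{\cost}$ follows from Lemma~\ref{policypropogation} under the hypothesis $\pi^*,\widehat{\pi}_k^*\in\Pi_{k-1}^{\cost}$. The genuine gap is in the half you yourself flag as ``the hard part'': membership of $\widehat{\pi}^*_{k+1}$. Your three-term decomposition through $\widehat{\mdp}_{k+1}\cup\widehat{\cost}_{k+1}$ cannot be closed as sketched, because the two outer terms are cross-iteration value discrepancies involving the policies $\widehat{\pi}^*_{k+1}$ and $\widehat{\pi}^{\cost,*}_{k+1}$, and in this framework the only way to bound such discrepancies at scale $\varepsilon_k$ (rather than the crude $\tfrac{1}{1-\gamma}\max_{(s,a)}\mathcal{C}_k(s,a)$, which can exceed $4\varepsilon_k+2\epsilon$) is to already know that the policy in question lies in $\Pi_{k-1}^{\cost}$: by definition $\varepsilon_k=\max_{\pi\in\Pi_{k-1}}\varepsilon_k^{\pi}$ gives no control whatsoever over policies outside $\Pi_{k-1}$. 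For $\widehat{\pi}^*_{k+1}$ that membership is essentially what you are trying to prove, so the argument is circular; the ``extra room'' between $2\varepsilon_k+2\epsilon$ and $4\varepsilon_k+2\epsilon$ cannot repair this, since the missing bound is not merely loose, it is unavailable.

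The paper escapes this circularity with a mechanism your proposal lacks. It runs a \emph{strong} induction (hypothesis for all $k'<k$, not just $k-1$) combined with a minimal-counterexample contradiction: if $\widehat{\pi}^*_{k+1}\notin\Pi_k^{\cost}$, pick the first index $j\le k$ with $\widehat{\pi}^*_{k+1}\in\Pi_{j-1}^{\cost}$ but $\widehat{\pi}^*_{k+1}\notin\Pi_j^{\cost}$ (such $j$ exists because $\Pi_0^{\cost}$ contains every policy). The inductive hypothesis gives $\widehat{\pi}_j^*\in\Pi_{j-1}^{\cost}$, and Lemma~\ref{suboptimallemma} then shows that any policy $\xi$ that drops out of the set at iteration $j$ satisfies $V^{\cost,\xi}_{\widehat{\mdp}\cup\widehat{\cost}_{k'}}-V^{\cost,*}_{\widehat{\mdp}\cup\widehat{\cost}_{k'}}>2\epsilon$ for \emph{every} $k'\ge j$, i.e., it can never again be the optimal safe policy of any later estimated CMDP. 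Taking $k'=k+1$ contradicts the definition of $\widehat{\pi}^*_{k+1}$. This ``once out, suboptimal forever'' propagation is exactly the ingredient that replaces the direct transfer bound you were hoping to establish; without it (or an equivalent device), your induction does not go through. Note also that this argument needs the strong form of the inductive hypothesis at the intermediate index $j$, which your ordinary $S(k-1)\Rightarrow S(k)$ induction would not supply.
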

\begin{proof}
    We prove the result by induction on $k$. For $k=0$, for every policy $\pi\in\Delta_{\mathcal{S}}^{\mathcal{A}}$, we have $\sup_{\mu_0\in\Delta^{\mathcal{S}}}\mu_0^T\left(V_{\widehat{\mdp}\cup\widehat{\cost}_0}^{\cost,\pi}-V_{\widehat{\mdp}\cup\widehat{\cost}_0}^{\cost,\ast}\right)\leq\frac1{1-\gamma}\leq4\varepsilon_0\leq4\varepsilon_0+\epsilon$. Thus, $\Pi^c_0$ contains all the policies, i.e., $\Pi^c_0=\Delta_{\mathcal{S}}^{\mathcal{A}}$, and in particular $\pi^\ast,\widehat{\pi}_1^\ast\in\Pi_0^{\cost}$. Suppose that for every $1\leq k^{\prime}<k$ the statement holds, we aim to prove that the statement also holds for $k$. Let $k^{\prime}=k-1$, from the inductive hypothesis we have that $\pi^\ast,\widehat{\pi}_k^\ast\in\Pi_{k-1}^{\cost}.$ Then, from Lemma~\ref{policypropogation}, it holds that $\pi^\ast\in\Pi_k^{\cost}$. If $\widehat{\pi}_{k+1}^*\in\Pi_k^{\cost}$, the proof is finished. If $\widehat{\pi}_{k+1}^*\notin\Pi_k^{\cost}$, we prove by contradiction. Let $1\leq j\leq k$ be the iteration such that $\widehat{\pi}_{k+1}^*\in\Pi_{j-1}^{\cost}$ and $\widehat{\pi}_{k+1}^*\notin\Pi_j^{\cost}$. Note that this assumption always holds, since $\Pi^\cost_0$ contains all policies. Recalling the inductive hypothesis, we have that $\widehat{\pi}_j^*\in\Pi_{j-1}^{\cost}$. Thus, from Lemma~\ref{suboptimallemma}, it must be that $\widehat{\pi}_{k+1}^*$ is suboptimal for all
    $j^{\prime}{\geqslant}j$, in particular for $j^{\prime}=k+1$, which brings about a contradiction.
\end{proof}

\begin{lemma}\label{Lemma:b15}
    It holds that $\pi^*\in\Pi^\reward_k$, where:
    \begin{align*}
        \Pi^\reward_k=\left\{\pi\in\Delta_{\mathcal{\MakeUppercase{\state}}}^{\mathcal{\MakeUppercase{\action}}}:\inf_{\mu_0\in\Delta^{\mathcal{\MakeUppercase{\state}}}}\mu_0^{T}\Big(V^{\reward,\pi}_{\widehat\mdp}-V^{\reward,\widehat{\pi}^*}_{\widehat\mdp}\Big)\geq \mathfrak{R}_k\right\}\text{where}\\
        \mathfrak{R}_k = \frac{2\gamma\MakeUppercase{\reward}_{\rm max}}{(1-\gamma)^2}\|\transition-\widehat\transition\|_\infty+\frac{\gamma\MakeUppercase{\reward}_{\rm max}}{(1-\gamma)^2}\|(\pi^*-{\widehat\pi}^*)\|_\infty
    \end{align*}
\end{lemma}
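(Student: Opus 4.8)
The plan is to reduce the claim to a pointwise reward-advantage bound and then assemble $\mathfrak{R}_k$ out of the two simulation lemmas and the policy-mismatch lemma. Because $\mu_0\mapsto\mu_0^{T}f$ is linear on the simplex $\Delta^{\mathcal{\MakeUppercase{\state}}}$, its infimum is attained at a Dirac mass, so $\inf_{\mu_0}\mu_0^{T}(V^{\reward,\pi^*}_{\widehat\mdp}-V^{\reward,\widehat{\pi}^*}_{\widehat\mdp})=\min_{s}\big(V^{\reward,\pi^*}_{\widehat\mdp}(s)-V^{\reward,\widehat{\pi}^*}_{\widehat\mdp}(s)\big)$; hence it suffices to prove the pointwise inequality $V^{\reward,\pi^*}_{\widehat\mdp}(s)-V^{\reward,\widehat{\pi}^*}_{\widehat\mdp}(s)\geq\mathfrak{R}_k$ for all $s$. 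First I would telescope this difference through the true MDP $\mdp$ and the intermediate policy $\widehat{\pi}^*$:
\begin{align*}
V^{\reward,\pi^*}_{\widehat\mdp}-V^{\reward,\widehat{\pi}^*}_{\widehat\mdp}=\Big(V^{\reward,\pi^*}_{\widehat\mdp}-V^{\reward,\pi^*}_{\mdp}\Big)+\Big(V^{\reward,\pi^*}_{\mdp}-V^{\reward,\widehat{\pi}^*}_{\mdp}\Big)+\Big(V^{\reward,\widehat{\pi}^*}_{\mdp}-V^{\reward,\widehat{\pi}^*}_{\widehat\mdp}\Big),
\end{align*}
where the two outer terms change only the transition model for a fixed policy and the middle term changes only the policy inside the fixed true MDP $\mdp$.

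For the two outer (dynamics-transfer) terms I would apply the state-value Simulation Lemma (Lemma~\ref{Lemma:simulation-state-value}), once with policy $\pi^*$ and once with $\widehat{\pi}^*$. Taking infinity norms and using $\|(I_{\mathcal{\MakeUppercase{\state}}}-\gamma\pi\transition)^{-1}\|_\infty\leq 1/(1-\gamma)$, $\|\pi\|_\infty\leq1$, and the uniform value bound $R_{\rm max}/(1-\gamma)$, each outer term is bounded in magnitude by $\frac{\gamma R_{\rm max}}{(1-\gamma)^2}\|\transition-\widehat\transition\|_\infty$ on the good event $\mathcal{E}_k$ (Lemma~\ref{Lemma:goodeventlemma}); together they reproduce the first summand $\frac{2\gamma R_{\rm max}}{(1-\gamma)^2}\|\transition-\widehat\transition\|_\infty$ of $\mathfrak{R}_k$. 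For the middle (policy-mismatch) term I would apply the Policy Mismatch Lemma (Lemma~\ref{Lemma:policy-mismatch-state-value}) with $\pi=\pi^*$ and $\widehat\pi=\widehat{\pi}^*$, which produces exactly the second summand $\frac{\gamma R_{\rm max}}{(1-\gamma)^2}\|\pi^*-\widehat{\pi}^*\|_\infty$. Thus each constituent of $\mathfrak{R}_k$ matches one of the three telescoped terms, and the norm bounds must be routed carefully so the constants land precisely on $\mathfrak{R}_k$ rather than a larger multiple.

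The hard part will be the \emph{sign}. The manipulations above bound the three terms only in absolute value, so they immediately yield the two-sided estimate $\big|V^{\reward,\pi^*}_{\widehat\mdp}-V^{\reward,\widehat{\pi}^*}_{\widehat\mdp}\big|\leq\mathfrak{R}_k$; upgrading this to the one-sided lower bound $\geq+\mathfrak{R}_k$ as stated in the lemma requires extracting a genuinely positive margin from the middle term $V^{\reward,\pi^*}_{\mdp}-V^{\reward,\widehat{\pi}^*}_{\mdp}$ rather than merely controlling its magnitude. To do this I would invoke the optimality of $\pi^*$ in the true CMDP $\mdp\cup\cost$: since $\widehat{\pi}^*$ is optimal for the estimated problem $\widehat\mdp\cup\widehat{\cost}_k$ and remains feasible under the true constraint (this is exactly where the value relations of Definition~\ref{def:optimal-polcies_appendix} and the threshold $\epsilon$, under Assumption~\ref{assumption:basic}, enter), $\pi^*$ reward-dominates $\widehat{\pi}^*$ in $\mdp$, making the middle term nonnegative. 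The crux — and the main obstacle — is that nonnegativity of the middle term alone absorbs only the two simulation errors and thus delivers $\geq-\mathfrak{R}_k$; reaching $+\mathfrak{R}_k$ demands a \emph{quantitative} optimality gap of $\pi^*$ over $\widehat{\pi}^*$ in the true dynamics that is itself of order $\mathfrak{R}_k$ plus the accumulated dynamics error. Establishing such a gap, rather than mere nonnegativity, is the delicate step on which the whole bound turns, and it is here that the deterministic-expert/hard-constraint structure and the relation $V^{\cost,\pi^{\cost,*}}(\state_0)\leq V^{\cost,\pi^*}(\state_0)\leq V^{\cost,\pi^{\cost,*}}(\state_0)+\epsilon$ would have to be exploited most carefully.
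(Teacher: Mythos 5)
Your attempt correctly assembles the ingredients (the three-term telescope through $\mdp$, the state-value simulation bounds from Lemma~\ref{Lemma:simulation-state-value}, and the policy-mismatch bound from Lemma~\ref{Lemma:policy-mismatch-state-value}, with constants matching $\mathfrak{R}_k$ exactly), but it never proves the inequality it sets out to prove, and you flag this yourself: the bounds are two-sided, yielding only $\bigl|V^{\reward,\pi^*}_{\widehat\mdp}-V^{\reward,\widehat{\pi}^*}_{\widehat\mdp}\bigr|\leq\mathfrak{R}_k$, hence a lower bound of $-\mathfrak{R}_k$, while the promised ``quantitative optimality gap'' of order $+\mathfrak{R}_k$ is announced and then left unestablished. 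This is a genuine gap, and it cannot be closed along the route you sketch: $\widehat{\pi}^*$ is reward-optimal among the policies deemed feasible in $\widehat\mdp\cup\widehat{\cost}_k$, so whenever $\pi^*$ is itself estimated-feasible one has $V^{\reward,\pi^*}_{\widehat\mdp}\leq V^{\reward,\widehat{\pi}^*}_{\widehat\mdp}$, which directly contradicts the pointwise margin $\geq+\mathfrak{R}_k>0$ you would need. The sign obstruction you identified is real and fatal to the direct attack on literal membership.

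What the paper actually does is invert the direction of the implication. It fixes an arbitrary $\pi\in\Pi^\reward_k$ and telescopes $V^{\reward,\pi}_{\widehat\mdp}-V^{\reward,\widehat{\pi}^*}_{\widehat\mdp}$ through four terms (two simulation transfers, one policy mismatch between $\pi^*$ and $\widehat{\pi}^*$ in $\mdp$, plus the residual $V^{\reward,\pi}_{\mdp}-V^{\reward,\pi^*}_{\mdp}$), obtaining $V^{\reward,\pi}_{\widehat\mdp}-V^{\reward,\widehat{\pi}^*}_{\widehat\mdp}\leq\mathfrak{R}_k+V^{\reward,\pi}_{\mdp}-V^{\reward,\pi^*}_{\mdp}$; the defining condition of $\Pi^\reward_k$ then forces $\inf_{\mu_0}\mu_0^{T}\bigl(V^{\reward,\pi}_{\mdp}-V^{\reward,\pi^*}_{\mdp}\bigr)\geq 0$. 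In other words, $\mathfrak{R}_k$ is calibrated so that the estimated-model margin absorbs the two dynamics errors and the policy-estimation error, establishing the soundness inclusion $\Pi^\reward_k\subseteq\{\pi: V^{\reward,\pi}_{\mdp}\geq V^{\reward,\pi^*}_{\mdp}\}$ --- a set that contains $\pi^*$ trivially, which is the (weaker) sense in which the paper concludes. So your diagnosis that the one-sided $+\mathfrak{R}_k$ bound is not provable as stated is an accurate reading of the situation; to recover the paper's argument you should have used your telescope and error bounds in the opposite logical direction, showing that membership in $\Pi^\reward_k$ implies true-model reward dominance over $\pi^*$, rather than trying to push $\pi^*$ itself over the $\mathfrak{R}_k$ threshold.
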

\begin{proof}
We should show if $\pi\in\Pi^\reward_k$, we will have
$V^{\reward,\pi}_{\mdp}\geq V^{\reward,\pi^*}_{\mdp}$.
    \begin{align*}
        V^{\reward,\pi}_{\widehat\mdp}-V^{\reward,\widehat{\pi}^*}_{\widehat\mdp}&=V^{\reward,\pi}_{\widehat\mdp}-V^{\reward,\pi}_{\mdp}+V^{\reward,\pi}_{\mdp}-V^{\reward,\pi^*}_{\mdp}+V^{\reward,\pi^*}_{\mdp}-V^{\reward,\widehat{\pi}^*}_{\mdp}+V^{\reward,\widehat{\pi}^*}_{\mdp}-V^{\reward,\widehat{\pi}^*}_{\widehat\mdp}\\
        &\stackrel{(i,ii,iii)}{\leq} \frac{2\gamma\MakeUppercase{\reward}_{\rm max}}{(1-\gamma)^2}\|\transition-\widehat\transition\|_\infty+\frac{\gamma\MakeUppercase{\reward}_{\rm max}}{(1-\gamma)^2}\|(\pi^*-{\widehat\pi}^*)\|_\infty+V^{\reward,\pi}_{\mdp}-V^{\reward,\pi^*}_{\mdp}\\
        & = \mathfrak{R}_k +V^{\reward,\pi}_{\mdp}-V^{\reward,\pi^*}_{\mdp}
    \end{align*}
    Since $\inf_{\mu_0\in\Delta^{\mathcal{\MakeUppercase{\state}}}}\mu_0^{T}\Big(V^{\reward,\pi}_{\widehat\mdp}-V^{\reward,\widehat{\pi}^*}_{\widehat\mdp}\Big)\geq \mathfrak{R}_k
    $, it must hold that $\inf_{\mu_0\in\Delta^{\mathcal{\MakeUppercase{\state}}}}\mu_0^{T}\Big(V^{\reward,\pi}_{\mdp}-V^{\reward,\pi^*}_{\mdp}\Big)\geq0$
    \begin{itemize}
        \item To show (i), we first follow the simulation Lemma for the state-value function:
        \begin{equation*}
            V^{\reward,\pi}_{\widehat\mdp} - {V}^{\reward,\pi}_{{\mdp}}
            =\gamma(I_{\mathcal{\MakeUppercase{\state}}}-\gamma \pi\widehat{\transition})^{-1}\pi(\widehat\transition-\transition)V^{\reward,\pi}_{\mdp}
        \end{equation*}
        Then we derive an upper bound for the difference between these state-values as follows:
        \begin{align*}
            V^{\reward,\pi}_{\widehat\mdp} - {V}^{\reward,\pi}_{{\mdp}}&\leq\frac{\gamma}{1-\gamma}\|\pi(\widehat\transition-\transition)V^{\reward,\pi}_{\mdp}\|_\infty\allowdisplaybreaks\nonumber\\
            &\leq \frac{\gamma\MakeUppercase{\reward}_{\rm max}}{(1-\gamma)^2}\|\pi(\widehat\transition-\transition)\|_\infty\allowdisplaybreaks\nonumber\\
            &\leq \frac{\gamma\MakeUppercase{\reward}_{\rm max}}{(1-\gamma)^2}\|\widehat\transition-\transition\|_\infty
        \end{align*}
        \item (ii) holds due to the policy mismatch Lemma~\ref{Lemma:policy-mismatch-state-value}:
        \begin{align*}
        V^{\reward,\pi^*}_{\mdp} - {V}^{\reward,{\widehat\pi}^*}_{\mdp}
        &=\gamma(I_{\mathcal{\MakeUppercase{\state}}}-\gamma {\widehat\pi}^*\transition)^{-1}(\pi^*-{\widehat\pi}^*)\transition V^{\reward,\pi^*}_{\mdp}\nonumber
        \end{align*}
        Then we derive an upper bound for the difference between these state-values as follows:
        \begin{align*}
            V^{\reward,\pi^*}_{\mdp} - {V}^{\reward,{\widehat\pi}^*}_{\mdp}&\leq\frac{\gamma}{1-\gamma}\|(\pi^*-{\widehat\pi}^*)\transition V^{\reward,\pi^*}_{\mdp}\|_\infty\\
            &\leq \frac{\gamma\MakeUppercase{\reward}_{\rm max}}{(1-\gamma)^2}\|(\pi^*-{\widehat\pi}^*)\transition\|_\infty\\
            &\leq \frac{\gamma\MakeUppercase{\reward}_{\rm max}}{(1-\gamma)^2}\|(\pi^*-{\widehat\pi}^*)\|_\infty
        \end{align*}
        \item (iii) holds due to the  derivation to (i):
        \begin{align*}
            V^{\reward,{\widehat\pi}^*}_{\mdp} - {V}^{\reward,{\widehat\pi}^*}_{\widehat{\mdp}}\leq\frac{\gamma\MakeUppercase{\reward}_{\rm max}}{(1-\gamma)^2}\|\transition-\widehat\transition\|_\infty
        \end{align*}
    \end{itemize}
Since we can guarantee $V^{\pi}_{\mdp}\geq V^{\pi^*}_{\mdp}$, we know $\pi^*\in\{\pi|V^{\pi}_{\mdp}\geq V^{\pi^*}_{\mdp}\}$.
Subsequently, according to our Lemma~\ref{Lemma:set-implicit}, to find the feasible cost set, the exploration policy should follow the $\pi$ that visits states with larger cumulative rewards.
\end{proof}

\begin{lemma}\label{Lemma:Vdifinactive}
Under the good event $\mathcal{E}_k$, let $\tilde{\cost}\in\argmin_{\cost\in\mathcal{{\MakeUppercase{\cost}}}_{{\mathfrak{P}}}}\max_{(\state,\action)\in \mathcal{S}\times\mathcal{A}}|\cost(\state,\action)-\widehat\cost_k(\state,\action)|$. If $\pi\in\Pi_k$ and $\pi^\ast\in\Pi_{k-1}$, then  $\sup_{\mu_0\in\Delta^{\mathcal{\MakeUppercase{\state}}}}\mu_0^T\left(V^{\cost,\pi}_{\widehat\mdp\cup\tilde{\cost}}-V^{\cost,\ast}_{\widehat\mdp\cup\tilde{\cost}}\right)\leq 6\varepsilon_k+\epsilon$.
\end{lemma}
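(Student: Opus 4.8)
The plan is to certify that $\pi$, which is near cost-optimal under the \emph{estimated} cost $\widehat\cost_k$ because $\pi\in\Pi^\cost_k$, remains near cost-optimal under the nearby \emph{true-feasible} cost $\tilde\cost$, with everything evaluated in the estimated model $\widehat\mdp$. First I would telescope the target through $\widehat\cost_k$ by inserting $\pm V^{\cost,\pi}_{\widehat\mdp\cup\widehat\cost_k}$ and $\pm V^{\cost,\ast}_{\widehat\mdp\cup\widehat\cost_k}$, giving the exact identity
\begin{align}
V^{\cost,\pi}_{\widehat\mdp\cup\tilde\cost}-V^{\cost,\ast}_{\widehat\mdp\cup\tilde\cost}
&=\underbrace{\big(V^{\cost,\pi}_{\widehat\mdp\cup\tilde\cost}-V^{\cost,\pi}_{\widehat\mdp\cup\widehat\cost_k}\big)}_{(\mathrm{A})}
+\underbrace{\big(V^{\cost,\pi}_{\widehat\mdp\cup\widehat\cost_k}-V^{\cost,\ast}_{\widehat\mdp\cup\widehat\cost_k}\big)}_{(\mathrm{B})}
+\underbrace{\big(V^{\cost,\ast}_{\widehat\mdp\cup\widehat\cost_k}-V^{\cost,\ast}_{\widehat\mdp\cup\tilde\cost}\big)}_{(\mathrm{C})}.\nonumber
\end{align}
Taking $\sup_{\mu_0\in\Delta^{\mathcal{\MakeUppercase{\state}}}}\mu_0^T(\cdot)$ and bounding the supremum of the sum by the sum of suprema, term $(\mathrm{B})$ is exactly the gap that appears in the definition of $\Pi^\cost_k$, so membership $\pi\in\Pi^\cost_k$ yields the contribution $\le 4\varepsilon_k+\epsilon$.

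It remains to bound the two cost-perturbation terms, each of which compares the cost value of a single policy under $\tilde\cost$ against $\widehat\cost_k$ in the common model $\widehat\mdp$. For $(\mathrm{A})$ I would invoke the value-difference identity (the cost analogue of \citep[Lemma B.2]{metelli2021provably} already used in the proof of Lemma~\ref{policypropogation}), giving $|V^{\cost,\pi}_{\widehat\mdp\cup\tilde\cost}-V^{\cost,\pi}_{\widehat\mdp\cup\widehat\cost_k}|\le(I_{\mathcal{\MakeUppercase{\state}}}-\gamma\pi\widehat\transition)^{-1}\pi\,|\tilde\cost-\widehat\cost_k|$. Since $\tilde\cost\in\mathcal{\MakeUppercase{\cost}}_{\mathfrak{P}}$ and $\widehat\cost_k\in\mathcal{{\MakeUppercase{\cost}}}_{\widehat{\mathfrak{P}}_k}$, Lemma~\ref{Cor:MaxCost} gives $|\tilde\cost-\widehat\cost_k|\le\mathcal{C}_k$ element-wise, and the push-through identity $(I_{\mathcal{\MakeUppercase{\state}}}-\gamma\pi\widehat\transition)^{-1}\pi=\pi(I_{\mathcal{\MakeUppercase{\state}}\times\mathcal{\MakeUppercase{\action}}}-\gamma\widehat\transition\pi)^{-1}$ recasts $\sup_{\mu_0}\mu_0^T(\mathrm{A})$ as $\varepsilon_k^{\pi}$. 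For $(\mathrm{C})=\min_\nu V^{\cost,\nu}_{\widehat\mdp\cup\widehat\cost_k}-\min_\nu V^{\cost,\nu}_{\widehat\mdp\cup\tilde\cost}$ I would apply the min-version of Lemma~\ref{Lemma:maxdiff}, which bounds it by $V^{\cost,\tilde\pi}_{\widehat\mdp\cup\widehat\cost_k}-V^{\cost,\tilde\pi}_{\widehat\mdp\cup\tilde\cost}$ evaluated at the cost-minimizing comparator $\tilde\pi=\argmin_\nu V^{\cost,\nu}_{\widehat\mdp\cup\tilde\cost}$, and then repeat the same identity together with Lemma~\ref{Cor:MaxCost} to obtain $\varepsilon_k^{\tilde\pi}$.

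The final and most delicate step is to replace the per-policy quantities $\varepsilon_k^{\pi}$ and $\varepsilon_k^{\tilde\pi}$ by the global accuracy $\varepsilon_k=\max_{\nu\in\Pi_{k-1}}\varepsilon_k^{\nu}$, which is only legitimate once both comparator policies are shown to lie in $\Pi_{k-1}$. I expect this policy-membership bookkeeping to be the main obstacle: the hypothesis $\pi^\ast\in\Pi_{k-1}$ together with the propagation results (Lemma~\ref{policypropogation}, Lemma~\ref{Lemma:b15}, and the induction that keeps $\pi^\ast,\widehat\pi^\ast_{k+1}\in\Pi^\cost_k$) must be invoked to place $\pi$ and the cost-minimizing comparator inside $\Pi_{k-1}$, and I must keep the resolvents consistent with the estimated transition $\widehat\transition$ used in the definition of $\varepsilon_k^{\nu}$ rather than the true $\transition$. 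Granting $\varepsilon_k^{\pi}\le\varepsilon_k$ and $\varepsilon_k^{\tilde\pi}\le\varepsilon_k$, summing the three bounds gives $\sup_{\mu_0}\mu_0^T\big(V^{\cost,\pi}_{\widehat\mdp\cup\tilde\cost}-V^{\cost,\ast}_{\widehat\mdp\cup\tilde\cost}\big)\le(4\varepsilon_k+\epsilon)+\varepsilon_k+\varepsilon_k=6\varepsilon_k+\epsilon$, as claimed. A secondary point to watch is the precise additive constant in the definition of $\Pi^\cost_k$ (whether it carries $\epsilon$ or $2\epsilon$), which must be matched for the final constant to collapse to $\epsilon$.
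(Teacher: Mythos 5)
Your proposal follows essentially the same route as the paper's own proof: the identical three-term telescoping through $\widehat{\costFunction}_k$ (inserting $\pm V^{\cost,\pi}_{\widehat\mdp\cup\widehat{\cost}_k}$ and $\pm V^{\cost,\ast}_{\widehat\mdp\cup\widehat{\cost}_k}$), the bound $4\varepsilon_k+\epsilon$ on the middle term from $\pi\in\Pi^\cost_k$, and per-policy bounds $\varepsilon_k^{\pi},\varepsilon_k^{\tilde\pi}\leq\varepsilon_k$ on the two cost-perturbation terms via the value-difference identity together with Lemma~\ref{Cor:MaxCost} and Lemma~\ref{Lemma:maxdiff}. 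The two subtleties you flag --- certifying that the comparator policies lie in $\Pi_{k-1}$ so that $\varepsilon_k^{(\cdot)}\leq\varepsilon_k$, and the $\epsilon$ versus $2\epsilon$ constant in $\Pi^\cost_k$ --- are precisely the points the paper's proof treats tersely (it simply cites its inequality~(\ref{eqn:diff-policy}) for the last term and uses the $4\varepsilon_k+\epsilon$ form), so your account matches the published argument.
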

\begin{proof}
\begin{align*}
    &\,\,\sup_{\mu_0\in\Delta^{\mathcal{\MakeUppercase{\state}}}}\mu_0^T\left(V^{\cost,\pi}_{\widehat\mdp\cup\tilde{\cost}}-V^{\cost,\ast}_{\widehat\mdp\cup\tilde{\cost}}\right) \allowdisplaybreaks\nonumber\\
    \leq&\,\,  \underbrace{\sup_{\mu_0\in\Delta^{\mathcal{\MakeUppercase{\state}}}}\mu_0^T\left(V^{\cost,\pi}_{\widehat\mdp\cup\tilde{\cost}}-V^{\cost,\pi}_{\widehat\mdp\cup\widehat{\cost}_k}\right)}_{(a)} +  \underbrace{\sup_{\mu_0\in\Delta^{\mathcal{\MakeUppercase{\state}}}}\mu_0^T\left(V^{\cost,\pi}_{\widehat\mdp\cup\widehat{\cost}_k}-V^{\cost,\ast}_{\widehat\mdp\cup\widehat{\cost}_k}\right)}_{(b)} +  \underbrace{\sup_{\mu_0\in\Delta^{\mathcal{\MakeUppercase{\state}}}}\mu_0^T\left(V^{\cost,\ast}_{\widehat\mdp\cup\widehat{\cost}_k}-V^{\cost,\ast}_{\widehat\mdp\cup\tilde{\cost}}\right)}_{(c)},\\
    \leq&\,\, (\varepsilon_k)+(4\varepsilon_k+\epsilon)+(\varepsilon_k)\\
    =&\,\,6\varepsilon_k+\epsilon    
\end{align*}
where 
\begin{itemize}
    \item (a) holds due to $\sup_{\mu_0\in\Delta^{\mathcal{\MakeUppercase{\state}}}}\mu_0^T\left(V^{\cost,\pi}_{\widehat\mdp\cup\tilde{\cost}}-V^{\cost,\pi}_{\widehat\mdp\cup\widehat{\cost}_k}\right)\leq\varepsilon_k^\pi\leq\varepsilon_k$.

    \item (b) results from $\sup_{\mu_0\in\Delta^{\mathcal{\MakeUppercase{\state}}}}\mu_0^T\left(V^{\cost,\pi}_{\widehat\mdp\cup\widehat{\cost}_k}-V^{\cost,\ast}_{\widehat\mdp\cup\widehat{\cost}_k}\right)\leq 4\varepsilon_k+\epsilon$, since $\pi\in\Pi_k$.

    \item (c) follows Eq. (\ref{eqn:diff-policy}), recalling the definition of $\tilde{\cost}$.
\end{itemize}
\end{proof}

\subsection{Proof of Theorem \ref{as}}
\begin{proof}
Suppose we have derived a value of $\Bar N_K(\state,\action)$ so that for all $(\state, \action)\in{\mathcal{\MakeUppercase{\state\times\action}}}$ (the rationale is discussed later), it holds that:
    \begin{align}
        \mathcal{C}_K(s,a)=\min\left\{
\sigma\sqrt{\frac{\ell_K(\state,\action)}{2N_K^{+}(\state,\action)}},\MakeUppercase{\cost}_{\rm{max}}\right\}\leq\check\sigma\sqrt{\frac{2\Bar\ell_K(\state,\action)}{\Bar N^+_K(\state,\action)}}\leq\frac{-\min_{a^\prime\in\mathcal{A}}A^{\cost,\ast}_{\widehat\mdp
\cup\tilde{\cost}}(\state,\action^\prime)\varepsilon_K }{6\varepsilon_{K-1}+\epsilon}.\label{CKassupmtion}
    \end{align}
    From Lemma \ref{B.8}, we obtain
    \begin{align}
        \Bar N_k^+(\state,\action)&=\frac{2\check\sigma^2(6\varepsilon_{K-1}+\epsilon)^2\Bar\ell_K(\state,\action)}{(\min_{a^\prime\in\mathcal{A}}A^{\cost,\ast}_{\widehat\mdp
\cup\tilde{\cost}}(\state,\action^\prime))^2\varepsilon_K^2}\allowdisplaybreaks\nonumber\\
        &=-\frac{4\sigma^2(6\varepsilon_{K-1}+\epsilon)^2}{(\min_{a^\prime\in\mathcal{A}}A^{\cost,\ast}_{\widehat\mdp
\cup\tilde{\cost}}(\state,\action^\prime))^2\varepsilon_K^2}W_{-1}\left(\frac{(\min_{a^\prime\in\mathcal{A}}A^{\cost,\ast}_{\widehat\mdp
\cup\tilde{\cost}}(\state,\action^\prime))^2\varepsilon_K^2}{4\sigma^2(6\varepsilon_{K-1}+\epsilon)^2}\sqrt\frac{\delta}{36SA}\right)\allowdisplaybreaks\nonumber\\
        &=\frac{8\sigma^2(6\varepsilon_{K-1}+\epsilon)^2}{(\min_{a^\prime\in\mathcal{A}}A^{\cost,\ast}_{\widehat\mdp
\cup\tilde{\cost}}(\state,\action^\prime))^2\varepsilon_K^2}\log\left(\frac{4\sigma^2(6\varepsilon_{K-1}+\epsilon)^2}{(\min_{a^\prime\in\mathcal{A}}A^{\cost,\ast}_{\widehat\mdp
\cup\tilde{\cost}}(\state,\action^\prime))^2\varepsilon_K^2}\sqrt\frac{36SA}{\delta}\right)\allowdisplaybreaks\nonumber\\
        &=\widetilde{\mathcal{O}}\Bigg(\frac{\sigma^2(6\varepsilon_{K-1}+\epsilon)^2}{(\min_{a^\prime\in\mathcal{A}}A^{\cost,\ast}_{\widehat\mdp
\cup\tilde{\cost}}(\state,\action^\prime))^2\varepsilon_K^2}\Bigg).
    \end{align}
    Summing over $n=\sum_{(\state,\action)\in\mathcal{S}\times\mathcal{A}}\Bar N^+_k(\state,\action)$, since the convergence of \greedy{} is stricter than \textcolor{purple}{PCSE}, i.e., (\ref{convergencecondition}) is always satisfied if (\ref{convergencecondition1}) is satisfied, the sample complexity of \greedy{} constitutes a lower bound for that of \textcolor{purple}{PCSE}.
    Recall the sample complexity of \greedy{} exploration strategy in Theorem \ref{SC:greedy}, we obtain 
    \begin{align}
        n\leq\widetilde{\mathcal{O}}\Bigg(\min\Bigg\{\widetilde{\mathcal{O}}\left(\frac{{\check\sigma}^2(2C_{\rm max}-\varepsilon_K(1-\gamma))^2}{(1-\gamma)^2\varepsilon_K^2C_{\rm \max}^2}\right),
    \frac{\sigma^2(6\varepsilon_{K-1}+\epsilon)^2SA}{(\min_{(\state,\action)}A^{\cost,\ast}_{\widehat\mdp
\cup\tilde{\cost}}(\state,\action))^2\varepsilon_K^2}
    \Bigg\}\Bigg).
    \end{align}
    Next, we explain the rationale for the assumption in Eq.~(\ref{CKassupmtion}). We have for every $\pi\in\Pi_k$,
    \begin{align}
        \|e_k(s,a;\pi^\ast,\widehat\pi^\ast)\|_\infty
        &\overset{(a)}{\leq}
        \gamma\mu_0^{T}(I_{\mathcal{\MakeUppercase{\state}}}-\gamma\pi\widehat\transition)^{-1}\pi\mathcal{C}_{k}\allowdisplaybreaks\nonumber\\
        &\overset{(b)}{\leq}
        \frac{\gamma\varepsilon_K}{6\varepsilon_{K}+\epsilon}\mu_0^{T}(I_{\mathcal{\MakeUppercase{\state}}}-\gamma\pi\widehat\transition)^{-1}\pi \left(-A^{\cost,\ast}_{\widehat\mdp
\cup\tilde{\cost}}\right)\allowdisplaybreaks\nonumber\\
        &\overset{(c)}{=}\frac{\gamma\varepsilon_K}{6\varepsilon_{K-1}+\epsilon}\mu_0^{T}\left(V^{\cost,\pi}_{\widehat\mdp
\cup\tilde{\cost}}-V^{\cost,\ast}_{\widehat\mdp
\cup\tilde{\cost}}\right)\overset{(e)}{\leq}\varepsilon_K,
    \end{align}
    \begin{itemize}

        \item (a) follows the matrix from the Bellman equation for the value function.

        \item (b) is based on the assumption in Eq. (\ref{CKassupmtion}).

        \item (c) follows \citep[Lemma B.3]{metelli2021provably}, where we treat $r=-\tilde{\cost}$ and note that $V^\pi_{\widehat\mdp\cup(-\tilde{\cost})}=-V^\pi_{\widehat\mdp\cup\tilde{\cost}}$, $Q^\pi_{\widehat\mdp\cup(-\tilde{\cost})}=-Q^\pi_{\widehat\mdp\cup\tilde{\cost}}$ and $A^\pi_{\widehat\mdp\cup(-\tilde{\cost})}=-A^\pi_{\widehat\mdp\cup\tilde{\cost}}$.

        \item (d) results from Lemma \ref{Lemma:Vdifinactive} and $\gamma<1$.
    \end{itemize}
\end{proof}

\subsection{Optimization Problem and the Two-Timescale Stochastic Approximation}

We can now formulate the optimization problem.
\begin{equation}
\begin{aligned}
\varepsilon_{k+1}&=\sup_{\substack{\mu_0\in\Delta^{{\mathcal{\MakeUppercase{\state}}}}\\\pi\in\Pi_{k}}}\mu_0^{T}(I_{\mathcal{\MakeUppercase{\state}}\times\mathcal{\MakeUppercase{\action}}}-\gamma\transition\pi)\mathcal{C}_{k+1}\\
\text{s.t.}
\quad\Pi_k&=\Pi_k^\cost\cap\Pi^\reward_k\\
\quad \Pi_k^\cost&=\left\{\pi\in\Delta_{\mathcal{\MakeUppercase{\state}}}^{\mathcal{\MakeUppercase{\action}}}:\sup_{\mu_0\in\Delta^{\mathcal{\MakeUppercase{\state}}}}\mu_0^{T}(V^{\cost,\pi}_{\widehat\mdp\cup\widehat{c_k}}-V^{\cost,\ast}_{\widehat\mdp\cup\widehat{c_k}})\leq 4\varepsilon_k+2\epsilon\right\}\\
\Pi^\reward_k&=\left\{\pi\in\Delta_{\mathcal{\MakeUppercase{\state}}}^{\mathcal{\MakeUppercase{\action}}}:\inf_{\mu_0\in\Delta^{\mathcal{\MakeUppercase{\state}}}}\mu_0^{T}\Big(V^{\reward,\pi}_{\widehat\mdp}-V^{\reward,\widehat{\pi}^*}_{\widehat\mdp}\Big)\geq \mathfrak{R}_k\right\}
\end{aligned}\label{optimizationproblem_appendix}
\end{equation}
where $\mathfrak{R}_k = \frac{2\gamma\MakeUppercase{\reward}_{\rm max}}{(1-\gamma)^2}\|\transition-\widehat\transition\|_\infty+\frac{\gamma\MakeUppercase{\reward}_{\rm max}}{(1-\gamma)^2}\|(\pi^*-{\widehat\pi}^*)\|_\infty$.

Recall that the discounted normalized occupancy measure is defined by
\begin{align}
    \occupancy^{\pi}_{\mdp}(\state,\action)
    &=(1-\gamma)\sum_{t=0}^\infty\gamma^t\mathbb{P}^\pi_{\mu_0}(\state_t=s,\action_t=a),
\end{align}   
where 
the normalizer $(1-\gamma)$ makes $\occupancy^{\pi}_{\mdp}(\state,\action)$ a probability measure, i.e., $\sum_{(\state,\action)}\occupancy^{\pi}_{\mdp}(\state,\action)=1$. 

The promised relationship between reward value function and occupancy measure is as follows:
\begin{align}
    (1-\gamma)V^{\reward,\pi}_{\mdp}
    &\overset{(a)}{=}(1-\gamma)\expect_{\mu_0,\pi,\transition}\Big[\sum_{t=0}^{\infty}\gamma^t\reward(\state_t,\action_t)\Big]\allowdisplaybreaks\nonumber\\
    &=(1-\gamma)\sum_{t=0}^\infty\gamma^t\sum_{(\state,\action)}\mathbb{P}_{\mu_0}^\pi(\state_t=s,\action_t=a)r(\state_t=s,\action_t=a)\allowdisplaybreaks\nonumber\\
    &\overset{(b)}{=}\sum_{(\state,\action)}\left[(1-\gamma)\sum_{t=0}^\infty\gamma^t\mathbb{P}_{\mu_0}^\pi(\state_t=s,\action_t=a)\right]\cdot\Big[r(\state_t=s,\action_t=a)\Big]\allowdisplaybreaks\nonumber\\
    &=\langle\occupancy^{\pi}_{\mdp},r\rangle,
\end{align}
where step (a) follows the definition of the reward state-value function, and step (b) exchanges the order of two summations.

Similarly, concerning the cost function,
the relationship between the cost value function and (the same) occupancy measure is as follows:
\begin{align}
    (1-\gamma)V^{\cost,\pi}_{\mdp}
    &=(1-\gamma)\expect_{\pi,\transition}\Big[\sum_{t=0}^{\infty}\gamma^t\cost(\state_t,\action_t)\Big]\allowdisplaybreaks\allowdisplaybreaks\nonumber\\
    &=(1-\gamma)\sum_{t=0}^\infty\gamma^t\sum_{(\state,\action)}\mathbb{P}_{\mu_0}^\pi(\state_t=s,\action_t=a)\cost(\state_t=s,\action_t=a)\allowdisplaybreaks\allowdisplaybreaks\nonumber\\
    &=\sum_{(\state,\action)}\left[(1-\gamma)\sum_{t=0}^\infty\gamma^t\mathbb{P}_{\mu_0}^\pi(\state_t=s,\action_t=a)\right]\cdot\left[\cost(\state_t=s,\action_t=a)\right]\allowdisplaybreaks\allowdisplaybreaks\nonumber\\
    &=\langle\occupancy^{\pi}_{\mdp},\cost\rangle.
\end{align}

For simplicity, denote the occupancy measure vector $\occupancy^{\pi}_{\mdp}$ as vector $x$. As a result, the optimization problem (\ref{optimizationproblem_appendix}) can be recast as a linear program. 
\begin{equation}
    \begin{aligned}
    \min_x & \quad -\langle x,\mathcal{C}_{k+1}\rangle\\
    \text{s.t.} & \quad -(1-\gamma)(V^{\cost,\ast}_{\widehat\mdp\cup\widehat{c_k}}+4\varepsilon_k+2\epsilon)+\langle x,\widehat{c_k}\rangle\leq0\\
    & \quad (1-\gamma)(V^{\reward,\widehat{\pi}^*}_{\widehat\mdp}+\mathfrak{R}_k)-\langle x,\reward\rangle\leq0
    \end{aligned}
\end{equation}
To solve this linear program, we introduce the Lagrangian function and calculate its saddle points by solving the dual problem.  The Lagrangian of this primal problem is defined as:
\begin{align}
    L(x,\lambda)=&-\langle x,\mathcal{C}_{k+1}\rangle+\lambda_1\left(-(1-\gamma)(V^{\cost,\ast}_{\widehat\mdp\cup\widehat{c_k}}+4\varepsilon_k+2\epsilon)+\langle x,\widehat{c_k}\rangle\right)\allowdisplaybreaks\nonumber\\
    &+\lambda_2\Big((1-\gamma)(V^{\reward,\widehat{\pi}^*}_{\widehat\mdp}+\mathfrak{R}_k)-\langle x,\reward\rangle\Big),
\end{align}
where $\lambda=[\lambda_1, \lambda_2]^T$ is a nonnegative real vector, composed of so-called Lagrangian multipliers. 
The dual problem is defined as:
\begin{align}
    \min_{x}\max_{\lambda\geq 0}L(x,\lambda).
\end{align}
To solve this dual problem, we follow a gradient-based approach known as the two-timescale stochastic approximation \citep{Csaba}. At time step $k$, the following updates are conducted,
\begin{align}
    x_{k+1}-x_k&=-a_k(L^\prime_x(x_k,\lambda_k)+W_k),\\
    \lambda_{k+1}-\lambda_k&=b_k(L^\prime_\lambda(x_k,\lambda_k)+U_k),
\end{align}
where the two coefficients $a_k\ll b_k$, satisfying $\sum_k a_k=\sum b_k=\infty$, $\sum a_k^2<\infty$ and $\sum b_k^2<\infty$. Under this condition, the convergence is guaranteed in the limit. As an option, we can set $a_k=c/k,\,b_k=c/{k^{0.5+\kappa}}$,
with $c$ being a constant and $0<\kappa<0.5$.
$W_k$ and $U_k$ are two zero-mean noise sequences. The two gradients are:
\begin{align}
    L^\prime_x(x_k,\lambda_k)&=-\mathcal{C}_{k+1}+\lambda_1\widehat{c_k}-\lambda_2\reward,\allowdisplaybreaks\\
    L^\prime_{\lambda}(x_k,\lambda_k)&=\begin{bmatrix} L^\prime_{\lambda_1}(x_k,\lambda_k) \allowdisplaybreaks\\ L^\prime_{\lambda_2}(x_k,\lambda_k)  \end{bmatrix}=\begin{bmatrix} -(1-\gamma)(V^{\cost,\ast}_{\widehat\mdp\cup\widehat{c_k}}+4\varepsilon_k+2\epsilon)+\langle x,\widehat{c_k}\rangle \\ (1-\gamma)(V^{\reward,\widehat{\pi}^*}_{\widehat\mdp}+\mathfrak{R}_k)-\langle x,\reward\rangle \end{bmatrix}.\allowdisplaybreaks
\end{align}
At each time step $k$, the exploration policy can be calculated as,
\begin{align}
    \pi_k(\action|\state)=\frac{x_k(\state,\action)}{\sum_\action x_k(\state,\action)}.
\end{align}

\section{Experimental Details}\label{ExperimentalDetails}
We ran experiments on a desktop computer with Intel(R) Core(TM) i5-14400F and NVIDIA GeForce RTX 2080 Ti.
\subsection{Discrete Environment}\label{discreteenvironment}

\textbf{More details about Gridworld.} In this paper, we create a map with dimensions of $7\times 7$ units and define four distinct settings, as illustrated in Figure~\ref{fig:discretesetting}. We use two coordinates to
represent the location, where the first coordinate corresponds to the vertical axis, and the second coordinate corresponds to the horizontal axis.
The agent aims to navigate from a starting location to a target location while avoiding the given constraints.
The agent starts in the lower left cell $(0,0)$, and it has $8$
actions that correspond to $8$ adjacent directions, including four cardinal directions (up, down, left, right) as well as the four diagonal directions (upper-left, lower-left, upper-right, lower-right). The reward and target locations are the same, which is located in the upper right cell $(6,6)$ for the first, second, and fourth Gridworld environment or located in the upper left cell $(6,0)$ for the third Gridworld environment. 
If the agent takes an action, then with probability $0.05$ this action fails, and the
agent moves in any viable random direction (including the direction this action leads to) with uniform probabilities. The reward in the reward state cell is $1$, while all other cells have a $0$ reward. The cost in a constraint location is also $1$. The game continues until a maximum time step
of $50$ is reached.

\textbf{Comparison Methods.} The upper confidence bound exploration strategy is derived from the UCB algorithm, which selects an action with the highest upper bound. The maximum-entropy strategy selects an action on a state with the maximum entropy given previous choices of actions. The random strategy uniformly randomly selects a viable action on a state $\state$. The $\epsilon$-greedy strategy selects an action based on the $\epsilon$-greedy algorithm, balancing exploration and exploitation with the exploration parameter $\epsilon=1/\sqrt{k}$.

\textbf{More details about Figure~\ref{trainingcurves}.} In Figure~\ref{trainingcurves}, we plot the mean and $68\%$ confidence interval (1-sigma error bar) computed with $5$ random seeds ($123456,123,1234,36,34$) and exploration episodes $n_e=1$. The six exploration strategies compared in Figure~\ref{trainingcurves} include: upper confidence bound, maximum-entropy, random, \greedy, $\epsilon$-greedy, and \textcolor{purple}{PCSE}. Meanwhile, we utilize the running score to make the training process more resilient to environmental stochasticity: $running\_score=0.2*running\_score+0.8*iteration\_rewards\,({\rm or}\,iteration\_costs)$~\citep{Luo2022SplineDQN}.

\subsection{Weighted Generalized Intersection over Union (WGIoU)}\label{metric:WGIOU}

In this section, we present the methodology for designing the metric that assesses the similarity between the estimated and ground-truth cost functions, which we refer to as WGIoU. We commence our discussion by explaining IoU, followed by GIoU, and ultimately introduce the novel concept of WGIoU for ICRL.

Intersection Over Union (IoU) score is a commonly used metric in the field of object detection, which measures how similar two sets are. The IoU score is bounded in $[0, 1]$ ($0$ being no overlap between two sets and $1$ being complete overlap). Suppose there are two sets $X$ and $Y$,
\begin{align*}
    {\rm IoU} = \frac{|X \cap Y|}{|X \cup Y|}.
\end{align*}

Note that IoU equals to zero for all two sets with no overlap, which is a rough metric and incurs the problem of vanishing gradients.
To further measure the difference between two sets with no overlap, Signed IoU (SIoU) \citep{simonelli2019disentangling} and Generalized IoU (GIoU) \citep{rezatofighi2019generalized} are proposed. Both SIoU and GIoU are bounded in $[-1, 1]$. However, SIoU is constrained to a rectangular bounding box, which is not the case for the cost function. By contrast, GIoU is not limited to rectangular boxes.
Thus, GIoU is more suitable for comparing the distance between the estimated cost function and the ground-truth cost function.
\begin{align*}
    {\rm GIoU} = {\rm IoU} - \frac{|Z\backslash(X \cup Y)|}{|Z|},
\end{align*}
where set $Z$ is the minimal enclosing convex set that contains both $X$ and $Y$.
Taking cost function into account, the difference between $\widehat{\cost}_k$ the estimated cost function at iteration $k$ and $\cost$ the ground-truth cost function is calculated as,
\begin{align*}
    {\rm GIoU} = \frac{|\cost \cap \widehat{\cost}_k|}{|\cost \cup \widehat{\cost}_k|} - \frac{|(\cost\oplus\widehat{\cost}_k)\backslash(\cost \cup \widehat{\cost}_k)|}{|\cost\oplus\widehat{\cost}_k|},
\end{align*}
where $\widehat{\cost}_k\oplus\cost$ denotes the enclosing convex matrix of $\cost$ and $\widehat{\cost}_k$.

Note that the estimated cost function $\widehat{\cost}_k$ could have different values, but GIoU only reflects spatial relationship and is unable to represent weight features. To accommodate our settings, weighted GIoU (WGIoU) is proposed, where we measure the distance between a weighted estimated cost function and a uniformly valued (or weighted) ground-truth cost function. WGIoU is also bounded in $[-1,1]$. To calculate WGIoU, first, remap the cost function to $\left(\{0\}\cup[1,+\infty)\right)^{\mathcal{S}\times\mathcal{A}}$,
\begin{align}
    \widehat{\cost}_k^\divideontimes(\state,\action)=\frac{\widehat{\cost}_k(\state,\action)}{\min\left\{\min_{(\state,\action)\in\mathcal{\MakeUppercase{\state}}\times\mathcal{\MakeUppercase{\action}}}^{+}\widehat{\cost}_k(\state,\action),\min_{(\state,\action)\in\mathcal{\MakeUppercase{\state}}\times\mathcal{\MakeUppercase{\action}}}^{+}\cost(\state,\action)\right\}},\label{WGIOU:1}\\
    \quad\cost^\divideontimes(\state,\action)=\frac{\cost(\state,\action)}{\min\left\{\min_{(\state,\action)\in\mathcal{\MakeUppercase{\state}}\times\mathcal{\MakeUppercase{\action}}}^{+}\widehat{\cost}_k(\state,\action),\min_{(\state,\action)\in\mathcal{\MakeUppercase{\state}}\times\mathcal{\MakeUppercase{\action}}}^{+}\cost(\state,\action)\right\}}.\label{WGIOU:2}
\end{align}
where ${\rm min}^+_{(\state,\action)\in\mathcal{\MakeUppercase{\state}}\times\mathcal{\MakeUppercase{\action}}}$ returns the minimum positive value of $\widehat{\cost}_k$ or $\cost$ over all $(\state,\action)$ pairs. Note that $c$ must exceed $0$ at certain $(s,a)$. Otherwise, the cost function is zero, indicating an absence of constraint anywhere. Also note that if $\widehat{\cost}_k$ is zero, let $\widehat{\cost}_k^\divideontimes(\state,\action)=0$ and $\cost^\divideontimes(\state,\action)={\cost(\state,\action)}/{\min_{(\state,\action)\in\mathcal{\MakeUppercase{\state}}\times\mathcal{\MakeUppercase{\action}}}^{+}\cost(\state,\action)}$. Besides the two trivial situations, the above two equations (\ref{WGIOU:1} and \ref{WGIOU:2}) can be applied naturally.

Then, ${\rm WGIoU}$ is defined as:
\begin{align}
    {\rm WGIoU} = \frac{ \langle\widehat{\cost}_k^\divideontimes, \cost^\divideontimes\rangle}{\langle\boldsymbol{1},\max\{\widehat{\cost}_k^\divideontimes,\cost^\divideontimes,\langle\widehat{\cost}_k^\divideontimes, \cost^\divideontimes\rangle\}\rangle}+\left({\rm e}^{-\langle\boldsymbol{1},\max\{\widehat{\cost}_k^\divideontimes,\cost^\divideontimes\}\rangle}-1\right)\mathds{1}\left\{\langle\widehat{\cost}_k^\divideontimes, \cost^\divideontimes\rangle=0\right\},\nonumber
\end{align}
where $\boldsymbol{1}$ denotes the vector with appropriate length whose elements are all $1$s. The rationale here can be understood by distinguishing two cases. For the first case, there is an overlap between $\widehat{\cost}_k$ and $\cost$, so the second term in WGIoU is $0$. For the first term, for some $(s,a)$, 1) if both $\widehat{\cost}_k^\divideontimes(\state,\action)\geq 1$ and $\cost^\divideontimes(\state,\action)\geq 1$, WGIoU approaches $1$; 2) if either $\widehat{\cost}_k^\divideontimes(\state,\action)=0$ or $\cost^\divideontimes(\state,\action)=0$, WGIoU approaches $0$. For the second case, so there is no overlap between $\widehat{\cost}_k$ and $(\state,\action)$, the first term in WGIoU is $0$. The second term is always below $0$ and approaches $-1$ if the estimated and ground-truth cost functions contain large values.

\subsection{Continuous Environment}\label{continuousenvironment}
\textbf{Density model.} Recall that in Definition \ref{def:pseudocount}, the concept of pseudo-counts is introduced to analyze the uncertainty of the transition dynamics without a generative model. Here, we abuse the concept of pseudo-counts for generalizing count-based exploration algorithms to the non-tabular settings \citep{10.5555/3157096.3157262}.
Let $\rho$ be a density model on a finite space $\mathcal{X}$, and $\rho_n(x)$ the
probability assigned by the model to $x$ after being trained on a sequence of states $x_1,\ldots,x_n$. Assume $\rho_n(x)>0$ for all $x,n.$ The recoding probability $\rho_n^{\prime}(x)$ is then the probability the model would assign to $x$ if it was trained on that same $x$ one more time. We call $\rho$ \textit{learning-positive} if $\rho_{n}^{\prime}(x)\geq\rho_{n}(x)$ for all $x_1,\ldots,x_n,x\in\mathcal{X}$. 
A learning-positive $\rho$ implies $\mathrm{PG}_n(x)\geq0$ for all $x\in\mathcal{X}.$ For learning-positive $\rho$, we define the \textit{pseudo-count} as $
    \hat{\mathrm{N}}_{n}(x)=\rho_{n}(x)\cdot n,$
where $n$ is the total count.
The pseudo-count generalizes the usual state visitation count function $\mathrm{N}_n(x)$, also called the empirical count function or simply empirical count, which equals the number of occurrences of a state in the sequence.

\orange{\textbf{Methods.} We first train a Deep Q Network (DQN) in advance that stores the Q values of the constrained Point Maze environment. This DQN induces the expert policy at any given state. We also train a density model that accounts for calculating the pseudo-count of any given state-action pairs. The agent then collects samples from an unconstrained Point Maze environment where it could violate constraints. For algorithm \textcolor{teal}{BEAR}, Proximal Policy Optimization (PPO) is utilized to obtain the exploration policy $\pi_k$. For algorithm \textcolor{purple}{PCSE}, we rank $8$ permissible actions for the exploration policy. The action that has a high estimated cost or a high reward is assigned with more probability to choose from. After the rollout of this exploration policy, the density model and accuracy are updated for the selection of the next exploration policy. Multiple rounds of iterations are conducted until the target accuracy is achieved.}

\textbf{Point Maze.} In this environment, we create a map of $5{\rm m}\times 5{\rm m}$, where the area of each cell is $1{\rm m}\times 1{\rm m}$. The center of the map is the original point, i.e. $(0,0)$. The constraint is initially set at the cell centered at $(-1,0)$.
The agent is a 2-DoF ball, force-actuated in the cartesian directions x and y. The reward obtained by the agent depends on where the agent reaches a target goal in a closed maze. The ball is considered to have reached the goal if the Euclidean distance between the ball and the goal is smaller than $0.5{\rm m}$. The reward in the reward state cell is $1$, while all other cells have a $0$ reward. The cost in a constraint location is also $1$. The game terminates when a maximum time step of $500$ is reached. The state space dimension is continuous and consists of $4$ dimensions (two as the x and y coordinates of the agent and two as the linear velocity in the x and y direction). The action space is discrete, and at each state, there are $8$ permissible actions ($8$ directions to add a linear force), similar to the action space of the Gridworld environment. The environment has a certain degree of stochasticity because there is a sampled noise from a uniform distribution to the cell’s $(x,y)$ coordinates.

\section{More Experimental Results}\label{Moreexperimentalresults}
\subsection{Gridworld Environments}\label{Gridworld EnvironmentsGraphs}
Figure~\ref{learning_c1}, \ref{learning_c2},
\ref{learning_c3} and \ref{learning_c4} show the constraint learning process of six exploration strategies in four Gridworld environments, i.e., Gridwworld-1, 2, 3, and 4. Note that in Figure~\ref{learning_c2} (Gridworld-2) and Figure~\ref{learning_c4} (Gridworld-4), only a fraction of ground-truth constraint is learned. This is attributed to ICRL's emphasis on identifying the minimum set of constraints necessary to explain expert behavior. Venturing into unidentified part of ground-truth constraints will not yield any advantages for cumulative rewards.

\begin{figure*}[htbp]
	\centering
	\begin{minipage}[t]{0.24\linewidth}
		\centering
		\includegraphics[width=4.2cm]{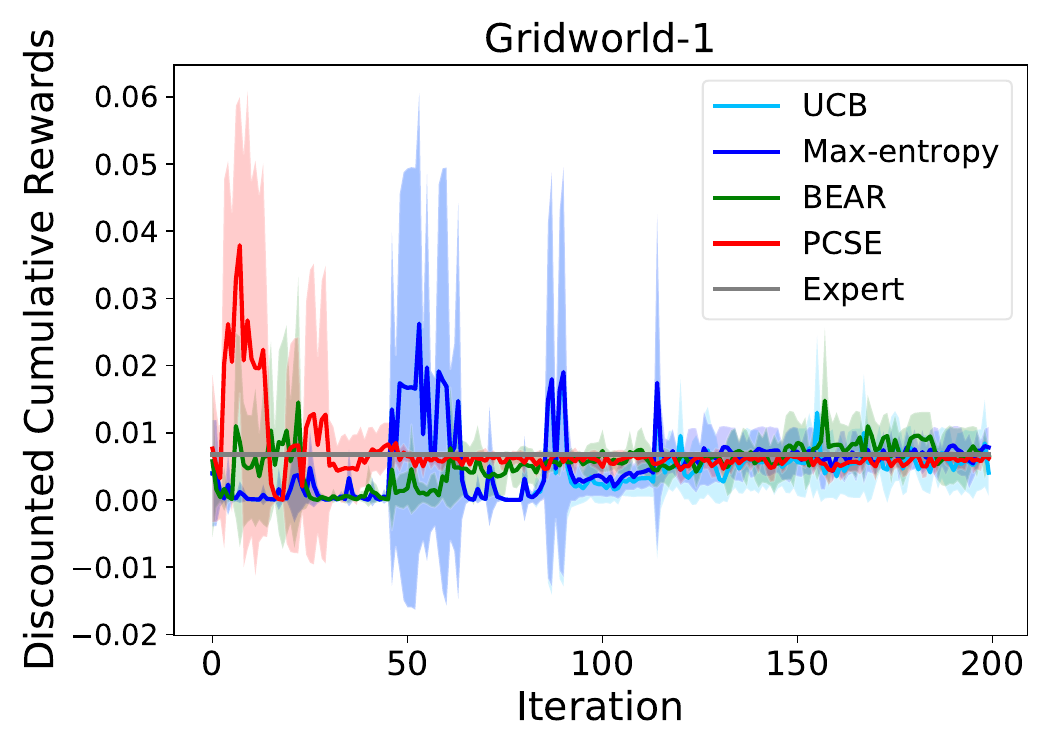} \\
        \includegraphics[width=4.2cm]{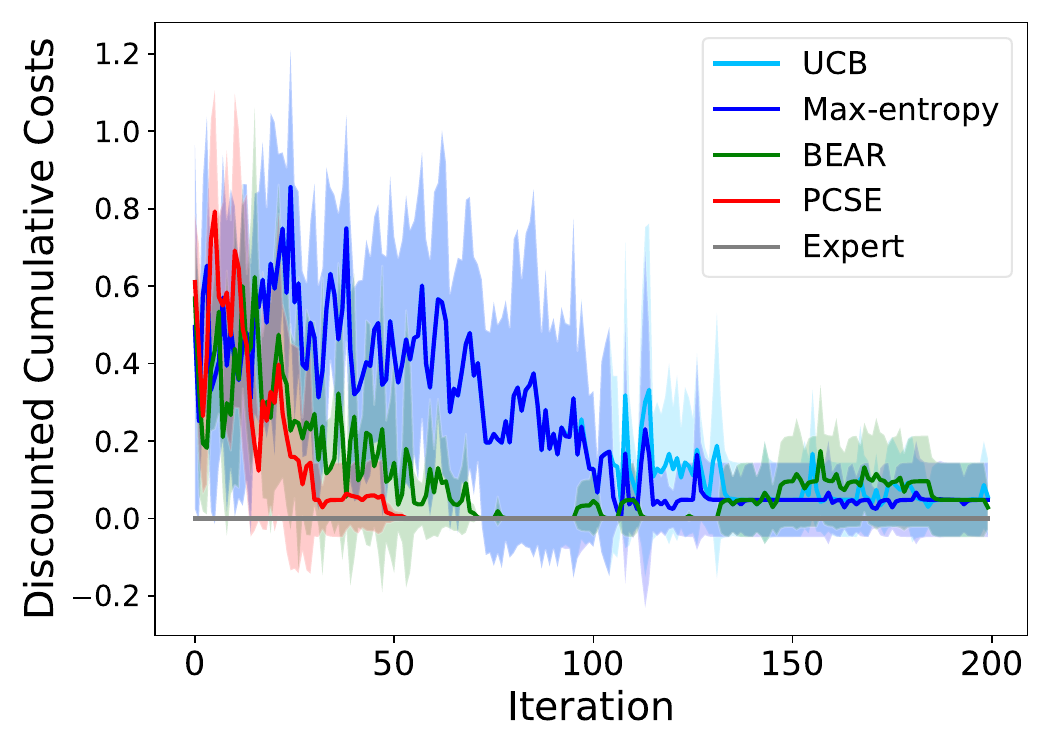} \\
        \includegraphics[width=4.2cm]{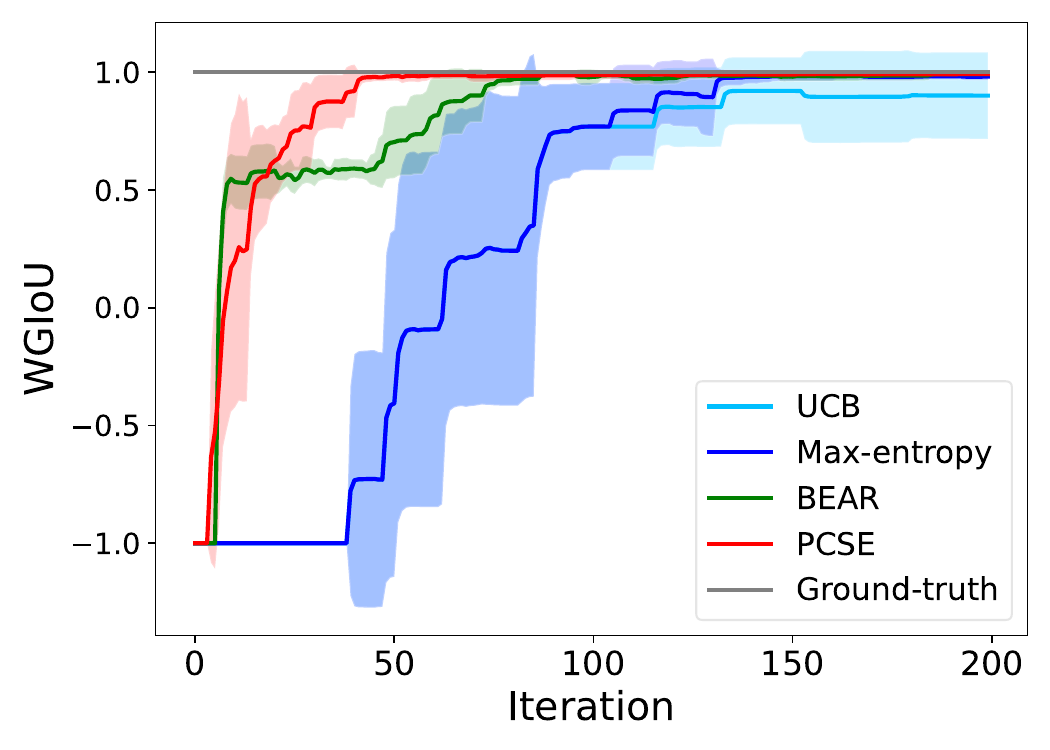}
	\end{minipage}
	\begin{minipage}[t]{0.24\linewidth}
		\centering
		\includegraphics[width=4.2cm]{figures/main/2_r_shadow.pdf} \\
        \includegraphics[width=4.2cm]{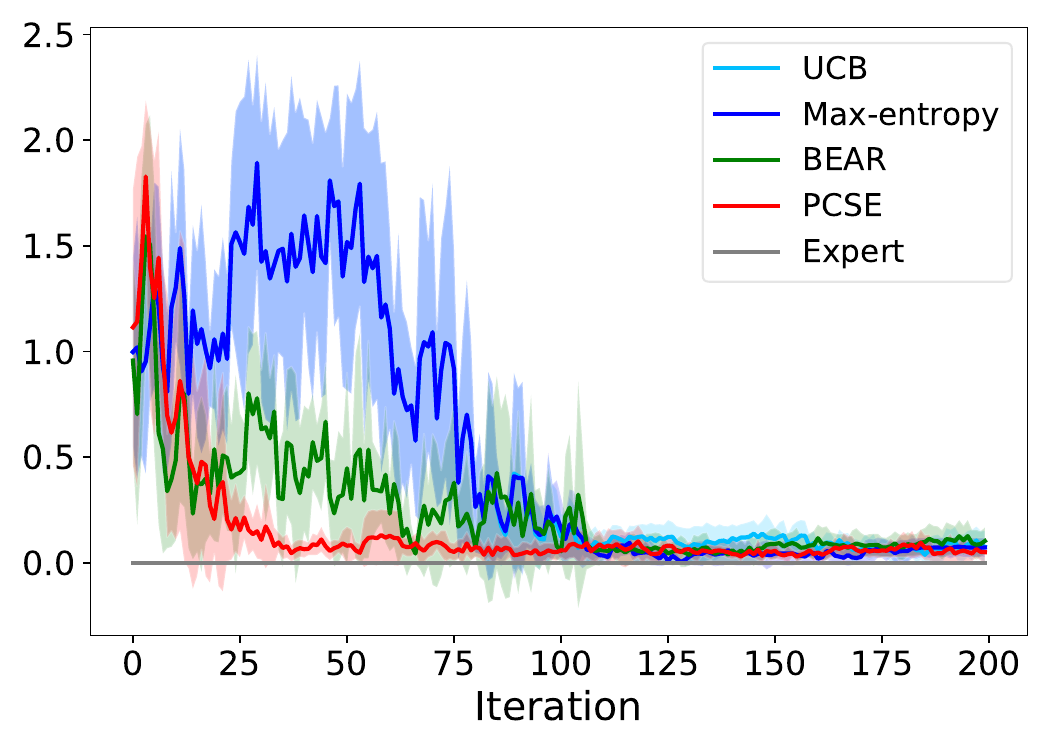} \\
        \includegraphics[width=4.2cm]{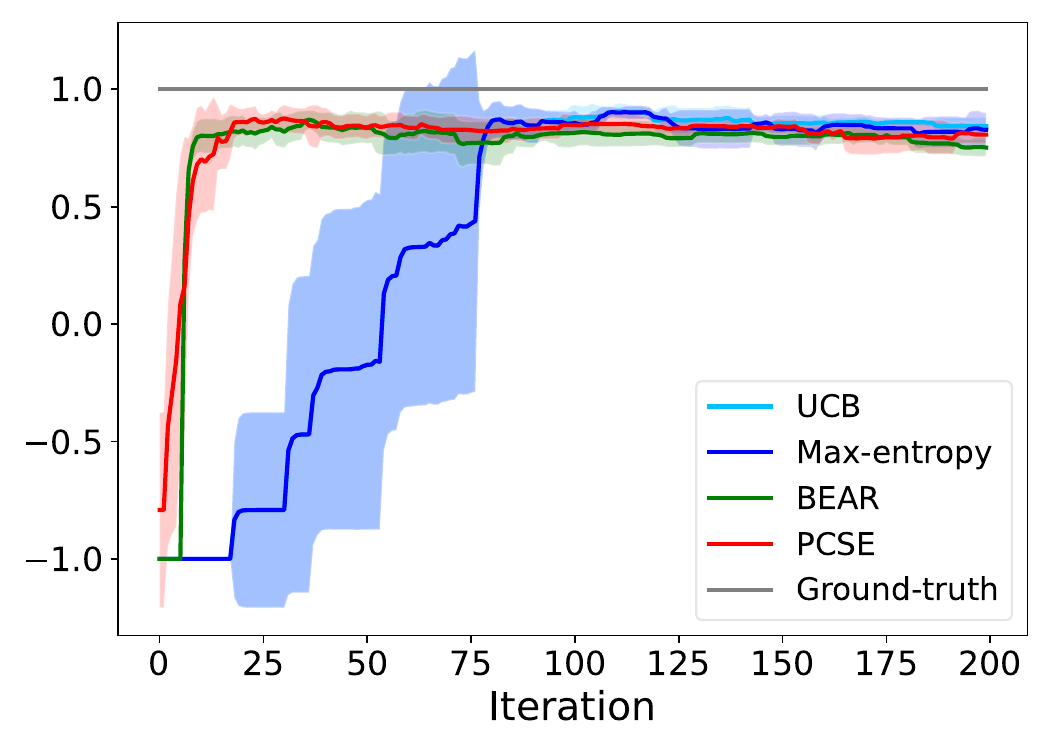}
	\end{minipage}
	\begin{minipage}[t]{0.24\linewidth}
		\centering
		\includegraphics[width=4.2cm]{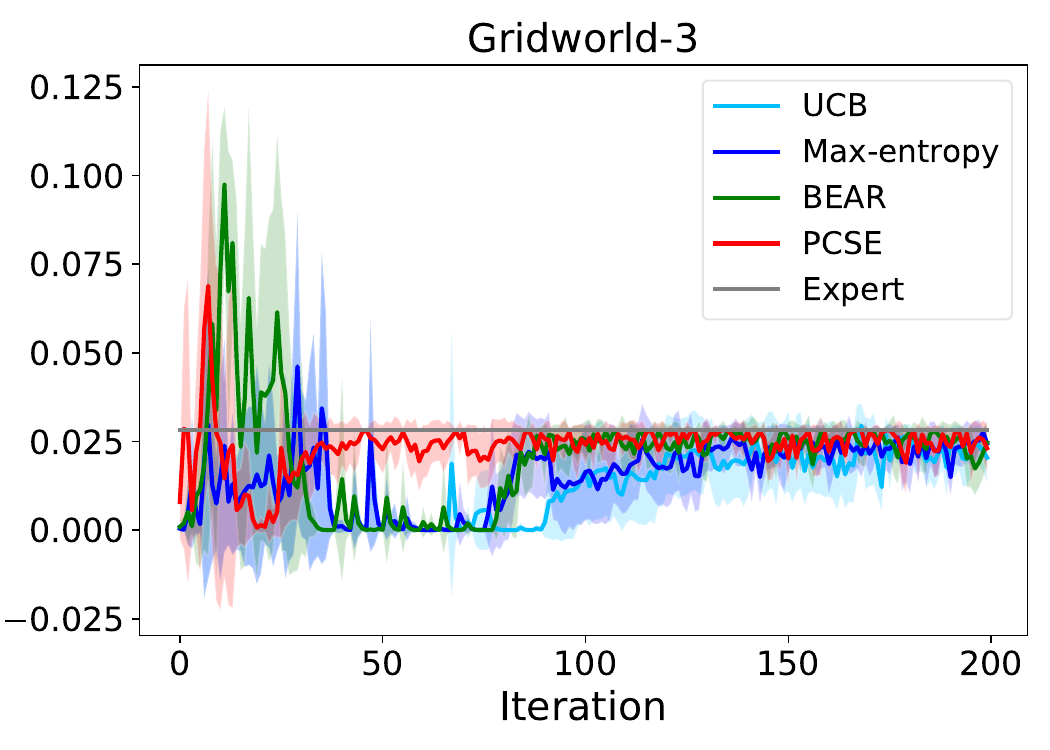} \\
        \includegraphics[width=4.2cm]{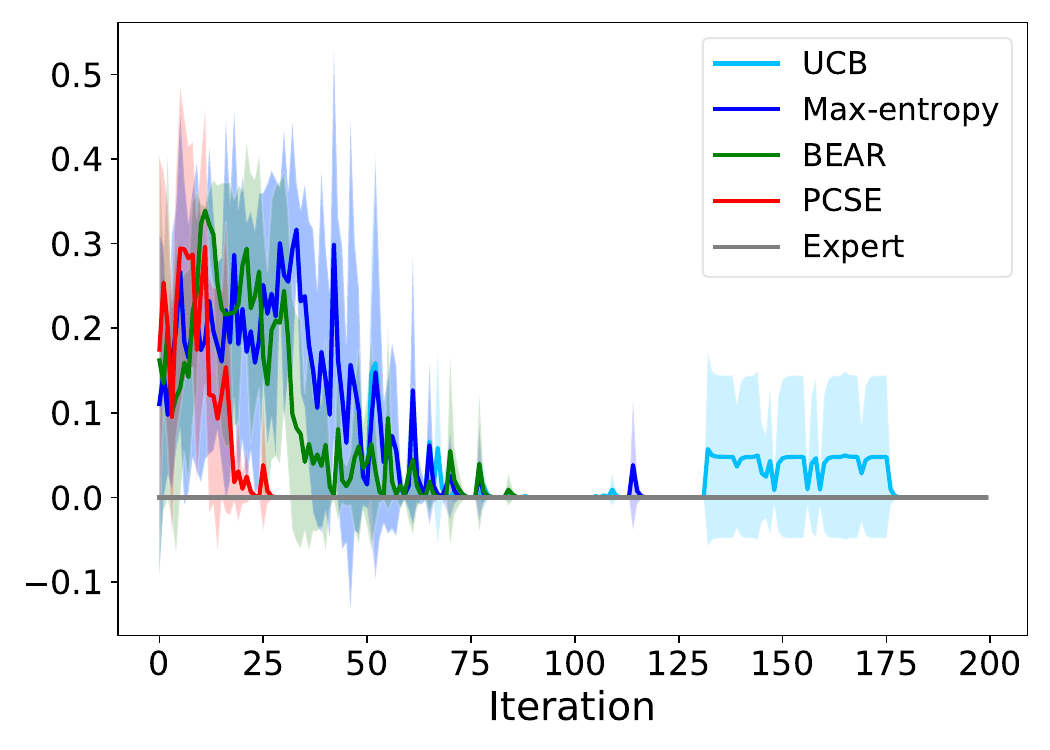} \\
        \includegraphics[width=4.2cm]{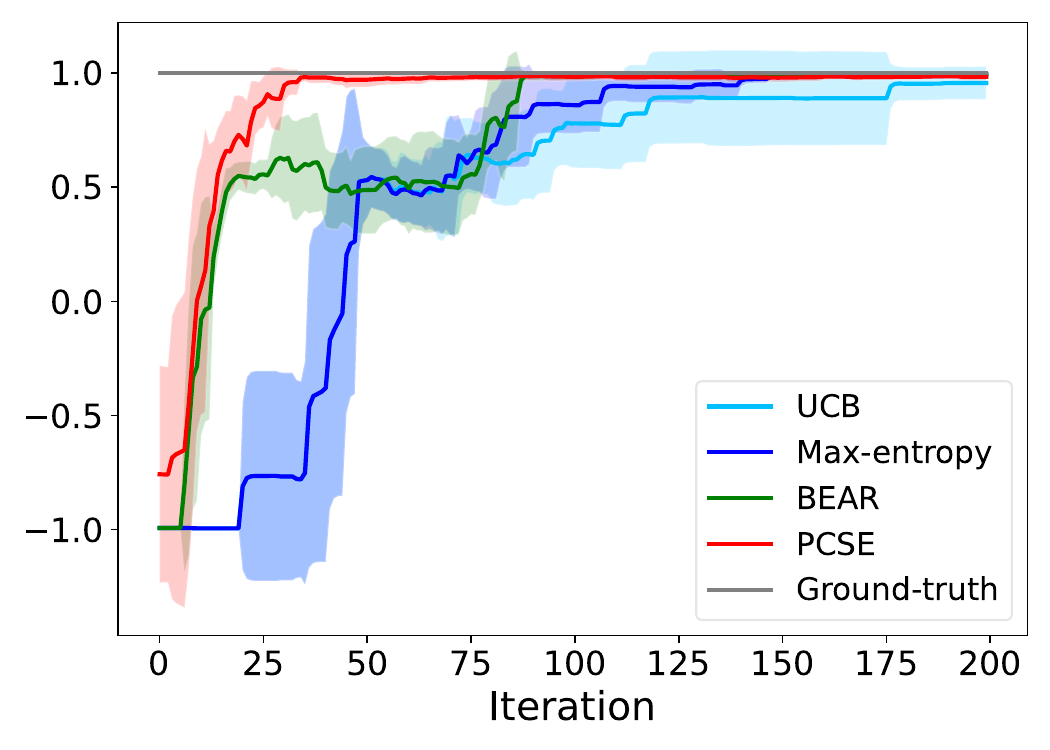}
	\end{minipage}
    \begin{minipage}[t]{0.24\linewidth}
		\centering
		\includegraphics[width=4.2cm]{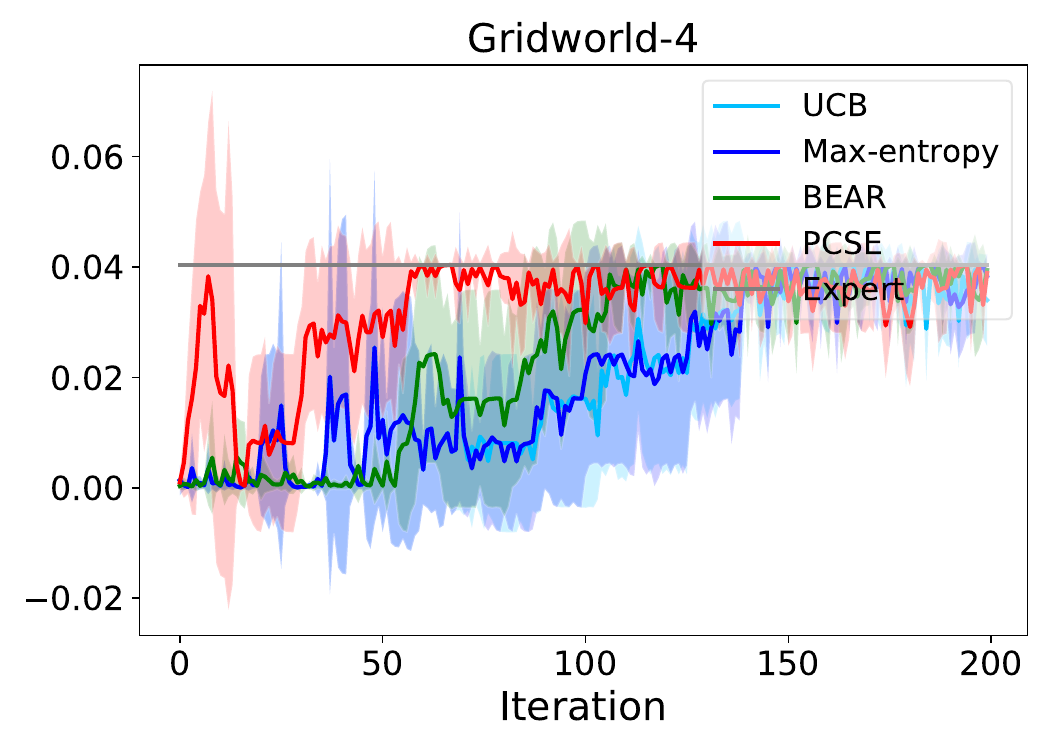} \\
        \includegraphics[width=4.2cm]{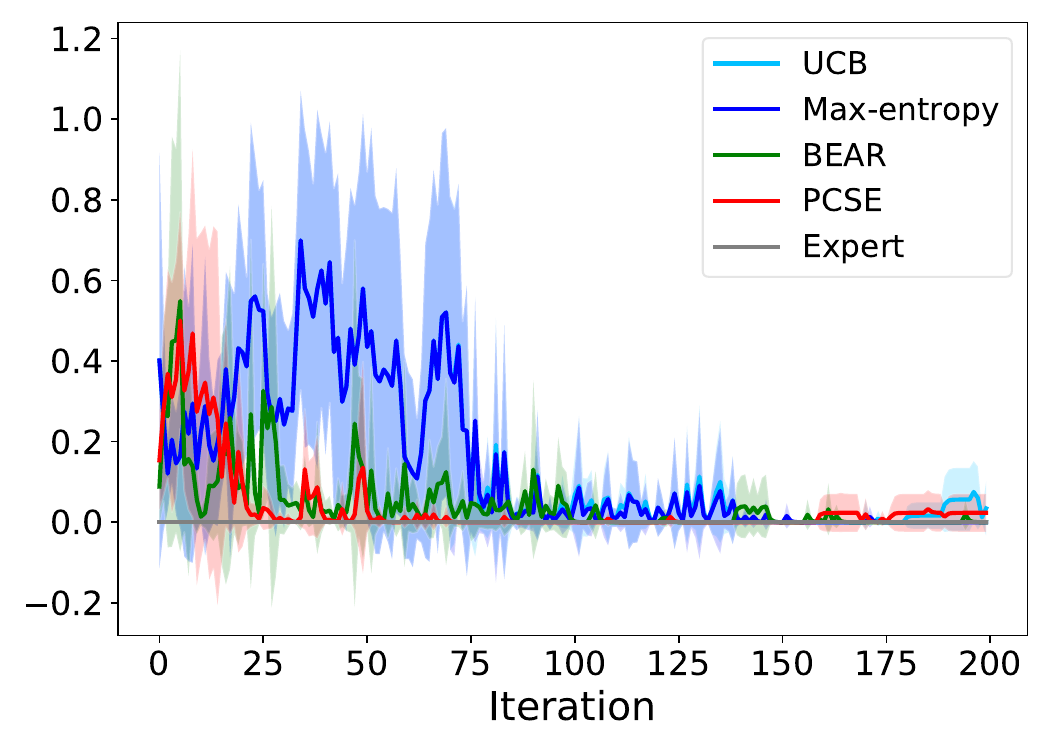} \\
        \includegraphics[width=4.2cm]{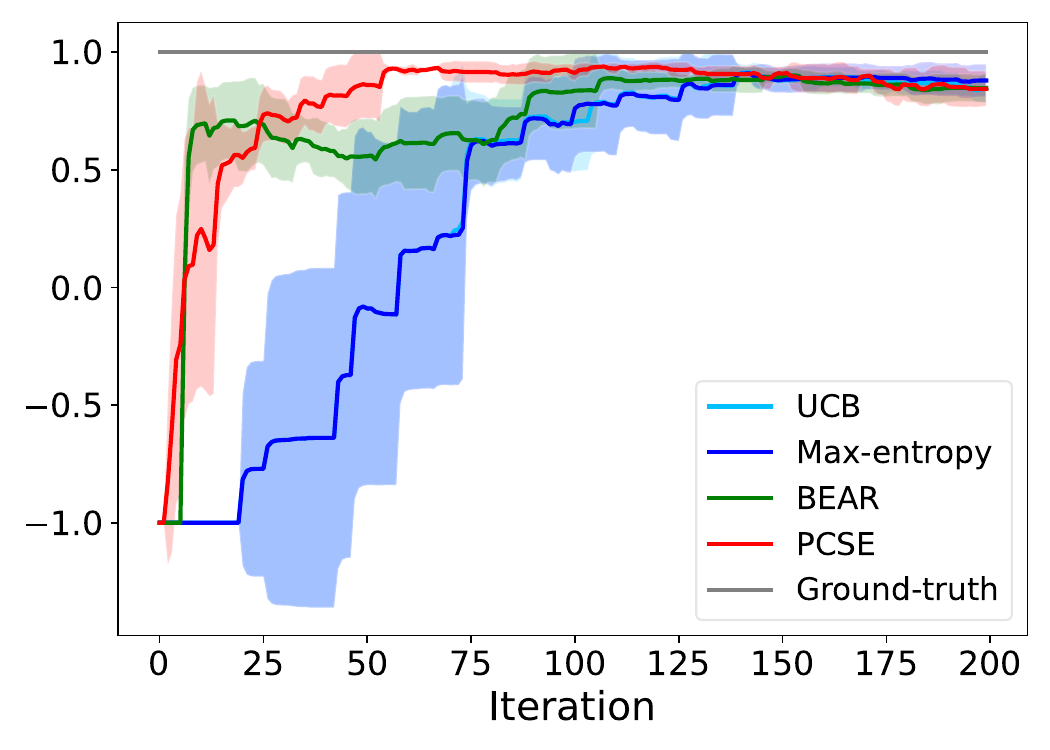}
	\end{minipage}
\caption{Training curves of discounted cumulative rewards (top), costs (middle), and WGIoU (bottom) for two other exploration strategies in four Gridworld environments.}\label{trainingcurves-2}
\end{figure*}

\subsection{Point Maze Environment}\label{PointMazeEnvironment}
Figure~\ref{learning_continuous} shows the constraint learning process of \textcolor{purple}{PCSE} in the Point Maze environment. 
\begin{figure*}[htbp]
\centering
	\begin{minipage}[t]{0.45\linewidth}
		\centering
        \includegraphics[page=1,width=6.6cm]{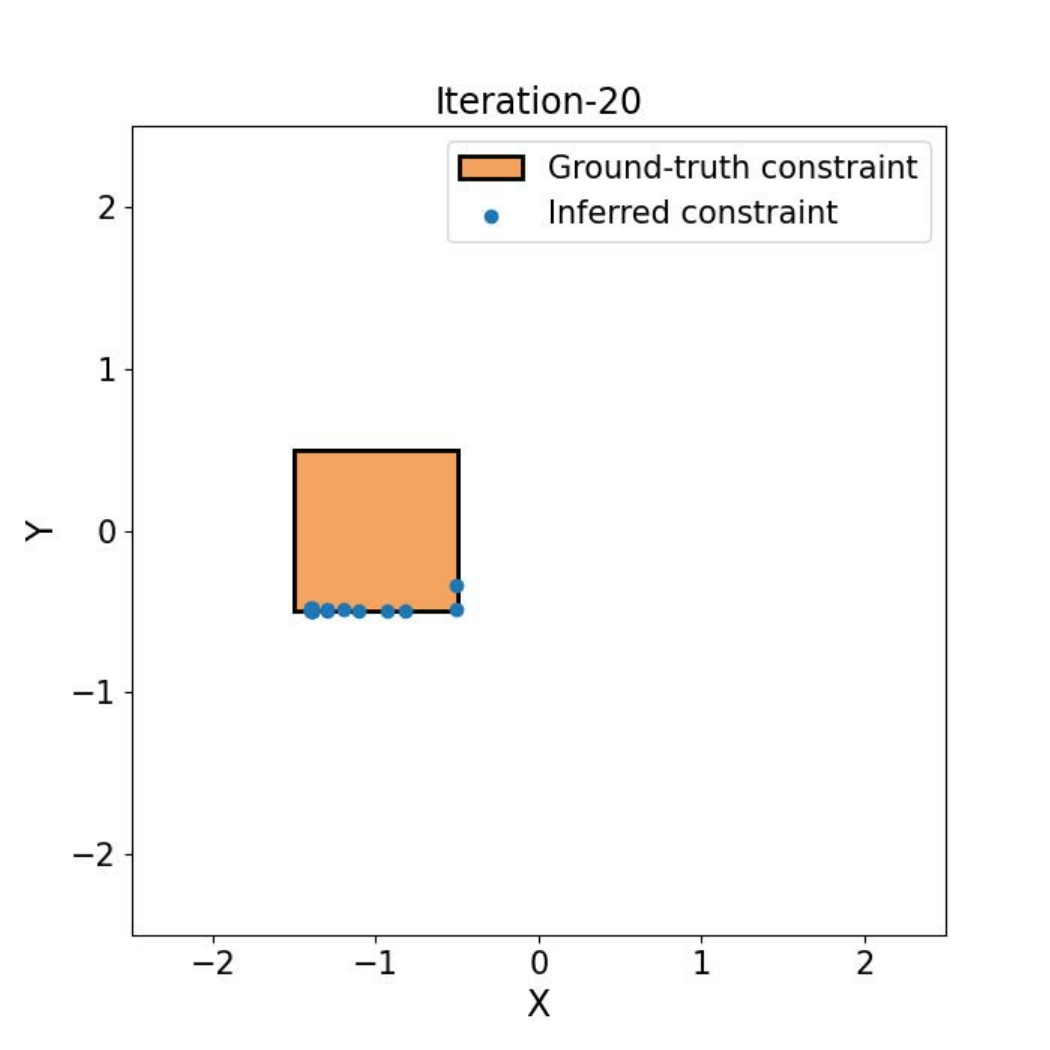}
	\end{minipage}
	\begin{minipage}[t]{0.45\linewidth}
		\centering
        \includegraphics[page=2,width=6.6cm]{figures/Point_maze_active.pdf}
	\end{minipage}
    \begin{minipage}[t]{0.45\linewidth}
		\centering
        \includegraphics[page=3,width=6.6cm]{figures/Point_maze_active.pdf}
	\end{minipage}
	\begin{minipage}[t]{0.45\linewidth}
		\centering
        \includegraphics[page=4,width=6.6cm]{figures/Point_maze_active.pdf}
	\end{minipage}
    \begin{minipage}[t]{0.45\linewidth}
		\centering
        \includegraphics[page=5,width=6.6cm]{figures/Point_maze_active.pdf}
	\end{minipage}
    \begin{minipage}[t]{0.45\linewidth}
		\centering
        \includegraphics[page=6,width=6.6cm]{figures/Point_maze_active.pdf}
	\end{minipage}
\caption{Constraint learning performance of \textcolor{purple}{PCSE} for ICRL in the Point Maze environment.}\label{learning_continuous}
\end{figure*}

\begin{figure*}[htbp]
	\begin{minipage}[t]{0.15\linewidth}
		\centering
        \includegraphics[page=2,width=3cm]{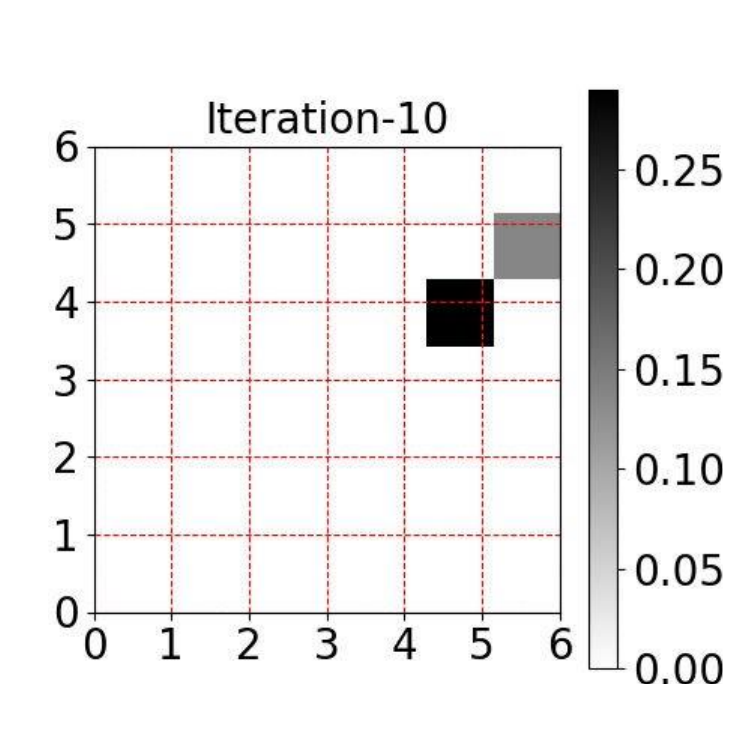}
	\end{minipage}
    \hspace{1.7mm}
	\begin{minipage}[t]{0.15\linewidth}
		\centering
        \includegraphics[page=3,width=3cm]{figures/active_learning_c1.pdf}
	\end{minipage}
    \hspace{1.7mm}
    \begin{minipage}[t]{0.15\linewidth}
		\centering
        \includegraphics[page=5,width=3cm]{figures/active_learning_c1.pdf}
	\end{minipage}
    \hspace{1.7mm}
	\begin{minipage}[t]{0.15\linewidth}
		\centering
        \includegraphics[page=8,width=3cm]{figures/active_learning_c1.pdf}
	\end{minipage}
    \hspace{1.7mm}
    \begin{minipage}[t]{0.15\linewidth}
		\centering
        \includegraphics[page=9,width=3cm]{figures/active_learning_c1.pdf}
	\end{minipage}
    \hspace{1.7mm}
    \begin{minipage}[t]{0.15\linewidth}
		\centering
        \includegraphics[page=11,width=3cm]{figures/active_learning_c1.pdf}
	\end{minipage}\\
	\begin{minipage}[t]{0.15\linewidth}
		\centering
        \includegraphics[page=2,width=3cm]{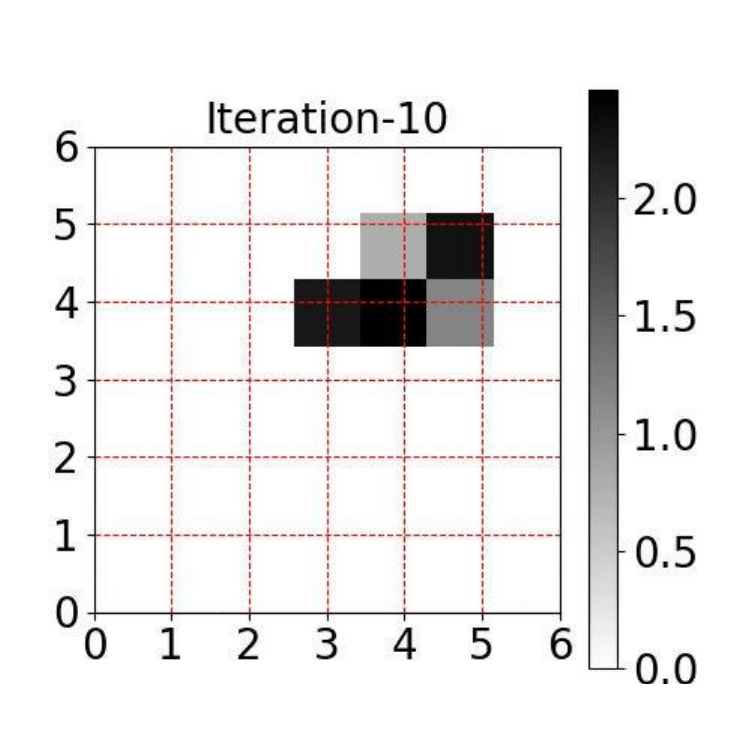}
	\end{minipage}
    \hspace{1.7mm}
	\begin{minipage}[t]{0.15\linewidth}
		\centering
        \includegraphics[page=3,width=3cm]{figures/greedy_learning_c1.pdf}
	\end{minipage}
    \hspace{1.7mm}
    \begin{minipage}[t]{0.15\linewidth}
		\centering
        \includegraphics[page=5,width=3cm]{figures/greedy_learning_c1.pdf}
	\end{minipage}
    \hspace{1.7mm}
	\begin{minipage}[t]{0.15\linewidth}
		\centering
        \includegraphics[page=8,width=3cm]{figures/greedy_learning_c1.pdf}
	\end{minipage}
    \hspace{1.7mm}
    \begin{minipage}[t]{0.15\linewidth}
		\centering
        \includegraphics[page=9,width=3cm]{figures/greedy_learning_c1.pdf}
	\end{minipage}
    \hspace{1.7mm}
    \begin{minipage}[t]{0.15\linewidth}
		\centering
        \includegraphics[page=11,width=3cm]{figures/greedy_learning_c1.pdf}
	\end{minipage}\\
	\begin{minipage}[t]{0.15\linewidth}
		\centering
        \includegraphics[page=2,width=3cm]{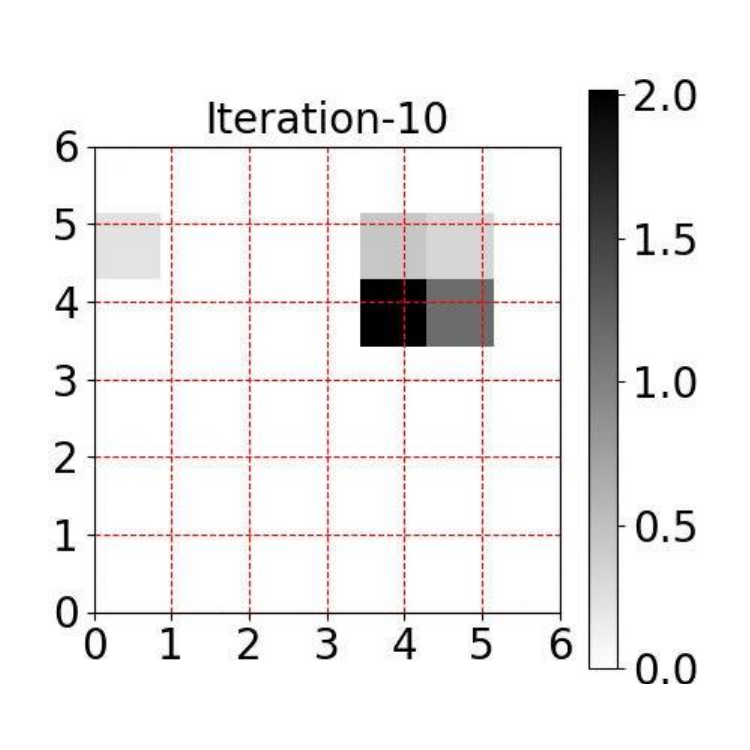}
	\end{minipage}
    \hspace{1.7mm}
	\begin{minipage}[t]{0.15\linewidth}
		\centering
        \includegraphics[page=3,width=3cm]{figures/e-greedy_learning_c1.pdf}
	\end{minipage}
    \hspace{1.7mm}
    \begin{minipage}[t]{0.15\linewidth}
		\centering
        \includegraphics[page=5,width=3cm]{figures/e-greedy_learning_c1.pdf}
	\end{minipage}
    \hspace{1.7mm}
	\begin{minipage}[t]{0.15\linewidth}
		\centering
        \includegraphics[page=8,width=3cm]{figures/e-greedy_learning_c1.pdf}
	\end{minipage}
    \hspace{1.7mm}
    \begin{minipage}[t]{0.15\linewidth}
		\centering
        \includegraphics[page=9,width=3cm]{figures/e-greedy_learning_c1.pdf}
	\end{minipage}
    \hspace{1.7mm}
    \begin{minipage}[t]{0.15\linewidth}
		\centering
        \includegraphics[page=11,width=3cm]{figures/e-greedy_learning_c1.pdf}
	\end{minipage}\\
    \begin{minipage}[t]{0.15\linewidth}
		\centering
        \includegraphics[page=2,width=2.8cm]{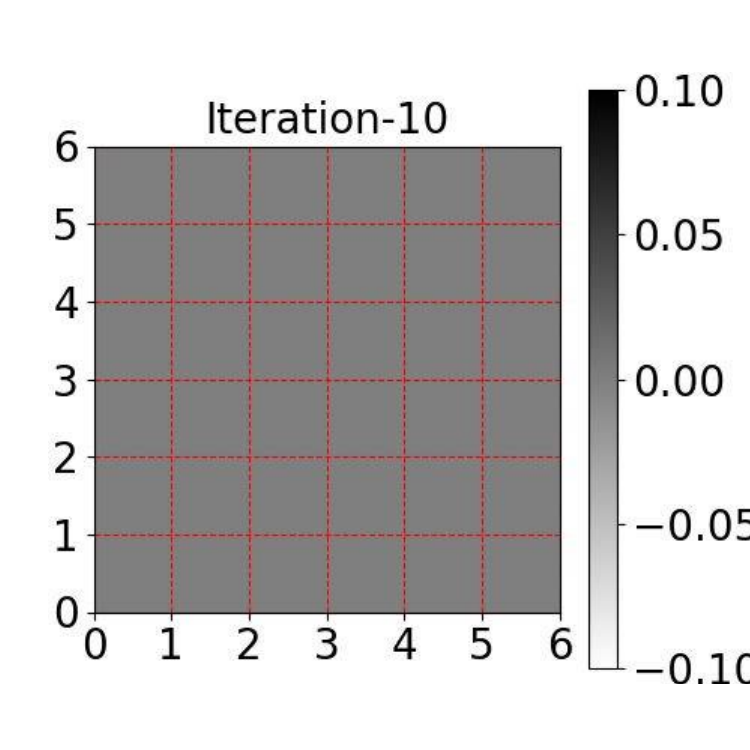}
	\end{minipage}
    \hspace{1.7mm}
	\begin{minipage}[t]{0.15\linewidth}
		\centering
        \includegraphics[page=3,width=2.8cm]{figures/ME_learning_c1.pdf}
	\end{minipage}
    \hspace{1.7mm}
    \begin{minipage}[t]{0.15\linewidth}
		\centering
        \includegraphics[page=5,width=2.8cm]{figures/ME_learning_c1.pdf}
	\end{minipage}
    \hspace{1.7mm}
	\begin{minipage}[t]{0.15\linewidth}
		\centering
        \includegraphics[page=8,width=3cm]{figures/ME_learning_c1.pdf}
	\end{minipage}
    \hspace{1.7mm}
    \begin{minipage}[t]{0.15\linewidth}
		\centering
        \includegraphics[page=9,width=3cm]{figures/ME_learning_c1.pdf}
	\end{minipage}
    \hspace{1.7mm}
    \begin{minipage}[t]{0.15\linewidth}
		\centering
        \includegraphics[page=11,width=3cm]{figures/ME_learning_c1.pdf}
	\end{minipage}\\
    	\begin{minipage}[t]{0.15\linewidth}
		\centering
        \includegraphics[page=2,width=3cm]{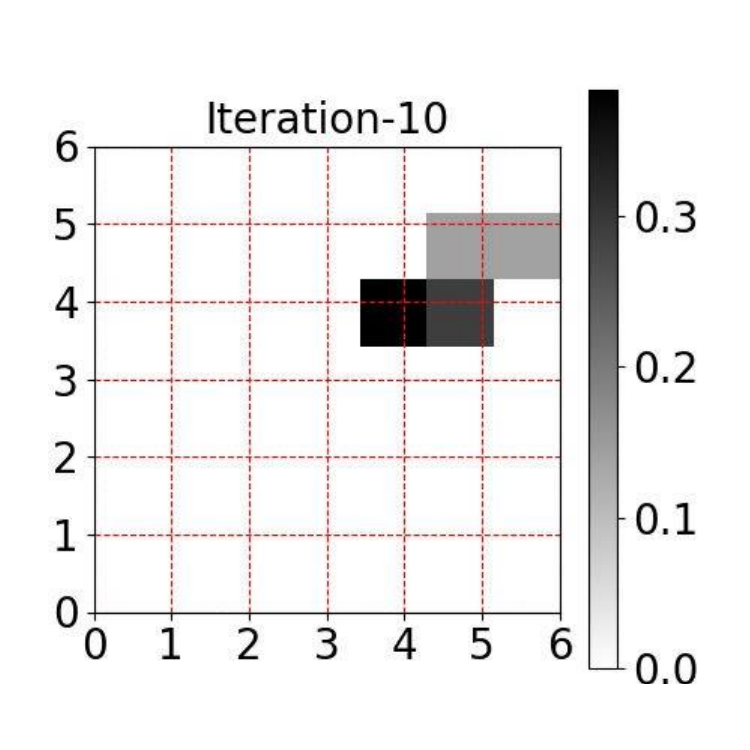}
	\end{minipage}
    \hspace{1.7mm}
	\begin{minipage}[t]{0.15\linewidth}
		\centering
        \includegraphics[page=3,width=3cm]{figures/Random_learning_c1.pdf}
	\end{minipage}
    \hspace{1.7mm}
    \begin{minipage}[t]{0.15\linewidth}
		\centering
        \includegraphics[page=5,width=3cm]{figures/Random_learning_c1.pdf}
	\end{minipage}
    \hspace{1.7mm}
	\begin{minipage}[t]{0.15\linewidth}
		\centering
        \includegraphics[page=8,width=3cm]{figures/Random_learning_c1.pdf}
	\end{minipage}
    \hspace{1.7mm}
    \begin{minipage}[t]{0.15\linewidth}
		\centering
        \includegraphics[page=9,width=3cm]{figures/Random_learning_c1.pdf}
	\end{minipage}
    \hspace{1.7mm}
    \begin{minipage}[t]{0.15\linewidth}
		\centering
        \includegraphics[page=11,width=3cm]{figures/Random_learning_c1.pdf}
	\end{minipage}\\
        	\begin{minipage}[t]{0.15\linewidth}
		\centering
        \includegraphics[page=2,width=2.8cm]{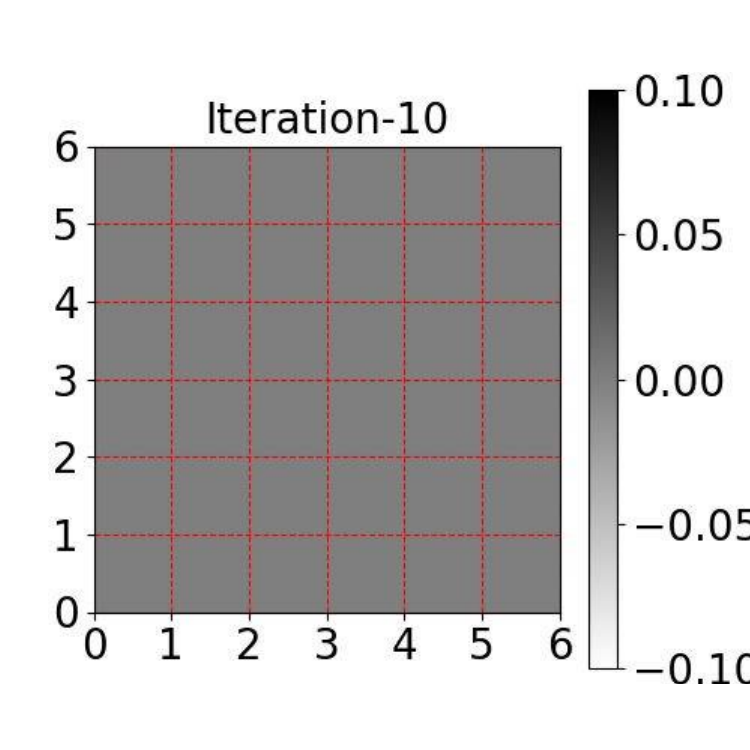}
	\end{minipage}
    \hspace{1.7mm}
	\begin{minipage}[t]{0.15\linewidth}
		\centering
        \includegraphics[page=3,width=2.8cm]{figures/UCB_learning_c1.pdf}
	\end{minipage}
    \hspace{1.7mm}
    \begin{minipage}[t]{0.15\linewidth}
		\centering
        \includegraphics[page=5,width=2.8cm]{figures/UCB_learning_c1.pdf}
	\end{minipage}
    \hspace{1.7mm}
	\begin{minipage}[t]{0.15\linewidth}
		\centering
        \includegraphics[page=8,width=3cm]{figures/UCB_learning_c1.pdf}
	\end{minipage}
    \hspace{1.7mm}
    \begin{minipage}[t]{0.15\linewidth}
		\centering
        \includegraphics[page=9,width=3cm]{figures/UCB_learning_c1.pdf}
	\end{minipage}
    \hspace{1.7mm}
    \begin{minipage}[t]{0.15\linewidth}
		\centering
        \includegraphics[page=11,width=3cm]{figures/UCB_learning_c1.pdf}
	\end{minipage}
\caption{Constraint learning performance of six exploration strategies for ICRL in Gridworld-1.
\textcolor{purple}{PCSE} (1st row), \greedy{} strategy (2nd row), $\epsilon$-greedy exploration strategy (3rd row),
maximum-entropy exploration strategy (4th row),
random exploration strategy (5th row),
upper confidence bound exploration strategy (bottom row).}\label{learning_c1}
\end{figure*}

\begin{figure*}[htbp]
	\begin{minipage}[t]{0.15\linewidth}
		\centering
        \includegraphics[page=2,width=3cm]{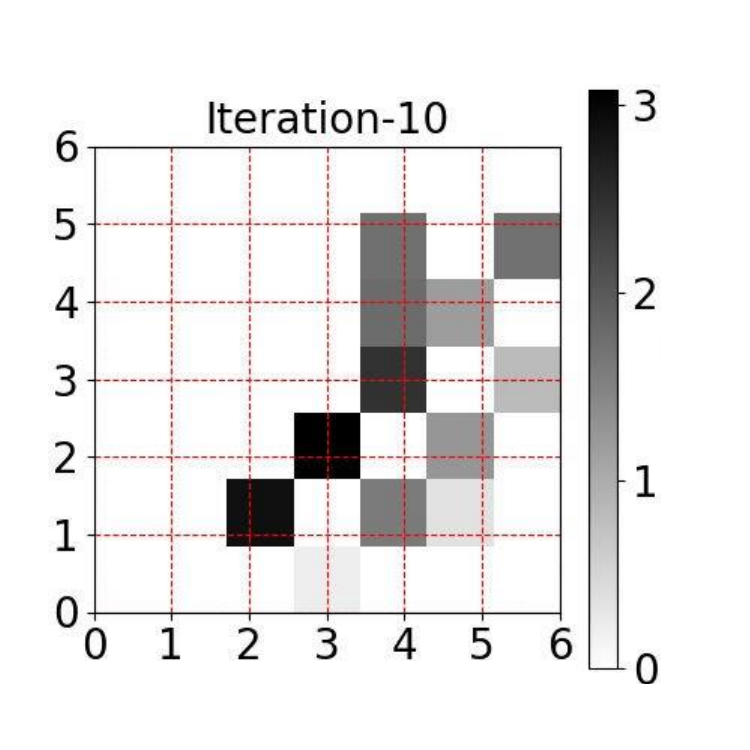}
	\end{minipage}
    \hspace{1.7mm}
	\begin{minipage}[t]{0.15\linewidth}
		\centering
        \includegraphics[page=3,width=3cm]{figures/active_learning_c2.pdf}
	\end{minipage}
    \hspace{1.7mm}
    \begin{minipage}[t]{0.15\linewidth}
		\centering
        \includegraphics[page=5,width=3cm]{figures/active_learning_c2.pdf}
	\end{minipage}
    \hspace{1.7mm}
	\begin{minipage}[t]{0.15\linewidth}
		\centering
        \includegraphics[page=8,width=3cm]{figures/active_learning_c2.pdf}
	\end{minipage}
    \hspace{1.7mm}
    \begin{minipage}[t]{0.15\linewidth}
		\centering
        \includegraphics[page=9,width=3cm]{figures/active_learning_c2.pdf}
	\end{minipage}
    \hspace{1.7mm}
    \begin{minipage}[t]{0.15\linewidth}
		\centering
        \includegraphics[page=11,width=3cm]{figures/active_learning_c2.pdf}
	\end{minipage}\\
	\begin{minipage}[t]{0.15\linewidth}
		\centering
        \includegraphics[page=2,width=3cm]{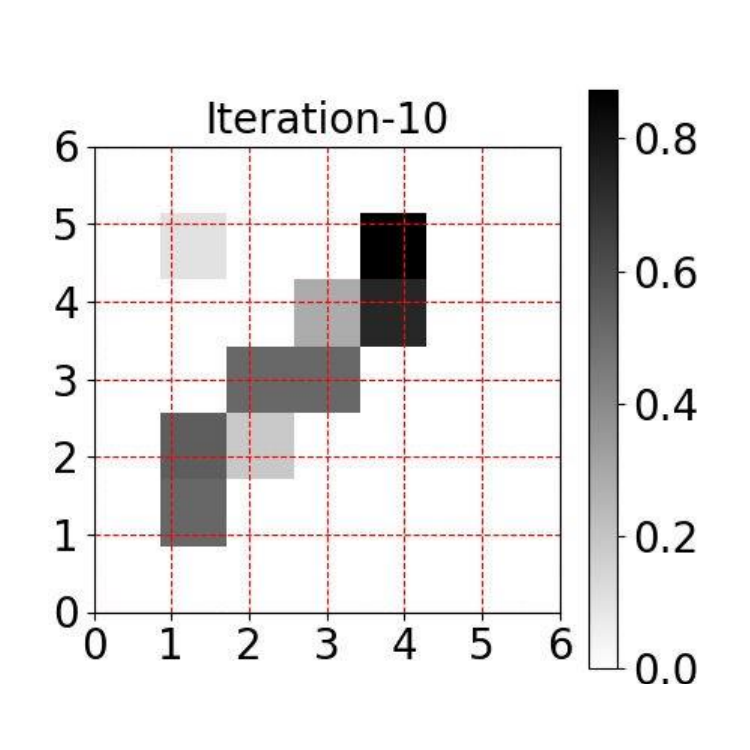}
	\end{minipage}
    \hspace{1.7mm}
	\begin{minipage}[t]{0.15\linewidth}
		\centering
        \includegraphics[page=3,width=3cm]{figures/greedy_learning_c2.pdf}
	\end{minipage}
    \hspace{1.7mm}
    \begin{minipage}[t]{0.15\linewidth}
		\centering
        \includegraphics[page=5,width=3cm]{figures/greedy_learning_c2.pdf}
	\end{minipage}
    \hspace{1.7mm}
	\begin{minipage}[t]{0.15\linewidth}
		\centering
        \includegraphics[page=8,width=3cm]{figures/greedy_learning_c2.pdf}
	\end{minipage}
    \hspace{1.7mm}
    \begin{minipage}[t]{0.15\linewidth}
		\centering
        \includegraphics[page=9,width=3cm]{figures/greedy_learning_c2.pdf}
	\end{minipage}
    \hspace{1.7mm}
    \begin{minipage}[t]{0.15\linewidth}
		\centering
        \includegraphics[page=11,width=3cm]{figures/greedy_learning_c2.pdf}
	\end{minipage}\\
	\begin{minipage}[t]{0.15\linewidth}
		\centering
        \includegraphics[page=2,width=3cm]{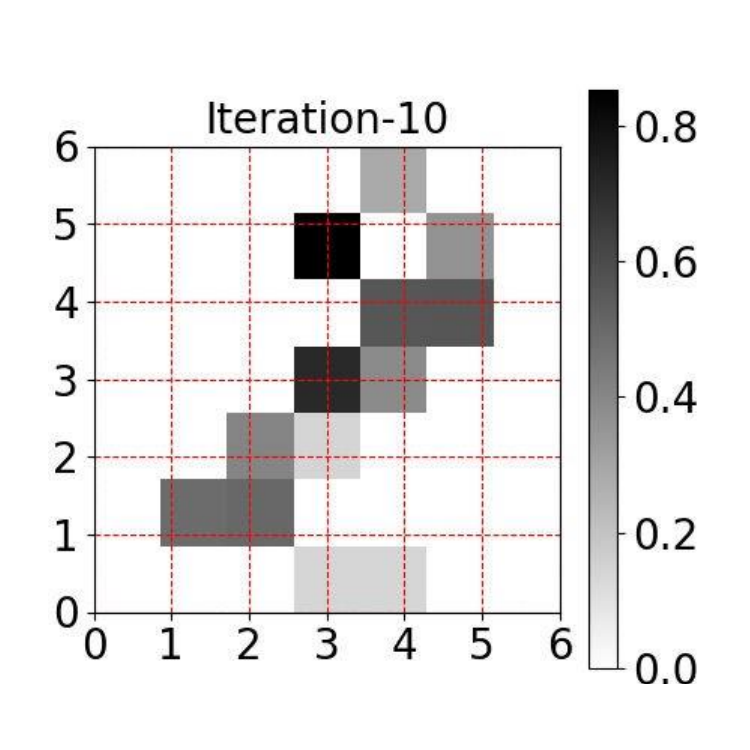}
	\end{minipage}
    \hspace{1.7mm}
	\begin{minipage}[t]{0.15\linewidth}
		\centering
        \includegraphics[page=3,width=3cm]{figures/e-greedy_learning_c2.pdf}
	\end{minipage}
    \hspace{1.7mm}
    \begin{minipage}[t]{0.15\linewidth}
		\centering
        \includegraphics[page=5,width=3cm]{figures/e-greedy_learning_c2.pdf}
	\end{minipage}
    \hspace{1.7mm}
	\begin{minipage}[t]{0.15\linewidth}
		\centering
        \includegraphics[page=8,width=3cm]{figures/e-greedy_learning_c2.pdf}
	\end{minipage}
    \hspace{1.7mm}
    \begin{minipage}[t]{0.15\linewidth}
		\centering
        \includegraphics[page=9,width=3cm]{figures/e-greedy_learning_c2.pdf}
	\end{minipage}
    \hspace{1.7mm}
    \begin{minipage}[t]{0.15\linewidth}
		\centering
        \includegraphics[page=11,width=3cm]{figures/e-greedy_learning_c2.pdf}
	\end{minipage}\\
    \begin{minipage}[t]{0.15\linewidth}
		\centering
        \includegraphics[page=2,width=2.8cm]{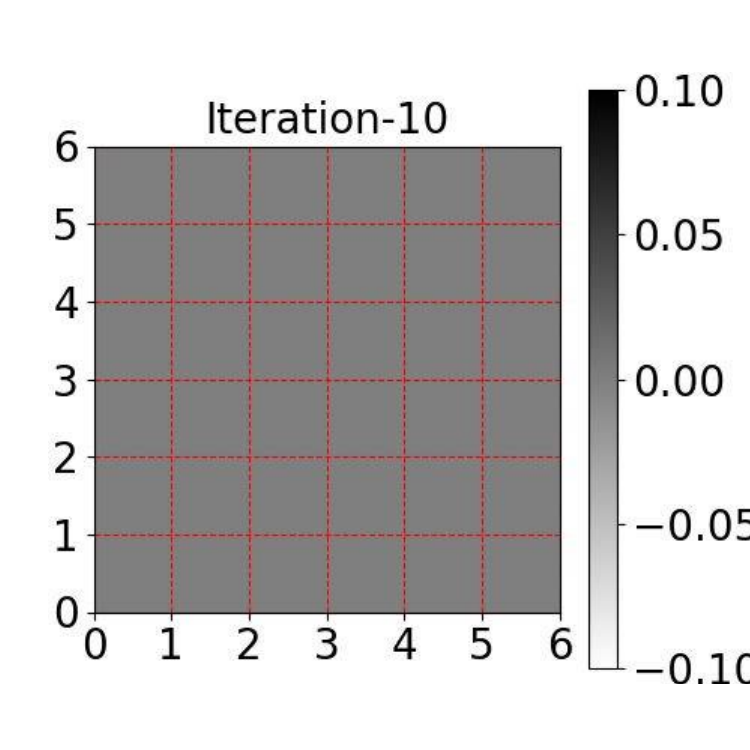}
	\end{minipage}
    \hspace{1.7mm}
	\begin{minipage}[t]{0.15\linewidth}
		\centering
        \includegraphics[page=3,width=2.8cm]{figures/ME_learning_c2.pdf}
	\end{minipage}
    \hspace{1.7mm}
    \begin{minipage}[t]{0.15\linewidth}
		\centering
        \includegraphics[page=5,width=3cm]{figures/ME_learning_c2.pdf}
	\end{minipage}
    \hspace{1.7mm}
	\begin{minipage}[t]{0.15\linewidth}
		\centering
        \includegraphics[page=8,width=3cm]{figures/ME_learning_c2.pdf}
	\end{minipage}
    \hspace{1.7mm}
    \begin{minipage}[t]{0.15\linewidth}
		\centering
        \includegraphics[page=9,width=3cm]{figures/ME_learning_c2.pdf}
	\end{minipage}
    \hspace{1.7mm}
    \begin{minipage}[t]{0.15\linewidth}
		\centering
        \includegraphics[page=11,width=3cm]{figures/ME_learning_c2.pdf}
	\end{minipage}\\
    	\begin{minipage}[t]{0.15\linewidth}
		\centering
        \includegraphics[page=2,width=3cm]{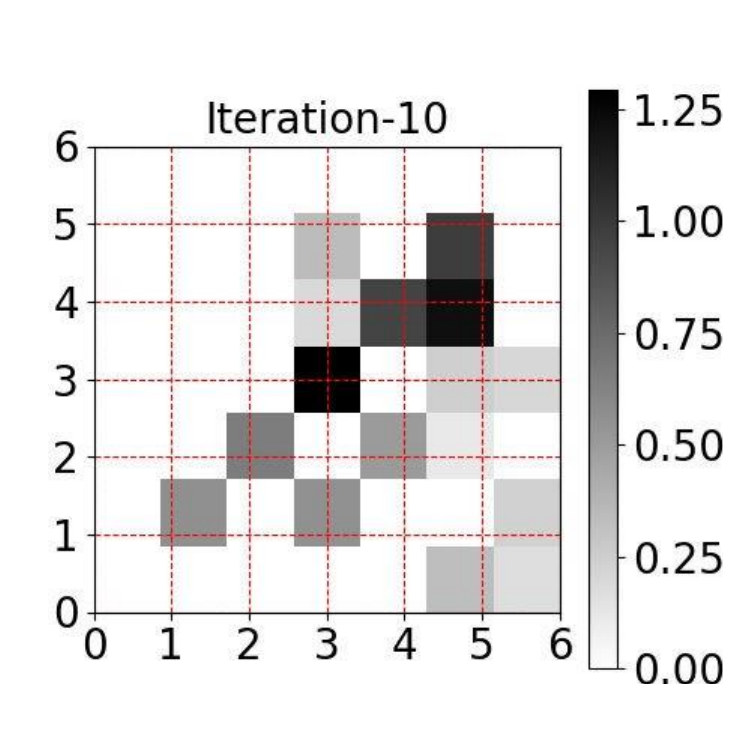}
	\end{minipage}
    \hspace{1.7mm}
	\begin{minipage}[t]{0.15\linewidth}
		\centering
        \includegraphics[page=3,width=3cm]{figures/Random_learning_c2.pdf}
	\end{minipage}
    \hspace{1.7mm}
    \begin{minipage}[t]{0.15\linewidth}
		\centering
        \includegraphics[page=5,width=3cm]{figures/Random_learning_c2.pdf}
	\end{minipage}
    \hspace{1.7mm}
	\begin{minipage}[t]{0.15\linewidth}
		\centering
        \includegraphics[page=8,width=3cm]{figures/Random_learning_c2.pdf}
	\end{minipage}
    \hspace{1.7mm}
    \begin{minipage}[t]{0.15\linewidth}
		\centering
        \includegraphics[page=9,width=3cm]{figures/Random_learning_c2.pdf}
	\end{minipage}
    \hspace{1.7mm}
    \begin{minipage}[t]{0.15\linewidth}
		\centering
        \includegraphics[page=11,width=3cm]{figures/Random_learning_c2.pdf}
	\end{minipage}\\
        	\begin{minipage}[t]{0.15\linewidth}
		\centering
        \includegraphics[page=2,width=2.8cm]{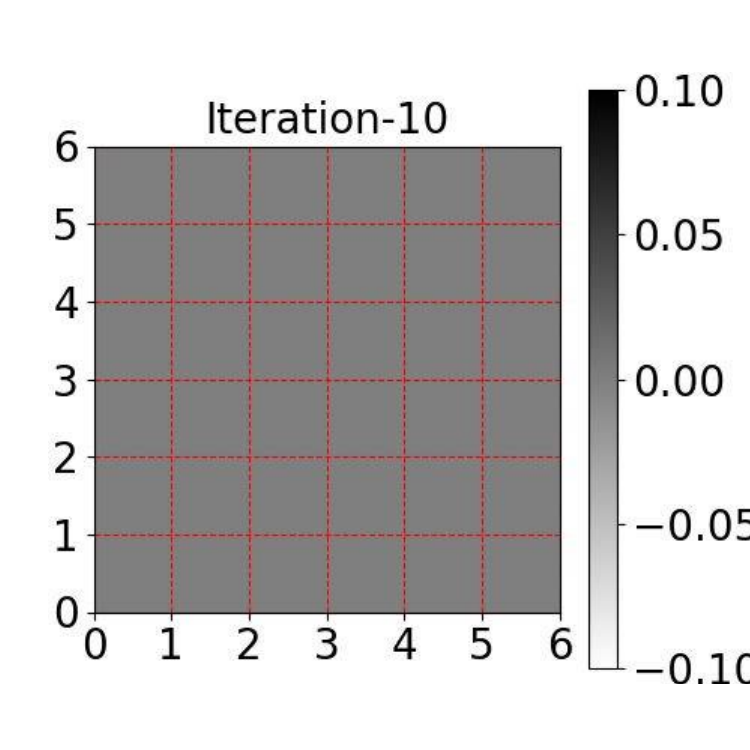}
	\end{minipage}
    \hspace{1.7mm}
	\begin{minipage}[t]{0.15\linewidth}
		\centering
        \includegraphics[page=3,width=2.8cm]{figures/UCB_learning_c2.pdf}
	\end{minipage}
    \hspace{1.7mm}
    \begin{minipage}[t]{0.15\linewidth}
		\centering
        \includegraphics[page=5,width=2.8cm]{figures/UCB_learning_c2.pdf}
	\end{minipage}
    \hspace{1.7mm}
	\begin{minipage}[t]{0.15\linewidth}
		\centering
        \includegraphics[page=8,width=2.8cm]{figures/UCB_learning_c2.pdf}
	\end{minipage}
    \hspace{1.7mm}
    \begin{minipage}[t]{0.15\linewidth}
		\centering
        \includegraphics[page=9,width=3cm]{figures/UCB_learning_c2.pdf}
	\end{minipage}
    \hspace{1.7mm}
    \begin{minipage}[t]{0.15\linewidth}
		\centering
        \includegraphics[page=11,width=3cm]{figures/UCB_learning_c2.pdf}
	\end{minipage}
\caption{Constraint learning performance of six exploration strategies for ICRL in Gridworld-2.
\textcolor{purple}{PCSE} (1st row), \greedy{} strategy (2nd row), $\epsilon$-greedy exploration strategy (3rd row),
maximum-entropy exploration strategy (4th row),
random exploration strategy (5th row),
upper confidence bound exploration strategy (bottom row).}\label{learning_c2}
\end{figure*}

\begin{figure*}[htbp]
	\begin{minipage}[t]{0.15\linewidth}
		\centering
        \includegraphics[page=2,width=3cm]{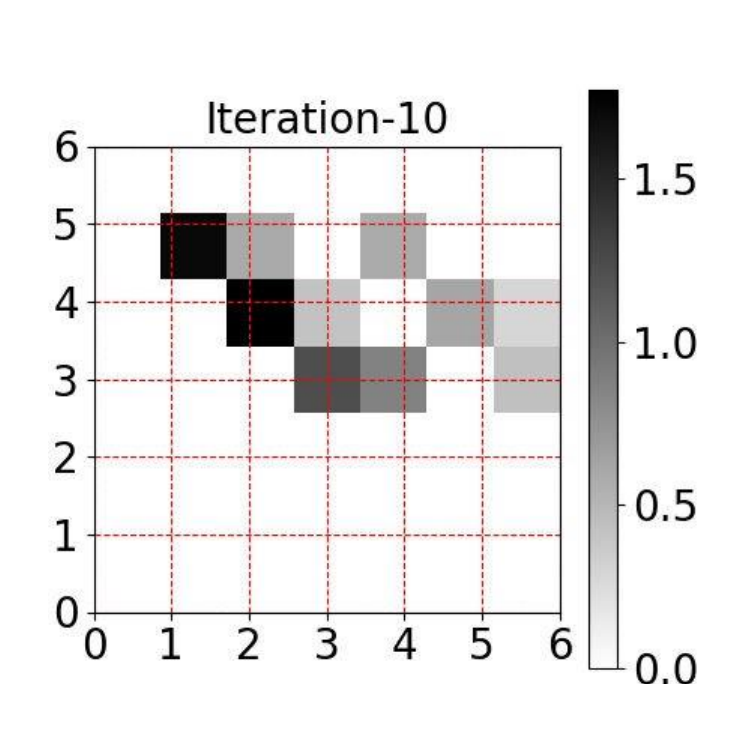}
	\end{minipage}
    \hspace{1.7mm}
	\begin{minipage}[t]{0.15\linewidth}
		\centering
        \includegraphics[page=3,width=3cm]{figures/active_learning_c3.pdf}
	\end{minipage}
    \hspace{1.7mm}
    \begin{minipage}[t]{0.15\linewidth}
		\centering
        \includegraphics[page=5,width=3cm]{figures/active_learning_c3.pdf}
	\end{minipage}
    \hspace{1.7mm}
	\begin{minipage}[t]{0.15\linewidth}
		\centering
        \includegraphics[page=8,width=3cm]{figures/active_learning_c3.pdf}
	\end{minipage}
    \hspace{1.7mm}
    \begin{minipage}[t]{0.15\linewidth}
		\centering
        \includegraphics[page=9,width=3cm]{figures/active_learning_c3.pdf}
	\end{minipage}
    \hspace{1.7mm}
    \begin{minipage}[t]{0.15\linewidth}
		\centering
        \includegraphics[page=11,width=3cm]{figures/active_learning_c3.pdf}
	\end{minipage}\\
	\begin{minipage}[t]{0.15\linewidth}
		\centering
        \includegraphics[page=2,width=3cm]{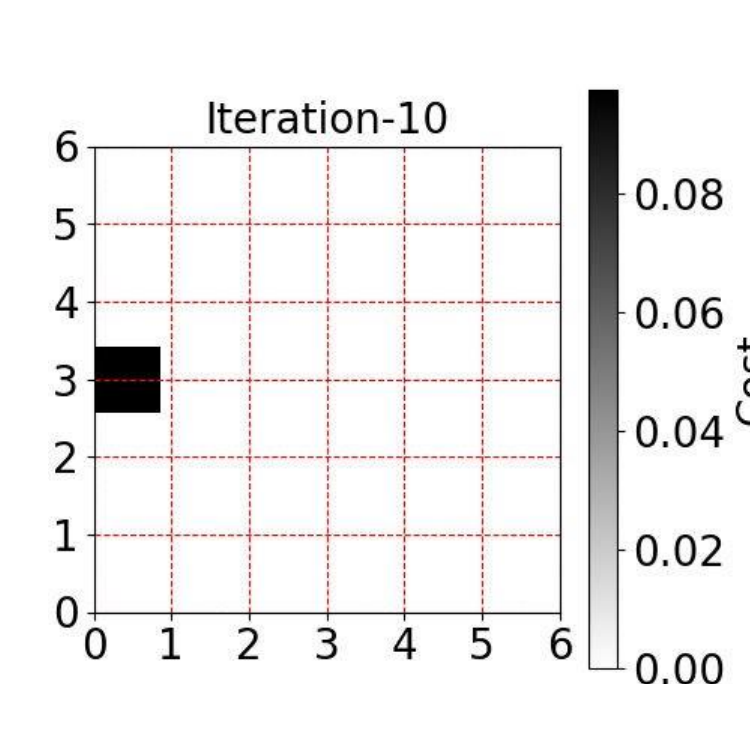}
	\end{minipage}
    \hspace{1.7mm}
	\begin{minipage}[t]{0.15\linewidth}
		\centering
        \includegraphics[page=3,width=3cm]{figures/greedy_learning_c3.pdf}
	\end{minipage}
    \hspace{1.7mm}
    \begin{minipage}[t]{0.15\linewidth}
		\centering
        \includegraphics[page=5,width=3cm]{figures/greedy_learning_c3.pdf}
	\end{minipage}
    \hspace{1.7mm}
	\begin{minipage}[t]{0.15\linewidth}
		\centering
        \includegraphics[page=8,width=3cm]{figures/greedy_learning_c3.pdf}
	\end{minipage}
    \hspace{1.7mm}
    \begin{minipage}[t]{0.15\linewidth}
		\centering
        \includegraphics[page=9,width=3cm]{figures/greedy_learning_c3.pdf}
	\end{minipage}
    \hspace{1.7mm}
    \begin{minipage}[t]{0.15\linewidth}
		\centering
        \includegraphics[page=11,width=3cm]{figures/greedy_learning_c3.pdf}
	\end{minipage}\\
	\begin{minipage}[t]{0.15\linewidth}
		\centering
        \includegraphics[page=2,width=3cm]{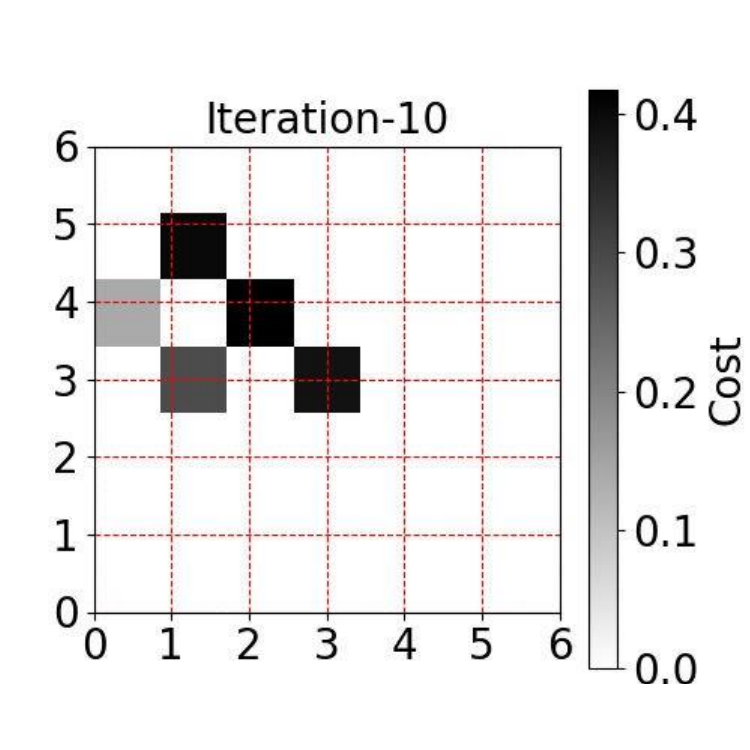}
	\end{minipage}
    \hspace{1.7mm}
	\begin{minipage}[t]{0.15\linewidth}
		\centering
        \includegraphics[page=3,width=3cm]{figures/e-greedy_learning_c3.pdf}
	\end{minipage}
    \hspace{1.7mm}
    \begin{minipage}[t]{0.15\linewidth}
		\centering
        \includegraphics[page=5,width=3cm]{figures/e-greedy_learning_c3.pdf}
	\end{minipage}
    \hspace{1.7mm}
	\begin{minipage}[t]{0.15\linewidth}
		\centering
        \includegraphics[page=8,width=3cm]{figures/e-greedy_learning_c3.pdf}
	\end{minipage}
    \hspace{1.7mm}
    \begin{minipage}[t]{0.15\linewidth}
		\centering
        \includegraphics[page=9,width=3cm]{figures/e-greedy_learning_c3.pdf}
	\end{minipage}
    \hspace{1.7mm}
    \begin{minipage}[t]{0.15\linewidth}
		\centering
        \includegraphics[page=11,width=3cm]{figures/e-greedy_learning_c3.pdf}
	\end{minipage}\\
    \begin{minipage}[t]{0.15\linewidth}
		\centering
        \includegraphics[page=2,width=2.8cm]{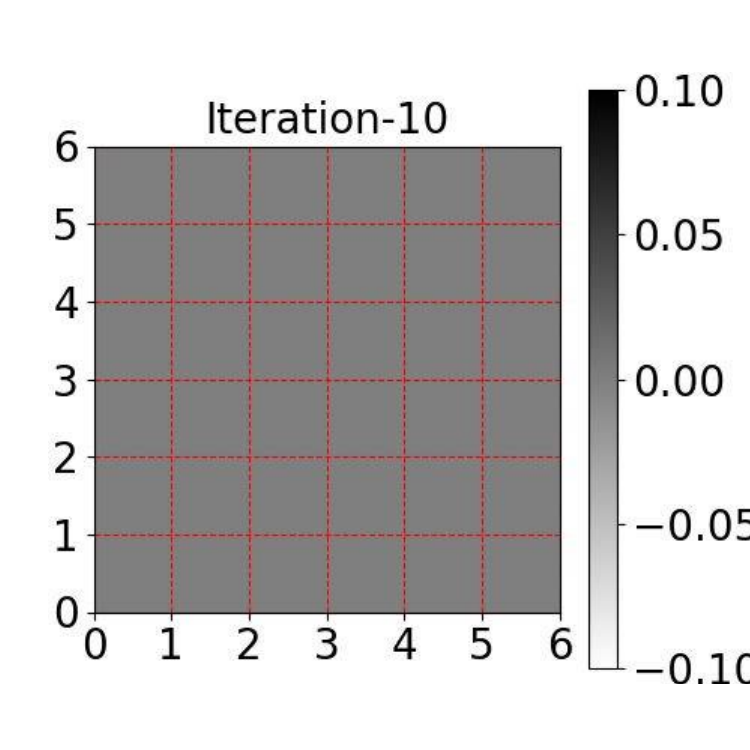}
	\end{minipage}
    \hspace{1.7mm}
	\begin{minipage}[t]{0.15\linewidth}
		\centering
        \includegraphics[page=3,width=2.8cm]{figures/ME_learning_c3.pdf}
	\end{minipage}
    \hspace{1.7mm}
    \begin{minipage}[t]{0.15\linewidth}
		\centering
        \includegraphics[page=5,width=2.8cm]{figures/ME_learning_c3.pdf}
	\end{minipage}
    \hspace{1.7mm}
	\begin{minipage}[t]{0.15\linewidth}
		\centering
        \includegraphics[page=8,width=3cm]{figures/ME_learning_c3.pdf}
	\end{minipage}
    \hspace{1.7mm}
    \begin{minipage}[t]{0.15\linewidth}
		\centering
        \includegraphics[page=9,width=3cm]{figures/ME_learning_c3.pdf}
	\end{minipage}
    \hspace{1.7mm}
    \begin{minipage}[t]{0.15\linewidth}
		\centering
        \includegraphics[page=11,width=3cm]{figures/ME_learning_c3.pdf}
	\end{minipage}\\
    	\begin{minipage}[t]{0.15\linewidth}
		\centering
        \includegraphics[page=2,width=3cm]{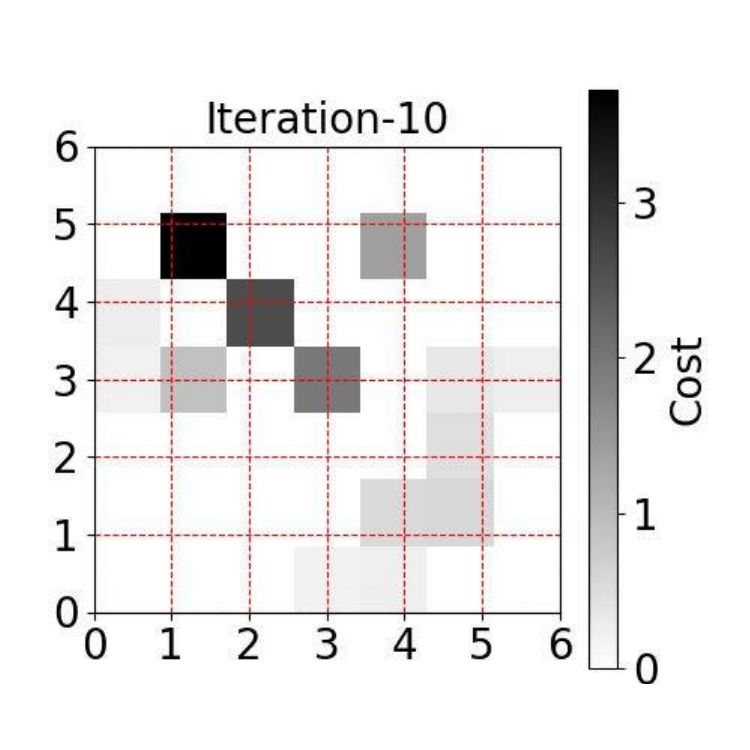}
	\end{minipage}
    \hspace{1.7mm}
	\begin{minipage}[t]{0.15\linewidth}
		\centering
        \includegraphics[page=3,width=3cm]{figures/Random_learning_c3.pdf}
	\end{minipage}
    \hspace{1.7mm}
    \begin{minipage}[t]{0.15\linewidth}
		\centering
        \includegraphics[page=5,width=3cm]{figures/Random_learning_c3.pdf}
	\end{minipage}
    \hspace{1.7mm}
	\begin{minipage}[t]{0.15\linewidth}
		\centering
        \includegraphics[page=8,width=3cm]{figures/Random_learning_c3.pdf}
	\end{minipage}
    \hspace{1.7mm}
    \begin{minipage}[t]{0.15\linewidth}
		\centering
        \includegraphics[page=9,width=3cm]{figures/Random_learning_c3.pdf}
	\end{minipage}
    \hspace{1.7mm}
    \begin{minipage}[t]{0.15\linewidth}
		\centering
        \includegraphics[page=11,width=3cm]{figures/Random_learning_c3.pdf}
	\end{minipage}\\
        	\begin{minipage}[t]{0.15\linewidth}
		\centering
        \includegraphics[page=2,width=2.8cm]{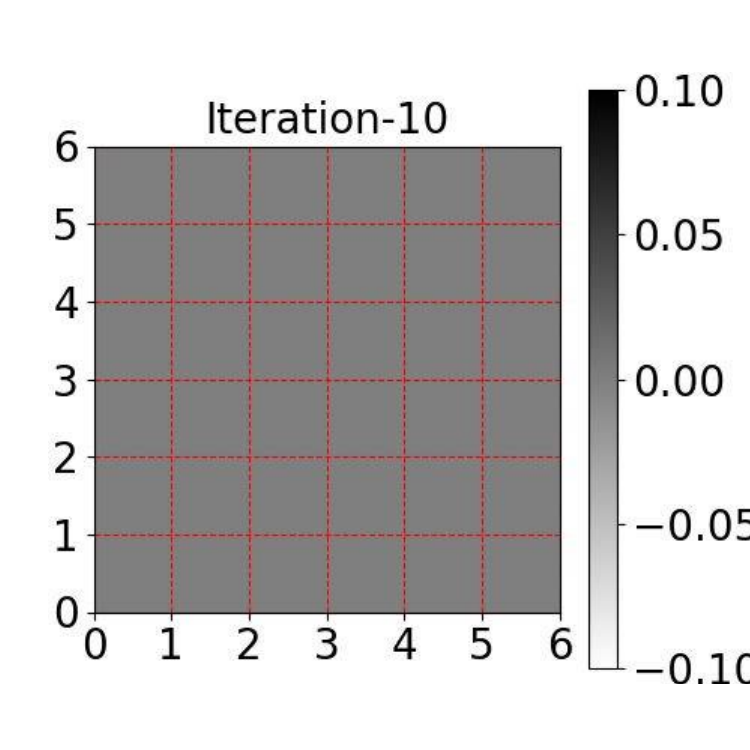}
	\end{minipage}
    \hspace{1.7mm}
	\begin{minipage}[t]{0.15\linewidth}
		\centering
        \includegraphics[page=3,width=2.8cm]{figures/UCB_learning_c3.pdf}
	\end{minipage}
    \hspace{1.7mm}
    \begin{minipage}[t]{0.15\linewidth}
		\centering
        \includegraphics[page=5,width=2.8cm]{figures/UCB_learning_c3.pdf}
	\end{minipage}
    \hspace{1.7mm}
	\begin{minipage}[t]{0.15\linewidth}
		\centering
        \includegraphics[page=8,width=3cm]{figures/UCB_learning_c3.pdf}
	\end{minipage}
    \hspace{1.7mm}
    \begin{minipage}[t]{0.15\linewidth}
		\centering
        \includegraphics[page=9,width=3cm]{figures/UCB_learning_c3.pdf}
	\end{minipage}
    \hspace{1.7mm}
    \begin{minipage}[t]{0.15\linewidth}
		\centering
        \includegraphics[page=11,width=3cm]{figures/UCB_learning_c3.pdf}
	\end{minipage}
\caption{Constraint learning performance of six exploration strategies for ICRL in Gridworld-3.
\textcolor{purple}{PCSE} (1st row), \greedy{} strategy (2nd row), $\epsilon$-greedy exploration strategy (3rd row),
maximum-entropy exploration strategy (4th row),
random exploration strategy (5th row),
upper confidence bound exploration strategy (bottom row).}\label{learning_c3}
\end{figure*}

\begin{figure*}[htbp]
	\begin{minipage}[t]{0.15\linewidth}
		\centering
        \includegraphics[page=2,width=3cm]{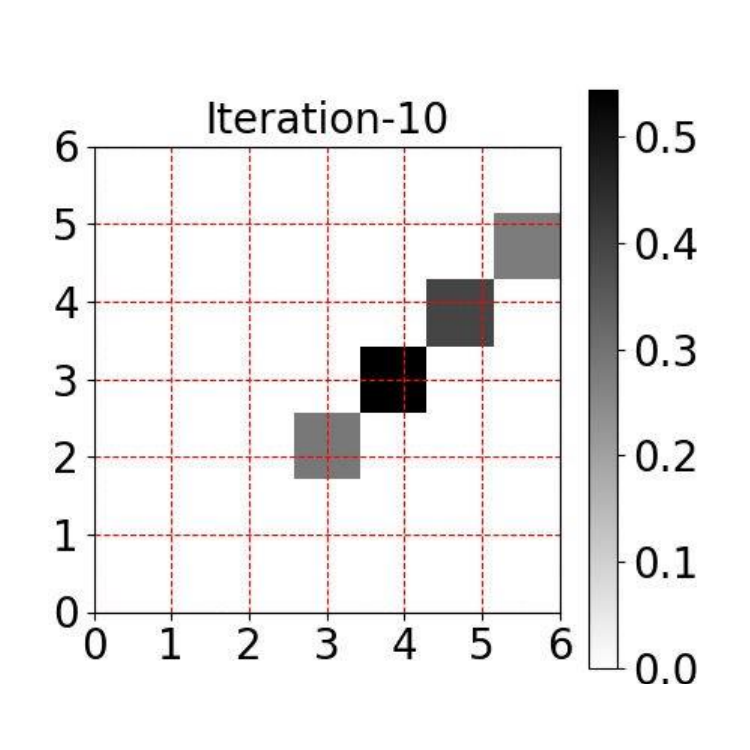}
	\end{minipage}
    \hspace{1.7mm}
	\begin{minipage}[t]{0.15\linewidth}
		\centering
        \includegraphics[page=3,width=3cm]{figures/active_learning_c4.pdf}
	\end{minipage}
    \hspace{1.7mm}
    \begin{minipage}[t]{0.15\linewidth}
		\centering
        \includegraphics[page=5,width=3cm]{figures/active_learning_c4.pdf}
	\end{minipage}
    \hspace{1.7mm}
	\begin{minipage}[t]{0.15\linewidth}
		\centering
        \includegraphics[page=8,width=3cm]{figures/active_learning_c4.pdf}
	\end{minipage}
    \hspace{1.7mm}
    \begin{minipage}[t]{0.15\linewidth}
		\centering
        \includegraphics[page=9,width=3cm]{figures/active_learning_c4.pdf}
	\end{minipage}
    \hspace{1.7mm}
    \begin{minipage}[t]{0.15\linewidth}
		\centering
        \includegraphics[page=11,width=3cm]{figures/active_learning_c4.pdf}
	\end{minipage}\\
	\begin{minipage}[t]{0.15\linewidth}
		\centering
        \includegraphics[page=2,width=3cm]{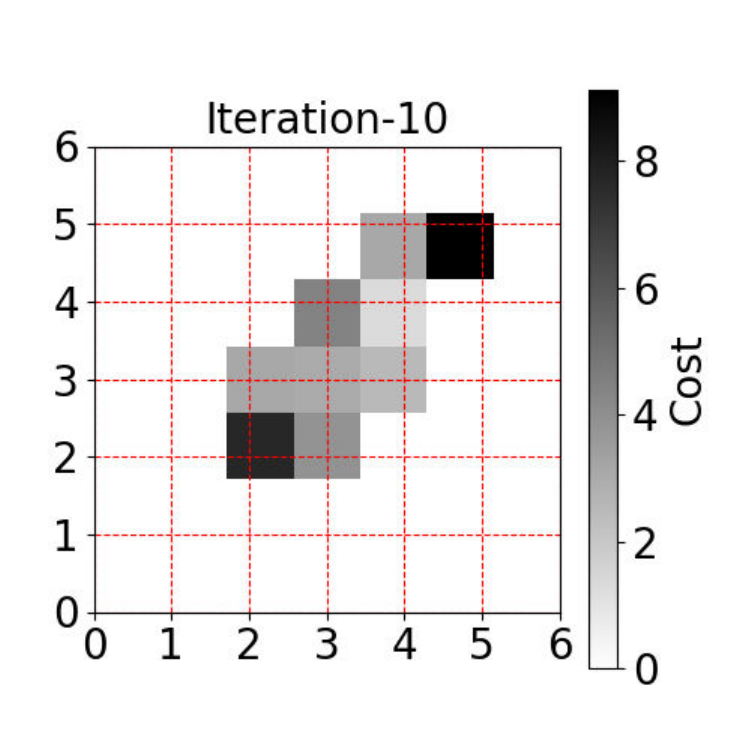}
	\end{minipage}
    \hspace{1.7mm}
	\begin{minipage}[t]{0.15\linewidth}
		\centering
        \includegraphics[page=3,width=3cm]{figures/greedy_learning_c4.pdf}
	\end{minipage}
    \hspace{1.7mm}
    \begin{minipage}[t]{0.15\linewidth}
		\centering
        \includegraphics[page=5,width=3cm]{figures/greedy_learning_c4.pdf}
	\end{minipage}
    \hspace{1.7mm}
	\begin{minipage}[t]{0.15\linewidth}
		\centering
        \includegraphics[page=8,width=3cm]{figures/greedy_learning_c4.pdf}
	\end{minipage}
    \hspace{1.7mm}
    \begin{minipage}[t]{0.15\linewidth}
		\centering
        \includegraphics[page=9,width=3cm]{figures/greedy_learning_c4.pdf}
	\end{minipage}
    \hspace{1.7mm}
    \begin{minipage}[t]{0.15\linewidth}
		\centering
        \includegraphics[page=11,width=3cm]{figures/greedy_learning_c4.pdf}
	\end{minipage}\\
	\begin{minipage}[t]{0.15\linewidth}
		\centering
        \includegraphics[page=2,width=3cm]{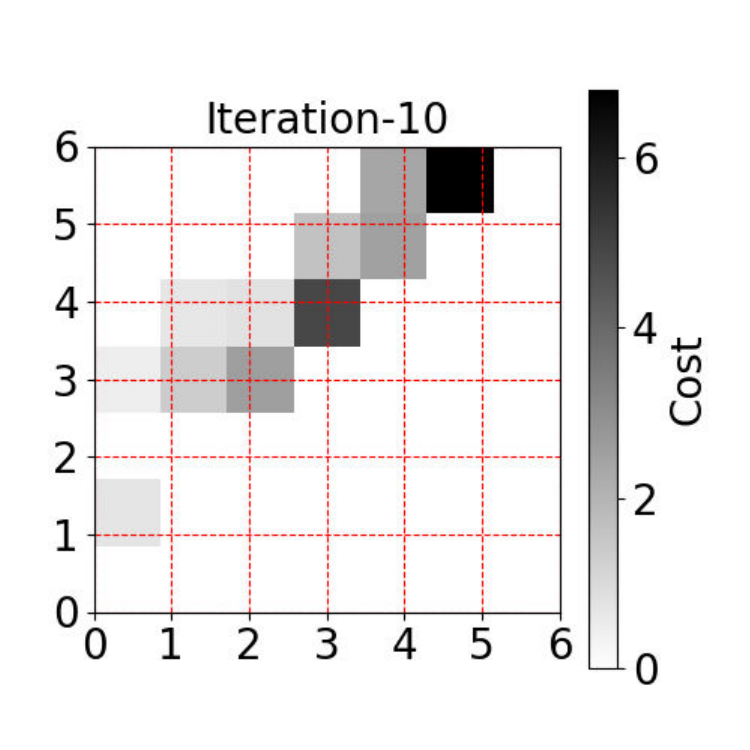}
	\end{minipage}
    \hspace{1.7mm}
	\begin{minipage}[t]{0.15\linewidth}
		\centering
        \includegraphics[page=3,width=3cm]{figures/e-greedy_learning_c4.pdf}
	\end{minipage}
    \hspace{1.7mm}
    \begin{minipage}[t]{0.15\linewidth}
		\centering
        \includegraphics[page=5,width=3cm]{figures/e-greedy_learning_c4.pdf}
	\end{minipage}
    \hspace{1.7mm}
	\begin{minipage}[t]{0.15\linewidth}
		\centering
        \includegraphics[page=8,width=3cm]{figures/e-greedy_learning_c4.pdf}
	\end{minipage}
    \hspace{1.7mm}
    \begin{minipage}[t]{0.15\linewidth}
		\centering
        \includegraphics[page=9,width=3cm]{figures/e-greedy_learning_c4.pdf}
	\end{minipage}
    \hspace{1.7mm}
    \begin{minipage}[t]{0.15\linewidth}
		\centering
        \includegraphics[page=11,width=3cm]{figures/e-greedy_learning_c4.pdf}
	\end{minipage}\\
    \begin{minipage}[t]{0.15\linewidth}
		\centering
        \includegraphics[page=2,width=2.8cm]{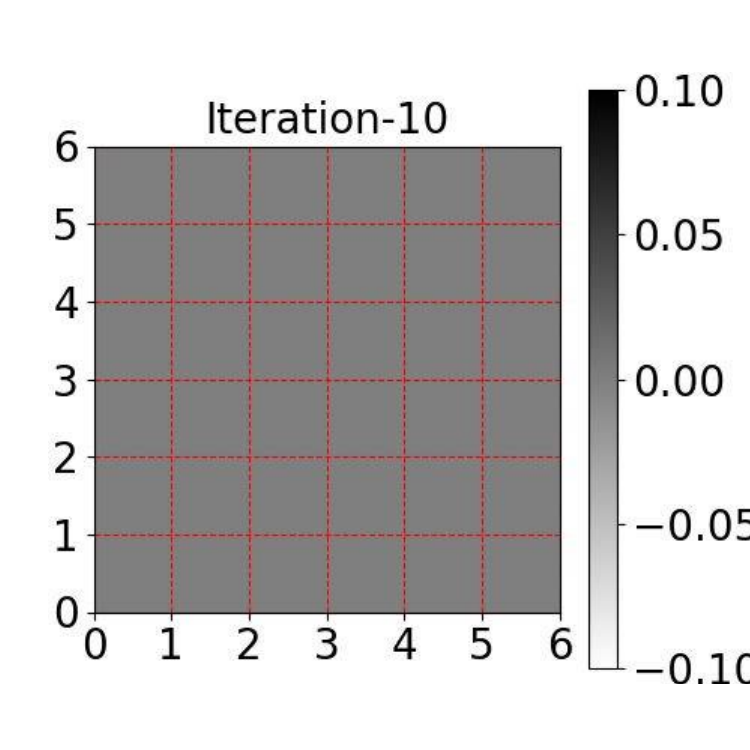}
	\end{minipage}
    \hspace{1.7mm}
	\begin{minipage}[t]{0.15\linewidth}
		\centering
        \includegraphics[page=3,width=2.8cm]{figures/ME_learning_c4.pdf}
	\end{minipage}
    \hspace{1.7mm}
    \begin{minipage}[t]{0.15\linewidth}
		\centering
        \includegraphics[page=5,width=2.8cm]{figures/ME_learning_c4.pdf}
	\end{minipage}
    \hspace{1.7mm}
	\begin{minipage}[t]{0.15\linewidth}
		\centering
        \includegraphics[page=8,width=3cm]{figures/ME_learning_c4.pdf}
	\end{minipage}
    \hspace{1.7mm}
    \begin{minipage}[t]{0.15\linewidth}
		\centering
        \includegraphics[page=9,width=3cm]{figures/ME_learning_c4.pdf}
	\end{minipage}
    \hspace{1.7mm}
    \begin{minipage}[t]{0.15\linewidth}
		\centering
        \includegraphics[page=11,width=3cm]{figures/ME_learning_c4.pdf}
	\end{minipage}\\
    	\begin{minipage}[t]{0.15\linewidth}
		\centering
        \includegraphics[page=2,width=3cm]{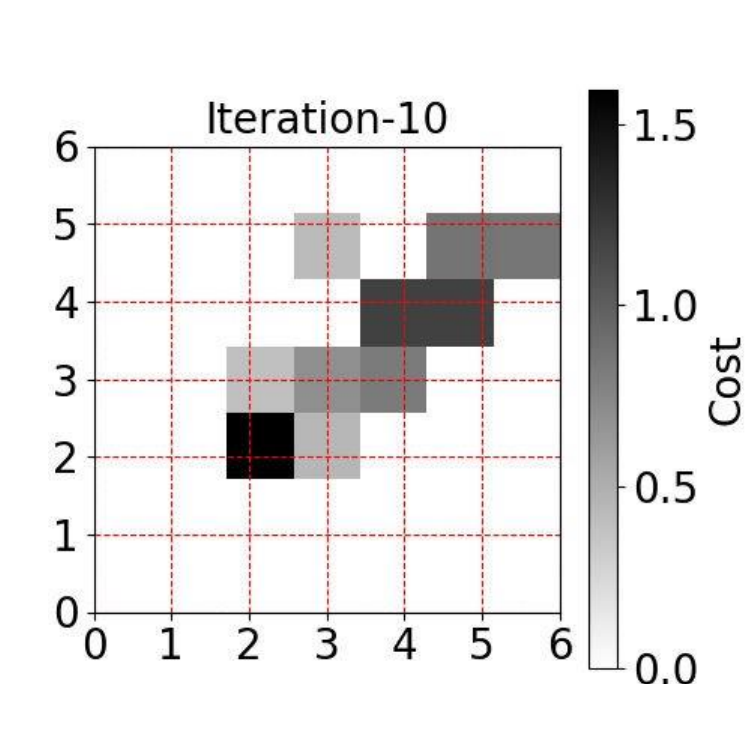}
	\end{minipage}
    \hspace{1.7mm}
	\begin{minipage}[t]{0.15\linewidth}
		\centering
        \includegraphics[page=3,width=3cm]{figures/Random_learning_c4.pdf}
	\end{minipage}
    \hspace{1.7mm}
    \begin{minipage}[t]{0.15\linewidth}
		\centering
        \includegraphics[page=5,width=3cm]{figures/Random_learning_c4.pdf}
	\end{minipage}
    \hspace{1.7mm}
	\begin{minipage}[t]{0.15\linewidth}
		\centering
        \includegraphics[page=8,width=3cm]{figures/Random_learning_c4.pdf}
	\end{minipage}
    \hspace{1.7mm}
    \begin{minipage}[t]{0.15\linewidth}
		\centering
        \includegraphics[page=9,width=3cm]{figures/Random_learning_c4.pdf}
	\end{minipage}
    \hspace{1.7mm}
    \begin{minipage}[t]{0.15\linewidth}
		\centering
        \includegraphics[page=11,width=3cm]{figures/Random_learning_c4.pdf}
	\end{minipage}\\
        	\begin{minipage}[t]{0.15\linewidth}
		\centering
        \includegraphics[page=2,width=2.8cm]{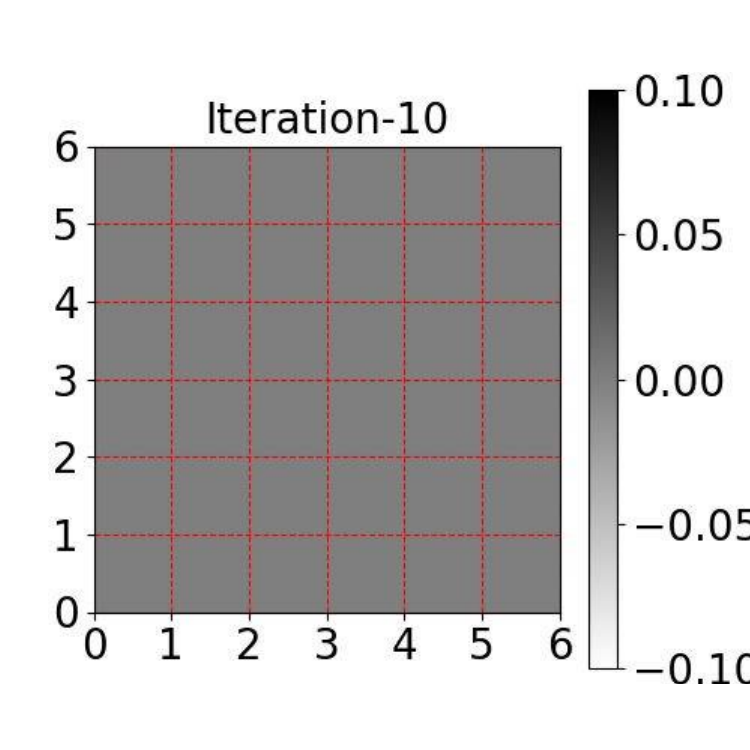}
	\end{minipage}
    \hspace{1.7mm}
	\begin{minipage}[t]{0.15\linewidth}
		\centering
        \includegraphics[page=3,width=2.8cm]{figures/UCB_learning_c4.pdf}
	\end{minipage}
    \hspace{1.7mm}
    \begin{minipage}[t]{0.15\linewidth}
		\centering
        \includegraphics[page=5,width=2.8cm]{figures/UCB_learning_c4.pdf}
	\end{minipage}
    \hspace{1.7mm}
	\begin{minipage}[t]{0.15\linewidth}
		\centering
        \includegraphics[page=8,width=3cm]{figures/UCB_learning_c4.pdf}
	\end{minipage}
    \hspace{1.7mm}
    \begin{minipage}[t]{0.15\linewidth}
		\centering
        \includegraphics[page=9,width=3cm]{figures/UCB_learning_c4.pdf}
	\end{minipage}
    \hspace{1.7mm}
    \begin{minipage}[t]{0.15\linewidth}
		\centering
        \includegraphics[page=11,width=3cm]{figures/UCB_learning_c4.pdf}
	\end{minipage}
\caption{Constraint learning performance of six exploration strategies for ICRL in Gridworld-4.
\textcolor{purple}{PCSE} (1st row), \greedy{} strategy (2nd row), $\epsilon$-greedy exploration strategy (3rd row),
maximum-entropy exploration strategy (4th row),
random exploration strategy (5th row),
upper confidence bound exploration strategy (bottom row).}\label{learning_c4}
\end{figure*}

\stopcontents[appendix]


\end{document}